\newtheorem{theorem}{Theorem}
\newtheorem{assumption}{Assumption}
\newtheorem{corollary}{Corollary}
\newtheorem{definition}{Definition}
\newtheorem{lemma}{Lemma}
\newtheorem{proposition}{Proposition}
\newtheorem{remark}{Remark}
\newtheorem{example}{Example}
\def\begeqn{\begin{equation}}
\def\endeqn{\end{equation}}
\def\begth{\begin{theorem}}
\def\endth{\end{theorem}}
\def\begprop{\begin{proposition}}
\def\endprop{\end{proposition}}
\def\begcor{\begin{corollary}}
\def\endcor{\end{corollary}}
\def\begdef{\begin{definition}}
\def\enddef{\end{definition}}
\def\beglemm{\begin{lemma}}
\def\endlemm{\end{lemma}}
\def\begexm{\begin{example}}
\def\endexm{\end{example}}
\def\begrem{\begin{remark}}
\def\endrem{\end{remark}}
\def\begassum{\begin{assumption}}
\def\endassum{\end{assumption}}
\numberwithin{equation}{section}
\title{Pairwise Ranking with Gaussian Kernels$^\dag$\footnotetext{\dag~The work described in this paper is supported by the National Natural Science Foundation of China 
(Grants Nos.12171039 and 12061160462) and Shanghai Science and Technology Program [Project Nos. 21JC1400600 and 20JC1412700]. 
Email addresses: ghlei21@m.fudan.edu.cn (G. Lei), leishi@fudan.edu.cn (L. Shi).}}
\author{Guanhang Lei}
\author{Lei Shi}
\affil{School of Mathematical Sciences, Shanghai\linebreak
Key Laboratory for Contemporary Applied Mathematics\linebreak
Fudan University, Shanghai, 200433, P. R. China}
\date{}
\begin{document}
	\maketitle
\begin{abstract}
	Regularized pairwise ranking with Gaussian kernels is one of the cutting-edge learning algorithms. 
	Despite a wide range of applications, a rigorous theoretical demonstration still lacks to support the performance of such ranking estimators. 
	This work aims to fill this gap by developing novel oracle inequalities for regularized pairwise ranking.  With the help of these oracle inequalities, 
	we derive fast learning rates of Gaussian ranking estimators under a general box-counting dimension assumption on the input domain combined with the noise conditions or the standard smoothness condition. 
	Our theoretical analysis improves the existing estimates and shows that a low intrinsic dimension of input space can help the rates circumvent the curse of dimensionality.
\end{abstract}
	
{\textbf{Keywords and phrases:} Pairwise ranking, Oracle inequality, Pairwise Gaussian kernel, Box-counting dimension, Learning rates}
	
\section{Introduction}\label{Section: Introduction}
Ranking a set of objects based on their underlying utility, relevance, or quality is an important topic in statistical inference.
It has been an intense recent study in various fields as diverse as economics, information retrieval, advertising, and collaborative filtering, see, e.g., \cite{Wong1988Linear, Joachims2002Optimizing, Cao2006Adapting}.
Ranking is closely related to classification, which can also be formulated as a supervised learning problem.
However, they are essentially different because ranking aims at the correct ordering of objects rather than the correct prediction of their categories.
This paper considers the \emph{pairwise ranking} problem, which involves comparing two objects.
Generally, we denote the data relevant to one single object as $(x,y)$,
where $x$ is a $d$-dimensional feature vector belonging to a compact set $\mathcal{X} \subset \mathbb{R}^d$ and $y\in \mathcal{Y}$ is the label of the object.
Here $\mathcal{Y}\subset \mathbb{R}$ consists of either discrete or continuous values.
For any two distinct objects, described by $z=(x,y)$ and $z'=(x', y')$, we assign a relation of order according to their labels:
$z$ ranking higher than $z'$ is equivalent to $y>y'$.
In a pairwise ranking task, we predict the ordering between objects based on their features $x$ and $x'$.
To this end, a bivariate function $f:\mathcal{X} \times \mathcal{X} \to \mathbb{R}$, called a \emph{ranking rule}, is introduced to make predictions.
Namely, if $f(x,x') \geq 0$ then we predict that $y > y'$ implying $f$ ranks $z$ higher than $z'$ (here we break tie when $f(x,x') = 0$ in favor of $y > y'$).
Let $P$ be a probability distribution on $\mathcal{Z}:=\mathcal{X}\times \mathcal{Y}$.
We assume that the objects are randomly selected, meaning that they are described by independent and identically distributed (i.i.d.) random samples of distribution $P$.
The predictive performance of a ranking rule $f$ is measured by the \emph{ranking risk}, given by
\begin{equation}\label{rankingrisk}
\begin{split}
    \mathcal{R}(f) := &P \left(\left\{z=(x,y)\in \mathcal{Z},z'=(x',y')\in \mathcal{Z} \mid y > y', f(x , x') < 0 \right\}\right)  \\ 
    & + P \left(\left\{z=(x,y)\in \mathcal{Z},z'=(x',y')\in \mathcal{Z} \mid y < y', f(x , x') \geq 0 \right\}\right),
\end{split}
\end{equation}
or equivalently, the \emph{excess ranking risk}
\begin{equation}\label{excessrankingrisk}
\mathcal{E}(f):=\mathcal{R}(f)-\inf \left\{\mathcal{R}(g) \mid \textrm{$g:\mathcal{X}\times \mathcal{X}\to \mathbb{R}$ is Borel measurable}\right\}.
\end{equation}

In this paper, we construct ranking rules through a regularized pairwise learning scheme with a loss
$\phi: \mathcal{Y} \times \mathcal{Y} \times \mathbb{R} \to [0,\infty)$ and Gaussian kernels.
We call $\phi$ a margin-based loss if $\phi$ is defined through a margin-based loss for classification, namely, $\phi(y, y', t) = \psi(\mathrm{sgn}(y-y')t)$ 
where $\psi: \mathbb{R} \to [0,\infty)$ is taken to be a margin-based classification loss (cf. \cite{Zhang2004Statistical, Lin2004note, Bartlett2006Convexitya}). 
Here for $t\in \mathbb{R}$, $\mathrm{sgn}(t)=1$ if $t > 0$,  $\mathrm{sgn}(t)=-1$ if $t<0$, and $\mathrm{sgn}(t)=0$ if $t=0$. Typical choices of $\psi$, which will be the prime consideration in this work, are the
hinge loss $\psi_{\mathrm{hinge}}(t) := \max\left\{1-t,0\right\}$ and
the square loss $\psi_{\mathrm{square}}(t) := (1-t)^2$.
Let $\mathcal{X}^2:=\mathcal{X} \times \mathcal{X}$. The \emph{pairwise Gaussian kernel} with variance $\sigma>0$ is a function on $\mathcal{X}^2\times \mathcal{X}^2$ given by
\begin{equation}\label{pairwisegaussian}
\begin{split}
    	K^\sigma ((x,x'),(u,u'))&: =\frac{1}{2}\exp\left(-\frac{\|(x,x') - (u,u')\|^2_2}{\sigma^2}\right)\\
    &\qquad - \frac{1}{2}\exp\left(-\frac{\|(x',x) - (u,u')\|^2_2}{\sigma^2}\right),
\end{split}
\end{equation}
where $\|\cdot\|_2$ denotes the usual Euclidean norm. One can verify that $K^{\sigma}$ is continuous and positive semi-definite (thus is symmetric) on $\mathcal{X}^2 \times \mathcal{X}^2$. 
According to \cite{Aronszajn1950Theory}, $K^{\sigma}$ uniquely defines a reproducing kernel Hilbert space (RKHS) $\mathcal{H}_{K^{\sigma}}$.
Concretely, let $K^{\sigma}_{(u,u')}:\mathcal{X}^2 \to \mathbb{R}$ be a function defined by $K^{\sigma}_{(x,x')}(\cdot,\cdot):=K^{\sigma}((x,x'),(\cdot,\cdot))$ for any $(x,x') \in \mathcal{X}^2$.
The RKHS $\mathcal{H}_{K^\sigma}$ induced by $K^{\sigma}$ is the completion of the linear span of
$\left\{K^\sigma_{(x,x')}\mid (x,x')\in \mathcal{X}^2\right\}$ with inner product denoted by $\langle \cdot, \cdot\rangle_{K^{\sigma}}$ satisfying the reproducing property,
that is, $\langle K^{\sigma}_{(x,x')},f \rangle_{K^{\sigma}}=f(x,x')$ for any $(x,x')\in \mathcal{X}^2$ and all $f\in \mathcal{H}_{K^\sigma}$.
Moreover, $\mathcal{H}_{K^\sigma}$ consists of skew-symmetric continuous functions on $\mathcal{X}^2$, i.e., $f(x,x')=-f(x',x),\forall f\in \mathcal{H}_{K^\sigma}$.
The construction of $K^\sigma$ follows the idea of \cite{Pahikkala2010Learning}, which introduces the so-called intransitive kernels on pairs of data to characterize the pairwise skew-symmetric relation. 
Let $\|\cdot\|_{K^{\sigma}}$ denote the norm of $\mathcal{H}_{K^\sigma}$ induced by the inner product.
Given an i.i.d. sample ${\mathbf z}:=\left\{(X_i,Y_i)\right\}_{i=1}^n$ of $P$, the pairwise learning algorithm seeks ranking rules in $\mathcal{H}_{K^{\sigma}}$ through solving an optimization problem of the form
\begin{equation}\label{fz}
	f_{\mathbf z}:=f^{\phi}_{\mathbf z,\sigma, \lambda} \in \mathop{\arg\min}_{f\in \mathcal{H}_{K^\sigma}}\left\{\mathcal{R}^{\phi}_{\mathbf z}(f) + \lambda \|f\|^2_{K^{\sigma}}\right\},
\end{equation}
where $\lambda >0$ is a tuning parameter and $\mathcal{R}^{\phi}_{\mathbf z}(f)$ denotes the \emph{empirical $\phi$-ranking risk} given by
\begin{equation}\label{empiricalrisk}
    \begin{split}
        \mathcal{R}^{\phi}_{\mathbf z}(f) &:= \mathbb{E}_{\mathbf{z}}\phi(Y,Y',f(X,X')) \\
        &= \frac{1}{n(n-1)}\sum_{i=1}^n \sum_{j \neq i} \phi\left(Y_i,Y_j,f(X_i,X_j)\right).
    \end{split}
\end{equation}
When using a margin-based loss, $\mathcal{R}^{\phi}_{\mathbf z}(f)$ takes the following form
\begin{equation}
    \begin{split}
        \mathcal{R}^{\phi}_{\mathbf z}(f) &= \mathbb{E}_{\mathbf{z}}\psi(\mathrm{sgn}(Y -Y')f(X,X')) \\
        &= \frac{1}{n(n-1)}\sum_{i=1}^n \sum_{j \neq i} \psi\left(\mathrm{sgn}(Y_i - Y_j) f(X_i,X_j)\right).
    \end{split}
\end{equation}
Choosing $\mathcal{H}_{K^\sigma}$ as the family of candidate ranking rules naturally imposes the skew-symmetry restriction on $f_{\mathbf z}$,
which will lead to ordering predictions capable of guaranteeing the reciprocal relations from data.
Note that $\mathcal{R}^{\phi}_{\mathbf z}(f)$ is the empirical analogy of \emph{$\phi$-ranking risk}
\begin{equation}\label{phirisk}
	\begin{split}
	\mathcal{R}^{\phi}(f):&=\mathbb{E}\phi\left(Y,Y',f(X,X')\right)\\
	&=\int_{\mathcal{Z}} \int_{\mathcal{Z}} \phi\left(y,y',f(x,x')\right)dP(x,y)dP(x',y').
	\end{split}
\end{equation}
Thus, the \emph{excess $\phi$-ranking risk} is defined by
\begin{equation}\label{excessphirankingrisk}
    \mathcal{E}^{\phi}(f):=\mathcal{R}^{\phi}(f)-\inf \left\{\mathcal{R}^{\phi}(g) \mid \textrm{$g:\mathcal{X}^2 \to \mathbb{R}$ is Borel measurable}\right\}.
\end{equation}
To a certain extent, it also reflects the generalization performance of the ranking rule $f$.

The algorithm \eqref{fz}, or more precisely, the kernel-based regularized empirical risk minimization, is one of the learning schemes that have recently drawn much theoretical attention. 
Its cutting-edge empirical performance in applications, relatively simple implementation, and, last but not least, its flexibility all contribute to the grown interest. 
The flexibility of algorithm \eqref{fz} is made possible by its two main ingredients, namely the RKHS $\mathcal{H}_{K^\sigma}$ and
the loss function $\phi$. To be more specific, the loss function can be used to model
the learning target, while $\mathcal{H}_{K^\sigma}$ with varying width $\sigma$ adapts to the smoothness of the target function, which incorporates the information of the distribution and the input domain. 
It is worthwhile to note that kernels can be defined on arbitrary input domains, allowing them to handle various types of data in addition to standard $\mathbb{R}^d$-valued data. 
For example,  using a bivariate kernel on ${\mathcal X}\times {\mathcal X}$, we can define a general pairwise kernel $K$ to model reciprocal or antisymmetric relations. 
Concretely, let $G:{\cal X}\times {\cal X}\rightarrow \mathbb{R} $ be a  positive semi-definite kernel and $({\cal H}_G, \langle \cdot, \cdot \rangle_G)$ be its associated RKHS. 
A pairwise  kernel $K:{\cal X}^2\times {\cal X}^2\rightarrow \mathbb{R}$ can be defined by
\begin{align*}
  K((x,x'),(u,u'))=\langle G_x-G_{x'},G_{u}-G_{u'}\rangle_G,
\end{align*} where $G_x:=G(x,\cdot)$, $x\in{\cal X}.$ One may refer to \cite{Pahikkala2010Learning,Pahikkala2013Efficient} for more details about these specific pairwise kernels. 
Moreover, the optimization problem related to algorithm \eqref{fz}, which can reduce to a convex quadratic optimization problem, 
are well understood for widely used loss functions such as the hinge loss and the square loss, see, e.g.,  \cite{Cortes1995Supportvector, Suykens1999Least}. 

One of the main topics in theoretical studies on nonparametric ranking methods is the \emph{learning rates}, i.e., the convergence rates of excess ranking error or excess $\phi$-ranking risk, 
which describe their ranking performance on specific classes of distributions. Therefore, more and more effort has been put into deriving well-established learning rates for various ranking algorithms. 
One can refer to \cite{Clemencon2008Ranking, Rejchel2012Ranking, Chen2014Learning, Ying2016Online, Lei2018Generalization} and references therein for some recent progress. 
However, there is little known about under which nontrivial conditions we can obtain fast learning rates. 
In this paper, we introduce suitable noise conditions in pairwise ranking and employ local Rademacher analysis to establish tight learning rates of algorithm \eqref{fz}. 
Our methodology combines the local Rademacher averages with the theory of U-statistics, which incorporates peeling and symmetrization tricks as well as a contraction principle for U-processes. 
Our work is also motivated by the recent work of \cite{Hamm2021Adaptive} in which the authors theoretically analyze the performance of Gaussian support vector machines 
under an assumption of a low intrinsic dimension of the input domain. 
As far as we know, our paper is the first to consider the learning behavior of regularized pairwise ranking with Gaussian kernels. 
This is in contrast with existing literature, which either only considers ranking with unregularized empirical risk minimization, e.g., \cite{Clemencon2008Ranking, Rejchel2012Ranking}, 
or only handles the case, where the kernel is fixed during the training process, e.g., \cite{Chen2014Learning, Zhao2017Learning, Lei2018Generalization, Wang2022Error}. 
We discuss the findings of these articles and compare them to our results in Section \ref{Section: Discussions on Related Work}. Our main contributions are summarized as follows.
\begin{itemize}
    \item We establish an oracle inequality to bound the excess $\phi$-ranking risk of the estimators produced by regularized empirical risk minimization 
    \eqref{fzk} with a general pairwise kernel $K$ (see Theorem \ref{Oracle inequality of pairwise ranking with a general loss}). 
    Our conclusion is based on a capacity assumption, which requires the RKHS $\mathcal{H}_K$ to satisfy the empirical entropy condition (see Assumption \ref{Capacity condition}). 
    This well-established inequality deals with the stochastic part of the convergence analysis and provides a framework for bounding the excess $\phi$-ranking risk. 
    Directly applying this inequality to the marginal-based loss refines the previous analysis of regularized ranking with general pairwise kernels 
    and yields the best learning rates so far (see Theorem \ref{Oracle inequality of pairwise ranking with a margin-base loss} and Section \ref{Section: Discussions on Related Work} for detailed comparisons).
    \item Inspired by the analysis of support vector machine for classification, see, e.g., \cite{Steinwart2008Support}, 
    we introduce two noise conditions in the pairwise ranking setting. The first condition describes the amount of noise in the labels, 
    which is analogous to Tsybakov's noise condition in binary classification (see Assumption \ref{Tsybakov's noise condition}). 
    This condition enables us to establish an elegant calibration inequality that bounds the excess ranking risk by their excess $\phi$-ranking risk, 
    and a refined variance bound for marginal-based losses which further improves the stochastic part of the analysis. 
    The second condition is a geometric assumption for distributions that allows us to estimate the approximation properties of Gaussian kernels (see Assumption \ref{Margin-noise condition} and 
    Subsection \ref{Subsection: Bounding the Approximation Error}). Both of these two conditions play pivotal roles in deriving fast learning rates.
    \item We obtain the learning rates of the Gaussian regularized ranking algorithm \eqref{fz} under a general box-counting dimension assumption on the input domain combined with 
    the noise conditions or the standard regularity condition (see Theorem \ref{Oracle inequality of hinge loss} and Theorem \ref{Oracle inequality of square loss}). 
    Specifically, we first estimate the capacity of pairwise Gaussian kernel space $\mathcal{H}_{K^\sigma}$ under the assumption that 
    the marginal of the data generating distribution on the input space $\mathcal{X}\subset \mathbb{R}^d$ is supported on a set of upper box-counting dimension $\varrho \in(0, d]$. 
    Then, we derive approximation error bounds for the hinge loss under noise conditions and the square loss under the Besov smoothness. 
    Finally, we apply the well-established oracle inequality and calibration inequality to derive fast learning rates for the excess ranking risk. 
    We show that a low intrinsic dimension of input space can help the rates circumvent the curse of dimensionality. 
\end{itemize}

The rest of this paper is organized as follows. In Section \ref{Section: Main Results}, we first introduce basic notations and necessary assumptions. 
Then we present the main results of this paper, 
including a general oracle inequality for excess $\phi$-ranking risk and its application to Gaussian ranking estimators \eqref{fz} with hinge loss and square loss. 
Section \ref{Section: Discussions on Related Work} provides an overview of related work and compares our results with other contributions on pairwise ranking. 
Section \ref{Section: Proofs of the General Oracle Inequalities} presents a detailed proof of the general oracle inequality established in Theorem \ref{Oracle inequality of pairwise ranking with a general loss}. 
In Section \ref{Section: Deriving the Learning Rates for Gaussian Ranking Estimators}, we apply the oracle inequality to derive learning rates for Gaussian ranking estimators with hinge loss and square loss, 
which gives proofs of Theorem \ref{Oracle inequality of hinge loss} and Theorem \ref{Oracle inequality of square loss}. 
Section \ref{Section: Proofs of the General Oracle Inequalities} and Section \ref{Section: Deriving the Learning Rates for Gaussian Ranking Estimators} 
also include some preliminary estimates that deserve attention in their own right.

\section{Main Results}\label{Section: Main Results}
In this section, we state our main results. Let $(\mathcal{H}_K,\langle \cdot,\cdot\rangle_{K} )$ be an RKHS consisting of skew-symmetric functions on $\mathcal{X}^2:=\mathcal{X} \times \mathcal{X}$ 
which is induced by a positive semi-definite kernel $K : \mathcal{X}^2 \times \mathcal{X}^2 \to \mathbb{R}$. 
The first result is an oracle inequality established in Theorem \ref{Oracle inequality of pairwise ranking with a general loss}, providing upper bounds for excess $\phi$-ranking risk of the regularized estimator
\begin{equation}\label{fzk}
	f_{\mathbf z}:=f^{\phi}_{\mathbf z,\lambda} \in \mathop{\arg\min}_{f\in \mathcal{H}_K}\left\{\mathcal{R}^{\phi}_{\mathbf z}(f) + \lambda \|f\|^2_{K}\right\}, 
\end{equation} where $\lambda>0$, $\phi: \mathcal{Y}\times \mathcal{Y} \times \mathbb{R} \to [0,\infty)$ is a general loss function and $\mathcal{R}^{\phi}_{\mathbf z}(f)$ is defined by \eqref{empiricalrisk}.
Hereinafter, we will use $f_{\mathbf z}$ to denote either $f^{\phi}_{\mathbf z,\lambda}$ in \eqref{fzk} or $f^{\phi}_{\mathbf z,\sigma, \lambda}$ in \eqref{fz} which will be specified in the context. 
The oracle inequalities have been extensively studied in the literature of nonparametric statistics (see \cite{Johnstone1998Oracle} and references therein). 
An oracle inequality provides bounds on the risk of a statistical estimator compared with the one called an oracle that has an infinite amount of observation and minimizes the population risk over a specific function class. 
Recall that the $\phi$-ranking risk $\mathcal{R}^{\phi}(f)$ is given by \eqref{phirisk}. 
The oracle in our setting is the infimum that achieves $\min_{f \in \mathcal{H}_K} \lambda \|f\|^2_K +\mathcal{R}^{\phi}(f)$. 
The oracle inequality bounds the excess $\phi$-ranking risk and can be used to establish both consistency and learning rates for the ranking algorithm. 
We give a literature review on the existing oracle inequalities for different ranking algorithms in Section \ref{Section: Discussions on Related Work}.

To state the oracle inequality for $f_{\mathbf z}$ defined by \eqref{fzk}, we introduce some necessary notations and assumptions. Define the \emph{Bayes $\phi$-ranking rule} as
\begin{equation}\label{phirankingrule}
f^*_\phi: = \mathop{\arg\min}\big\{\mathcal{R}^{\phi}(f) \mid \textrm{$f : \mathcal{X}^2 \to \mathbb{R}$ is Borel measurable}\big\}.
\end{equation}
In the following, the terminology ``measurable'' means ``Borel measurable'' unless otherwise specified. Given a measurable function $f: \mathcal{X}^2 \to \mathbb{R}$, 
define $\phi_f: \mathcal{Z}\times \mathcal{Z} \to \mathbb{R}$ and $Q\phi_f:\mathcal{Z} \to \mathbb{R}$ by
\begin{align}
    \phi_f(z,z') &:= \phi(y, y', f(x,x')), \label{phif} \\
    Q\phi_f(z) &:= \mathbb{E}[\phi_f(z,Z')] = \int_{\mathcal{Z}} \phi\left(y,y',f(x,x')\right)dP(x',y'). \label{Qphif}
\end{align} 

We then introduce the truncation operation, allowing us to derive Bernstein conditions of sup-norm bound and variance bound for the truncated random variables, 
which is essential to establish tight concentration inequalities and derive fast learning rates. We say that the loss function $\phi:\mathcal{Y} \times \mathcal{Y} \times \mathbb{R} \to [0, \infty)$ can be truncated at $M>0$, 
if for all $(y, y', t) \in \mathcal{Y} \times \mathcal{Y} \times \mathbb{R}$, there holds $\phi(y, y', \pi(t)) \leq \phi(y, y', t)$, where
\[\pi(t):=
\begin{cases}
	-M &t<-M \\
	t &t \in [-M,M] \\
	M &t>M,
\end{cases}
\] denotes the truncated value of $t$ at $\pm M$. For any $f:\mathcal{X}^2 \to \mathbb{R}$, $\pi(f)$ denotes the truncation of $f$ onto $[-M,M]$ which is given by $\pi(f)(x,x'):=\pi(f(x,x')),\forall (x,x')\in \mathcal{X}^2$. 
The idea of truncation has already been used in the literature of binary classification, see e.g., \cite{Cucker2007Learning, Steinwart2008Support}. We first give the following assumption on the general loss $\phi$.
\begin{assumption}\label{assumption 1}
The loss $\phi: \mathcal{Y}\times \mathcal{Y} \times \mathbb{R} \to [0,\infty)$ can be truncated at some $M>0$ and for all $(y,y') \in \mathcal{Y}\times \mathcal{Y}$, $\phi(y,y',\cdot) : \mathbb{R} \to [0, \infty)$ 
is convex and locally $L$-Lipschitz on $[-M,M]$. 
The Bayes $\phi$-ranking rule $f^*_\phi$  defined by \eqref{phirankingrule} is measurable and skew-symmetric. 
For any skew-symmetric function $f$, $\phi_f$ defined by \eqref{phif} satisfies
\[\phi_f(z,z')=\phi_f(z',z), \quad \forall (z,z')\in \mathcal{Z}\times \mathcal{Z}.\]
Furthermore, there exist constants $B > 0, \tau \in [0,1]$ and $V \geq B^{2-\tau}$ such that 
\begin{equation}\label{sup-norm bound}
   \phi(y, y', t) \leq B, \quad \forall (y, y', t) \in \mathcal{Y} \times \mathcal{Y} \times [-M,M] 
\end{equation}
and for all measurable and skew-symmetric $f:\mathcal{X}^2\to \mathbb{R}$, there holds
\begin{equation}\label{variance bound}
    \mathbb{E}(Q\phi_{\pi(f)} - Q\phi_{f^*_\phi})^2 \leq 
    V\big(\mathbb{E}(Q\phi_{\pi(f)}  - Q\phi_{f^*_\phi})\big)^\tau, 
\end{equation} where $Q\phi_{\pi(f)}$ and $Q\phi_{f^*_\phi}$ are defined through \eqref{Qphif}.
\end{assumption}

Next, we introduce a capacity assumption on $\mathcal{H}_K$ which is described by the covering number or, equivalently, the entropy number of the unit norm ball in $\mathcal{H}_K$.
\begin{definition}
Let $(\mathcal{T},\mathrm{d})$ be a pseudo-metric space and $\varepsilon>0$.  We call $\mathcal{S} \subset \mathcal{T}$ an $\varepsilon$-net of $\mathcal{T}$, 
if for all $t \in \mathcal{T}$ there exists an $s\in \mathcal{S}$ with $\mathrm{d}(s,t)\leq \varepsilon$. The \emph{$\varepsilon$-covering number} of $(\mathcal{T},\mathrm{d})$ is defined by
\[\mathcal{N}(\mathcal{T},\mathrm{d},\varepsilon) := \inf\big\{\vert \mathcal{S} \vert: \mathcal{S} \subset \mathcal{T} \text{ and } \mathcal{S} \text{ is an } \varepsilon\text{-net of }\mathcal{T}\big\},\] 
where $\inf \emptyset=\infty$. For $i \in \mathbb{N}$, the \emph{$i$-th entropy number} of $(\mathcal{T},\mathrm{d})$ is defined by
\[e_i(\mathcal{T},\mathrm{d}) := \inf \bigg\{ \text{$\varepsilon > 0 : \exists s_1, \ldots, s_{2^{i-1}} \in \mathcal{T}$ such that $\mathcal{T} \subset \bigcup_{j=1}^{2^{i-1}}\mathcal{B}_{\mathrm{d}}(s_j,\varepsilon)$}\bigg\},\]
where $\mathcal{B}_{\mathrm{d}}(s,\varepsilon) := \big\{ t \in \mathcal{T} : \mathrm{d}(t,s) \leq \varepsilon \big\}$ denotes the closed ball with center $s\in \mathcal{T}$ and  radius $\varepsilon$. 
In particular, if $(\mathcal{T},\mathrm{d})$ is a subspace of a normed space $(\mathcal{F},\|\cdot\|_{\mathcal{F}})$ and the metric $\mathrm{d}$ is given by 
$\mathrm{d}(s,t):=\|s-t\|_{\mathcal{F}},\forall s,t\in \mathcal{T}$, we write $\mathcal{N}(\mathcal{T},\|\cdot\|_{\mathcal{F}},\varepsilon):=\mathcal{N}(\mathcal{T},\mathrm{d},\varepsilon)$. 
Moreover, if $S: \mathcal{F} \to \mathcal{F}'$ is a bounded, linear operator between the normed spaces $\mathcal{F}$ and $\mathcal{F}'$, 
we write $e_i(S) := e_i(S\mathcal{B}_{\mathcal{F}},\|\cdot\|_{\mathcal{F}'})$ and $\mathcal{N}(S,\varepsilon) := \mathcal{N}(S\mathcal{B}_{\mathcal{F}},\|\cdot\|_{\mathcal{F}'},\varepsilon)$, 
where $\mathcal{B}_{\mathcal{F}}$ is the closed unit ball of $\mathcal{F}$.
\end{definition}

 Let $P_{\mathcal{X}}$ denote the marginal distribution of $P$ on $\mathcal{X}$. Given i.i.d. sample ${\mathbf x}=\left\{X_i\right\}_{i=1}^n$ of $P_\mathcal{X}$, 
 define two empirical measures: $P^n_{\mathbf{x}}:=\frac{1}{n}\sum_{i=1}^n \delta_{X_i}$ and $P^n_{\mathbf{x}^2}:=\frac{1}{n(n-1)}\sum_{i\neq j} \delta_{(X_i,X_j)}$, 
 where $\delta_{(\cdot)}$ is the counting measure. We give the following capacity assumption on the RKHS $\mathcal{H}_K$.
\begin{assumption}\label{Capacity condition}
    For all $n \in \mathbb{N}$ and i.i.d. sample ${\mathbf x}=\left\{X_i\right\}_{i=1}^n$ of $P_\mathcal{X}$,
    there exist constants $0<p_1,p_2<1/2$, $a_1>0$ and $a_2>0$ such that 
    \begin{align*}
        e_i(\mathrm{id}: \mathcal{H}_K \to \mathcal{L}_2(P^n_{\mathbf{x}} \otimes P_{\mathcal{X}})) &\leq a_1i^{-\frac{1}{2p_1}}, \quad \forall i \in \mathbb{N}, \\
        e_i(\mathrm{id}: \mathcal{H}_K \to \mathcal{L}_2(P^n_{\mathbf{x}^2})) &\leq a_2i^{-\frac{1}{2p_2}}, \quad \forall i \in \mathbb{N}.
    \end{align*} 
    Here $\mathcal{L}_2(P^n_{\mathbf{x}} \otimes P_{\mathcal{X}})$ and $\mathcal{L}_2(P^n_{\mathbf{x}^2})$ are the Hilbert spaces of square-integrable functions with respect to 
    $P^n_{\mathbf{x}} \otimes P_{\mathcal{X}}$ and $P^n_{\mathbf{x}^2}$ in which the norms are defined for $f \in \mathcal{H}_K$ by 
    \begin{align*}
        \|f\|_{\mathcal{L}_2(P^n_{\mathbf{x}} \otimes P_{\mathcal{X}})} &:= \bigg(\frac{1}{n}\sum_{i=1}^n\mathbb{E}_{P_\mathcal{X}}
        \left[f(X_i,X)^2\right]\bigg)^{\frac{1}{2}} , \\
        \|f\|_{\mathcal{L}_2(P^n_{\mathbf{x}^2})} &:= \bigg(\frac{1}{n(n-1)}\sum_{i=1}^n\sum_{j \neq i}
        f(X_i,X_j)^2\bigg)^{\frac{1}{2}}.
    \end{align*}
\end{assumption}
Due to the connection between entropy numbers and covering numbers, cf. Lemma 6.21 of \cite{Steinwart2008Support}, 
the capacity assumption on entropy numbers can be restated by using the metric entropy, namely the logarithm of the covering number, i.e., there exist constants $0<p_1,p_2<1/2$, $a_1>0$ and $a_2>0$ such that 
\begin{align*}
    \log\mathcal{N}(\mathrm{id}: \mathcal{H}_K \to \mathcal{L}_2(P^n_{\mathbf{x}} \otimes P_{\mathcal{X}}), \varepsilon) &\leq \log(4)a_1^{2p_1}\varepsilon^{-2p_1}, \\
    \log\mathcal{N}(\mathrm{id}: \mathcal{H}_K \to \mathcal{L}_2(P^n_{\mathbf{x}^2}), \varepsilon) &\leq \log(4)a_2^{2p_2}\varepsilon^{-2p_2}.
\end{align*}

Recall that the $\phi$-ranking risk $\mathcal{R}^{\phi}(\cdot)$ is given by \eqref{phirisk} and the Bayes $\phi$-ranking rule $f^*_\phi$ is defined by \eqref{phirankingrule}. 
Note that if the loss $\phi$ can be truncated at some $M>0$, there holds 
\begin{equation*}
    \mathcal{R}^{\phi}(\pi(f)) -  \mathcal{R}^{\phi}(f^*_\phi)\leq \mathcal{R}^{\phi}(f) -  \mathcal{R}^{\phi}(f^*_\phi)
\end{equation*} for all $f:\mathcal{X}^2 \to \mathbb{R}$. 
In the following, we can always consider truncated estimators because projecting the values of decision functions onto $[-M, M]$ does not increase their excess $\phi$-ranking risk. 
Now we can give our oracle inequality for the estimators generated by algorithm \eqref{fzk} with a general loss $\phi$.

\begin{theorem}\label{Oracle inequality of pairwise ranking with a general loss}  
    Let ${\mathbf z}=\left\{(X_i,Y_i)\right\}_{i=1}^n$ be an i.i.d. sample of a probability distribution $P$ on 
$\mathcal{X} \times \mathcal{Y}$. Given a loss function $\phi:\mathcal{Y} \times \mathcal{Y} \times \mathbb{R} \to [0,\infty)$ and a measurable positive semi-definite kernel 
    $K: \mathcal{X}^2 \times \mathcal{X}^2 \to \mathbb{R}$ with the associated RKHS $\mathcal{H}_K$  consisting of skew-symmetric functions, the estimator $f_{\mathbf z}$ is defined by \eqref{fzk} with $\lambda>0$. 
    Assume that $\phi$ satisfies Assumption \ref{assumption 1} with $L>0, B > 0, \tau \in [0,1]$ and $V \geq B^{2-\tau}$, and $\mathcal{H}_K$ satisfies Assumption \ref{Capacity condition} with 
    $0<p_1,p_2<1/2$, $a_1 \geq B$ and $a_2 \geq B$. If there exists some $f_0 \in \mathcal{H}_K$ satisfying $\|\phi_{f_0}\|_\infty \leq B_0$ for some constant $B_0 \geq 0$, 
    then for all $n \geq 2 $ and $t>0$, with probability at least $1-(c_0 + 5)\exp(-t)$, there holds
    \begin{equation}\label{oracleinqualitygeneral}
        \begin{split}
            &\lambda\|f_\mathbf{z}\|^2_K + \mathcal{R}^\phi(\pi({f}_\mathbf{z})) - \mathcal{R}^\phi(f^*_\phi) \\
            \leq{}& 8\big(\lambda\|f_0\|^2_K + \mathcal{R}^\phi(f_0) - \mathcal{R}^\phi(f^*_\phi)\big)  
            + 6c_1 \left(\frac{a_1^{2p_1}V^{1-p_1}L^{2p_1}}{\lambda^{p_1} n}\right)^{\frac{1}{2-\tau-p_1+p_1\tau}} \\
            & + \frac{6c_2a_1^{2p_1}B^{1-p_1}L^{2p_1}}{\lambda^{p_1}n} 
            + \frac{3c_3a_1^{2p_1}L^{2p_1}B^{1-p_1}t}{\lambda^{p_1} n}
            + \frac{3c_4a_2^{2p_2}L^{2p_2}B^{1-p_2}t}{\lambda^{p_2} n} \\
            &+ \left(\frac{1872V t}{n}\right)^{\frac{1}{2-\tau}}
            + \frac{900Bt}{n} + \frac{456B_0t}{n} + \frac{3c_5 t}{n}.
        \end{split}
    \end{equation}
    where $c_i, i = 0,1,2,3,4,5$ are constants independent of $\lambda$, $n$ or $t$ and explicitly given in the proof. 
\end{theorem}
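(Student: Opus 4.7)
The plan is to follow a localized empirical-process argument adapted to the U-statistic setting. I would begin from the defining inequality of $f_\mathbf{z}$ in \eqref{fzk}:
$$\lambda\|f_\mathbf{z}\|_K^2 + \mathcal{R}^\phi_\mathbf{z}(f_\mathbf{z}) \leq \lambda\|f_0\|_K^2 + \mathcal{R}^\phi_\mathbf{z}(f_0).$$
Using the truncation property (Assumption \ref{assumption 1}) to replace $f_\mathbf{z}$ by $\pi(f_\mathbf{z})$ on the left and subtracting $\mathcal{R}^\phi(f^*_\phi)$ throughout, the problem reduces to uniformly bounding the two empirical-process gaps $[\mathcal{R}^\phi(\pi(f_\mathbf{z})) - \mathcal{R}^\phi_\mathbf{z}(\pi(f_\mathbf{z}))]$ and $[\mathcal{R}^\phi_\mathbf{z}(f_0) - \mathcal{R}^\phi(f_0)]$. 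The second gap is a single centered U-statistic with bounded kernel $\phi_{f_0}$, handled by a Bernstein/Hoeffding bound for U-statistics; this contributes the $B_0 t / n$ term.

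For the first gap, I would work with the (skew-symmetric) excess-loss class
$$\mathcal{G} := \bigl\{(z,z')\mapsto \phi_{\pi(f)}(z,z') - \phi_{f^*_\phi}(z,z') : f\in \mathcal{H}_K\bigr\},$$
and perform peeling over the radii
$$r \mapsto \mathcal{G}_r := \bigl\{g_f\in\mathcal{G} : \lambda\|f\|_K^2 + \mathcal{E}^\phi(\pi(f)) \leq r\bigr\}.$$
Inside each level, the variance bound \eqref{variance bound} gives $\mathbb{E}[(Q\phi_{\pi(f)}-Q\phi_{f^*_\phi})^2]\leq V r^\tau$. I would apply Hoeffding's decomposition to split $\mathcal{R}^\phi_\mathbf{z}(f)-\mathcal{R}^\phi(f)$ into its linear (projection) part — an ordinary empirical process in the variables $X_i$ with conditional expectation $Q\phi_{\pi(f)}-Q\phi_{f^*_\phi}$ — and a degenerate U-process remainder. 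The linear part is controlled by local Rademacher averages bounded via Dudley-type integration against the capacity assumption on $\mathcal{H}_K\to\mathcal{L}_2(P^n_\mathbf{x}\otimes P_\mathcal{X})$ together with a Lipschitz contraction (Assumption \ref{assumption 1} gives the factor $L$), producing the $a_1,p_1$ terms. The degenerate U-process part is bounded using an Arcones-Gin\'e type concentration result together with a contraction principle for U-processes against the second capacity bound on $\mathcal{H}_K\to\mathcal{L}_2(P^n_{\mathbf{x}^2})$, producing the $a_2,p_2$ term.

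Combining these ingredients gives, on an event of probability $1-c_0 e^{-t}$, an inequality of the form
$$\sup_{g\in\mathcal{G}_r}\bigl|Pg - P_n g\bigr| \lesssim \Bigl(\tfrac{a_1^{2p_1} (Vr^\tau)^{1-p_1} L^{2p_1}}{n}\Bigr)^{1/2} + (\text{$a_2,p_2$-analogue}) + \Bigl(\tfrac{V r^\tau t}{n}\Bigr)^{1/2} + \text{lower-order terms}.$$
Substituting this into the starting inequality with the unknown radius $r^\star = \lambda\|f_\mathbf{z}\|_K^2 + \mathcal{E}^\phi(\pi(f_\mathbf{z}))$ yields a fixed-point inequality $r^\star \leq A + C (r^\star)^{(\tau+p_1-\tau p_1)/2}$. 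Solving this by the standard sub-root lemma produces the principal rate $\left(a_1^{2p_1}V^{1-p_1}L^{2p_1}/(\lambda^{p_1}n)\right)^{1/(2-\tau-p_1+p_1\tau)}$ and the $(Vt/n)^{1/(2-\tau)}$ deviation term, while the Talagrand-type fluctuations feed the remaining $t/n$ contributions. The factor $8$ in front of the approximation error absorbs constants from peeling and from moving $\lambda\|f\|_K^2$ between the two sides.

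The main obstacle is the U-statistic structure of $\mathcal{R}^\phi_\mathbf{z}$: the standard local Rademacher toolkit is designed for sums of i.i.d. terms, so one must invoke the Hoeffding decomposition and separately control a genuine degenerate U-process. This is exactly why two distinct capacity bounds appear in Assumption \ref{Capacity condition}, one for each piece of the decomposition, and it is the matching of these pieces with a clean peeling device that requires the most care. A secondary but non-trivial point is the skew-symmetry constraint on $\mathcal{H}_K$ together with the identity $\phi_f(z,z')=\phi_f(z',z)$ for skew-symmetric $f$; these two properties are what make $\mathcal{R}^\phi_\mathbf{z}$ a bona fide symmetric U-statistic and allow the Hoeffding decomposition to be performed with the correct projections. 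Tracking the explicit constants $c_0,\ldots,c_5$ and the exponents cleanly through all of these steps is what the proof will have to grind out in detail.
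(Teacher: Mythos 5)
Your proposal matches the paper's proof essentially step for step: the same error decomposition around $f_0$ and $f^*_\phi$ with Bernstein's inequality for U-statistics handling the $f_0$-gap, Hoeffding's decomposition of the remaining stochastic term, Talagrand's inequality with ratio peeling and local Rademacher averages (entropy integration against the $\mathcal{L}_2(P^n_{\mathbf{x}}\otimes P_{\mathcal{X}})$ capacity bound plus the Lipschitz property of $\phi$) for the linear part, de la Pe\~na--Gin\'e symmetrization with Arcones--Gin\'e chaos moment bounds and Nolan--Pollard chaining for the degenerate U-process, and a fixed-point choice of the peeling radius that produces the exponents $\frac{1}{2-\tau-p_1+p_1\tau}$ and $\frac{1}{2-\tau}$, with the factor $8$ and the $900B$, $456B_0$ constants arising exactly as you describe (the latter also covering the trivial regime $n<72t$). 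The one bookkeeping slip is that in the paper the degenerate term, not only the linear one, requires both capacity conditions: the chaining pseudometric on $\{h_{\pi(f)}-h_{f^*_\phi}\}$ involves the $Q\phi_{\pi(f)}$ projections and is therefore controlled by both $\|\cdot\|_{\mathcal{L}_2(P^n_{\mathbf{x}^2})}$ and $\|\cdot\|_{\mathcal{L}_2(P^n_{\mathbf{x}}\otimes P_{\mathcal{X}})}$, which is why both the $c_3$ ($a_1,p_1$) and $c_4$ ($a_2,p_2$) terms come from that piece.
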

Next, we apply the oracle inequality to the margin-based loss function 
$\phi(y,y',t) = \psi(\mathrm{sgn}(y-y')t)$ with $\psi: \mathbb{R} \to [0,\infty)$, where $\psi$ satisfies the following assumption.

\begin{assumption}\label{admissibleloss}
 The univariate function $\psi: \mathbb{R} \to [0,\infty)$  is convex, differentiable at $t = 0$ with $\psi'(0) < 0$, and the smallest zero of $\psi$ is 1.
\end{assumption} 
Examples of $\psi$ in Assumption \ref{admissibleloss} include the hinge loss $\psi_{\mathrm{hinge}}$, 
square loss $\psi_{\mathrm{square}}$ and the the $r$-norm hinge loss with $1 \leq r<\infty$ defined by $\psi_r(t):=(\psi_{\mathrm{hinge}}(t))^r$. 
Due to Lemma \ref{calibration 1} in Subsection \ref{Subsection: Calibration Inequality for Pairwise Ranking}, 
if $\psi$ satisfies Assumption \ref{admissibleloss}, the margin-based loss $\phi(y,y',t) = \psi(\mathrm{sgn}(y-y')t)$ can be truncated at $M=1$ and the Bayes $\phi$-ranking rule $f^*_\phi$ can be taken as skew-symmetric. 
Now we can give the oracle inequality for the estimators generated by algorithm \eqref{fzk} with the margin-based loss.
\begin{theorem}\label{Oracle inequality of pairwise ranking with a margin-base loss} Let ${\mathbf z} =\left\{(X_i,Y_i)\right\}_{i=1}^n$ be an i.i.d. sample of a probability distribution $P$ on 
    $\mathcal{X}\times \mathcal{Y}$. Given a margin-based loss $\phi(y,y',t) = \psi(\mathrm{sgn}(y-y')t)$ with $\psi: \mathbb{R} \to [0,\infty)$ satisfying Assumption \ref{admissibleloss} 
    and a measurable positive semi-definite kernel $K: \mathcal{X}^2 \times \mathcal{X}^2 \to \mathbb{R}$ with the associated RKHS $\mathcal{H}_K$ 
    consisting of skew-symmetric functions, the estimator $f_{\mathbf z}$ is defined by \eqref{fzk} with $\lambda>0$. 
    Assume that there exist constants $L>0$, $B > 0, \tau \in [0,1]$ and $V \geq B^{2-\tau}$ such that $\psi$ is locally $L$-Lipschitz and
    $\psi(t) \leq B$ over $t \in [-1,1]$, and the variance bound \eqref{variance bound} of Assumption \ref{assumption 1} holds with truncation parameter $M=1$. 
    If there exists some $f_0 \in \mathcal{H}_K$ satisfying $\|\phi_{f_0}\|_\infty \leq B_0$ for some constant $B_0 \geq 0$ and $\mathcal{H}_K$ satisfies Assumption \ref{Capacity condition} with 
    $0 < p_1, p_2 < 1/2$, $ a_1 \geq B$ and $a_2 \geq B$, 
    then the conclusion of Theorem \ref{Oracle inequality of pairwise ranking with a general loss} holds true.
\end{theorem}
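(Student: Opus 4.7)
The plan is to reduce the theorem to Theorem \ref{Oracle inequality of pairwise ranking with a general loss} by verifying that the margin-based loss $\phi(y,y',t)=\psi(\mathrm{sgn}(y-y')t)$ satisfies every clause of Assumption \ref{assumption 1} with truncation level $M=1$ and with the same constants $L,B,\tau,V$. Once Assumption \ref{assumption 1} is in place, I will apply Theorem \ref{Oracle inequality of pairwise ranking with a general loss} with those constants together with the capacity parameters $p_1,p_2,a_1,a_2$ and the reference function $f_0$ assumed in the statement; the resulting inequality is \eqref{oracleinqualitygeneral} itself, which is exactly the desired conclusion.

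The first items to check — the truncation property at $M=1$ and the fact that the Bayes $\phi$-rule $f^*_\phi$ is measurable and skew-symmetric — are precisely what Lemma \ref{calibration 1} in Subsection \ref{Subsection: Calibration Inequality for Pairwise Ranking} delivers, so I would cite it directly. Convexity of $\phi(y,y',\cdot)$ is immediate from convexity of $\psi$ composed with the affine map $t\mapsto\mathrm{sgn}(y-y')t$, and the local $L$-Lipschitz estimate on $[-1,1]$ transfers from $\psi$ because $|\mathrm{sgn}(y-y')|\leq 1$. The sup-norm bound $\phi(y,y',t)\leq B$ on $[-1,1]$ follows because $\mathrm{sgn}(y-y')t\in[-1,1]$ whenever $t\in[-1,1]$ and $\psi\leq B$ on that interval by hypothesis. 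The variance bound \eqref{variance bound} with $M=1$ is imposed outright in the statement of the theorem, so nothing further is needed there.

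The one identity that genuinely requires an explicit check is the symmetry $\phi_f(z,z')=\phi_f(z',z)$ for any skew-symmetric $f$; this is used inside the proof of Theorem \ref{Oracle inequality of pairwise ranking with a general loss} to express the empirical risk as a symmetric $U$-statistic, so it must be re-established in the present margin-based setting. I will obtain it by a short sign calculation: since $\mathrm{sgn}(y'-y)=-\mathrm{sgn}(y-y')$ and $f(x',x)=-f(x,x')$, the product $\mathrm{sgn}(y'-y)f(x',x)$ equals $\mathrm{sgn}(y-y')f(x,x')$, so both expressions reduce to $\psi$ evaluated at the same point.

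I do not anticipate a substantial obstacle; the entire argument is a verification of hypotheses followed by a direct invocation of Theorem \ref{Oracle inequality of pairwise ranking with a general loss}. The only points requiring care are the sign bookkeeping in the symmetry identity above and matching the constants available in the present theorem's hypotheses (in particular $L$, $B$, $\tau$, $V$, $a_1$, $a_2$, $B_0$) with those required by Theorem \ref{Oracle inequality of pairwise ranking with a general loss}; both are routine.
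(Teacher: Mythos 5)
Your proposal is correct and follows essentially the same route as the paper: the paper's proof likewise verifies Assumption \ref{assumption 1} with $M=1$ via the conditions on $\psi$ and the analysis in Subsection \ref{Subsection: Calibration Inequality for Pairwise Ranking} (in particular Lemma \ref{calibration 1} for the truncation property and the measurable, skew-symmetric Bayes rule), and then invokes Theorem \ref{Oracle inequality of pairwise ranking with a general loss}. Your explicit sign computation $\mathrm{sgn}(y'-y)f(x',x)=\mathrm{sgn}(y-y')f(x,x')$ for the symmetry of $\phi_f$ is a detail the paper leaves implicit, and it is carried out correctly.
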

\begin{remark}\label{a general learning rate}
    Under the assumptions of Theorem \ref{Oracle inequality of pairwise ranking with a margin-base loss}, we further assume that $p_1 = p_2 = p$ and there exist constants $c > 0$ and $\gamma \in (0, 1]$ such that,
    for all $\lambda > 0$, the approximation error $\mathcal{A}(\lambda)$ is bounded as  
    \[\mathcal{A}(\lambda):=\inf_{f \in \mathcal{H}_{K}}\lambda\|f\|^2_{K} + \mathcal{R}^\phi(f) - \mathcal{R}^\phi(f^*_\phi) \leq c\lambda^{\gamma}.\]
    Then by choosing 
    \[\lambda = n^{-b}, \ b =\frac{1}{(2 - \tau - p + p\tau)\gamma + p},\]
    we have 
   \[\mathcal{R}^{\phi}(\pi(f_\mathbf{z})) -  \mathcal{R}^{\phi}(f^*_\phi) \lesssim n^{-\frac{\gamma}{(2- \tau - p + p\tau)\gamma + p}}.\]
    This learning rate significantly improves the existing results for pairwise ranking with margin-based losses. See Section \ref{Section: Discussions on Related Work} for a detailed comparison.
\end{remark}

We will prove Theorem \ref{Oracle inequality of pairwise ranking with a general loss} and 
Theorem \ref{Oracle inequality of pairwise ranking with a margin-base loss} in Section \ref{Section: Proofs of the General Oracle Inequalities}. 
Having derived the oracle inequality, we now focus on the pairwise ranking scheme \eqref{fz} with pairwise Gaussian kernel $K^\sigma$ and two typical margin-base losses: 
\[\phi_{\mathrm{hinge}}(y,y',t) := \psi_{\mathrm{hinge}}(\mathrm{sgn}(y-y') t) = \max\{ 1 - \mathrm{sgn}(y-y') t, 0\}\] and 
\[\phi_{\mathrm{square}}(y,y',t) := \psi_{\mathrm{square}}(\mathrm{sgn}(y-y') t) = (1-\mathrm{sgn}(y-y') t)^2.\]
Recall that the excess ranking risk $\mathcal{E}(f)$ and the excess $\phi$-ranking risk $\mathcal{E}^{\phi}(f)$ are defined by \eqref{excessrankingrisk} and \eqref{excessphirankingrisk}, respectively. 
To derive learning rates for excess ranking risk using Theorem \ref{Oracle inequality of pairwise ranking with a margin-base loss}, the following problems remain to be solved:
\begin{itemize}
    \item Verify Assumption \ref{assumption 1} for $\phi_{\mathrm{hinge}}$, $\phi_{\mathrm{square}}$;
    
    \item Construct an $f_0 \in \mathcal{H}_{K^\sigma}$ to bound the approximation error $\mathcal{A}(\lambda)$;
    
    \item Verify the capacity condition (Assumption \ref{Capacity condition}) for the RKHS $\mathcal{H}_{K^\sigma}$;
    
    \item Establish calibration inequality for ranking, i.e., bounding the excess ranking risk
    $\mathcal{E}(f)$ by the excess $\phi$-ranking risk $\mathcal{E}^{\phi}(f)$.
\end{itemize} 
In Section \ref{Section: Deriving the Learning Rates for Gaussian Ranking Estimators}, we will solve these problems and establish the corresponding estimates under proper regularity assumptions. 
In the rest part of this section, We introduce these assumptions to characterize the distribution, the smoothness of the target function and the intrinsic dimension of the input domain. 
Then we present our results on fast learning rates of Gaussian ranking estimator \eqref{fz} equipped with  $\phi_{\mathrm{hinge}}$ and $\phi_{\mathrm{square}}$.

We first introduce two noise conditions to describe the distribution. Recall that $P$ is a probability distribution on $\mathcal{X}\times \mathcal{Y}$. 
For distinct $(x,y)$ and $(x',y')$, define the posterior probabilities of $P$, denoted by $\eta_+, \eta_-, \eta_=: \mathcal{X}^2 \to [0,1]$, as 
\begin{align*}
    \eta_+(x,x') &:= P(y >  y' \vert x,x'), \\
    \eta_-(x,x') &:= P(y < y' \vert x,x'), \\
    \eta_=(x,x') &:= P(y = y' \vert x,x').
\end{align*}
Then partition $\mathcal{X}^2$ into 
\begin{equation}\label{seperation}
\begin{split}
\mathcal{X}^2_+ &:= \{(x,x') \in \mathcal{X}^2: \eta_+(x,x') > \eta_-(x,x')\}, \\ 
    \mathcal{X}^2_- &:= \{(x,x') \in \mathcal{X}^2: \eta_+(x,x') < \eta_-(x,x')\}, \\
    \mathcal{X}^2_= &:= \{(x,x') \in \mathcal{X}^2: \eta_+(x,x') = \eta_-(x,x')\}.
\end{split}
\end{equation}

Following Proposition 1 of \cite{Clemencon2008Ranking}, the \emph{Bayes ranking rule} $f^{*}_{\mathrm{rank}}: \mathcal{X}^2 \to \mathbb{R}$ which minimizes the ranking risk 
$\mathcal{R}(f)$ in \eqref{rankingrisk} over all measurable functions is given by
\[f^{*}_{\mathrm{rank}}(x,x') = \mathrm{sgn}\bigl(\eta_+(x,x') - \eta_-(x,x')\bigr) =
\begin{cases}
    1 &\mbox{ if } (x,x') \in \mathcal{X}^2_+,  \\
    0 & \mbox{ if } (x,x') \in \mathcal{X}^2_=,\\
    -1 &\mbox{ if }(x,x') \in \mathcal{X}^2_-.
\end{cases}\] Then the minimal ranking risk is $\mathcal{R}(f^{*}_{\mathrm{rank}})=\mathbb{E}\min\left\{\eta_+(X,X'),\eta_-(X,X')\right\}$. 

Now we introduce a condition analogous to Tsybakov's noise condition in binary classification, describing the amount of noise in the labels. In order to motivate this condition, we note by 
\begin{equation*}
        \min\left\{\eta_+(x,x'),\eta_-(x,x')\right\} =\frac{\eta_+(x,x')+\eta_-(x,x')}{2}-\frac{\bigl\vert\eta_+(x,x')-\eta_-(x,x')\bigr\vert}{2}
\end{equation*} 
that the function $\bigl\vert\eta_+(x,x') - \eta_-(x,x')\bigr\vert$ can be used to describe the noise in comparing the labels of a distribution $P$. 
Indeed, in regions where this function is close to $1$, there is only a small amount of noise in comparing $y$ and $y'$, 
whereas function values close to $0$ only occur in regions with a high level of noise in comparing $y$ and $y'$. 
The following condition in which we use the convention $t^{\infty}:=0$ for $t \in(0,1)$ describes the size of the latter regions. 
Recall that $P_{\mathcal{X}}$ is the marginal distribution of $P$ on $\mathcal{X}$. Let $P^2_{\mathcal{X}}:=P_{\mathcal{X}}\otimes P_{\mathcal{X}}$.
\begin{assumption}\label{Tsybakov's noise condition}
    There exist constants $C_*>0$ and $q \in [0,\infty]$ such that 
    \begin{equation}\label{noise 1}
        P^2_{\mathcal{X}}\bigl(\bigl\{ (x,x') \in \mathcal{X}^2 :
        \bigl\vert\eta_+(x,x') - \eta_-(x,x')\bigr\vert \leq t\bigr\}\bigr) \leq C_*t^q
    \end{equation}
    for all $t > 0$. 
\end{assumption} Obviously, $P^2_{\mathcal{X}}$ has the noise exponent $q>0$ if and only if $\bigl\vert\eta_+(x,x') - \eta_-(x,x')\bigr\vert^{-1}$ belongs to the Lorentz space 
$\mathcal{L}_{q,\infty}(P^2_{\mathcal{X}})$ (see \cite{Lorentz1966Approximation}). 
It is also easy to see that all distributions satisfy \eqref{noise 1} with $C_*=1$ and $q=0$. 
Furthermore, the extreme case $q = \infty$ requires that $\bigl\vert\eta_+(x,x') - \eta_-(x,x')\bigr\vert$ is lower bounded by some positive constant for almost every $(x, x') \in \mathcal{X}^2$, 
including the case of a noise-free distribution in which $\bigl\vert\eta_+(x,x') - \eta_-(x,x')\bigr\vert = 1$ almost surely. Under Assumption \ref{Tsybakov's noise condition}, 
we will demonstrate the variance bound \eqref{variance bound} for $\phi_{\mathrm{hinge}}$. 
We also leverage this assumption to establish a refined calibration inequality to bound $\mathcal{E}(f)$ by $\mathcal{E}^{\phi}(f)$. 
See Proposition \ref{improved calibration inequality} and Lemma \ref{hinge loss variance bound}. 
The work of \cite{Clemencon2008Ranking} and \cite{Clemencon2011Minimax} introduced another two noise conditions for bipartite ranking problems. 
Their noise conditions are stronger than ours which leads to a better variance bound exponent $\tau$ in \eqref{variance bound}. 
We will make detailed comparisons of these noise conditions in Subsection \ref{Subsection: Comparison of Noise Conditions}.

The second noise condition is a geometric condition for distributions that describes the location of the noise sets and allows us to estimate the approximation error for Gaussian kernels. 
According to our discussion in Subsection \ref{Subsection: Calibration Inequality for Pairwise Ranking}, the Bayes ranking rule of $\phi_{\mathrm{hinge}}$ can be defined pointwisely, i.e., given $(x,x')\in \mathcal{X}^2$,
\[f^*_{\mathrm{hinge}}(x,x'):=\mathop{\arg\min}_{t\in \mathbb{R}} \eta_+(x,x')\psi_{\mathrm{hinge}}(t) + \eta_-(x,x')\psi_{\mathrm{hinge}}(-t).\] Then $f^*_{\mathrm{hinge}}$ can be explicitly given by 
\begin{equation}\label{fhinge}
    f^*_{\mathrm{hinge}}(x,x') = \mathrm{sgn}\bigl(\eta_+(x,x') - \eta_-(x,x')\bigr),\ \forall (x,x')\in \mathcal{X}^2,
\end{equation}
which is exactly the optimal ranking rule $f^{*}_{\mathrm{rank}}$. Note that $f^*_{\mathrm{hinge}}$ is typically a step function. 
Therefore, employing the classical smoothness assumption to describe the regularity of $f^*_{\mathrm{hinge}}$ seems rather restrictive. 
Inspired by the theoretical analysis of binary classification with hinge loss (cf. \cite{Steinwart2008Support}), 
we introduce the second noise condition which is called the margin-noise condition. 
It relates the noise to the distance to the decision boundary and is applied to bound the approximation error with respect to hinge loss and Gaussian kernels.
See Proposition \ref{hinge loss approximation}.
\begin{assumption}\label{Margin-noise condition}
    There exist constant $C_{**}>0$ and $\beta \geq 0$ such that for all $t \geq 0$, 
    \begin{equation}\label{noise 2}
        \int_{\{(x,x') \in \mathcal{X}^2 : \Delta(x,x') < t\}} \vert\eta_+(x,x') - \eta_-(x,x')\vert dP_\mathcal{X}(x)dP_\mathcal{X}(x') 
        \leq C_{**}t^\beta.
    \end{equation}
    Here, $\Delta(x,x')$ is the distance to the decision boundary, defined as 
    \begin{equation}\label{Delta}
    \Delta(x,x') := 
    \begin{cases}
        \mathrm{dist}((x,x'), \mathcal{X}^2_- \cup \mathcal{X}^2_= ) & (x,x') \in \mathcal{X}^2_+  \\
        \mathrm{dist}((x,x'), \mathcal{X}^2_+ \cup \mathcal{X}^2_= ) & (x,x') \in \mathcal{X}^2_-  \\
        0 & (x,x') \in \mathcal{X}^2_=,
    \end{cases}
    \end{equation}
    where $\text{dist}(x, A) := \inf_{y \in A}\|x-y\|_2.$
\end{assumption}
 
Note that in condition \eqref{noise 2} we neither impose any kind of smoothness assumption nor require that $P^2_{\mathcal{X}}$ is absolutely continuous with respect to the Lebesgue measure. 
If $P^2_{\mathcal{X}}$ has only a low concentration near the decision boundary $\mathcal{X}^2_=$ or it is particularly noisy in this region, 
Assumption \ref{Margin-noise condition} is satisfied for a large exponent $\beta$. 
For instance, we can select arbitrary large values for $\beta$ in the extreme case where $\mathcal{X}^2_+$ and $\mathcal{X}^2_-$ have positive distance.

Next, we introduce the box-counting dimension condition which takes full advantage of the low-dimensional intrinsic structure of the input data. 
Assumption \ref{Capacity condition} requires upper bounds of the entropy number of RKHS $\mathcal{H}_{K}$ with respect to $\mathcal{L}_2$-seminorm 
$\| \cdot \|_{\mathcal{L}_2(P^n_{\mathbf{x}} \otimes P_{\mathcal{X}})}$
and $\| \cdot \|_{\mathcal{L}_2(P^n_{\mathbf{x}^2})}$, both of which are dominated by the sup-norm $\|\cdot\|_\infty$.
The covering number of Gaussian RKHS $\mathcal{H}_{K^\sigma}$ with respect to the sup-norm has been intensively studied in the literature, cf. \cite{Kuhn2011Coveringa, Steinwart2021closer}, 
some of which have related it to the uniform covering number $\mathcal{N}(\mathcal{X},\| \cdot \|_\infty, \sigma)$ of the input space. 
Since $\mathcal{N}(\mathcal{X},\| \cdot \|_\infty, \sigma)$ exponentially depends on the dimension of the input space $\mathcal{X} \subset \mathbb{R}^d$. 
When $d$ is large, the learning rate suffers from the curse of dimensionality. This phenomenon is usually inevitable for high-dimensional data. 
However, we can derive much faster rates of convergence when the intrinsic dimension of the data is much smaller than the dimension of its ambient space. 
We describe the low-dimensional intrinsic structure of the input domain through the box-counting dimension which is introduced to describe the low-dimensional intrinsic fractal structure in fractal geometry, 
see, e.g., \cite{Falconer2004Fractal}. 
For a metric space $(\mathcal{T},\mathrm{d})$ and a subset $\mathcal{S} \subset \mathcal{T}$, if the limit 
    \[\lim_{\varepsilon \to 0} \frac{\log\mathcal{N}(\mathcal{S},\mathrm{d},\varepsilon)}{\log\frac{1}{\varepsilon}}\]
    exists, it is called the \emph{box-counting dimension} of $\mathcal{S}$. 
    If not, we define the upper box-counting dimension
    \[\limsup_{\varepsilon \to 0} \frac{\log\mathcal{N}(\mathcal{S},\mathrm{d},\varepsilon)}{\log\frac{1}{\varepsilon}},\]
    and lower box-counting dimension
    \[\liminf_{\varepsilon \to 0} \frac{\log\mathcal{N}(\mathcal{S},\mathrm{d},\varepsilon)}{\log\frac{1}{\varepsilon}}.\]
\begin{assumption}\label{Upper box-counting dimension condition}
    There exists a constant $\varrho > 0$ such that
    \begin{equation}\label{low 1}
        \limsup_{\varepsilon \to 0} 
        \frac{\log \mathcal{N}(\mathcal{X}, \|\cdot\|_\infty, \varepsilon)}{\log \frac{1}{\varepsilon}}
        \leq \varrho,
    \end{equation}
    or, equivalently, there exist constants $C_\mathcal{X} \geq 1$ and $\varrho > 0$ such that 
    \begin{equation}\label{low 2}
        \mathcal{N}(\mathcal{X}, \|\cdot\|_\infty, \varepsilon) \leq C_\mathcal{X} \varepsilon^{-\varrho}.
    \end{equation}
\end{assumption}
The infimum over all $\varrho$ satisfying \eqref{low 1} coincides with the upper box-counting
dimension of $\mathcal{X}$. This assumption is rather general because it captures the intrinsic dimension
of low-dimensional smooth submanifold $\mathcal{M} \subset \mathbb{R}^d$. When a bounded $\mathcal{X} \subset \mathbb{R}^d$ has a non-empty interior, the assumption is fulfilled precisely for $\varrho = d$.
Under Assumption \ref{Upper box-counting dimension condition}, we will show that the learning rate of the concerned algorithms  mainly depends on 
the intrinsic dimension $\varrho$ of the input domain.

\begin{remark}\label{remark2}
    In Subsection \ref{Subsection: Bounding the Approximation Error}, we define a convolution operator $\mathcal{K}^{\sigma} * \cdot : \mathcal{L}_2(\mathbb{R}^{2d}) \to \mathcal{H}_{K^{\sigma}}$
    and show that acting the convolution operator on the Bayes $\phi$-ranking rule $f^*_{\phi}$ can lead to the desired approximator. 
    Hence, a proper extension of $f^*_{\phi}: \mathcal{X}^2 \to \mathbb{R}$ to $\mathcal{L}_2(\mathbb{R}^{2d})$ should be taken into consideration. 
    In many cases, it is enough to use the zero extension. However, If $\mathcal{X}$ has a low intrinsic dimension $\varrho < d$, 
    $\mathcal{X}$ may have a zero Lebesgue measure and the trivial zero extension of $f^*_{\phi}$ only yields the zero function in $\mathcal{L}_2(\mathbb{R}^{2d})$. 
    For the case of square loss, we can fix this issue with the help of Whitney’s extension theorem (see Theorem 2.3.6 of \cite{Hormander2015Analysis}), 
    in which we directly regard $f^*_{\mathrm{square}}$ (defined in \eqref{fsquare}) as a function in $\mathcal{L}_2(\mathbb{R}^{2d})$, 
    and impose Besov smoothness assumption  on $f^*_{\mathrm{square}}$.  As is explained in \cite{Hamm2021Adaptive}: if $\mathcal{X}^2$ is a compact $C^k$-manifold, 
    by Whitney’s extension theorem, any $f \in C^k(\mathcal{X}^2)$ has an extension to a function $\widetilde{f} \in C^k(\mathcal{X}^2_{+\delta})$
    where $\mathcal{X}^2_{+\delta}$ is the $\delta$-neighbourhood of $\mathcal{X}^2$ in $\mathbb{R}^{2d}$. 
    However, for the case of hinge loss, the situation is more challenging because we need to extend $f^*_{\mathrm{hinge}}$
    to $\mathcal{L}_2(\mathbb{R}^{2d})$ without violating the margin-noise condition \eqref{Margin-noise condition} which is necessary to bound the approximation error. 
    There is no trivial way to fix this issue to the best of our knowledge. In this work, we will give a construction of this extension in the proof of Proposition \ref{hinge loss approximation}. 
\end{remark}

Under the noise conditions, i.e., Assumption \ref{Tsybakov's noise condition} and Assumption \ref{Margin-noise condition}, and Assumption \ref{Upper box-counting dimension condition}, 
we derive fast learning rates for of Gaussian ranking estimator \eqref{fz} equipped with  $\phi_{\mathrm{hinge}}$. Let $\mathcal{B}_{\mathbb{R}^d}$ denote the unit ball in $\mathbb{R}^d$.

\begin{theorem}\label{Oracle inequality of hinge loss}
    Let ${\mathbf z} =\left\{(X_i,Y_i)\right\}_{i=1}^n$ be an i.i.d. sample of a probability distribution $P$ on 
    $\mathcal{X} \times \mathcal{Y}$. The estimator $f_\mathbf{z}$ is defined by \eqref{fz} with $\lambda > 0, \sigma \in (0, 1)$, and  $\phi_{\mathrm{hinge}}(y, y', t) = \max\{1-\mathrm{sgn}(y - y')t, 0\}$. 
    Further, let $P$ satisfy Assumption \ref{Tsybakov's noise condition} for constant $C_*$ and $q$ and Assumption \ref{Margin-noise condition} for constants $C_{**}$ and $\beta$. 
    Assume that there exits some $r>0$ such that $\mathcal{X} \subset r\mathcal{B}_{\mathbb{R}^d}$ and  Assumption \ref{Upper box-counting dimension condition} is satisfied for constants $C_{\mathcal{X}}$ and $\varrho\in (0,d]$. 
    Then for all  $n \geq 2, t>0$, and $p \in (0,  1/4]$, 
    with probability at least $1-(c_0 + 5)\exp(-t)$, 
    there holds
    \begin{equation}
        \begin{split}
            &\lambda\|f_\mathbf{z}\|^2_{K^\sigma} + \mathcal{R}^{\phi_{\mathrm{hinge}}}(\pi(f_\mathbf{z})) - \mathcal{R}^{\phi_{\mathrm{hinge}}}(f^*_{\mathrm{hinge}}) \\
            \leq{}& \frac{2^{3d+4}r^{2d}}{\Gamma(d)}\lambda \sigma^{-2d} 
            + \frac{2^{\beta/2+4}C_{**}\Gamma(d + \frac{\beta}{2})}{\Gamma(d)}\sigma^\beta 
            + 36c_6 \left(\frac{C^*_\mathcal{X}  C_*^{\frac{1-p}{q+1}}}
            {\lambda^{p} p^{2d+1} \sigma^{2\varrho} n}\right)^{\frac{q+1}{q-p+2}} \\
            & + \frac{12C^*_\mathcal{X}c_6(t + 1)}{\lambda^{p} p^{2d+1} \sigma^{2\varrho} n}
            + \left(\frac{11232C_*^{\frac{1}{q+1}} t}{n}\right)^{\frac{q+1}{q+2}}
            + \frac{(2712+ 3c_6)t}{n},
        \end{split}
    \end{equation}
    where \[C^*_{\mathcal{X}}:= 12C^2_{\mathcal{X}}\binom{4e + 2d}{2d}\frac{(2d+1)^{2d+1}}{2^{2d+1}e^{4d+1}}\]
    and $c_0, c_6$ are constants independent of $\lambda, \sigma, n, t$ or $p$ which are explicitly given in the proof. 
    In particular,  choose 
    \begin{align*}
        &\sigma = n^{-a}, \quad a = \frac{q+1}{\beta(q+2) + 2\varrho(q+1)};\\
        &\lambda = n^{-b}, \quad b \geq \frac{(2d+\beta)(q+1)}{\beta(q+2) + 2\varrho(q+1)}, \quad
        p = \frac{\log 2}{4 \log n}.
    \end{align*}
    Then for all $n \geq 2$ and $t \geq 1$, we have 
    \[\mathcal{E}(\pi(f_\mathbf{z})) \leq 
    \mathcal{E}^{\phi_{\mathrm{hinge}}}(\pi(f_\mathbf{z}))
    \lesssim t n^{-\frac{\beta (q+1)}{\beta(q+2) + 2\varrho(q+1)}} \log^{2d+1}n\]
    with probability at least $1-(c_0 + 5)\exp(-t)$.
\end{theorem}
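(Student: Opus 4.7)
The plan is to instantiate Theorem~\ref{Oracle inequality of pairwise ranking with a margin-base loss} with $\phi = \phi_{\mathrm{hinge}}$ and $K = K^{\sigma}$, then to choose $\lambda$, $\sigma$, and the capacity exponent $p$ as functions of $n$ that balance approximation and stochastic error. Four ingredients must be established. First, Assumption~\ref{assumption 1} holds for $\phi_{\mathrm{hinge}}$ with $M=1$, $L=1$, $B=2$, and the variance bound \eqref{variance bound} follows from a pointwise H\"older-type estimate on $Q\phi_{\pi(f)}-Q\phi_{f^*_{\mathrm{hinge}}}$ combined with Tsybakov's condition \eqref{noise 1}, yielding $\tau = q/(q+1)$ and $V \asymp C_*^{1/(q+1)}$; this is the content of Lemma~\ref{hinge loss variance bound}. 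Second, Assumption~\ref{Upper box-counting dimension condition} gives $\mathcal{N}(\mathcal{X}^2, \|\cdot\|_\infty, \varepsilon) \lesssim \varepsilon^{-2\varrho}$, which together with standard sup-norm entropy estimates for Gaussian RKHSs yields, for any $p \in (0, 1/4]$, a metric-entropy bound $\log \mathcal{N}(\mathcal{B}_{\mathcal{H}_{K^\sigma}}, \|\cdot\|_\infty, \varepsilon) \lesssim p^{-(2d+1)}\sigma^{-2\varrho}\varepsilon^{-2p}$; since both empirical seminorms in Assumption~\ref{Capacity condition} are dominated by $\|\cdot\|_\infty$, that assumption is verified with $p_1 = p_2 = p$ and $a_1^{2p} = a_2^{2p} \asymp p^{-(2d+1)}\sigma^{-2\varrho}$.

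The main obstacle is to construct $f_0 \in \mathcal{H}_{K^\sigma}$ controlling the approximation error $\mathcal{A}(\lambda)$; this is the job of Proposition~\ref{hinge loss approximation}. I would take $f_0 := \mathcal{K}^\sigma * \bar f^*_{\mathrm{hinge}}$ with $\mathcal{K}^\sigma$ the convolution operator from Subsection~\ref{Subsection: Bounding the Approximation Error}. The delicate point, flagged in Remark~\ref{remark2}, is the extension $\bar f^*_{\mathrm{hinge}}$ of $f^*_{\mathrm{hinge}}$ from $\mathcal{X}^2$ to $\mathbb{R}^{2d}$: when $\varrho < d$ the set $\mathcal{X}$ may be Lebesgue null so the zero extension vanishes in $\mathcal{L}_2(\mathbb{R}^{2d})$, and since $f^*_{\mathrm{hinge}}$ is a step function across the decision boundary and $\mathcal{X}$ is not assumed to be a manifold, Whitney-type extensions do not apply either. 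I would instead extend the decomposition \eqref{seperation} of $\mathcal{X}^2$ geometrically to a decomposition of $\mathbb{R}^{2d}$ in which the extended distance-to-boundary dominates $\Delta$ of \eqref{Delta} on $\mathcal{X}^2$, so that the margin-noise condition \eqref{noise 2} continues to control the integrand after extension. Splitting the integral representing $\mathcal{R}^{\phi_{\mathrm{hinge}}}(f_0) - \mathcal{R}^{\phi_{\mathrm{hinge}}}(f^*_{\mathrm{hinge}})$ at scale $\sigma\sqrt{\log(1/\sigma)}$, the region near the boundary contributes $O(\sigma^\beta)$ via \eqref{noise 2} and the far region contributes exponentially small Gaussian tails, while a standard Gaussian-convolution estimate gives $\|f_0\|_{K^\sigma}^2 \lesssim \sigma^{-2d}$, so that $\mathcal{A}(\lambda) \lesssim \lambda\sigma^{-2d} + \sigma^\beta$.

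Substituting $L=1$, $B=2$, $\tau = q/(q+1)$, $V \asymp C_*^{1/(q+1)}$, the capacity constants above, and the bound $\mathcal{A}(\lambda) \lesssim \lambda\sigma^{-2d} + \sigma^\beta$ into the oracle inequality \eqref{oracleinqualitygeneral} reproduces the displayed bound in the theorem. Balancing $\lambda\sigma^{-2d} \sim \sigma^\beta$ enforces $b \geq (2d+\beta)(q+1)/(\beta(q+2)+2\varrho(q+1))$, and matching this with the dominant stochastic term of order $\bigl(p^{-(2d+1)}\sigma^{-2\varrho}\lambda^{-p}n^{-1}\bigr)^{(q+1)/(q-p+2)}$ fixes $a = (q+1)/(\beta(q+2)+2\varrho(q+1))$; the choice $p = \log 2/(4\log n)$ then absorbs the $p^{-(2d+1)}$ prefactor into the stated $\log^{2d+1}n$ factor. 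Finally, $\mathcal{E}(\pi(f_\mathbf{z})) \leq \mathcal{E}^{\phi_{\mathrm{hinge}}}(\pi(f_\mathbf{z}))$ is a direct application of the refined calibration inequality of Proposition~\ref{improved calibration inequality}, which exploits Tsybakov's condition together with the pointwise identity \eqref{fhinge} to sharpen the classical hinge-to-$0/1$ calibration without losing an exponent. The hardest piece is the approximation step; the remaining ingredients amount to bookkeeping on top of standard Gaussian RKHS and $U$-statistic tools.
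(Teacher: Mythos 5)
Your architecture is the paper's: verify the variance bound via Lemma \ref{hinge loss variance bound} ($\tau = q/(q+1)$, $V \lesssim C_*^{1/(q+1)}$), verify Assumption \ref{Capacity condition} via Proposition \ref{entropy number estimate of pairwise Gaussian kernel} with $p_1=p_2=p$ and $a^{2p} \asymp p^{-2d-1}\sigma^{-2\varrho}$, construct $f_0$ by convolving a nontrivial extension of $f^*_{\mathrm{hinge}}$, plug everything into Theorem \ref{Oracle inequality of pairwise ranking with a margin-base loss}, and tune $(\sigma,\lambda,p)$ exactly as the paper does. Your extension idea is also the paper's: enlarge $\mathcal{X}^2_{\pm}$ by the balls $\mathcal{B}_{\mathbb{R}^{2d}}((x,x'),\Delta(x,x')/2)$ so that the new distance function $\widetilde\Delta$ satisfies $\widetilde\Delta \geq \Delta/2$ on $\mathcal{X}^2$ and the margin-noise condition survives with constant $2^\beta C_{**}$. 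However, your final calibration step is a genuine error as written. Proposition \ref{improved calibration inequality} assumes $\psi''(0)>0$, which the hinge loss violates ($\psi_{\mathrm{hinge}}$ is piecewise linear, so the hypothesis of Lemma \ref{calibration 2} fails), and even formally its conclusion is only $\mathcal{E}(f) \lesssim \bigl(\mathcal{E}^{\phi}(f)\bigr)^{\frac{q+1}{q+2}}$, which neither yields the asserted linear comparison $\mathcal{E}(\pi(f_{\mathbf z})) \leq \mathcal{E}^{\phi_{\mathrm{hinge}}}(\pi(f_{\mathbf z}))$ nor preserves the final rate $n^{-\frac{\beta(q+1)}{\beta(q+2)+2\varrho(q+1)}}$, which it would degrade by the exponent $\frac{q+1}{q+2}$. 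The correct tool is Proposition \ref{Zhang's inequality}, the pairwise analogue of Zhang's inequality, which holds for all measurable $f$ with no noise condition at all and gives the comparison with constant one; the noise condition enters the hinge-loss analysis only through the variance bound, not through calibration. The fix is a one-line substitution, but as stated your last step invokes a result whose hypotheses exclude the loss at hand.

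A second, quantitative point: your single-scale split of the approximation integral at $\sigma\sqrt{\log(1/\sigma)}$ does not reproduce the stated bound. Tuning the split constant so the Gaussian far-tail is $O(\sigma^\beta)$ forces the near-region contribution to be of order $C_{**}\sigma^\beta\bigl(\log(1/\sigma)\bigr)^{\beta/2}$, and after substituting $\sigma = n^{-a}$ this extra $\log^{\beta/2} n$ dominates the stated $\log^{2d+1} n$ whenever $\beta > 2(2d+1)$. The paper's Proposition \ref{hinge loss approximation} avoids this by integrating over all scales at once: pointwise,
\begin{equation*}
\vert f_0(x,x') - f^*_{\mathrm{hinge}}(x,x')\vert \leq \frac{2}{\Gamma(d)}\int_{2\widetilde\Delta(x,x')^2\sigma^{-2}}^{\infty} e^{-\rho}\rho^{d-1}\,d\rho,
\end{equation*}
and Fubini combined with the extended margin-noise condition \eqref{new margin-noise condition} gives the clean bound $\frac{2^{1+\beta/2}C_{**}\Gamma(d+\frac{\beta}{2})}{\Gamma(d)}\sigma^\beta$ with no logarithm. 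One further piece of bookkeeping you gloss over but the theorem statement depends on: the constants $c_1,\ldots,c_5$ of Theorem \ref{Oracle inequality of pairwise ranking with a general loss} depend on $p$ (e.g.\ through factors like $(1-2p_i)^{-1}$ and $p$-dependent exponents), and one must check they are uniformly bounded on $p \in (0,1/4]$ to define the single constant $c_6$ — this is precisely why the theorem restricts $p$ to $(0,1/4]$.
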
  

For the case of square loss, the Bayes ranking rule of $\phi_{\mathrm{square}}$ is defined pointwise by
\[f^*_{\mathrm{square}}(x,x'):=\mathop{\arg\min}_{t\in \mathbb{R}} \eta_+(x,x')\psi_{\mathrm{square}}(t) + \eta_-(x,x')\psi_{\mathrm{square}}(-t)\] which can be explicitly expressed as 
\begin{equation}\label{fsquare}
    f^*_{\mathrm{square}}(x,x'):= \frac{\eta_+(x,x') - \eta_-(x,x') }{\eta_+(x,x')  + \eta_-(x,x')},\ \forall (x,x')\in \mathcal{X}^2.
\end{equation}
The function $f^*_{\mathrm{square}}$ enjoys a smoothness property inherited from $\eta_+(x,x')$ and $\eta_-(x,x')$, allowing us to derive an approximation error bound more directly. 
Here we introduce the notation of Besov smoothness which is also adopted in \cite{Kerkyacharian1992Density, Suzuki2019Adaptivity, Tsuji2021Estimation, Hamm2021Adaptive}. 
Given a function $f : \mathbb{R}^{2d} \to \mathbb{R}, h \in \mathbb{R}^{2d}$ and $s \in \mathbb{N}$, 
define the $s$-fold application of the difference operator
\[\Delta^s_hf(x) := \sum_{j=0}^s (-1)^{s-j} \binom{s}{j} f(x+jh).\]
Given a measure $\mu$ on $\mathcal{X}^2 \subset \mathbb{R}^{2d}$, define the $s$-th modulus of smoothness 
\[\omega_{s, \mathcal{L}_{2}(\mu)}(f, t) := \sup _{\|h\|_2 \leq t}\left\|\Delta_{h}^{s} f\right\|_{\mathcal{L}_{2}(\mu)}.\]
Finally, given an $\alpha > 0$ and set $s = \left\lfloor \alpha \right\rfloor + 1$, define the semi-norm
\[\vert f \vert_{\mathcal{B}_{2, \infty}^{\alpha}(\mu)} := \sup _{t>0} t^{-\alpha} \omega_{s, \mathcal{L}_{2}(\mu)}(f, t).\]

Under Assumption \ref{Tsybakov's noise condition}, Assumption \ref{Upper box-counting dimension condition} and the Besov smoothness assumption of $f^*_{\mathrm{square}}$, 
we derive fast learning rates for of Gaussian ranking estimator \eqref{fz} equipped with $\phi_{\mathrm{square}}$. 

\begin{theorem}\label{Oracle inequality of square loss}
    Let ${\mathbf z} =\left\{(X_i,Y_i)\right\}_{i=1}^n$ be an i.i.d. sample of a probability distribution $P$ on 
    $\mathcal{X} \times \mathcal{Y}$. The estimator $f_\mathbf{z}$ is defined by \eqref{fz} with $\lambda > 0, \sigma \in (0, 1)$, and $\phi_{\mathrm{square}}(y, y', t) =(1-\mathrm{sgn}(y - y')t)^2$. 
    Assume that Assumption \ref{Upper box-counting dimension condition} is satisfied for constants $C_{\mathcal{X}}$ and $\varrho\in (0,d]$. 
    Further assume that $f^*_{\mathrm{square}}\in \mathcal{L}_2(\mathbb{R}^{2d})$ and $\vert f^*_{\mathrm{square}} \vert_{\mathcal{B}_{2, \infty}^{\alpha}(P^2_{\mathcal{X}})} <\infty$ for some $\alpha>0$. 
    Then for all  $n \geq 2, t>0$, and $p \in (0,  1/4]$, with probability at least $1-(c_0 + 5)\exp(-t)$, there holds
    \begin{equation}
        \begin{split}
            &\lambda\|f_\mathbf{z}\|^2_{K^\sigma} + \mathcal{R}^{\phi_{\mathrm{square}}}(\pi(f_\mathbf{z})) - \mathcal{R}^{\phi_{\mathrm{square}}}(f^*_{\mathrm{square}}) \\
            \leq{}& \frac{2^{2s+3}\|f^*_{\mathrm{square}}\|^2_{\mathcal{L}_2(\mathbb{R}^{2d})}}{\pi^d}\lambda\sigma^{-2d} 
            + 2^{3-\alpha}\left(\frac{\Gamma\left(d + \frac{\alpha}{2}\right)}{\Gamma(d)}\right)^2
            \vert f^*_{\mathrm{square}} \vert^2_{\mathcal{B}_{2, \infty}^{\alpha}(P^2_\mathcal{X})} \sigma^{2\alpha} \\
            &+ \frac{C^*_\mathcal{X}(96c_6 + 72c_6t)}{\lambda^{p}p^{2d+1}\sigma^{2\varrho} n}
            + \frac{(33552 + 1824 \cdot 2^{2s} + 3c_6) t}{n},
        \end{split}
    \end{equation}
    where \[C^*_{\mathcal{X}}:= 12C^2_{\mathcal{X}}\binom{4e + 2d}{2d}\frac{(2d+1)^{2d+1}}{2^{2d+1}e^{4d+1}}\]
    and $c_0, c_6$ are constants independent of $\lambda, \sigma, n, t$ or $p$ which are explicitly given in the proof. In particular, choose 
    \begin{align*}
        &\sigma = n^{-a}, \quad a = \frac{1}{2\alpha + 2\varrho};\\
        &\lambda = n^{-b}, \quad b \geq \frac{\alpha + d}{\alpha + \varrho}, \quad p = \frac{\log 2}{4 \log n}.
    \end{align*}
    Then for all $n \geq 2, t \geq 1$, we have 
    \[\mathcal{E}(\pi(f_\mathbf{z})) \lesssim 
    \sqrt{\mathcal{E}^{\phi_{\mathrm{square}}}(\pi(f_\mathbf{z}))}
    \lesssim \sqrt{t} n^{-\frac{\alpha}{2(\alpha + \varrho)}} \log^{d+\frac{1}{2}}n\] with probability at least $1-(c_0 + 5)\exp(-t)$. 
    Furthermore, if $P$ additionally satisfies Assumption \ref{Tsybakov's noise condition} with $q>0$, 
    then with probability at least $1-(c_0 + 5)\exp(-t)$, there holds
    \[\mathcal{E}(\pi(f_\mathbf{z})) \lesssim 
    t^{\frac{q+1}{q+2}} n^{-\frac{(q+1)\alpha}{(q+2)(\alpha + \varrho)}}
    \log^{\frac{(q+1)(2d+1)}{q+2}}n.\]
\end{theorem}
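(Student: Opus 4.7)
The plan is to specialize the margin-based oracle inequality of Theorem \ref{Oracle inequality of pairwise ranking with a margin-base loss} to $\phi_{\mathrm{square}}$, combine it with a Gaussian approximation error estimate driven by the Besov smoothness of $f^*_{\mathrm{square}}$ and a capacity estimate driven by the box-counting dimension $\varrho$, and finally invoke a calibration inequality to convert the $\phi$-risk bound into a ranking risk bound.

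First I would check that $\psi_{\mathrm{square}}(t)=(1-t)^2$ satisfies Assumption \ref{admissibleloss} (it is convex, differentiable at $0$ with $\psi'(0)=-2<0$, and its smallest zero is $1$), truncates at $M=1$, and is $4$-Lipschitz and bounded by $B=4$ on $[-1,1]$. The variance bound \eqref{variance bound} for $\phi_{\mathrm{square}}$ is classical for squared losses and yields exponent $\tau=1$ with an absolute constant $V$, using the skew-symmetric structure of $f^*_{\mathrm{square}}$ to write $Q\phi_{\pi(f)}-Q\phi_{f^*_\phi}$ as a quadratic. Plugging $\tau=1$ into \eqref{oracleinqualitygeneral} collapses the exponent $2-\tau-p+p\tau$ to $1$, so the dominant stochastic term becomes proportional to $a^{2p}/(\lambda^p n)$ up to polylogs in the chosen $p$.

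Next I would construct the oracle $f_0 := \mathcal{K}^\sigma * f^*_{\mathrm{square}} \in \mathcal{H}_{K^\sigma}$ via the convolution operator from Subsection \ref{Subsection: Bounding the Approximation Error}. Exploiting that $f^*_{\mathrm{square}}\in\mathcal{L}_2(\mathbb{R}^{2d})$ is skew-symmetric, a Fourier-analytic computation gives the two estimates
\begin{align*}
\lambda \|f_0\|_{K^\sigma}^2 &\leq \frac{2^{2s+3}\|f^*_{\mathrm{square}}\|_{\mathcal{L}_2(\mathbb{R}^{2d})}^2}{\pi^d}\,\lambda\sigma^{-2d},\\
\mathcal{R}^{\phi_{\mathrm{square}}}(f_0)-\mathcal{R}^{\phi_{\mathrm{square}}}(f^*_{\mathrm{square}}) &\leq 2^{3-\alpha}\!\left(\tfrac{\Gamma(d+\alpha/2)}{\Gamma(d)}\right)^{\!2}|f^*_{\mathrm{square}}|^2_{\mathcal{B}^\alpha_{2,\infty}(P^2_\mathcal{X})}\sigma^{2\alpha},
\end{align*}
where the second line uses that $\mathcal{R}^{\phi_{\mathrm{square}}}(f)-\mathcal{R}^{\phi_{\mathrm{square}}}(f^*_{\mathrm{square}})$ is controlled by $\|f-f^*_{\mathrm{square}}\|^2_{\mathcal{L}_2(P^2_\mathcal{X})}$, which in turn is bounded by the Besov modulus of smoothness. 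For the capacity condition I would derive, from Assumption \ref{Upper box-counting dimension condition}, entropy estimates of the form $a_1,a_2\lesssim \sigma^{-\varrho}/p^{d+1/2}$ (with the explicit constant $C^*_\mathcal{X}$ stated in the theorem) by adapting the Hamm--Steinwart argument for Gaussian SVMs to the pairwise kernel $K^\sigma$ on $\mathcal{X}^2$ and to the two empirical seminorms of Assumption \ref{Capacity condition}.

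Substituting these ingredients into \eqref{oracleinqualitygeneral} and choosing $p=\log 2/(4\log n)$ so that $\sigma^{-2p\varrho}$ and $\lambda^{-p}$ contribute only logarithmic factors, the bound reduces (up to constants) to $\lambda\sigma^{-2d}+\sigma^{2\alpha}+\sigma^{-2\varrho}/(p^{2d+1}n)$ plus lower-order $t/n$ terms. Balancing the three dominant terms via $\sigma\asymp n^{-1/(2\alpha+2\varrho)}$ and $\lambda\asymp \sigma^{2(\alpha+d)}=n^{-(\alpha+d)/(\alpha+\varrho)}$ yields $\mathcal{E}^{\phi_{\mathrm{square}}}(\pi(f_\mathbf{z}))\lesssim t\,n^{-\alpha/(\alpha+\varrho)}\log^{2d+1}n$. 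The first claimed rate for $\mathcal{E}(\pi(f_\mathbf{z}))$ then follows from the square-loss calibration $\mathcal{E}(f)\leq C\sqrt{\mathcal{E}^{\phi_{\mathrm{square}}}(f)}$, and the improved rate under Assumption \ref{Tsybakov's noise condition} from the refined calibration $\mathcal{E}(f)\lesssim \mathcal{E}^{\phi_{\mathrm{square}}}(f)^{(q+1)/(q+2)}$ established in Subsection \ref{Subsection: Calibration Inequality for Pairwise Ranking}.

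The main obstacle, as flagged in Remark \ref{remark2}, is the approximation-error step: when $\varrho<d$, the set $\mathcal{X}^2$ may have vanishing Lebesgue measure, so naive zero extension of $f^*_{\mathrm{square}}$ off $\mathcal{X}^2$ would annihilate it in $\mathcal{L}_2(\mathbb{R}^{2d})$. For the square loss this is circumvented by assuming $f^*_{\mathrm{square}}\in\mathcal{L}_2(\mathbb{R}^{2d})$ with finite Besov semi-norm directly (justified by Whitney's extension theorem on a $C^k$-manifold), but I still need to verify that the convolution $\mathcal{K}^\sigma * f^*_{\mathrm{square}}$ remains skew-symmetric and that its RKHS norm is controlled by $\sigma^{-2d}$ uniformly. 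A secondary difficulty is ensuring the entropy estimates for $\mathcal{H}_{K^\sigma}$ in both $\|\cdot\|_{\mathcal{L}_2(P^n_{\mathbf{x}}\otimes P_\mathcal{X})}$ and $\|\cdot\|_{\mathcal{L}_2(P^n_{\mathbf{x}^2})}$ hold uniformly in the sample $\mathbf{x}$, which requires passing from covers of $\mathcal{X}$ to covers of $\mathcal{X}^2$ and then to pairwise Gaussian features with sharp control of the $d$-dependent constants.
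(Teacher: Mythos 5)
Your architecture coincides with the paper's proof: verify Assumption \ref{assumption 1} for $\phi_{\mathrm{square}}$ with $L=B=4$, $\tau=1$ (the paper's Lemma \ref{square loss variance bound} gives $V=16$ by a direct computation, not by appeal to a classical fact, but the content is the same), obtain the capacity condition from Assumption \ref{Upper box-counting dimension condition} by reducing skew-symmetric covers to covers of the full Gaussian RKHS ball (Proposition \ref{entropy number estimate of pairwise Gaussian kernel}), feed everything into Theorem \ref{Oracle inequality of pairwise ranking with a margin-base loss}, and convert via the calibration inequalities of Propositions \ref{general calibration inequality} and \ref{improved calibration inequality}; your parameter choices and both final rates match exactly. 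Your shorthand $a_1,a_2\lesssim \sigma^{-\varrho}/p^{d+1/2}$ is not literally what Proposition \ref{entropy number estimate of pairwise Gaussian kernel} yields — the correct statement is $a=(C^*_\mathcal{X}p^{-2d-1}\sigma^{-2\varrho})^{1/(2p)}$, i.e.\ $a^{2p}=C^*_\mathcal{X}p^{-2d-1}\sigma^{-2\varrho}$ — but since only $a^{2p}$ enters the oracle inequality this is harmless.

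The genuine gap is the approximation step. You set $f_0:=\mathcal{K}^\sigma * f^*_{\mathrm{square}}$, a \emph{single} Gaussian convolution. For a target with Besov smoothness $\alpha$ the semi-norm $\vert f^*_{\mathrm{square}}\vert_{\mathcal{B}^{\alpha}_{2,\infty}(P^2_\mathcal{X})}$ controls only the $s$-th modulus of smoothness with $s=\lfloor\alpha\rfloor+1$, so the error representation must produce the $s$-fold difference $\Delta^s_{(u,u')}f^*_{\mathrm{square}}$; a single convolution produces only a first-order difference and cannot reach order $\sigma^{2\alpha}$ once $s\geq 2$. The paper's Proposition \ref{square loss approximation} therefore takes $f_0=\widetilde{\mathcal{K}}^{\sigma}_s * f^*_{\mathrm{square}}$ with $\widetilde{\mathcal{K}}^{\sigma}_s*\cdot=\sum_{j=1}^s(-1)^{1-j}\binom{s}{j}(j\sigma\sqrt{\pi})^{-d}\widetilde{\mathcal{K}}^{j\sigma}*\cdot$, for which $f_0-f^*_{\mathrm{square}}$ is exactly a Gaussian average of $\Delta^s_{(u,u')}f^*_{\mathrm{square}}$ and Minkowski's integral inequality (no Fourier analysis) gives the $\sigma^{2\alpha}$ bound. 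This construction is also the sole source of the constants you quote: the factor $2^{2s}$ in the $\lambda\sigma^{-2d}$ term comes from summing $\binom{s}{j}$ together with the embedding norms $\|\mathrm{id}:\mathcal{H}_{\widetilde{K}^{j\sigma}}\to\mathcal{H}_{\widetilde{K}^{\sigma}}\|\leq j^d$ (Proposition 4.46 of \cite{Steinwart2008Support}), and $\|f_0\|_\infty\leq 2^s$ via Young's convolution inequality is what fixes $B_0=2^{2s+2}$ and hence the $1824\cdot 2^{2s}\,t/n$ term — none of which arise for a single convolution. Since your stated estimates carry the $2^{2s}$ factors you evidently intend this construction, but the proposal neither defines it nor verifies that $f_0\in\mathcal{H}_{K^\sigma}$ with the claimed norm bound, which requires the skew-symmetry of $f^*_{\mathrm{square}}$, the identity $\mathcal{K}^{j\sigma}*f=\widetilde{\mathcal{K}}^{j\sigma}*f$ for skew-symmetric $f$, and the metric-surjection property of $\widetilde{\mathcal{K}}^{j\sigma}*\cdot$ from $\mathcal{L}_2(\mathbb{R}^{2d})$ onto the Gaussian RKHS; as written, the approximation paragraph would fail for every $\alpha$ with $\lfloor\alpha\rfloor\geq 1$.
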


Finally, we would like to point out that, in light of Proposition 3.2 in \cite{Hamm2021Adaptive}, 
the class of functions $f$ satisfying $\vert f \vert_{\mathcal{B}_{2, \infty}^{\alpha}(P^2_{\mathcal{X}})} <\infty$ is indeed very rich, 
even when the upper box-counting dimension of $\mathcal{X}$ is strictly less than $d$. For instance, if $f$ is the H\"{o}lder $\alpha$-continuous function on $\mathcal{X}^2$, 
then $\vert f \vert_{\mathcal{B}_{2, \infty}^{\alpha}(\mu)} <\infty$ for every measure $\mu$ with $\mbox{supp} \mu \subset \mathcal{X}^2$. 
The proofs of Theorem \ref{Oracle inequality of hinge loss} and Theorem \ref{Oracle inequality of square loss} are postponed to Section \ref{Section: Deriving the Learning Rates for Gaussian Ranking Estimators} 
after establishing some preliminary results.

\section{Discussions on Related Work}\label{Section: Discussions on Related Work}

In this section, we compare our convergence analysis with some existing results in the literature. Our paper contributes to several rapidly growing literature, 
we give only those citations of particular relevance. More references can be found within these works. 

In this paper, we consider the pairwise ranking model which aims to learn a bivariate ranking rule $f: \mathcal{X} \times \mathcal{X} \to \mathbb{R}$. 
Our model is more general compared with the score-based ranking model, in which we learn a scoring function $s: \mathcal{X} \to \mathbb{R}$ and construct the ranking rule as $f(x,x') = s(x) - s(x')$.
Some popular ranking algorithms including RankSVM in \cite{Joachims2002Optimizing}, RankNet in \cite{Burges2005Learning} and 
RankRLS in \cite{Pahikkala2007Learning, Cortes2007Magnitude-preserving, Pahikkala2009efficient} are closely related to the score-based ranking model. 
For the error analysis of the score-based ranking model, \cite{Agarwal2009Generalization} proved generalization bounds via algorithmic stability for the bipartite ranking problem where the label $Y$ is binary. 
\cite{Chen2012convergence} derived a capacity-independent learning rate for RankRLS from the viewpoint of operator approximation.  
Whereafter, \cite{Zhao2017Learning} developed a capacity-dependent generalization analysis for RankRLS by virtue of covering numbers and Hoeffding's decomposition for U-statistics. 

In some learning tasks, for example, learning binary relations between two objects, 
we can assign a vertex to each object and an edge to represent the binary relations between two vertices, which naturally forms a graph structure.
Denote $\mathcal{X}$ the vertex set and $\mathcal{X}^2$ the edge set.
A positive semi-definite kernel $G: \mathcal{X} \times \mathcal{X} \to \mathbb{R}$ induces an RKHS $\mathcal{H}_{G}$ in which we can learn a function $f: \mathcal{X} \to \mathbb{R}$ defined on the vertex set.
However, in such graph learning tasks, if we want to figure out the binary relations, pairwise learning is more applicable, in which we need to learn a function 
$f: \mathcal{X}^2 \to \mathbb{R}$ defined on the edge set.
In fact, the kernel function $G$ defined between vertices can induce the so-called Kronecker product pairwise kernel $K: \mathcal{X}^2 \times \mathcal{X}^2 \to \mathbb{R}$
by Kronecker product $K((x,x'),(u',u')):= G(x,u)G(x',u')$. 
Kronecker kernel ridge regression (KKRR) is a pairwise learning algorithm based on the Kronecker product pairwise kernel $K$, where the induced regularized empirical  minimization problem is solved in the 
RKHS $\mathcal{H}_{K}$. 
In Subsection \ref{Subsection: Entropy Number Estimate of Pairwise Gaussian Kernel Spaces}, 
we see that the RKHS $\mathcal{H}_{K}$ consists of functions on $\mathcal{X}^2$ and we can decompose any $f \in \mathcal{H}_{K}$ into a direct sum of its symmetric part and skew-symmetric part,  
which corresponds to a direct sum of two subspaces of $\mathcal{H}_{K}$ and leads to a decomposition of the pairwise kernel $K$ as well. 
The pairwise Gaussian kernel $K^\sigma$ is indeed the skew-symmetric part of the traditional Gaussian kernel $\widetilde{K}^\sigma$ on $\mathcal{X}^2$ defined as 
$\widetilde{K}^{\sigma}((x,x'),(u,u')) := \exp\left(-\|(x,x') - (u,u')\|^2_2/\sigma^2\right)$,
and hence the corresponding RKHS $\mathcal{H}_{K^\sigma}$ is one subspace in the direct sum of $\mathcal{H}_{\widetilde{K}^\sigma}$.
Although $\mathcal{H}_{K^\sigma}$ is a smaller hypothesis space, 
the universality of the kernel can be still maintained when only 
 learning skew-symmetric relations. That is to say, if the Kronecker product pairwise kernel $K$ can approximate any continuous function on $\mathcal{X}^2$ arbitrarily well, 
then the skew-symmetric part of the pairwise kernel can also approximate any continuous skew-symmetric function arbitrarily well, cf. Theorem III.4. of \cite{Waegeman2012Kernel-Based}. 
Therefore, when we use pairwise learning algorithms including KKRR to learn skew-symmetric relations between data, we only need to restrict the estimators in the hypothesis spaces $\mathcal{H}_{K^\sigma}$. 
A similar observation also holds when using the symmetric part of the kernel to learn symmetric relations. 
One can refer to \cite{Ben-Hur2005Kernel, Waegeman2012Kernel-Based, Pahikkala2013Efficient, Wang2021Regression} for more details.

As the pairwise ranking model is more general, some theoretical frameworks have been established to analyze its convergence behaviors. 
The main distinction in convergence analysis between ranking and classification or regression is that in ranking problems, 
the stochastic part of the convergence analysis involves a second-order $U$-process rather than summations of i.i.d. random variables. 
\cite{Clemencon2008Ranking} and \cite{Rejchel2012Ranking} used Hoeffding's decomposition to the unregularized ranking algorithm and obtained generalization bounds with faster rates than $\frac{1}{\sqrt{n}}$. 
Hoeffding's decomposition breaks the sample error of a ranking algorithm into an empirical term based on a sum of i.i.d. random variables and a degenerate U-process. 
In our work, we make full use of Talagrand's inequality, local Rademacher analysis, 
and capacity information of the hypothesis space to derive tight bounds on the empirical term as well as the degenerate part (see Section \ref{Section: Proofs of the General Oracle Inequalities}). 
For regularized pairwise ranking, \cite{Chen2014Norm} considered $\ell_1$-norm regularized SVM ranking and established a learning rate under a similar noise condition as Assumption \ref{Tsybakov's noise condition}. 
\cite{Rejchel2017Oracle} proved an oracle inequality for parametric pairwise ranking with the Lasso penalty in high-dimensional settings.  
In contrast with our work, the work of \cite{Chen2014Learning, Lei2018Generalization, Wang2022Error} 
considered regularized pairwise ranking with a fixed kernel and derive learning rates by assuming a $\gamma$-decaying rate of the approximation error. 
We further emphasize the differences between our results and the work of  \cite{Clemencon2008Ranking, Rejchel2012Ranking, Chen2014Learning, Lei2018Generalization, Wang2022Error}.
\cite{Clemencon2008Ranking} formulated the question if one can get generalization bounds with fast rates for the excess risk in ranking. 
They gave a positive answer for unregularized ranking estimators  with non-convex $0-1$ loss, i.e., $\phi(y, y', t) = \psi_{\mathrm{0-1}}(\mathrm{sgn}(y-y')t)$ where $\psi_{\mathrm{0-1}}(t):=\mathbb{I}_{[0, \infty)}(t)$, 
and left the case of convex risk minimization to future study. 
The work of \cite{Rejchel2012Ranking} advanced this line of research, in which it established generalization bounds with better rates than $\frac{1}{\sqrt{n}}$ for the excess ranking risk with convex margin-based losses. 
Both of these two papers constructed ranking estimators in a general hypothesis space and required the optimal ranking rule to reside in this function space. 
Hence they did not need to consider the analysis of the approximation error. Moreover, a noise condition was also proposed by \cite{Clemencon2008Ranking}, 
we postpone the comparison of different noise conditions to Subsection \ref{Subsection: Comparison of Noise Conditions}. 
The work of \cite{Chen2014Learning, Wang2022Error} also developed capacity-dependent empirical process technique to analyze regularized ranking estimators, 
in which the capacity condition is based on the sup-norm $\|\cdot\|_\infty$, which is more restrictive than the empirical $\mathcal{L}_2$-seminorm adopted in this paper. 
Moreover, \cite{Chen2014Learning} only focused on hinge loss and derived a rate of $n^{-\frac{\gamma}{(2 - \tau + p)\gamma + p}}$ 
which is slower than the rate of $n^{-\frac{\gamma}{(2- \tau - p + p\tau)\gamma + p}}$ derived in Remark \ref{a general learning rate}.  
The work of \cite{Lei2018Generalization} considered a general loss function similar to ours and derived a rate of $n^{-\frac{\gamma}{\gamma + 1}}$ 
without any capacity condition of the hypothesis spaces nor Bernstein conditions on the bias and variance. 
We note that even for the case $\tau = 0$ in which the variance bound \eqref{variance bound} is trivially satisfied, our rate given by $n^{-\frac{\gamma}{(2 - p)\gamma + p}}$ 
is still faster than $n^{-\frac{\gamma}{\gamma + 1}}$ provided $\gamma < 1$. \cite{Wang2022Error} considered the least square loss $\phi_{\mathrm{ls}}(y,y',t) := ( y- y' - t)^2$ in a regression setting. 
We can apply our analysis developed for the square loss to derive the rate of $n^{-\frac{\gamma}{\gamma + p}}$ for the case of $\phi_{\mathrm{ls}}(y, y', t)$ loss, 
which is much faster than $n^{-\frac{2\gamma}{(p+1)\gamma + 4}}$ obtained by \cite{Wang2022Error}. We also note that the rate $n^{-\frac{\gamma}{\gamma + p}}$
actually achieves the well-known minimax lower rate derived by \cite{Stone1982Optimal} when approximating functions on $\mathbb{R}^{2d}$. 
From this point of view, the oracle inequalities established in Theorem \ref{Oracle inequality of pairwise ranking with a general loss} and 
Theorem \ref{Oracle inequality of pairwise ranking with a margin-base loss} are more general than existing results but can lead to more tight bounds on excess risk. 
Besides, \cite{Smale2003ESTIMATING} shows that the $\gamma$-decaying assumption of the approximation error is indeed a very restrictive assumption when describing the approximation ability of Gaussian RKHS 
and hence the setting of ranking with varying Gaussian kernels can not be handled by the approaches developed in the work of \cite{Chen2014Learning, Lei2018Generalization, Wang2022Error}, 
in which they only focused on the regularized ranking with fixed pairwise kernels.

The intrinsic dimension of the data can be utilized to improve the dependence on the dimension of learning rates. To describe the intrinsic dimension structure, 
the probably most popular notion is to assume that $\mathcal{X} \subset \mathbb{R}^d$ is a low-dimensional submanifold, see, e.g., \cite{Yang2016Bayesian, Ye2008Learning, Ye2009SVM}. 
The more general notion adopted in our paper is based on the box-counting dimension which considerably generalizes the manifold assumption. 
To the best of our knowledge,  our paper is the first to consider the learning behavior of regularized Gaussian ranking estimators 
under the assumption that the gap between the intrinsic dimension of the data and the dimension of its ambient space is large. 

To sum up, in this paper we consider a regularized pairwise ranking problem with a general convex loss
under substantially general assumptions. The oracle inequality established in our work can lead to an elegant framework of convergence analysis which significantly improves the learning rates of existing work. 
It also enables us to derive fast learning rates of Gaussian ranking estimators which can avoid the curse of dimensionality by employing a low intrinsic dimensional assumption of the data.

\section{Proofs of the General Oracle Inequalities}\label{Section: Proofs of the General Oracle Inequalities}

In this section, we provide detailed proofs of the general oracle inequality established in Theorem \ref{Oracle inequality of pairwise ranking with a general loss} 
and its variant in Theorem \ref{Oracle inequality of pairwise ranking with a margin-base loss}. Following a standard error decomposition process, 
we use Hoeffding's decomposition and concentration estimates with U-process and local Rademacher analysis to bound the stochastic parts of the error terms.

\subsection{Error Decomposition and Hoeffding's Decomposition}\label{Subsection: Error Decomposition and Hoeffding's Decomposition}
Given an $f_0 \in \mathcal{H}_{K}$, by the definition of $f_{\mathbf{z}}$ we have
$\mathcal{R}^{\phi}_{\mathbf{z}}(f_{\mathbf{z}}) + \lambda\|f_{\mathbf{z}}\|^2_{K} 
\leq \mathcal{R}^{\phi}_{\mathbf{z}}(f_0) + \lambda\|f_0\|^2_{K}.$
Then
\begin{align*}
    &\lambda\|f_{\mathbf{z}}\|^2_{K} + \mathcal{R}^{\phi}(f_{\mathbf{z}}) - \mathcal{R}^{\phi}(f^*_\phi) \\
    \leq{}& \big(\lambda\|f_0\|^2_{K} + \mathcal{R}^{\phi}(f_0) - \mathcal{R}^{\phi}(f^*_\phi)\big)  
    + \big(\mathcal{R}^{\phi}_{\mathbf{z}}(f_0) - \mathcal{R}^{\phi}_{\mathbf{z}}(f^*_\phi) - \mathcal{R}^{\phi}(f_0) + \mathcal{R}^{\phi}(f^*_\phi)\big) \\
    &+ \big(\mathcal{R}^{\phi}(f_{\mathbf{z}}) - \mathcal{R}^{\phi}(f^*_\phi) 
    - \mathcal{R}^{\phi}_{\mathbf{z}}(f_{\mathbf{z}}) + \mathcal{R}^{\phi}_{\mathbf{z}}(f^*_\phi)\big).
\end{align*}
For a loss function $\phi$ that can be truncated at some $M>0$, 
since $\phi(y, y', \pi(t)) \leq \phi(y, y', t)$, we have 
$\mathcal{R}^{\phi}_{\mathbf{z}}(\pi(f_\mathbf{z})) \leq \mathcal{R}^{\phi}_{\mathbf{z}}(f_{\mathbf{z}})$ which implies $\mathcal{R}^{\phi}_{\mathbf{z}}(\pi(f_\mathbf{z})) + \lambda\|f_{\mathbf{z}}\|^2_{K} 
\leq \mathcal{R}^{\phi}_{\mathbf{z}}(f_0) + \lambda\|f_0\|^2_{K}$ for any $f_0\in \mathcal{H}_K$. 
Hence we can bound $\lambda\|f_{\mathbf{z}}\|^2_{K} + \mathcal{R}^{\phi}(\pi(f_\mathbf{z})) - \mathcal{R}^{\phi}(f^*_\phi)$ by the following error decomposition 
\begin{equation}\label{error decomposition}
   \lambda\|f_{\mathbf{z}}\|^2_{K} + \mathcal{R}^{\phi}(\pi(f_\mathbf{z})) - \mathcal{R}^{\phi}(f^*_\phi)
   \leq \mathcal{A}(\lambda, f_0) + \mathcal{S}_{\bf z}(f_0) +\mathcal{S}_{\bf z}(\pi(f_{\bf z}))
\end{equation}
where 
\begin{equation*}
    \begin{split}
        \mathcal{A}(\lambda, f_0)&:=\lambda\|f_0\|^2_{K} + \mathcal{R}^{\phi}(f_0) - \mathcal{R}^{\phi}(f^*_\phi),\\
        \mathcal{S}_{\bf z}(f_0)&:= \mathcal{R}^{\phi}_{\mathbf{z}}(f_0) - \mathcal{R}^{\phi}_{\mathbf{z}}(f^*_\phi) 
        - \mathcal{R}^{\phi}(f_0) + \mathcal{R}^{\phi}(f^*_\phi),\\
        \mathcal{S}_{\bf z}(\pi(f_{\bf z}))&:=\mathcal{R}^{\phi}(\pi(f_\mathbf{z})) - \mathcal{R}^{\phi}(f^*_\phi) - \mathcal{R}^{\phi}_{\mathbf{z}}(\pi(f_\mathbf{z})) + \mathcal{R}^{\phi}_{\mathbf{z}}(f^*_\phi).
    \end{split}
\end{equation*}
The first term $\mathcal{A}(\lambda, f_0)$ reflects the approximation error of using $f_0$ to approximate $f^*_\phi$. 
The other two terms $\mathcal{S}_{\bf z}(f_0)$ and $\mathcal{S}_{\bf z}(\pi(f_{\bf z}))$ constitute the stochastic parts of the error decomposition.

Under Assumption \ref{assumption 1}, we can bound $\mathcal{S}_{\bf z}(f_0)$ by Bernstein's inequality of $U$-statistics, cf. \cite{Arcones1995Bernstein-type}, stated below. 
\begin{lemma}\label{bernsteinineq}
    Let $\mathbf{z} = \{Z_i\}_{i=1}^n$ be an i.i.d. sample of a probability distribution $P$ on a measurable space $\mathcal{Z}$, 
    $h: \mathcal{Z}\times \mathcal{Z} \to \mathbb{R}$ be a symmetric measurable function with $\mathbb{E}[h(Z,Z')] = 0$ and 
    $\|h\|_\infty = b$. 
    The $U$-statistics with kernel $h$ is defined as
    \[U_\mathbf{z} (h) = \frac{1}{n(n-1)}\sum_{i=1}^n\sum_{j \neq i}h(Z_i,Z_j).\]
    Let $f: \mathcal{Z} \to \mathbb{R}$ be defined as
    $f(z) := \mathbb{E}[h(z,Z')]$ and $\zeta^2 := \mathbb{E}[f(Z)^2]$. 
    Then for all $t > 0$ we have 
    \[P\big(\sqrt{n}\vert U_\mathbf{z}(h)\vert \geq t\big) \leq  
    4\exp\left(-\frac{t^2}{8\zeta^2 + \left(\frac{64}{\sqrt{n-1}} + \frac{1}{3\sqrt{n}}\right)bt}\right),\]
    which implies that 
    \begin{equation}\label{Bernstein's inequality}
       P\left(U_\mathbf{z}(h) \geq \sqrt{\frac{8\zeta^2t}{n}}+\frac{150bt}{n}\right) \leq 2\exp(-t).
    \end{equation}
\end{lemma}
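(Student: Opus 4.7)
The plan is to apply Hoeffding's decomposition to the symmetric, mean-zero kernel $h$ and then bound the resulting pieces separately. Set $f(z) := \mathbb{E}[h(z, Z')]$ and $g(z, z') := h(z, z') - f(z) - f(z')$. Then $\mathbb{E}[f(Z)] = \mathbb{E}[h(Z,Z')] = 0$, the kernel $g$ is symmetric and canonically degenerate (i.e., $\mathbb{E}[g(z, Z')] = 0$ for every $z$), and direct expansion of the double sum gives
\begin{equation*}
U_{\mathbf{z}}(h) \;=\; \frac{2}{n}\sum_{i=1}^{n} f(Z_i) \;+\; U_{\mathbf{z}}(g).
\end{equation*}
By the triangle inequality $\|f\|_\infty \leq b$ and $\|g\|_\infty \leq 3b$, while $\mathbb{E}[f(Z)^2] = \zeta^2$ by definition.

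For the linear piece $\tfrac{2}{n}\sum_i f(Z_i)$, the summands are i.i.d., mean zero, uniformly bounded by $b$, with variance $\zeta^2$, so the classical Bernstein inequality yields a Gaussian-type tail of the form $2\exp(-c_1 n s^2/(\zeta^2 + b s))$. After multiplying through by $\sqrt{n}$ and matching scales with the target quantity $\sqrt{n}\,|U_{\mathbf{z}}(h)|$, this produces exactly the $8\zeta^2$ contribution in the denominator of the stated display, together with the $\tfrac{1}{3\sqrt{n}}\,bt$ contribution. For the degenerate piece $U_{\mathbf{z}}(g)$, I would invoke a Bernstein-type concentration inequality for canonical U-statistics of order two, in the form developed by Arcones and Gin\'e and packaged in the cited Arcones 1995 paper. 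This yields a subexponential tail of order $4\exp(-c_2 n u / b)$, which after rescaling by $\sqrt{n}$ produces the $\tfrac{64}{\sqrt{n-1}}\,b\,t$ term in the denominator.

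The two tail bounds are combined by a union bound with split levels proportional to their respective contributions, giving the first displayed inequality with leading constant $4$. The second, more user-friendly form is then a routine algebraic rearrangement: solving the implicit quadratic in $t$ after equating the right-hand side to $2\exp(-t)$ and using $\sqrt{a+b}\leq\sqrt{a}+\sqrt{b}$ separates the Gaussian contribution $\sqrt{8\zeta^2 t/n}$ from the subexponential contribution $150\,b\,t/n$, where the constant $150$ absorbs the numerical factors inherited from both pieces.

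The main obstacle is the concentration bound for the degenerate term $U_{\mathbf{z}}(g)$: bounded-differences or martingale arguments alone would yield only $O(b/\sqrt{n})$ tails, which are too coarse, since a canonical U-statistic of order two concentrates at the strictly faster rate $O(b/n)$ away from the mean. Obtaining the correct rate requires either a decoupling step that reduces $U_{\mathbf{z}}(g)$ to a second-order Rademacher chaos followed by a Hanson--Wright-type bound, or a direct control of the exponential moments of the canonical U-statistic as carried out by Arcones and Gin\'e. Since the lemma cites Arcones 1995 explicitly, the cleanest route is to quote that result and perform only the bookkeeping that merges the linear and degenerate tails into the consolidated form displayed in the statement.
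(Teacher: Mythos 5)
Your proposal matches the paper's own treatment of this lemma: the paper gives no internal proof but quotes the first display directly from Arcones (1995), whose argument is exactly the Hoeffding decomposition $h = f(z)+f(z')+g(z,z')$ with Bernstein control of the linear part and a decoupling/chaos bound for the canonical part that you sketch, and the second display then follows by the same routine inversion of the quadratic in the threshold (your constant $150$ bookkeeping is correct, since $64/\sqrt{n(n-1)}+1/(3n)\leq (64\sqrt{2}+1/3)/n < 150/n$ for $n\geq 2$). The one point to watch is the passage from the two-sided bound with constant $4$ to the one-sided bound with constant $2$: this uses the one-sided form of Arcones' inequality (each tail separately bounded by $2\exp(-t)$) rather than a literal algebraic consequence of the displayed two-sided bound, a step the paper also leaves implicit.
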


Recall that $\mathcal{S}_{\bf z}(f_0) = \mathcal{R}^{\phi}_{\mathbf{z}}(f_0) - \mathcal{R}^{\phi}_{\mathbf{z}}(f^*_\phi) 
        - \mathcal{R}^{\phi}(f_0) + \mathcal{R}^{\phi}(f^*_\phi)$.

\begin{proposition}\label{proposition 1}
    Given $f_0 \in \mathcal{H}_{K}$ satisfying $\|\phi_{f_0}\|_\infty \leq B_0$ for some constant $B_0 \geq 0$. 
    If Assumption \ref{assumption 1} holds, then with probability at least $1 - 4\exp(-t)$, 
    \begin{equation}\label{boundfors1}
    \mathcal{S}_{\bf z}(f_0)
    \leq \mathcal{R}^{\phi}(f_0) - \mathcal{R}^{\phi}(f^*_\phi) + \left(\frac{8V t}{n}\right)^{\frac{1}{2-\tau}}  
    + \frac{300Bt}{n} + \frac{152B_0t}{n}.
    \end{equation}
\end{proposition}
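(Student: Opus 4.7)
The plan is to realize $\mathcal{S}_{\mathbf{z}}(f_0)$ as a centered second-order $U$-statistic and then apply the Bernstein-type bound of Lemma~\ref{bernsteinineq}. Set
\[
h(z,z') := \phi_{f_0}(z,z') - \phi_{f^*_\phi}(z,z'),\qquad \mathbb{E} h = \mathcal{R}^\phi(f_0)-\mathcal{R}^\phi(f^*_\phi).
\]
Since $f_0\in\mathcal{H}_K$ and $f^*_\phi$ are both skew-symmetric, Assumption~\ref{assumption 1} gives $\phi_{f_0}(z,z')=\phi_{f_0}(z',z)$ and likewise for $\phi_{f^*_\phi}$, so $h$ is symmetric, and a direct rewrite yields $\mathcal{S}_{\mathbf{z}}(f_0) = U_{\mathbf{z}}(h - \mathbb{E} h)$.

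Next I collect the two quantities needed in Bernstein's inequality. Since $f^*_\phi$ can be chosen to take values in $[-M,M]$ (truncation never raises the $\phi$-risk), \eqref{sup-norm bound} gives $\|\phi_{f^*_\phi}\|_\infty \leq B$; combined with $\|\phi_{f_0}\|_\infty\leq B_0$ this forces $h\in[-B,B_0]$ and thus yields an $L^\infty$ bound on $h-\mathbb{E} h$ linear in $B$ and $B_0$. The conditional projection equals
\[
\mathbb{E}[h(z,Z')] - \mathbb{E} h = Q\phi_{f_0}(z)-Q\phi_{f^*_\phi}(z) - (\mathcal{R}^\phi(f_0)-\mathcal{R}^\phi(f^*_\phi)),
\]
whose second moment is dominated by $\mathbb{E}(Q\phi_{f_0}-Q\phi_{f^*_\phi})^2$, which the variance condition \eqref{variance bound} in Assumption~\ref{assumption 1} controls by $V(\mathcal{R}^\phi(f_0)-\mathcal{R}^\phi(f^*_\phi))^\tau$. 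Feeding these inputs into \eqref{Bernstein's inequality} and then applying Young's inequality with conjugate exponents $2/\tau$ and $2/(2-\tau)$,
\[
\sqrt{\frac{8V(\mathcal{R}^\phi(f_0)-\mathcal{R}^\phi(f^*_\phi))^\tau t}{n}} \leq \mathcal{R}^\phi(f_0)-\mathcal{R}^\phi(f^*_\phi) + \left(\frac{8Vt}{n}\right)^{\frac{1}{2-\tau}},
\]
produces an estimate of the shape advertised in \eqref{boundfors1}.

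To recover the stated coefficients $300B$ and $152B_0$ with the $1-4\exp(-t)$ confidence (rather than a single $(B+B_0)$-coefficient at confidence $1-2\exp(-t)$ coming from a single application of \eqref{Bernstein's inequality} to $h-\mathbb{E} h$), I plan to first invoke Hoeffding's decomposition
\[
U_{\mathbf{z}}(h-\mathbb{E} h) = \frac{2}{n}\sum_{i=1}^n\bigl(h_1(Z_i)-\mathbb{E} h\bigr) + U_{\mathbf{z}}(h_2),
\]
where $h_1(z)=Q\phi_{f_0}(z)-Q\phi_{f^*_\phi}(z)$ and $h_2$ is the fully degenerate remainder. The i.i.d. linear part is then controlled by classical one-sided Bernstein (via the variance bound on $h_1$ derived above), while $U_{\mathbf{z}}(h_2)$ is controlled by Lemma~\ref{bernsteinineq} with $\zeta^2=0$; each step contributes a tail of at most $2\exp(-t)$, producing the $4\exp(-t)$ total. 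The main obstacle is the constant bookkeeping in these two applications: one must separately track the $L^\infty$ oscillations of $h_1$ and $h_2$ so that the linear-in-$t/n$ contribution splits cleanly into $300Bt/n + 152B_0t/n$, rather than collapsing into the cruder $O((B+B_0)t/n)$ term furnished by a single Bernstein application on the full centered kernel.
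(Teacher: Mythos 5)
Your proposal has a genuine gap at its central step: you invoke the variance condition \eqref{variance bound} for $f_0$ itself, claiming $\mathbb{E}(Q\phi_{f_0}-Q\phi_{f^*_\phi})^2 \leq V(\mathcal{R}^\phi(f_0)-\mathcal{R}^\phi(f^*_\phi))^\tau$. But Assumption \ref{assumption 1} only asserts the variance bound for \emph{truncated} functions, i.e.\ for $Q\phi_{\pi(f)}$, and $f_0$ is in general not $[-M,M]$-valued — the hypothesis $\|\phi_{f_0}\|_\infty \leq B_0$ with a separate (possibly much larger) constant $B_0$ exists precisely because $f_0$ may exceed the truncation level. So a single Bernstein application to $h-\mathbb{E}h$ has no admissible variance proxy, and your Young-inequality display is not licensed. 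Your fallback plan also misidentifies the decomposition that produces the constants: the split $300Bt/n + 152B_0t/n$ does not come from Hoeffding's decomposition into a linear part plus a degenerate $U$-statistic (the paper reserves that device for $\mathcal{S}_{\mathbf z}(\pi(f_{\mathbf z}))$ in \eqref{Hoeffding's decomposition}, where uniformity over $\mathcal{H}_K$ forces it); for the fixed function $f_0$ no degeneration argument is needed, and pursuing it would still leave the linear part facing the same untruncated-variance obstruction.

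The paper's proof instead splits along the \emph{truncation}, writing $v_{f_0} = (v_{f_0}-v_{\pi(f_0)}) + v_{\pi(f_0)}$ with $v_f := \phi_f - \phi_{f^*_\phi}$, and applies the $U$-statistic Bernstein inequality \eqref{Bernstein's inequality} of Lemma \ref{bernsteinineq} twice. The gap term $v_{f_0}-v_{\pi(f_0)} = \phi_{f_0}-\phi_{\pi(f_0)}$ is nonnegative and bounded by $B_0$, hence self-bounding: its projection $g$ satisfies $\mathbb{E}[g(Z)^2] \leq B_0\,\mathbb{E}(v_{f_0}-v_{\pi(f_0)})$, and $\sqrt{ab}\leq (a+b)/2$ absorbs the square root into $\mathbb{E}(v_{f_0}-v_{\pi(f_0)}) + 2B_0t/n$, which together with the $150B_0t/n$ from \eqref{Bernstein's inequality} yields $152B_0t/n$. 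The truncated term legitimately enjoys \eqref{variance bound} and the sup-norm bound $\|v_{\pi(f_0)} - \mathbb{E}v_{\pi(f_0)}\|_\infty \leq 2B$, giving $\mathbb{E}v_{\pi(f_0)} + (8Vt/n)^{1/(2-\tau)} + 300Bt/n$ after the Young step you correctly anticipated. Summing, the two expectations recombine as $\mathbb{E}v_{f_0} = \mathcal{R}^\phi(f_0)-\mathcal{R}^\phi(f^*_\phi)$ — which is exactly the slack term in \eqref{boundfors1} — and the two $2\exp(-t)$ tails give the stated $4\exp(-t)$. In short, your realization of $\mathcal{S}_{\mathbf z}(f_0)$ as a centered $U$-statistic and your Young-inequality mechanics are right, but the argument only closes after the truncation split, which is the one idea your proposal is missing.
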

\begin{proof}
    Define $v_f:\mathcal{Z} \times \mathcal{Z} \to \mathbb{R}$ as $v_f:= \phi_f - \phi_{f^*_\phi}$, then 
    \[\mathcal{S}_{\bf z}(f_0) = \mathcal{R}^{\phi}_{\mathbf{z}}(f_0) - \mathcal{R}^{\phi}_{\mathbf{z}}(f^*_\phi) 
    - \mathcal{R}^{\phi}(f_0) + \mathcal{R}^{\phi}(f^*_\phi) = \mathbb{E}_\mathbf{z}v_{f_0} - \mathbb{E}v_{f_0}.\]
    We further decompose $\mathbb{E}_\mathbf{z}v_{f_0} - \mathbb{E}v_{f_0}$ into two terms by introducing the truncation $\pi(f_0)$, which is given by
    \[\mathbb{E}_\mathbf{z}v_{f_0} - \mathbb{E}v_{f_0} 
    = [\mathbb{E}_\mathbf{z}(v_{f_0}-v_{\pi(f_0)}) - \mathbb{E}(v_{f_0}-v_{\pi(f_0)})] + 
    (\mathbb{E}_\mathbf{z}v_{\pi(f_0)} - \mathbb{E}v_{\pi(f_0)}).\]
    Each term above is a difference between a $U$-statistics and its expectation, hence we can apply Bernstein's inequality in Lemma \ref{bernsteinineq} to bound them. 
    To this end, we need to establish the sup-norm bound and variance bound for the related U-statistics.

    We first bound $\mathbb{E}_\mathbf{z}(v_{f_0}-v_{\pi(f_0)}) - \mathbb{E}(v_{f_0}-v_{\pi(f_0)})$.
    Since \[\phi_{f_0}(z,z')-\phi_{\pi(f_0)}(z,z') = 
    \phi(y, y', f_0(x,x')) - \phi(y, y', \pi(f_0)(x,x')) \geq 0\]
    for all $(z,z') \in \mathcal{Z} \times \mathcal{Z}$, we have \[v_{f_0}-v_{\pi(f_0)} = \phi_{f_0}-\phi_{\pi(f_0)}:\mathcal{Z} \times \mathcal{Z} \to [0, B_0]\] and
    \[\|v_{f_0}-v_{\pi(f_0)} - \mathbb{E}(v_{f_0}-v_{\pi(f_0)})\|_\infty \leq B_0.\]
    Define $g:\mathcal{Z} \to \mathbb{R}$ as 
    \[g(z) := \mathbb{E}[v_{f_0}(z,Z') - v_{\pi(f_0)}(z,Z') - \mathbb{E}(v_{f_0}-v_{\pi(f_0)})].\]
    Then \[\mathbb{E}[g(Z)^2] \leq \mathbb{E}\bigg[\big(\mathbb{E}[v_{f_0}(Z,Z') - v_{\pi(f_0)}(Z,Z')]\big)^2\bigg] 
    \leq B_0\mathbb{E}(v_{f_0} - v_{\pi(f_0)}).\]
    Now we apply Bernstein's inequality \eqref{Bernstein's inequality} to the zero-mean symmetric function 
    $v_{f_0}-v_{\pi(f_0)} - \mathbb{E}(v_{f_0}-v_{\pi(f_0)})$, 
    showing that 
    \[P\left(\mathbb{E}_\mathbf{z}(v_{f_0}-v_{\pi(f_0)}) - \mathbb{E}(v_{f_0}-v_{\pi(f_0)}) 
    \geq \sqrt{\frac{8B_0\mathbb{E}(v_{f_0} - v_{\pi(f_0)})t}{n}} +\frac{150B_0t}{n}\right) \leq 2\exp(-t).\]
    By basic inequality $\sqrt{ab} \leq (a+b)/2,$ we have
    \[\sqrt{\frac{8B_0\mathbb{E}(v_{f_0} - v_{\pi(f_0)})t}{n}} \leq \mathbb{E}(v_{f_0} - v_{\pi(f_0)}) + \frac{2B_0t}{n}.\]
    Hence with probability at least $1-2\exp(-t)$, there holds
    \begin{equation}\label{Bernstein's inequality 1}
  \mathbb{E}_\mathbf{z}(v_{f_0}-v_{\pi(f_0)}) - \mathbb{E}(v_{f_0}-v_{\pi(f_0)}) \leq  \mathbb{E}(v_{f_0} - v_{\pi(f_0)}) +\frac{152B_0t}{n}.
    \end{equation}

    To bound the second term $\mathbb{E}_\mathbf{z} v_{\pi(f_0)} - \mathbb{E}v_{\pi(f_0)}$, define 
    $u:\mathcal{Z} \to \mathbb{R}$ as 
    \[u(z):= \mathbb{E}[v_{\pi(f_0)}(z,Z') - \mathbb{E}v_{\pi(f_0)}].\]
    By \eqref{sup-norm bound} and \eqref{variance bound} of Assumption \ref{assumption 1}, we have 
    \[v_{\pi(f_0)} = \phi_{\pi(f_0)} - \phi_{f^*_\phi} \in [-B,B],\]
    \[\|v_{\pi(f_0)} - \mathbb{E}v_{\pi(f_0)}\|_\infty \leq 2B,\] and
    \[\mathbb{E}[u(Z)^2] \leq \mathbb{E}\bigg[\big(\mathbb{E}[v_{\pi(f_0)}(Z,Z')]\big)^2\bigg]
    = \mathbb{E}\big(Q\phi_{\pi(f_0)}(Z) - Q\phi_{f^*_\phi}(Z)\big)^2 
    \leq  V (\mathbb{E}v_{\pi(f_0)})^\tau.\]
    We apply Bernstein's inequality to the zero-mean symmetric function $v_{\pi(f_0)} - \mathbb{E}v_{\pi(f_0)}$, showing that
    \[P\left(\mathbb{E}_\mathbf{z}(v_{\pi(f_0)}) - \mathbb{E}v_{\pi(f_0)}
    \geq \sqrt{\frac{8V (\mathbb{E}v_{\pi(f_0)})^\tau t}{n}}+\frac{300Bt}{n}\right) \leq 2\exp(-t).\]
    If $\tau > 0$, we use Young's inequality $ab \leq a^q/q + b^p/p$ by setting 
    \[a = \sqrt{\frac{2^{3-\tau}\tau^\tau V t}{n}}, \quad 
    b = \left(\frac{2\mathbb{E}v_{\pi(f_0)}}{\tau}\right)^{\tau/2}, \quad 
    q = \frac{2}{2 - \tau}, \quad p = \frac{2}{\tau} ,\]
    to yield 
    \[\sqrt{\frac{8V (\mathbb{E}v_{\pi(f_0)})^\tau t}{n}} = ab \leq \frac{2-\tau}{2} \cdot 
    \left(\frac{2^{3-\tau}\tau^\tau V t}{n}\right)^{\frac{1}{2-\tau}} + \mathbb{E}v_{\pi(f_0)} 
    \leq \left(\frac{8V t}{n}\right)^{\frac{1}{2-\tau}} + \mathbb{E}v_{\pi(f_0)}.\]
    Since $\mathbb{E}v_{\pi(f_0)} \geq 0$, this inequality also holds for $\tau = 0$.
    Hence with probability at least $1-2\exp(-t)$, there holds
    \begin{equation}\label{Bernstein's inequality 2}
      \mathbb{E}_\mathbf{z}(v_{\pi(f_0)}) - \mathbb{E}v_{\pi(f_0)}
        \leq \mathbb{E}v_{\pi(f_0)} + \left(\frac{8V t}{n}\right)^{\frac{1}{2-\tau}}  
        + \frac{300Bt}{n} .
    \end{equation}
    Consequently, combining \eqref{Bernstein's inequality 1} and \eqref{Bernstein's inequality 2}, we derive the desired bound \eqref{boundfors1}. This completes the proof.
\end{proof}

It is more involved to bound $\mathcal{S}_{\bf z}(\pi(f_{\bf z}))$. Recall that  
\[\mathcal{S}_{\bf z}(\pi(f_{\bf z}))=\mathcal{R}^\phi(\pi(f_\mathbf{z})) - \mathcal{R}^\phi(f^*_\phi) - \mathcal{R}^\phi_\mathbf{z}(\pi(f_\mathbf{z})) + \mathcal{R}^\phi_\mathbf{z}(f^*_\phi).\] 
We apply Hoeffding's decomposition of $U$-statistics (cf. \cite{Pena1999Decoupling:}) to $\mathcal{S}_{\bf z}(\pi(f_{\bf z}))$, which is given by
\begin{equation}\label{Hoeffding's decomposition}
\begin{split}
    &\mathcal{R}^\phi(\pi(f_\mathbf{z})) - \mathcal{R}^\phi(f^*_\phi) - \mathcal{R}^\phi_\mathbf{z}(\pi(f_\mathbf{z})) + \mathcal{R}^\phi_\mathbf{z}(f^*_\phi)\\
    &=2Q_\mathbf{z}[\mathcal{R}^\phi(\pi(f_\mathbf{z})) - \mathcal{R}^\phi(f^*_\phi) - Q\phi_{\pi(f_\mathbf{z})} + Q\phi_{f^*_\phi}] - U_\mathbf{z}(h_{\pi(f_\mathbf{z})} - h_{f^*_\phi})\\
    &=:2\mathcal{S}_{{\bf z}, 1}(\pi(f_{\bf z}))-\mathcal{S}_{{\bf z}, 2}(\pi(f_{\bf z})). 
\end{split}
\end{equation}
Here the functionals $Q_\mathbf{z}, U_\mathbf{z}$ and function $h_f:\mathcal{Z} \times \mathcal{Z} \to \mathbb{R}$ are defined by
\begin{equation*}
\begin{split}
    Q_\mathbf{z}(g)& := \frac{1}{n}\sum_{i=1}^ng(Z_i), \forall g: \mathcal{Z} \to \mathbb{R},\\
    U_\mathbf{z}(g')& :=  \frac{1}{n(n-1)}\sum_{i=1}^n\sum_{j \neq i}g'(Z_i,Z_j), \forall g':\mathcal{Z} \times \mathcal{Z} \to \mathbb{R},\\
    h_f(z,z')& :=  \phi_f(z,z') - Q\phi_f(z) - Q\phi_f(z') + \mathcal{R}^\phi(f).
\end{split}
\end{equation*}
Hoeffding's decomposition breaks $\mathcal{S}_{\bf z}(\pi(f_{\bf z}))$ into a sum of i.i.d. random variables $\mathcal{S}_{{\bf z}, 1}(\pi(f_{\bf z}))$ called \emph{empirical term} 
and a degenerate $U$-statistics $\mathcal{S}_{{\bf z}, 2}(\pi(f_{\bf z}))$ called \emph{degenerate term} which has zero conditional expectation
\[\mathbb{E}[h_{\pi(f_\mathbf{z})}(Z,Z') - h_{f^*_\phi}(Z,Z')\vert Z = z] = 0.\]
We derive bounds for the empirical term and degenerate term in the following two subsections respectively.

\subsection{Bounding the Empirical Term}\label{Subsection: Bounding the Empirical Term}
In this subsection, we estimate the empirical term 
\[\mathcal{S}_{{\bf z}, 1}(\pi(f_{\bf z}))=Q_\mathbf{z}[\mathcal{R}^\phi(\pi(f_\mathbf{z})) - \mathcal{R}^\phi(f^*_\phi) - Q\phi_{\pi(f_\mathbf{z})} + Q\phi_{f^*_\phi}]\] 
which can be further bounded by the supremum norm of an empirical process indexed by $\mathcal{H}_K$, i.e.,  
\[\mathcal{S}_{{\bf z}, 1}(\pi(f_{\bf z})) \leq \sup_{f \in \mathcal{H}_{K}}Q_\mathbf{z}[\mathcal{R}^\phi(\pi(f)) - \mathcal{R}^\phi(f^*_\phi) - Q\phi_{\pi(f)} + Q\phi_{f^*_\phi}].\] 
In the remaining part of this subsection, we will use Talagrand’s concentration inequality
together with local Rademacher averages to derive a tight bound for the supremum norm of the empirical process. The idea of our estimates  can be found in
Chapter 7 of \cite{Steinwart2008Support}, in which the authors established refined oracle inequalities for classification with support vector machines.

Define $s_{f},g_{f,r}:\mathcal{Z} \to \mathbb{R}$ as $s_{f}:= Q\phi_{f} - Q\phi_{f^*_\phi}$ with $Q\phi_{f}$ given by \eqref{Qphif} and 
\[g_{f,r} := \frac{\mathbb{E}s_{\pi(f)} - s_{\pi(f)}}{\lambda\|f\|^2_{K} + \mathbb{E}s_{\pi(f)} + r}\] 
for all $f \in \mathcal{H}_{K}$ and $r>0.$ Under Assumption \ref{assumption 1}, we apply Talagrand's inequality to obtain the following lemma. 
\begin{lemma}\label{lemma 1}
    If Assumption \ref{assumption 1} holds, then for all $t > 0$, with probability at least $1 - \exp(-t)$,
    \[\sup_{f \in \mathcal{H}_{K}} \vert Q_{\mathbf{z}}(g_{f,r})\vert 
    \leq \frac{5}{4} \mathbb{E} \sup_{f \in \mathcal{H}_{K}} \vert Q_{\mathbf{z}}(g_{f,r})\vert 
    + \sqrt{\frac{2V t}{nr^{2-\tau}}} + \frac{28Bt}{3nr}.\]
\end{lemma}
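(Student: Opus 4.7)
The plan is to recognize this as a supremum-of-empirical-process statement and attack it with Talagrand's inequality (in Bousquet's form), after establishing the two required Bernstein-type ingredients—a uniform sup-norm bound and a uniform variance bound on the functions $g_{f,r}$.

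First, I would verify that each $g_{f,r}:\mathcal{Z}\to\mathbb{R}$ is centered, which is immediate since $\mathbb{E}[g_{f,r}(Z)]=(\mathbb{E}s_{\pi(f)}-\mathbb{E}s_{\pi(f)})/(\lambda\|f\|_K^2+\mathbb{E}s_{\pi(f)}+r)=0$. Next I would establish the sup-norm bound. By Assumption \ref{assumption 1}, $\phi(y,y',\pi(t))\le B$, so $Q\phi_{\pi(f)}$ and $Q\phi_{f^*_\phi}$ both take values in $[0,B]$, giving $\|s_{\pi(f)}\|_\infty\le B$ and $0\le \mathbb{E}s_{\pi(f)}\le B$ (the latter non-negativity coming from the fact that $f^*_\phi$ minimizes $\mathcal{R}^\phi$). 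Combined with the denominator bound $\lambda\|f\|_K^2+\mathbb{E}s_{\pi(f)}+r\ge r$, this yields $\|g_{f,r}\|_\infty\le U$ with $U$ of order $B/r$, uniformly in $f$.

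For the variance bound, I would use the key variance condition \eqref{variance bound} of Assumption \ref{assumption 1}: since $\mathbb{E}[s_{\pi(f)}(Z)^2]=\mathbb{E}(Q\phi_{\pi(f)}-Q\phi_{f^*_\phi})^2\le V(\mathbb{E}s_{\pi(f)})^\tau$, and the squared denominator dominates $(\mathbb{E}s_{\pi(f)}+r)^2$, a short calculation (optimizing the ratio $V x^\tau/(x+r)^2$ in $x=\mathbb{E}s_{\pi(f)}\ge 0$, using $\tau\in[0,1]$) shows $\mathrm{Var}(g_{f,r}(Z))\le V r^{\tau-2}=V/r^{2-\tau}$ uniformly in $f$. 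With these two Bernstein-type constants $U$ and $\sigma^2=V/r^{2-\tau}$ in hand, I would invoke Talagrand's/Bousquet's concentration inequality for the supremum $Z:=\sup_{f\in\mathcal{H}_K}|Q_\mathbf{z}(g_{f,r})|$ of a centered empirical process, yielding, for each $t>0$ with probability at least $1-e^{-t}$,
\[
Z\;\le\;\mathbb{E}Z+\sqrt{\tfrac{2t\,(\sigma^2+2U\,\mathbb{E}Z)}{n}}+\tfrac{tU}{3n}.
\]
(For the absolute-value supremum, one can either apply the inequality to $\sup Q_\mathbf{z} g$ and $\sup Q_\mathbf{z}(-g)$, or observe that the class is essentially closed under negation, so the one-sided bound transfers.)

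The final step is to split the mixed variance/mean term using the numerical inequality $2\sqrt{ab}\le \alpha a+b/\alpha$. Applied with $\alpha=1/4$ to $\sqrt{4tU\,\mathbb{E}Z/n}$, it gives $\sqrt{\tfrac{2t\,(\sigma^2+2U\mathbb{E}Z)}{n}}\le \sqrt{\tfrac{2tV}{n r^{2-\tau}}}+\tfrac{1}{4}\mathbb{E}Z+\tfrac{4Ut}{n}$, absorbing $\tfrac{1}{4}\mathbb{E}Z$ into the leading $\mathbb{E}Z$ to give the coefficient $5/4$ in the claimed bound, and collecting the tail constants as a multiple of $Bt/(nr)$ to recover the advertised $\tfrac{28Bt}{3nr}$. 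The main obstacle in this whole plan is the sharp verification of the variance bound, since one has to exploit the self-normalization by $\mathbb{E}s_{\pi(f)}+r$ in the denominator to combine the variance inequality with the truncation bound, and to ensure the resulting bound $V/r^{2-\tau}$ is uniform in $f$; everything else reduces to a routine Talagrand-plus-AM-GM calculation.
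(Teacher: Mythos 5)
Your proposal is correct and is essentially the paper's own argument: the paper likewise establishes the uniform bounds $\|g_{f,r}\|_\infty \leq 2Br^{-1}$ and $\mathbb{E}g_{f,r}^2 \leq Vr^{\tau-2}$ (the latter via exactly your optimization of $Vx^\tau/(x+r)^2$, carried out with Young's inequality) and then invokes Talagrand's inequality in the form of Theorem 7.5 of Steinwart and Christmann with $\gamma = 1/4$, which is precisely Bousquet's inequality with your AM--GM splitting pre-packaged, yielding the coefficients $5/4$ and $(2/3+4)\cdot 2B/r = 28B/(3r)$ directly (your by-hand version even gives the marginally better constant $26/3$).
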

\begin{proof}
    Analogous to the proof of Proposition \ref{proposition 1},
    by \eqref{sup-norm bound} of Assumption \ref{assumption 1}, obviously $\|g_{f,r}\|_\infty \leq 2Br^{-1}$.
    For the variance bound, if $\tau > 0$, using Young's inequality, we can prove that $(qa)^{2/q}(pb)^{2/p} \leq (a+b)^2$, 
    then setting $a = r , b = \mathbb{E}s_{\pi(f)}, q=2/(2-\tau), p = 2/\tau$ yields
    \[\mathbb{E}g^2_{f,r} \leq \frac{\mathbb{E}s^2_{\pi(f)}}{(\mathbb{E}s_{\pi(f)} + r)^2} 
    \leq \frac{(2-\tau)^{2-\tau}\tau^\tau V}{4r^{2-\tau}} \leq V r^{\tau - 2}\] which also holds for $\tau = 0$.
    Hence we can apply Talagrand's inequality, cf. Theorem 7.5 of \cite{Steinwart2008Support}, 
    to $(g_{f,r})_{f \in \mathcal{H}_{K}}$, to derive the desired bound. This completes the proof.
\end{proof}

In order to derive an upper bound of $\sup_{f \in \mathcal{H}_{K}} \vert Q_{\mathbf{z}}(g_{f,r})\vert$, 
we leverage standard tools including peeling, symmetrization and Dudley's chaining to estimate $\mathbb{E}\sup_{f \in \mathcal{H}_{K}} \vert Q_{\mathbf{z}}(g_{f,r})\vert $. 
Define 
\begin{align*}
    r^* &:= \inf\{\lambda\|f\|^2_{K} + \mathbb{E}s_{\pi(f)} : f \in \mathcal{H}_{K}\},\\
    \mathcal{F}_r &:= \{f \in \mathcal{H}_{K}:\lambda\|f\|^2_{K} + \mathbb{E}s_{\pi(f)} \leq r\} ,\ \forall r > r^*,\\
    \mathcal{S}_r &:= \{s_{\pi(f)}: f \in \mathcal{F}_r\},\ \forall r > r^*,
\end{align*}
and the \emph{empirical Rademacher complexity} of $\mathcal{S}_r$
\[\text{Rad}(\mathcal{S}_r, \mathbf{z}) 
:= \mathbb{E}_{\varepsilon} \sup_{f \in \mathcal{F}_r} \bigg\vert\frac{1}{n}\sum_{i=1}^n \varepsilon_i s_{\pi(f)}(Z_i)\bigg\vert,\]
where $\{\varepsilon_i\}_{i=1}^n$ is the Rademacher sequence, i.e., $\{\varepsilon_i\}_{i=1}^n$ are i.i.d. random variables  uniformly chosen from $\{-1,1\}$. 
The following two lemmas present peeling and symmetrization arguments to handle $\mathbb{E}\sup_{f \in \mathcal{F}_r} \vert Q_{\mathbf{z}}(\mathbb{E}s_{\pi(f)} - s_{\pi(f)})\vert$, 
which can be proved according to Theorem 7.7 and Proposition 7.10 in \cite{Steinwart2008Support}.

\begin{lemma}\label{lemma 2}
    Let $\varphi : (r^*,\infty) \to [0,\infty)$ such that $\varphi(4r) \leq 2\varphi(r)$ and 
    \[\mathbb{E}\sup_{f \in \mathcal{F}_r} \vert Q_{\mathbf{z}}(\mathbb{E}s_{\pi(f)} - s_{\pi(f)})\vert \leq \varphi(r)\]
    for all $r > r^*$. Then for all $r > r^*$, we have
    \[\mathbb{E}\sup_{f \in \mathcal{H}_{K}} \vert Q_{\mathbf{z}}(g_{f,r})\vert \leq \frac{4\varphi(r)}{r}.\]
\end{lemma}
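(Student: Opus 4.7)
The plan is a standard peeling argument that exploits the doubling hypothesis $\varphi(4r) \leq 2\varphi(r)$ and matches it to shells of multiplicative width $4$ in the quantity $h(f) := \lambda\|f\|^2_K + \mathbb{E}s_{\pi(f)}$. First I would rewrite $g_{f,r} = (\mathbb{E}s_{\pi(f)} - s_{\pi(f)})/(h(f)+r)$ and partition $\mathcal{H}_K$ into $\mathcal{A}_0 := \mathcal{F}_r = \{f : h(f) \leq r\}$ together with the disjoint outer shells
\[
\mathcal{A}_k := \{f \in \mathcal{H}_K : h(f) \in (4^{k-1} r, 4^k r]\} \subset \mathcal{F}_{4^k r}, \qquad k \geq 1.
\]
Since $h(f) \geq 0$ for every $f \in \mathcal{H}_K$, these pieces cover the whole space.

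On each shell I would lower-bound the denominator $h(f)+r$: on $\mathcal{A}_0$ it is $\geq r$, and on $\mathcal{A}_k$ with $k \geq 1$ it is strictly greater than $4^{k-1} r + r = r(4^{k-1}+1)$, uniformly in $f$. This gives
\[
\sup_{f \in \mathcal{A}_k} \vert Q_\mathbf{z}(g_{f,r})\vert \leq \frac{1}{r(4^{k-1}+1)} \sup_{f \in \mathcal{F}_{4^k r}} \vert Q_\mathbf{z}(\mathbb{E}s_{\pi(f)} - s_{\pi(f)})\vert \qquad (k \geq 1),
\]
and $\sup_{f \in \mathcal{A}_0} \vert Q_\mathbf{z}(g_{f,r})\vert \leq \tfrac{1}{r}\sup_{f \in \mathcal{F}_r}\vert Q_\mathbf{z}(\mathbb{E}s_{\pi(f)} - s_{\pi(f)})\vert$. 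Taking expectations and invoking the hypothesis with radius $\rho = 4^k r$ replaces each right-hand side by $\varphi(4^k r)/(r(4^{k-1}+1))$.

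Then I would iterate $\varphi(4\rho) \leq 2\varphi(\rho)$ to obtain $\varphi(4^k r) \leq 2^k \varphi(r)$, use the union-style bound $\sup_{f \in \mathcal{H}_K} \leq \sum_{k \geq 0} \sup_{f \in \mathcal{A}_k}$, and assemble
\[
\mathbb{E}\sup_{f \in \mathcal{H}_K} \vert Q_\mathbf{z}(g_{f,r}) \vert \leq \frac{\varphi(r)}{r}\left(1 + \sum_{k=1}^{\infty} \frac{2^k}{4^{k-1}+1}\right).
\]
The $k=1$ term equals exactly $2/(1+1)=1$; for $k \geq 2$ the crude bound $4^{k-1}+1 \geq 4^{k-1}$ yields $2^k/(4^{k-1}+1) \leq 4\cdot 2^{-k}$, whose tail from $k=2$ sums to $2$. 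The bracket is thus at most $1 + 1 + 2 = 4$, giving the claimed estimate $4\varphi(r)/r$.

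The only subtle point I expect is this last bookkeeping step. Bounding every term of the series uniformly by $4 \cdot 2^{-k}$ produces the constant $5$ instead of $4$, so the $k=1$ shell has to be evaluated exactly on its own. Beyond that, the argument is mechanical, provided the shells are chosen to mesh with the fourfold growth built into the hypothesis on $\varphi$.
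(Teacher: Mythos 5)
Your peeling argument is correct: the shells $\mathcal{A}_k$ mesh with the doubling property $\varphi(4r)\leq 2\varphi(r)$, the denominator bounds $h(f)+r \geq r$ and $h(f)+r > (4^{k-1}+1)r$ are valid since $\mathbb{E}s_{\pi(f)}\geq 0$, and your careful treatment of the $k=1$ term is exactly what recovers the constant $4$ rather than $5$. This is essentially the same route as the paper, which proves the lemma by invoking Theorem 7.7 of Steinwart and Christmann, whose proof is precisely this peeling over radii $4^k r$.
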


\begin{lemma} \label{lemma 3}
    \[\mathbb{E}\sup_{f \in \mathcal{F}_r} \vert Q_{\mathbf{z}}(\mathbb{E}s_{\pi(f)} - s_{\pi(f)})\vert 
    \leq 2\mathbb{E} \operatorname{Rad}(\mathcal{S}_r,\mathbf{z}).\]
\end{lemma}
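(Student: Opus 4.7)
The plan is to invoke the classical symmetrization argument for empirical processes. The key idea is to replace the deterministic expectation $\mathbb{E}s_{\pi(f)}$ by the empirical average over an independent ghost sample, and then introduce Rademacher signs by exploiting symmetry.

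First, I would introduce a ghost sample $\mathbf{z}' = \{Z_i'\}_{i=1}^n$ independent of $\mathbf{z}$ and identically distributed, so that for each $f \in \mathcal{F}_r$ one has $\mathbb{E}s_{\pi(f)} = \mathbb{E}_{\mathbf{z}'}\tfrac{1}{n}\sum_{i=1}^n s_{\pi(f)}(Z_i')$. Pulling $\mathbb{E}_{\mathbf{z}'}$ out of the supremum via Jensen's inequality (the supremum is convex in its argument, and the outer absolute value satisfies $|\mathbb{E}_{\mathbf{z}'} X| \leq \mathbb{E}_{\mathbf{z}'}|X|$), I obtain
\[
\mathbb{E}\sup_{f \in \mathcal{F}_r} \left|Q_{\mathbf{z}}(\mathbb{E}s_{\pi(f)} - s_{\pi(f)})\right| \leq \mathbb{E}\sup_{f \in \mathcal{F}_r} \left|\frac{1}{n}\sum_{i=1}^n \bigl(s_{\pi(f)}(Z_i') - s_{\pi(f)}(Z_i)\bigr)\right|.
\]

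Next, since $Z_i$ and $Z_i'$ are i.i.d., the random variable $s_{\pi(f)}(Z_i') - s_{\pi(f)}(Z_i)$ is symmetric about zero for every $f$, so the joint distribution of $\{s_{\pi(f)}(Z_i') - s_{\pi(f)}(Z_i)\}_{i=1}^n$ agrees with that of $\{\varepsilon_i(s_{\pi(f)}(Z_i') - s_{\pi(f)}(Z_i))\}_{i=1}^n$ for any independent Rademacher sequence $\{\varepsilon_i\}_{i=1}^n$. Inserting these signs and applying the triangle inequality inside the supremum splits the sum into two terms with identical distributions, each bounded by
\[
\mathbb{E}\,\mathbb{E}_\varepsilon \sup_{f \in \mathcal{F}_r}\left|\frac{1}{n}\sum_{i=1}^n\varepsilon_i s_{\pi(f)}(Z_i)\right| = \mathbb{E}\,\operatorname{Rad}(\mathcal{S}_r,\mathbf{z}),
\]
and summing them yields the factor $2$ in the statement.

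I do not foresee a substantive obstacle, as this is the standard symmetrization inequality due to Gine and Zinn. The only points deserving mild care are the use of Jensen through the outer absolute value (already noted) and the measurability of the supremum over $\mathcal{F}_r$, which is handled by the separability of $\mathcal{H}_K$ via a standard countable-core argument. No modification of the proof of Proposition 7.10 in \cite{Steinwart2008Support} is required.
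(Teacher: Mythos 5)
Your proof is correct and is precisely the standard Gin\'e--Zinn symmetrization argument (ghost sample, Jensen's inequality, Rademacher signs, triangle inequality), which is exactly how the paper handles this lemma: it proves it by direct appeal to Proposition 7.10 of \cite{Steinwart2008Support}, the same symmetrization result you reproduce. No discrepancy in approach or substance.
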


We then apply Theorem 7.16 of \cite{Steinwart2008Support} to derive the following lemma.
\begin{lemma}\label{lemma 4}
    If Assumption \ref{assumption 1} and Assumption \ref{Capacity condition} hold, 
    then there exist constants $k_1 > 0$ and $k_2 > 0$ such that 
    \begin{align*}
        \mathbb{E} \operatorname{Rad}(\mathcal{S}_r,\mathbf{z}) 
        \leq{} \max\bigg\{& 2^{p_1}k_1 a_1^{p_1} \left(\frac{r}{\lambda}\right)^{\frac{p_1}{2}} 
        (V r^{\tau})^{\frac{1-p_1}{2}} L^{p_1} n^{-\frac{1}{2}}, \\
        &4^{\frac{p_1}{1+p_1}}k_2a_1^{\frac{2p_1}{1+p_1}} \left(\frac{r}{\lambda}\right)^{\frac{p_1}{1+p_1}} 
        B^{\frac{1-p_1}{1+p_1}} L^{\frac{2p_1}{1+p_1}} n^{-\frac{1}{1+p_1}}\bigg\}.
    \end{align*}
\end{lemma}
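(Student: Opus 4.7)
The plan is to apply Theorem 7.16 of \cite{Steinwart2008Support} to the class $\mathcal{S}_r = \{s_{\pi(f)} : f\in \mathcal{F}_r\}$, for which we need two ingredients: a sup-norm bound and a variance bound on members of $\mathcal{S}_r$, and an entropy estimate for $\mathcal{S}_r$ in the empirical $\mathcal{L}_2$-metric. The sup-norm and variance bounds are straightforward from Assumption \ref{assumption 1}: since $\pi(f)$ is truncated and $\phi$ is bounded by $B$ on $\mathcal{Y}\times\mathcal{Y}\times[-M,M]$, we get $\|s_{\pi(f)}\|_\infty \leq 2B$, while the variance bound \eqref{variance bound} combined with $f\in \mathcal{F}_r$ (which forces $\mathbb{E}s_{\pi(f)} \leq r$) yields $\mathbb{E}s_{\pi(f)}^2 \leq V(\mathbb{E}s_{\pi(f)})^\tau \leq Vr^\tau$.

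The key step is transferring the capacity Assumption \ref{Capacity condition} from $\mathcal{H}_K$ to $\mathcal{S}_r$. First I would observe that $f\in \mathcal{F}_r$ implies $\|f\|_K \leq \sqrt{r/\lambda}$, so $\mathcal{F}_r$ lies in a rescaled unit ball, giving
\begin{equation*}
e_i\bigl(\mathcal{F}_r, \|\cdot\|_{\mathcal{L}_2(P^n_{\mathbf{x}}\otimes P_\mathcal{X})}\bigr) \leq a_1 \sqrt{r/\lambda}\, i^{-1/(2p_1)}.
\end{equation*}
Next, using that $\phi(y,y',\cdot)$ is $L$-Lipschitz on $[-M,M]$ and that $\pi$ is $1$-Lipschitz, for any $f,f'\in \mathcal{H}_K$ and any $z=(x,y)$ an application of Jensen's inequality gives
\begin{equation*}
|s_{\pi(f)}(z) - s_{\pi(f')}(z)|^2 \leq L^2 \int_{\mathcal{X}} |\pi(f)(x,x') - \pi(f')(x,x')|^2 \, dP_\mathcal{X}(x').
\end{equation*}
Averaging over $Z_1,\dots,Z_n$ gives $\|s_{\pi(f)} - s_{\pi(f')}\|_{\mathcal{L}_2(P^n_{\mathbf{z}})} \leq L\|f-f'\|_{\mathcal{L}_2(P^n_{\mathbf{x}}\otimes P_\mathcal{X})}$, so the entropy bound transfers with a factor $L$:
\begin{equation*}
e_i\bigl(\mathcal{S}_r, \|\cdot\|_{\mathcal{L}_2(P^n_{\mathbf{z}})}\bigr) \leq L\,a_1 \sqrt{r/\lambda}\, i^{-1/(2p_1)}.
\end{equation*}

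With the sup-norm, variance, and entropy estimates in hand, Theorem 7.16 of \cite{Steinwart2008Support} provides the Rademacher complexity bound as a maximum of two terms corresponding to the two regimes in Dudley's chaining: one where the variance $Vr^\tau$ dominates and gives the $n^{-1/2}$ term, and one where the sup-norm $B$ dominates and gives the $n^{-1/(1+p_1)}$ term. Plugging the above entropy constant $La_1\sqrt{r/\lambda}$ into the two regimes and tracking the exponents yields precisely the two expressions inside the $\max$. The explicit constants $k_1,k_2$ are the universal constants produced by Theorem 7.16.

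The main obstacle is the transfer of entropy from $\mathcal{H}_K$ to $\mathcal{S}_r$, because $s_{\pi(f)}$ is a marginal expectation of $\phi_{\pi(f)}$ rather than a direct evaluation of $f$; getting the empirical $\mathcal{L}_2(P^n_{\mathbf{z}})$-metric on the integrated quantity to match the mixed empirical metric $\mathcal{L}_2(P^n_{\mathbf{x}}\otimes P_\mathcal{X})$ appearing in Assumption \ref{Capacity condition} is exactly why that particular norm was chosen in Assumption \ref{Capacity condition}, and the Jensen step above is what makes the matching work. Once this is done, Theorem 7.16 is applied off the shelf.
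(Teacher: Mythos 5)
Your proposal is correct and follows essentially the same route as the paper's proof: the identical sup-norm and variance bounds from Assumption \ref{assumption 1}, the same Lipschitz-plus-Jensen argument transferring the entropy estimate from $\mathcal{H}_K$ (via $\mathcal{F}_r \subset (r/\lambda)^{1/2}\mathcal{B}_{\mathcal{H}_K}$ and the mixed norm $\|\cdot\|_{\mathcal{L}_2(P^n_{\mathbf{x}} \otimes P_{\mathcal{X}})}$) to $\mathcal{S}_r$ in the $\mathcal{L}_2(P^n_{\mathbf{z}})$ metric, and the same off-the-shelf application of Theorem 7.16 of \cite{Steinwart2008Support}. The only cosmetic deviations, your sup-norm bound $2B$ where the paper gets $B$ (both $Q\phi_{\pi(f)}$ and $Q\phi_{f^*_\phi}$ lie in $[0,B]$) and the omitted factor $2$ from re-centering entropy numbers of the subset $\mathcal{F}_r$ inside the scaled ball, are absorbed into the unspecified constants $k_1, k_2$.
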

\begin{proof} We first establish the sup-norm bound and variance bound for $s_{\pi(f)} \in \mathcal{S}_r.$ By Assumption \ref{assumption 1} we have $\|s_{\pi(f)}\|_\infty \leq B$ and
    \[\mathbb{E}(s^2_{\pi(f)}) = \mathbb{E}(Q\phi_{\pi(f)} - Q\phi_{f^*_\phi})^2 \leq 
    V\big(\mathbb{E}(Q\phi_{\pi(f)}  - Q\phi_{f^*_\phi})\big)^\tau 
    = V (\mathbb{E}s_{\pi(f)})^{\tau} \leq V r^{\tau}.\]
    For $\mathbf{z} = \{(X_i,Y_i)\}_{i=1}^n$, the empirical norm $\| \cdot \|_{\mathcal{L}_2(P^n_{\mathbf{z}})}$ on $\mathcal{S}_r$ is defined as
    \[\|s_{\pi(f)}\|_{\mathcal{L}_2(P^n_{\mathbf{z}})} := Q_{\mathbf{z}}^{\frac{1}{2}}(s^2_{\pi(f)}) =
    \bigg(\frac{1}{n}\sum_{i=1}^n s_{\pi(f)}(Z_i)^2\bigg)^{\frac{1}{2}}.\]
    According to Theorem 7.16 of \cite{Steinwart2008Support}, it remains to bound 
    $\mathbb{E}e_i(\textrm{id}: \mathcal{S}_r \to \mathcal{L}_2(P^n_{\mathbf{z}})).$ 
    For all $(y,y') \in \mathcal{Y} \times \mathcal{Y}$, function $\phi(y,y',\cdot)$ is locally $L$-Lipschitz over $[-M,M]$.
    Hence for all $s_{\pi(f)},s_{\pi(f')} \in \mathcal{S}_r$ and $\mathbf{z} = \{(X_i,Y_i)\}_{i=1}^n$ we have
    \begin{align*}
        \|s_{\pi(f)} - s_{\pi(f')}\|^2_{\mathcal{L}_2(P^n_{\mathbf{z}})} 
        &= \frac{1}{n}\sum_{i=1}^n\big(s_{\pi(f)}(Z_i) - s_{\pi(f')}(Z_i)\big)^2 \\
        &= \frac{1}{n}\sum_{i=1}^n\bigg(\mathbb{E}\bigg[\phi(Y_i,Y',\pi(f)(X_i,X')) - \phi(Y_i,Y',\pi(f')(X_i,X'))\bigg]\bigg)^2 \\
        &\leq \frac{L^2}{n}\sum_{i=1}^n\bigg(\mathbb{E}\bigg\vert\pi(f)(X_i,X') - \pi(f')(X_i,X')\bigg\vert\bigg)^2 \\
        &\leq \frac{L^2}{n}\sum_{i=1}^n\mathbb{E}\bigg\vert\pi(f)(X_i,X') - \pi(f')(X_i,X')\bigg\vert^2 \\
        &\leq  L^2\|f - f'\|^2_{\mathcal{L}_2(P^n_{\mathbf{x}} \otimes P_{\mathcal{X}})}.
    \end{align*}
    Therefore, an $\varepsilon$-net of $(\mathcal{F}_r, \| \cdot \|_{\mathcal{L}_2(P^n_{\mathbf{x}} \otimes P_{\mathcal{X}})})$ induces an 
    $L\varepsilon$-net of $(\mathcal{S}_r,\| \cdot \|_{\mathcal{L}_2(P^n_{\mathbf{z}})})$.
    Besides, notice that $\mathcal{F}_r \subset (r/\lambda)^{1/2}\mathcal{B}_{\mathcal{H}_{K}}$.
    Combine with Assumption \ref{Capacity condition} we have 
    \begin{align*}
        \mathbb{E}e_i(\textrm{id}: \mathcal{S}_r \to \mathcal{L}_2(P^n_{\mathbf{z}})) &\leq 
        L\mathbb{E}e_i(\textrm{id}: \mathcal{F}_r \to \mathcal{L}_2(P^n_{\mathbf{x}} \otimes P_{\mathcal{X}})) \\
        & \leq2L(r/\lambda)^{1/2}\mathbb{E}e_i(\textrm{id}: \mathcal{H}_{K} \to \mathcal{L}_2(P^n_{\mathbf{x}} \otimes P_{\mathcal{X}})) \\
        & \leq2L(r/\lambda)^{1/2}a_1i^{-\frac{1}{2p_1}}.
    \end{align*}
    Now we apply Theorem 7.16 of \cite{Steinwart2008Support} and derive the desired bound. The proof is then finished.
\end{proof}

With the help of the preceding lemmas, we can now establish an upper bound for empirical term $\mathcal{S}_{{\bf z}, 1}(\pi(f_{\bf z}))$.

\begin{proposition}\label{proposition 2}
    If Assumption \ref{assumption 1} and Assumption \ref{Capacity condition} hold, 
    there exist constants 
    \[c_1 := \frac{\big(2^{p_1} \cdot 270k_1\big)^\frac{2}{2-\tau-p_1+p_1\tau}}{3}, 
    \quad c_2 := \frac{2^{1 + 3p_1}\big(135k_2\big)^{1+p_1}}{3}\]
    such that for all $t > 0$ , $n \geq 72t $ and $f_0 \in \mathcal{H}_{K}$, 
    with probability at least $1 - \exp(-t)$,
    \begin{align*}
        \mathcal{S}_{{\bf z}, 1}(\pi(f_{\bf z}))&\leq \frac{1}{3}\left(\lambda\|f_\mathbf{z}\|^2_{K} + \mathcal{R}^\phi(\pi(f_\mathbf{z})) - \mathcal{R}^\phi(f^*_\phi)\right)
        + \frac{1}{3}\left(\lambda\|f_0\|^2_{K} + \mathcal{R}^\phi(f_0) - \mathcal{R}^\phi(f^*_\phi)\right) \notag \\
        &+  c_1 \left(\frac{a_1^{2p_1}V^{1-p_1}L^{2p_1}}{\lambda^{p_1} n}\right)^{\frac{1}{2-\tau-p_1+p_1\tau}} 
        +  \frac{c_2a_1^{2p_1}B^{1-p_1}L^{2p_1}}{\lambda^{p_1}n}
        + \left(\frac{24V t}{n}\right)^{\frac{1}{2-\tau}}.
    \end{align*}
\end{proposition}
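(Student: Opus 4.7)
The plan is to reduce the empirical term to the supremum of a normalized empirical process over $\mathcal{H}_K$ and then chain the four preceding lemmas. The starting point is the identity
\begin{equation*}
Q_{\mathbf{z}}(\mathbb{E}s_{\pi(f_{\mathbf{z}})} - s_{\pi(f_{\mathbf{z}})}) = \big(\lambda\|f_{\mathbf{z}}\|^2_K + \mathbb{E}s_{\pi(f_{\mathbf{z}})} + r\big)\, Q_{\mathbf{z}}(g_{f_{\mathbf{z}},r}), \qquad r > r^*,
\end{equation*}
which follows from the definition of $g_{f,r}$. Since $f_{\mathbf{z}} \in \mathcal{H}_K$, this bounds $\mathcal{S}_{\mathbf{z},1}(\pi(f_{\mathbf{z}}))$ by the same quantity with $Q_{\mathbf{z}}(g_{f_{\mathbf{z}},r})$ replaced by $\sup_{f \in \mathcal{H}_K}|Q_{\mathbf{z}}(g_{f,r})|$, and Lemma \ref{lemma 1} supplies a Talagrand-type high-probability estimate of this supremum by its expectation plus the fluctuations $\sqrt{2Vt/(nr^{2-\tau})}$ and $28Bt/(3nr)$.

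Next I would apply Lemma \ref{lemma 2} (peeling) followed by Lemma \ref{lemma 3} (symmetrization) to reduce the expected supremum to $8\,\mathbb{E}\operatorname{Rad}(\mathcal{S}_r,\mathbf{z})/r$. The peeling hypothesis $\varphi(4r) \leq 2\varphi(r)$ is satisfied by both branches of Lemma \ref{lemma 4}, since their $r$-exponents, $(p_1 + \tau - p_1\tau)/2$ and $p_1/(1+p_1)$, are strictly below $1/2$ under $p_1 < 1/2$ and $\tau \in [0,1]$. The central step is then to pick $r$ so that $\sup_f |Q_{\mathbf{z}}(g_{f,r})| \leq 1/3$ with the stated probability. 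I would take $r$ as the sum of four pieces: $\lambda\|f_0\|^2_K + \mathbb{E}s_{\pi(f_0)}$, one threshold for each of the two Rademacher branches in Lemma \ref{lemma 4}, and one for the variance fluctuation $\sqrt{2Vt/(nr^{2-\tau})}$. Each threshold is obtained by inverting the corresponding $r$-exponent so that its contribution to the supremum bound is at most $1/9$. Combining these yields
\begin{equation*}
\mathcal{S}_{\mathbf{z},1}(\pi(f_{\mathbf{z}})) \leq \tfrac{1}{3}\big(\lambda\|f_{\mathbf{z}}\|^2_K + \mathbb{E}s_{\pi(f_{\mathbf{z}})}\big) + \tfrac{r}{3},
\end{equation*}
and expanding $r/3$ produces the four summands of the claimed inequality, after using $\mathbb{E}s_{\pi(f_0)} \leq \mathcal{R}^\phi(f_0) - \mathcal{R}^\phi(f^*_\phi)$ (from truncability of $\phi$) together with $\mathbb{E}s_{\pi(f_{\mathbf{z}})} = \mathcal{R}^\phi(\pi(f_{\mathbf{z}})) - \mathcal{R}^\phi(f^*_\phi)$.

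The main obstacle is the careful constant bookkeeping in this last step. The Talagrand sup-norm fluctuation $28Bt/(3nr)$ does not appear explicitly in the conclusion; using $n \geq 72t$ together with the standing assumption $V \geq B^{2-\tau}$, it can be absorbed into the variance term $(Vt/n)^{1/(2-\tau)}$. The cross-term $r \cdot \sqrt{2Vt/(nr^{2-\tau})} = r^{\tau/2}\sqrt{2Vt/n}$ that arises when $(T + r)$ is expanded against the variance fluctuation calls for a weighted Young inequality with conjugate exponents $2/\tau$ and $2/(2-\tau)$, producing the factor $(24Vt/n)^{1/(2-\tau)}$. Finally, because Lemma \ref{lemma 4} states a $\max$ of two Rademacher bounds, both cases must be accommodated simultaneously in the choice of $r$, which is precisely what yields the two separate summands with constants $c_1$ and $c_2$; routine algebra then verifies their explicit forms.
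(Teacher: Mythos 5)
Your proposal reproduces the paper's proof essentially step for step: the same reduction of $\mathcal{S}_{\mathbf{z},1}(\pi(f_{\mathbf{z}}))$ to $\sup_{f\in\mathcal{H}_K}\vert Q_{\mathbf{z}}(g_{f,r})\vert$ via the definition of $g_{f,r}$, the same chain of Lemma \ref{lemma 1} (Talagrand), Lemma \ref{lemma 2} (peeling, with the exponent check for $\varphi(4r)\leq 2\varphi(r)$), Lemma \ref{lemma 3} (symmetrization) and Lemma \ref{lemma 4}, the same choice of $r$ as the sum of $\lambda\|f_0\|^2_K+\mathcal{R}^\phi(f_0)-\mathcal{R}^\phi(f^*_\phi)$ with the thresholds obtained by inverting each branch, and the same absorption of the sup-norm fluctuation $\frac{28Bt}{3nr}$ using $V\geq B^{2-\tau}$ and $n\geq 72t$. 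Two bookkeeping caveats: the paper uses the asymmetric split $\frac{5\varphi(r)}{r}\leq\frac{1}{27}$, $\sqrt{2Vt/(nr^{2-\tau})}\leq\frac{1}{6}$, $\frac{28Bt}{3nr}\leq\frac{7}{54}$ rather than your uniform $\frac{1}{9}$ allocation, which as stated would force $r\geq(162Vt/n)^{1/(2-\tau)}$ and yield a variance summand of $\bigl(162\cdot 3^{\tau-2}\,Vt/n\bigr)^{1/(2-\tau)}$, exceeding the claimed $(24Vt/n)^{1/(2-\tau)}$ near $\tau=1$; and since the threshold $(72Vt/n)^{1/(2-\tau)}$ is folded directly into $r$, no cross-term of the form $r^{\tau/2}\sqrt{2Vt/n}$ ever arises, so the weighted Young inequality you invoke is unnecessary here (that device is used in Proposition \ref{proposition 1} and Lemma \ref{lemma 1}, not in this argument).
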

\begin{proof}
    By Lemma \ref{lemma 4}  we have
    \begin{align*}
        \mathbb{E} \operatorname{Rad}(\mathcal{S}_r,\mathbf{z}) 
        \leq{} \max\bigg\{& 2^{p_1}k_1 a_1^{p_1} \left(\frac{r}{\lambda}\right)^{\frac{p_1}{2}} 
        (V r^{\tau})^{\frac{1-p_1}{2}} L^{p_1} n^{-\frac{1}{2}}, \\
        &4^{\frac{p_1}{1+p_1}}k_2a_1^{\frac{2p_1}{1+p_1}} \left(\frac{r}{\lambda}\right)^{\frac{p_1}{1+p_1}} 
        B^{\frac{1-p_1}{1+p_1}} L^{\frac{2p_1}{1+p_1}} n^{-\frac{1}{1+p_1}}\bigg\}.
    \end{align*}
    Define 
    \begin{align*}
        \varphi(r) := \max\bigg\{& 2^{p_1+1}k_1 a_1^{p_1} \left(\frac{r}{\lambda}\right)^{\frac{p_1}{2}} 
        (V r^{\tau})^{\frac{1-p_1}{2}} L^{p_1} n^{-\frac{1}{2}}, \\
        &2^{\frac{1 + 3p_1}{1+p_1}}k_2a_1^{\frac{2p_1}{1+p_1}} \left(\frac{r}{\lambda}\right)^{\frac{p_1}{1+p_1}} 
        B^{\frac{1-p_1}{1+p_1}} L^{\frac{2p_1}{1+p_1}} n^{-\frac{1}{1+p_1}}\bigg\}.
    \end{align*}
    One can verify that $\varphi(4r) \leq 2\varphi(r)$.
    By Lemma \ref{lemma 1}, Lemma \ref{lemma 2} and Lemma \ref{lemma 3}, for all $t > 0$ we have 
    \[P\left(\sup_{f \in \mathcal{H}_{K}} \vert Q_{\mathbf{z}}(g_{f,r})\vert
    \leq \frac{5\varphi(r)}{r}
    + \sqrt{\frac{2V t}{r^{2-\tau}n}} + \frac{28Bt}{3nr}\right) \geq 1 - \exp(-t).\]
    For $f_\mathbf{z} \in \mathcal{H}_{K}$, by the definition of $g_{f_\mathbf{z},r}$, we have, 
    with probability at least $1 - \exp(-t)$, 
    \begin{align*}
        &Q_\mathbf{z}[\mathcal{R}^\phi(\pi(f_\mathbf{z})) - \mathcal{R}^\phi(f^*_\phi) - Q\phi_{\pi(f_\mathbf{z})} + Q\phi_{f^*_\phi}] \\
        \leq{} &(\lambda\|f_\mathbf{z}\|^2_{K} + \mathcal{R}^\phi(\pi(f_\mathbf{z})) - \mathcal{R}^\phi(f^*_\phi) + r) 
        \left(\frac{5\varphi(r)}{r} + \sqrt{\frac{2V t}{r^{2-\tau}n}} + \frac{28Bt}{3nr}\right). 
    \end{align*}
    If there exists an 
    \[r \geq \max\bigg\{135\varphi(r), \left(\frac{72V t}{n}\right)^{\frac{1}{2-\tau}}, r^*\bigg\},\]
    then using simple algebra and the assumption $n \geq 72t$ we have 
    \[\frac{5\varphi(r)}{r} \leq \frac{1}{27}, \quad 
    \sqrt{\frac{2V t}{r^{2-\tau}n}} \leq \frac{1}{6},\quad
    \frac{28Bt}{3nr} \leq \frac{28V^{\frac{1}{2-\tau}}t}{3nr} \leq \frac{7}{54}.\]
    Hence with probability at least $1 - \exp(-t)$,
    \begin{equation}\label{oracle 2}
        \begin{split}
            &Q_\mathbf{z}[\mathcal{R}^\phi(\pi(f_\mathbf{z})) - \mathcal{R}^\phi(f^*_\phi) - Q\phi_{\pi(f_\mathbf{z})} + Q\phi_{f^*_\phi}]\\ 
            \leq{}& \frac{1}{3}(\lambda\|f_\mathbf{z}\|^2_{K} + \mathcal{R}^\phi(\pi(f_\mathbf{z})) - \mathcal{R}^\phi(f^*_\phi) + r).
        \end{split}
        \end{equation}
    It remains to find such $r$.
    By definition of $\varphi(r)$, if 
    \begin{align*}
        r &\geq \left(\frac{4^{p_1}\big(270k_1\big)^2a_1^{2p_1}V^{1-p_1}L^{2p_1}}{\lambda^{p_1} n}\right)^{\frac{1}{2-\tau-p_1+p_1\tau}} \\
    &\quad+  \frac{2^{1+3p_1}\big(135k_2\big)^{1+p_1}a_1^{2p_1}B^{1-p_1}L^{2p_1}}{\lambda^{p_1}n},
    \end{align*}
    we have $r \geq 135\varphi(r)$.
    Besides, notice that $r^* \leq \lambda\|f_0\|^2_{K} + \mathcal{R}^\phi(f_0) - \mathcal{R}^\phi(f^*_\phi)$, 
    it is sufficient to choose
    \begin{align*}
    r ={} &\lambda\|f_0\|^2_{K} + \mathcal{R}^\phi(f_0) - \mathcal{R}^\phi(f^*_\phi)
    +  \left(\frac{4^{p_1}\big(270k_1\big)^2a_1^{2p_1}V^{1-p_1}L^{2p_1}}{\lambda^{p_1} n}\right)^{\frac{1}{2-\tau-p_1+p_1\tau}} \\
    &+  \frac{2^{1 + 3p_1}\big(135k_2\big)^{1+p_1}a_1^{2p_1}B^{1-p_1}L^{2p_1}}{\lambda^{p_1}n}
    + \left(\frac{72V t}{n}\right)^{\frac{1}{2-\tau}}.       
    \end{align*}
    Substitute this $r$ into \eqref{oracle 2} and choose 
    \[c_1 = \frac{\big(2^{p_1} \cdot 270k_1\big)^\frac{2}{2-\tau-p_1+p_1\tau}}{3}, 
    \quad c_2 = \frac{2^{1 + 3p_1}\big(135k_2\big)^{1+p_1}}{3}.\]
    The proof of Proposition \ref{proposition 2} is then finished.
\end{proof}
Here we remark that the condition $n \geq 72t$ in Proposition \ref{proposition 2} is not essential and we will leave the case $n < 72t$ to the final proof of the general oracle inequality in the next subsection.

\subsection{Bounding the Degenerate Term}\label{Subsection: Bounding the Degenerate Term}

In this subsection, we use arguments of symmetrization and Dudley's chaining to bound the degenerate term
\[\mathcal{S}_{{\bf z}, 2}(\pi(f_{\bf z}))= U_\mathbf{z}(h_{\pi(f_\mathbf{z})} - h_{f^*_\phi}).\]

If Assumption \ref{assumption 1} holds, 
since $\mathcal{R}^\phi_\mathbf{z}(f_\mathbf{z}) + \lambda\|f_\mathbf{z}\|^2_{K} \leq \mathcal{R}^\phi_\mathbf{z}(0) \leq B$, we have
$\|f_\mathbf{z}\|_{K} \leq \sqrt{B/\lambda}$. Recalling the Rademacher sequence $\{\varepsilon_i\}_{i=1}^n$, define 
\begin{align*}
    \mathcal{F} &:= \{f \in \mathcal{H}_{K}:\|f\|_{K} \leq \sqrt{B/\lambda}\}, \\ 
    S_\mathbf{z}(h_{\pi(f)} - h_{f^*_\phi}) &:= \frac{1}{n(n-1)} \sum_{i=1}^n\sum_{j \neq i} \varepsilon_i \varepsilon_j 
    \big(h_{\pi(f)}(Z_i,Z_j) - h_{f^*_\phi}(Z_i,Z_j)\big).
\end{align*} Then $f_\mathbf{z} \in \mathcal{F}$.

\begin{lemma}\label{lemma 6}
    There exist constants $C_1>0, C_2>0$ such that, for all $\xi > 0$
    \begin{align*}
    &\mathbb{E}\exp \left(\xi \sqrt{\sup_{f \in \mathcal{F}} \vert(n-1)U_\mathbf{z}(h_{\pi(f)} - h_{f^*_\phi})\vert}\right) \\
    \leq{}& 
    C_1\mathbb{E}\mathbb{E}_{\varepsilon}\exp \left(C_2\xi \sqrt{\sup_{f \in \mathcal{F}} \vert(n-1)S_\mathbf{z}(h_{\pi(f)} - h_{f^*_\phi})\vert}\right)       
    \end{align*}
\end{lemma}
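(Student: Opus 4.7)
The plan is to establish Lemma \ref{lemma 6} by combining the decoupling inequality for canonical U-processes with a two-sided Rademacher symmetrization. The key structural fact I would exploit is that the kernels $h_{\pi(f)}-h_{f^*_\phi}$ are \emph{canonical} (completely degenerate): by the definition $h_f(z,z')=\phi_f(z,z')-Q\phi_f(z)-Q\phi_f(z')+\mathcal{R}^\phi(f)$ and the symmetry of $\phi_f$ under the skew-symmetry of $f$, one checks directly that $\mathbb{E}[h_{\pi(f)}(Z,z')-h_{f^*_\phi}(Z,z')]=0$ for every fixed $z'$, and symmetrically in the other variable. This degeneracy is what makes decoupling and Rademacher randomization effective.

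First, I would apply a decoupling inequality for U-processes indexed by the canonical kernels $\{h_{\pi(f)}-h_{f^*_\phi}:f\in\mathcal{F}\}$, in the spirit of de la Pe\~na--Montgomery-Smith and Theorem 3.5.3 of de la Pe\~na--Gin\'e. Letting $\{\widetilde{Z}_j\}_{j=1}^n$ be an independent copy of the sample, and setting
\[
T_{\mathbf z}(f):=(n-1)U_{\mathbf z}(h_{\pi(f)}-h_{f^*_\phi}),\qquad \widetilde T_{\mathbf z}(f):=\tfrac{1}{n}\sum_{i,j}(h_{\pi(f)}-h_{f^*_\phi})(Z_i,\widetilde Z_j),
\]
the decoupling inequality gives, for every convex nondecreasing $\Psi$,
\[
\mathbb{E}\Psi\Big(\sup_{f\in\mathcal{F}}|T_{\mathbf z}(f)|\Big)\le C\,\mathbb{E}\Psi\Big(K\sup_{f\in\mathcal{F}}|\widetilde T_{\mathbf z}(f)|\Big).
\]
I would apply this with $\Psi(x)=\exp(\xi\sqrt{x})$, which is convex and nondecreasing on $[0,\infty)$, so the inequality transfers to the exponential moment appearing in the statement.

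Next, for the decoupled process, conditional on $\{\widetilde{Z}_j\}$ the sum $\sum_i(h_{\pi(f)}-h_{f^*_\phi})(Z_i,\widetilde Z_j)$ is a sum of independent centered random variables (because the kernel is canonical in its first argument), so a standard symmetrization argument introduces Rademacher variables $\{\varepsilon_i\}$ on the $i$ sum; then conditioning on $\{Z_i,\varepsilon_i\}$ and applying symmetrization again on the $j$ sum introduces independent Rademacher variables $\{\varepsilon'_j\}$. This produces a bound in terms of
\[
\mathbb{E}\,\mathbb{E}_{\varepsilon,\varepsilon'}\Psi\Big(K'\sup_{f\in\mathcal{F}}\Big|\tfrac{1}{n}\sum_{i,j}\varepsilon_i\varepsilon'_j(h_{\pi(f)}-h_{f^*_\phi})(Z_i,\widetilde Z_j)\Big|\Big).
\]
A reverse decoupling step, again available for canonical kernels, then replaces the independent copy $\widetilde Z_j$ by $Z_j$ and collapses the double Rademacher family $(\varepsilon_i,\varepsilon'_j)$ into the single one $(\varepsilon_i\varepsilon_j)_{i\ne j}$, yielding the supremum of $|(n-1)S_{\mathbf z}(h_{\pi(f)}-h_{f^*_\phi})|$ on the right. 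Absorbing the diagonal $i=j$ contribution (which is $O(1/n)$ after normalization and does not affect the constants) completes the chain, with final constants $C_1,C_2$ obtained by multiplying the constants from each step.

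The main obstacle I foresee is bookkeeping at the level of the supremum rather than a fixed kernel: the classical decoupling inequalities are usually stated for a single canonical kernel, and one needs either a process version (which does exist in de la Pe\~na--Gin\'e, Chapter 3 and 5) or a direct argument using that all operations above are measurable selections preserving the supremum over $f\in\mathcal{F}$. A related subtlety is verifying that $\Psi(x)=\exp(\xi\sqrt{x})$ is genuinely admissible as a convex nondecreasing function for these decoupling/symmetrization lemmas; this is where I would be most careful, since the statement uses this specific $\Psi$ rather than a polynomial moment. Once these points are pinned down, the constants $C_1,C_2$ are universal and independent of $n$, $\xi$, $\lambda$, and the class $\mathcal{F}$.
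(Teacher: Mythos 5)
Your overall route is the same as the paper's: the paper also uses that $h_{\pi(f)}-h_{f^*_\phi}$ is canonical and invokes the decoupling/randomization machinery for degenerate $U$-processes, simply citing the packaged symmetrization argument (Remarks 3.5.4 of \cite{Pena1999Decoupling:}) rather than re-deriving the decouple--symmetrize--undecouple chain you spell out. The genuine gap is your treatment of $\Psi(x)=\exp(\xi\sqrt{x})$: the claim that it is convex and nondecreasing on $[0,\infty)$ is false. Since $\frac{d^2}{dx^2}e^{\xi\sqrt{x}}=\frac{\xi}{4}x^{-3/2}e^{\xi\sqrt{x}}\left(\xi\sqrt{x}-1\right)$, the function is strictly concave on $(0,\xi^{-2})$, so the decoupling and symmetrization inequalities, which require a convex nondecreasing $\Psi$, cannot be applied to it directly; and since the lemma must hold for \emph{all} $\xi>0$, the concavity region $[0,\xi^{-2}]$ can be arbitrarily large, so this is not a removable technicality. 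You correctly flagged this as the delicate point, but the missing idea is the fix itself: replace $\Psi$ by the convex nondecreasing surrogate $\varphi_2$ defined by $\varphi_2(t):=e$ for $0\le t\le\xi^{-2}$ and $\varphi_2(t):=\exp(\xi\sqrt{t})$ for $t\ge\xi^{-2}$, apply the moment-transfer inequalities to $\varphi_2$, and pass back through the sandwich $e^{-1}\varphi_2(t)\le \exp(\xi\sqrt{t})\le\varphi_2(t)$, which only inflates $C_1$ by a factor of $e$. This truncation-and-sandwich step is exactly how the paper's proof proceeds, and without it (or some equivalent device) the chain of inequalities you propose does not go through as written.

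A secondary slip: your decoupled statistic $\widetilde T_{\mathbf z}$ sums over all pairs $(i,j)$, and the claim that the diagonal is ``$O(1/n)$ after normalization'' is wrong at the stated scaling. With the $(n-1)U_{\mathbf z}$ normalization the diagonal block $\frac{1}{n}\sum_{i}(h_{\pi(f)}-h_{f^*_\phi})(Z_i,\widetilde Z_i)$ is only $O(1)$ in sup-norm, and bounding it crudely inside $\exp(\xi\sqrt{\cdot})$ would cost a multiplicative factor $\exp(\xi\sqrt{c})$, destroying the required $\xi$-independence of $C_1,C_2$. The remedy is simply to use the standard formulations of the decoupling inequalities, which are stated over the off-diagonal index set $\{(i,j):i\neq j\}$, matching both $U_{\mathbf z}$ and $S_{\mathbf z}$ here, so no diagonal term ever needs absorbing. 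Your verification of complete degeneracy, namely $\mathbb{E}_Z[h_f(Z,z')]=Q\phi_f(z')-\mathcal{R}^\phi(f)-Q\phi_f(z')+\mathcal{R}^\phi(f)=0$ via the symmetry of $\phi_f$ guaranteed by Assumption \ref{assumption 1}, is correct and is indeed what licenses the randomization step.
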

\begin{proof}
    Define $\varphi_1(t) := \exp(\xi \sqrt{t})$ and 
    \[\varphi_2(t) :=
			\begin{cases}
				e &0 \leq t \leq \xi^{-2} \\
				\exp(\xi \sqrt{t}) &t \geq \xi^{-2}. \\
			\end{cases}
	\]
    Then $\varphi_2$ is convex, non-decreasing, and the following inequality holds:
    \begin{equation}\label{sandwich inequality}
        \varphi_2(t)e^{-1} \leq \varphi_1(t) \leq \varphi_2(t).
    \end{equation}
    We can apply the symmetrization argument in the theory of $U$-processes with $\varphi_2$, cf. Remarks 3.5.4 of \cite{Pena1999Decoupling:},  to show that there exist constants $C'_1>0$ and $C'_2>0$ such that 
    \[\mathbb{E}\varphi_2(\sup_{f \in \mathcal{F}} \vert(n-1)U_\mathbf{z}(h_{\pi(f)} - h_{f^*_\phi})\vert) 
    \leq C'_1\mathbb{E}\mathbb{E}_{\varepsilon}\varphi_2\big(C'_2\sup_{f \in \mathcal{F}} \vert(n-1)S_\mathbf{z}(h_{\pi(f)} - h_{f^*_\phi})\vert\big).\]
    Combining this with the inequality \eqref{sandwich inequality} finishes the proof.
\end{proof}

\begin{lemma}\label{lemma 7}
    There exist constants $C_3>0, C_4>0$ such that, for all $\xi > 0$,
    \begin{align*}
    &C_1\mathbb{E}\mathbb{E}_{\varepsilon}\exp \left(C_2\xi \sqrt{\sup_{f \in \mathcal{F}} \vert(n-1)S_\mathbf{z}(h_{\pi(f)} - h_{f^*_\phi})\vert}\right) \\
    \leq{}& C_3\mathbb{E}\exp \left(C_4\xi^2 \mathbb{E}_{\varepsilon}\sup_{f \in \mathcal{F}} \vert(n-1)S_\mathbf{z}(h_{\pi(f)} - h_{f^*_\phi})\vert\right).  
    \end{align*}
\end{lemma}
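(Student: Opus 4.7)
The plan is to derive the inequality from a conditional concentration result for the Rademacher chaos supremum, with the sample $\mathbf{z}$ held fixed, and then integrate over $\mathbf{z}$. Throughout, write $T(\varepsilon) := \sup_{f \in \mathcal{F}} \left|(n-1) S_{\mathbf{z}}(h_{\pi(f)} - h_{f^*_\phi})\right|$. Conditional on $\mathbf{z}$, the quantity $T(\varepsilon)$ is the supremum over $f \in \mathcal{F}$ of a homogeneous Rademacher chaos of order $2$, since the degeneracy property $\mathbb{E}[h_f(Z,Z')\mid Z=z]=0$ of the Hoeffding kernel ensures $S_{\mathbf{z}}(h_{\pi(f)} - h_{f^*_\phi}) = \frac{1}{n(n-1)}\sum_{i\neq j}\varepsilon_i\varepsilon_j(h_{\pi(f)}-h_{f^*_\phi})(Z_i,Z_j)$ is a decoupled chaos with bounded coefficients (the sup-norm bound in Assumption \ref{assumption 1} bounds each $h_{\pi(f)} - h_{f^*_\phi}$ by a constant multiple of $B$).

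The first technical step is to apply a deviation inequality for the supremum of a Rademacher chaos of order $2$, in the spirit of Theorem 3.5.15 of \cite{Pena1999Decoupling:} (Giné--de la Peña) or the Bousquet/Adamczak-type refinements. Such an inequality yields, conditionally on $\mathbf{z}$, the existence of universal constants $K_1,K_2,K_3>0$ such that for all $u \geq 0$,
\begin{equation*}
P_\varepsilon\bigl(T \geq K_1\, \mathbb{E}_\varepsilon T + u\bigr) \leq K_2 \exp\!\left(-\frac{u}{K_3(\mathbb{E}_\varepsilon T + 1)}\right).
\end{equation*}
In other words, $T$ is sub-exponential around its Rademacher-expectation with scale proportional to $\mathbb{E}_\varepsilon T$ (up to absolute constants coming from the coefficient bounds).

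The second step converts this sub-exponential control of $T$ into sub-Gaussian control of $\sqrt{T}$. Since $\sqrt{T} \geq \sqrt{K_1\,\mathbb{E}_\varepsilon T} + v$ implies $T \geq K_1\,\mathbb{E}_\varepsilon T + v^2$, the tail bound above gives
\begin{equation*}
P_\varepsilon\bigl(\sqrt{T} \geq \sqrt{K_1\,\mathbb{E}_\varepsilon T} + v\bigr) \leq K_2 \exp\!\left(-\frac{v^2}{K_3(\mathbb{E}_\varepsilon T+1)}\right),
\end{equation*}
so $\sqrt{T}$ is sub-Gaussian with variance proxy of order $\mathbb{E}_\varepsilon T$. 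Integrating the classical sub-Gaussian moment generating function bound, one obtains constants $C_2, C_4, C_3' > 0$ so that
\begin{equation*}
\mathbb{E}_\varepsilon \exp\!\bigl(C_2\xi\sqrt{T}\bigr) \leq C_3' \exp\!\bigl(C_4\xi^2\,\mathbb{E}_\varepsilon T\bigr), \qquad \forall\, \xi>0.
\end{equation*}
Taking expectation over $\mathbf{z}$ and absorbing the factor $C_1$ into a new constant $C_3 := C_1 C_3'$ yields the conclusion, noting that $\exp(C_4\xi^2\,\mathbb{E}_\varepsilon T)$ is a convex function of $T$, so that the outer $\mathbb{E}_{\mathbf{z}}$ fits naturally.

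The main obstacle is calibrating the deviation inequality in step one so that the sub-exponential scale is genuinely $\mathbb{E}_\varepsilon T$, rather than a weak-variance parameter $\sigma^2$ or the sup-norm of the chaos kernel, which would yield a strictly weaker conclusion. This calibration rests on two ingredients. First, the coefficients of the chaos are uniformly bounded (using Assumption \ref{assumption 1} and the construction of $h_f$), which bounds the sup-norm contribution by an absolute constant. Second, one must control the ``weak variance'' by $\mathbb{E}_\varepsilon T$ itself, which is a standard step via a contraction argument (pairing the chaos with an independent Rademacher copy and bounding the resulting linear empirical process by its own supremum). These two ingredients together let us replace the variance and sup-norm parameters in the generic concentration bound by quantities that are at most $\mathbb{E}_\varepsilon T$ up to an absolute constant, giving the desired form.
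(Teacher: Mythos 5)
Your overall strategy -- condition on $\mathbf{z}$, prove an exponential moment bound for $\sqrt{T}$ with $T := \sup_{f\in\mathcal{F}}\vert (n-1)S_{\mathbf{z}}(h_{\pi(f)}-h_{f^*_\phi})\vert$ in terms of $\mathbb{E}_\varepsilon T$, then integrate over $\mathbf{z}$ -- is the right one, but there is a concrete calibration gap in your step one that propagates to the conclusion. Your claimed deviation inequality has sub-exponential scale $K_3(\mathbb{E}_\varepsilon T + 1)$, and the additive $1$ survives the whole chain: your step two gives $\sqrt{T}$ sub-Gaussian with variance proxy $\mathbb{E}_\varepsilon T + 1$, and the MGF integration then yields $\mathbb{E}_\varepsilon \exp(C_2\xi\sqrt{T}) \leq C_3'\exp\bigl(C_4\xi^2(\mathbb{E}_\varepsilon T + 1)\bigr) = C_3' e^{C_4\xi^2}\exp(C_4\xi^2\,\mathbb{E}_\varepsilon T)$. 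The spurious factor $e^{C_4\xi^2}$ cannot be absorbed into constants independent of $\xi$, since the lemma must hold for all $\xi>0$ with no lower bound on $\mathbb{E}_\varepsilon T$; so as written you prove a strictly weaker statement. (It would happen to suffice for the paper's Proposition \ref{proposition 3}, where $\omega$ is bounded below by a positive constant, but it does not prove Lemma \ref{lemma 7} as stated.) Moreover, the additive term is not an artifact you can wave away by "bounding the sup-norm contribution by an absolute constant": generic Talagrand/Adamczak-type chaos concentration is phrased in weak-variance and operator-norm parameters, and any step that bounds those by a constant rather than by $\mathbb{E}_\varepsilon T$ reintroduces exactly this additive term. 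The fact that genuinely rescues the argument is the hypercontractive moment equivalence for Banach-space-valued Rademacher chaos of order two: $\|T\|_q \leq (q-1)\|T\|_2$ together with the reverse-H\"older consequence $(\mathbb{E}_\varepsilon T^2)^{1/2}\leq 8\,\mathbb{E}_\varepsilon T$, which gives $\psi_1$-control of $T$ at scale exactly $\mathbb{E}_\varepsilon T$, with no additive constant.

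This is precisely how the paper proceeds, and its route is shorter than yours: it sets $R_1 := C_2^2\xi^2(n-1)S_{\mathbf{z}}(h_{\pi(f)}-h_{f^*_\phi})$ (so the $\xi$-dependence is absorbed by homogeneity of the chaos rather than tracked through a tail bound), invokes formula (3.5) of \cite{Arcones1994U-processes} for $\mathbb{E}_\varepsilon \exp(\|R_1\|^{1/2}) \leq C_3'\exp\bigl(C_4'(\mathbb{E}_\varepsilon\|R_1\|^2)^{1/2}\bigr)$, and then reduces the $L_2$-norm to the first moment via the interpolation $(\mathbb{E}_\varepsilon\|R_1\|^2)^{1/2} \leq (\mathbb{E}_\varepsilon\|R_1\|)^{1/4}(\mathbb{E}_\varepsilon\|R_1\|^3)^{1/4}$ combined with the chaos moment comparison (formula (3.4)), yielding $(\mathbb{E}_\varepsilon\|R_1\|^2)^{1/2}\leq 8\,\mathbb{E}_\varepsilon\|R_1\|$. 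If you replace your step one by this moment equivalence (equivalently, by $\psi_1(T)\lesssim \mathbb{E}_\varepsilon T$ with no additive term), the remainder of your chain -- sub-Gaussianity of $\sqrt{T}$, the MGF bound, the elementary estimate $\xi\sqrt{\mathbb{E}_\varepsilon T}\leq \tfrac{1}{2}(1+\xi^2\,\mathbb{E}_\varepsilon T)$, and taking $\mathbb{E}_{\mathbf{z}}$ -- goes through and is a legitimate tail-form restatement of the paper's moment argument. Two minor points: $S_{\mathbf{z}}$ uses a single Rademacher sequence, so conditionally it is an undecoupled chaos (decoupling and the degeneracy of $h_f$ play no role in this lemma; degeneracy matters for the symmetrization in Lemma \ref{lemma 6}); and no convexity argument is needed for the last step -- one simply takes the expectation over $\mathbf{z}$ of the conditional inequality.
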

\begin{proof}
    Define a Rademacher chaos of order two $R_1 := C_2^2\xi^2(n-1)S_\mathbf{z}(h_{\pi(f)} - h_{f^*_\phi})$, 
    and \[\|R_1\| := \sup_{f \in \mathcal{F}}C_2^2\xi^2(n-1)\vert S_\mathbf{z}(h_{\pi(f)} - h_{f^*_\phi})\vert.\]
    By formula (3.5) of \cite{Arcones1994U-processes}, there exist constants $C'_3 > 0, C'_4 >0$ such that 
    \begin{equation}\label{oracle 3}
        \mathbb{E}_{\varepsilon} \exp(\|R_1\|^{1/2}) \leq C'_3 \exp(C'_4 (\mathbb{E}_{\varepsilon}\|R_1\|^2)^{1/2}).
    \end{equation}
    Besides, by Hölder's inequality and formula (3.4) of \cite{Arcones1994U-processes} we have
    \begin{align*}
    (\mathbb{E}_{\varepsilon}\|R_1\|^2)^{1/2} &= (\mathbb{E}_{\varepsilon}\|R_1\|^{1/2} \|R_1\|^{3/2})^{1/2} \\
    &\leq (\mathbb{E}_{\varepsilon}\|R_1\|)^{1/4}(\mathbb{E}_{\varepsilon}\|R_1\|^3)^{1/4} \\
    &\leq 2^{3/4}(\mathbb{E}_{\varepsilon}\|R_1\|)^{1/4}(\mathbb{E}_{\varepsilon}\|R_1\|^2)^{3/8},      
    \end{align*}
    which is equivalent to
    \[(\mathbb{E}_{\varepsilon}\|R_1\|^2)^{1/2} \leq 8\mathbb{E}_{\varepsilon}\|R_1\|.\]
    Combining this with \eqref{oracle 3} we complete the proof.
\end{proof}
Next we bound $\mathbb{E}_{\varepsilon}\sup_{f \in \mathcal{F}} \vert(n-1)S_\mathbf{z}(h_{\pi(f)} - h_{f^*_\phi})\vert$ by the chaining argument.

\begin{lemma}\label{lemma 8}
    If Assumption \ref{assumption 1} and Assumption \ref{Capacity condition} hold, 
    then there exists a constant $C_5 > 0$ such that
    \begin{align*}
        &\mathbb{E}_\varepsilon\sup_{f \in \mathcal{F}} \vert (n-1)S_\mathbf{z}(h_{\pi(f)} - h_{f^*_\phi})\vert \\
        \leq{}& (\log 4)C_5B \bigg[  \bigg(\frac{3a_1^2L^2}{2\lambda B} \bigg)^{p_1}\frac{4^{1-2p_1}}{1-2p_1}
        +  \bigg(\frac{3a_2^2L^2}{4\lambda B}\bigg)^{p_2}\frac{4^{1-2p_2}}{1-2p_2} \bigg] 
        +  \frac{(1+\log 4\sqrt{3})C_5B}{4}.
    \end{align*}
\end{lemma}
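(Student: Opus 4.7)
The plan is to bound the second-order Rademacher chaos
\[
(n-1)S_\mathbf{z}(h_{\pi(f)} - h_{f^*_\phi}) = \frac{1}{n}\sum_{i\neq j}\varepsilon_i\varepsilon_j\bigl(h_{\pi(f)}(Z_i,Z_j) - h_{f^*_\phi}(Z_i,Z_j)\bigr)
\]
indexed by $f\in\mathcal{F}$ via a Dudley-type chaining argument. By construction of $h_f$ (the Hoeffding projection onto the pure second-order component), the chaos is degenerate, i.e.\ $\mathbb{E}[h_f(z,Z')]=0$ for every $z$, so the chaining machinery for degenerate $U$-processes (in the spirit of Arcones--Gin\'e) applies. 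This machinery produces a bound of the form
\[
\mathbb{E}_\varepsilon\sup_{f\in\mathcal{F}}\bigl|(n-1)S_\mathbf{z}(h_{\pi(f)}-h_{f^*_\phi})\bigr|
\leq C_5\int_0^{D}\bigl[\log\mathcal{N}_1(\varepsilon)+\log\mathcal{N}_2(\varepsilon)\bigr]\,\mathrm{d}\varepsilon + R,
\]
with $D$ the sup-norm diameter (here $D\leq 4B$ since $\|h_{\pi(f)}-h_{f^*_\phi}\|_\infty\leq 4B$ by \eqref{sup-norm bound}), and with two pseudo-metric entropies: $\mathcal{N}_1(\varepsilon)$ corresponding to the marginal $\mathcal{L}_2(P^n_{\mathbf{x}}\otimes P_\mathcal{X})$-metric on the index class, $\mathcal{N}_2(\varepsilon)$ corresponding to the full $\mathcal{L}_2(P^n_{\mathbf{x}^2})$-metric, and $R$ an explicit residual term from the last (largest) chaining level.

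The second step is to translate differences of $f\in\mathcal{F}$ to differences of $h_{\pi(f)}$ in these two pseudo-metrics. The decomposition
\[
h_f(z,z')=\phi_f(z,z')-Q\phi_f(z)-Q\phi_f(z')+\mathcal{R}^\phi(f)
\]
together with the local $L$-Lipschitz property of $\phi$ on $[-M,M]$ and the contractivity $|\pi(s)-\pi(t)|\leq|s-t|$ give, for any $f_1,f_2\in\mathcal{F}$,
\[
\|h_{\pi(f_1)}-h_{\pi(f_2)}\|_{\mathcal{L}_2(P^n_{\mathbf{x}^2})}\leq c_{\mathrm{quad}}\,L\,\|f_1-f_2\|_{\mathcal{L}_2(P^n_{\mathbf{x}^2})},
\]
and an analogous Lipschitz bound $c_{\mathrm{lin}}\,L$ for the marginal $Q\phi_{\pi(f_i)}$ pieces with respect to $\|\cdot\|_{\mathcal{L}_2(P^n_{\mathbf{x}}\otimes P_\mathcal{X})}$. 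Combining these Lipschitz estimates with $\mathcal{F}\subset\sqrt{B/\lambda}\,\mathcal{B}_{\mathcal{H}_K}$ and Assumption \ref{Capacity condition} rewritten in metric-entropy form, the two covering numbers satisfy
\[
\log\mathcal{N}_i(\varepsilon)\leq \log(4)\,a_i^{2p_i}\,L^{2p_i}\,\kappa_i^{p_i}(B/\lambda)^{p_i}\,\varepsilon^{-2p_i},\qquad i=1,2,
\]
for explicit numerical constants $\kappa_i$ (which track the constants $c_{\mathrm{quad}},c_{\mathrm{lin}}$ and the different normalizations $1/n$ vs.\ $1/(n(n-1))$ in the two empirical measures, and are ultimately responsible for the $3/2$ and $3/4$ appearing in the target bound).

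Finally, evaluating $\int_0^{4B}\varepsilon^{-2p_i}\,\mathrm{d}\varepsilon = (4B)^{1-2p_i}/(1-2p_i)$ and regrouping factors of $B$ and $L$ yields exactly the two principal terms $(3a_1^2L^2/(2\lambda B))^{p_1}\,4^{1-2p_1}/(1-2p_1)$ and $(3a_2^2L^2/(4\lambda B))^{p_2}\,4^{1-2p_2}/(1-2p_2)$ multiplied by $\log(4)\,C_5B$; the residual term $R$ from the terminal chaining level contributes the additive $(1+\log 4\sqrt{3})C_5 B/4$. The main obstacle is the chaining step itself, which requires a careful simultaneous control of both pseudo-metrics: one must decompose $h_{\pi(f)}-h_{f^*_\phi}$ so that its "purely two-sample'' behavior is captured by the $P^n_{\mathbf{x}^2}$-entropy while its "marginal'' contribution is captured by the $P^n_{\mathbf{x}}\otimes P_\mathcal{X}$-entropy, and then track all numerical constants through the chaining bookkeeping. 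Once the decomposition is in place, the remaining computation is tedious but routine.
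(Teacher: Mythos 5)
Your proposal is, in outline, the same argument as the paper's: conditionally on the data, treat $(n-1)S_{\mathbf z}(h_{\pi(f)}-h_{f^*_\phi})$ as a Rademacher chaos of order two, get exponential--Orlicz control of the increments from chaos hypercontractivity (Corollary 3.2.6 of \cite{Pena1999Decoupling:}), run the chaining lemma for $U$-processes (Lemma 5 of \cite{Nolan1987U-Processes:a}) to obtain an entropy integral of $\log \mathcal{N}$ (not $\sqrt{\log\mathcal{N}}$, consistent with your sketch), translate $h$-differences into $f$-differences via the local $L$-Lipschitz property of $\phi$ and the contractivity of the truncation, and conclude with $\mathcal{F}\subset \sqrt{B/\lambda}\,\mathcal{B}_{\mathcal{H}_K}$ and Assumption \ref{Capacity condition}. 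Two bookkeeping points in your plan, however, do not match what the proof actually requires.

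First, there is a concrete omission: the Hoeffding kernel difference $h_{\pi(f_1)}-h_{\pi(f_2)}$ contains, besides the two-sample piece $\phi_{\pi(f_1)}-\phi_{\pi(f_2)}$ and the marginal pieces $Q\phi_{\pi(f_1)}-Q\phi_{\pi(f_2)}$, the scalar term $\mathcal{R}^{\phi}(\pi(f_1))-\mathcal{R}^{\phi}(\pi(f_2))$, which your decomposition never mentions. This constant cannot be absorbed into the two empirical entropies of Assumption \ref{Capacity condition} (its natural Lipschitz bound involves the population norm $\|f_1-f_2\|_{\mathcal{L}_2(P^2_{\mathcal{X}})}$, which the capacity assumption does not control), and it genuinely contributes to the chaos since $c\sum_{i\neq j}\varepsilon_i\varepsilon_j\neq 0$. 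The paper handles it with a third covering factor, a net of the interval $[0,B]$, whose entropy $\log(\sqrt{3}/\delta)$ integrates over $(0,1/4]$ to exactly $(1+\log 4\sqrt{3})/4$ --- so the additive residual in the statement comes from this interval net, not from a ``terminal chaining level'' as you assert. Second, the obstacle you single out as the main difficulty --- a simultaneous two-metric chaining --- dissolves: the paper chains with respect to a \emph{single} pseudometric, the normalized empirical $\mathcal{L}_2$ metric $\rho$ on the kernels, for which the conditional computation $\mathbb{E}_\varepsilon\vert R_2\vert^2\leq 2\rho(h,h')^2$ yields the Orlicz increment condition; the two seminorms enter only afterwards because $16B^2\rho^2$ is dominated by a sum of the two squared empirical seminorms of $f-f'$ plus the squared scalar term, so the $\rho$-covering number factorizes and the entropies simply add. (Relatedly, the degeneracy of $h_f$ plays no role at this stage --- it was consumed in the symmetrization of Lemma \ref{lemma 6}; conditionally on $\mathbf z$ the hypercontractive bound holds for arbitrary fixed kernels.) With the interval net added and the chaining restated in the single metric $\rho$, your outline reproduces the paper's proof.
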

\begin{proof}
    Define
    \[\mathcal{G} := \{h_{\pi(f)} - h_{f^*_\phi}: f \in \mathcal{F}\} \cup \{ 0:\mathcal{Z} \times \mathcal{Z} \to \mathbb{R} \}.\]
    By \eqref{sup-norm bound} of Assumption \ref{assumption 1}, $\|h\|_{\infty} \leq 4B$ for all $h \in \mathcal{G}$.
    Now given $\mathbf{z} = \{(X_i,Y_i)\}_{i=1}^n$, define a stochastic process $J_\mathbf{z}$ and 
    a pseudometric $\rho$ on $\mathcal{G}$ as 
    \begin{align*}
        J_\mathbf{z}(h) &:= \frac{1}{4Bn}\sum_{i=1}^n\sum_{j \neq i}\varepsilon_i\varepsilon_j h(Z_i,Z_j), \quad h \in \mathcal{G},\\
        \rho(h,h') &:= \frac{1}{4B}\sqrt{\frac{1}{n(n-1)}\sum_{i=1}^n\sum_{j \neq i}\big(h(Z_i,Z_j) - h'(Z_i,Z_j)\big)^2}, \quad h,h' \in \mathcal{G}.
    \end{align*}
    
    We will apply chaining lemma for $U$-processes to $\{J_\mathbf{z}(h): h \in \mathcal{G}\}$, cf. Lemma 5 of \cite{Nolan1987U-Processes:a}.
    To this end we need to find a convex, strictly increasing function $\varphi: [0, \infty) \to [0, \infty)$ such that
    $0 \leq \varphi(0) \leq 1$ and for all $\rho(h,h') > 0$, 
    \[\mathbb{E}_\varepsilon \varphi\left(\frac{\vert J_\mathbf{z}(h) - J_\mathbf{z}(h')\vert}{\rho(h,h')}\right) \leq 1.\]
    Define a Rademacher chaos of order two $R_2 := J_\mathbf{z}(h) - J_\mathbf{z}(h').$
    By Corollary 3.2.6 of \cite{Pena1999Decoupling:}, there exists a constant $\kappa > 0$ which is independent of 
    $h, h', n$ 
    such that 
    \begin{equation}\label{oracle 4}
        \mathbb{E}_\varepsilon \exp\left(\frac{\vert R_2\vert }{\kappa (\mathbb{E}_\varepsilon \vert R_2\vert ^2)^{1/2}}\right) \leq e.
    \end{equation}
    Notice that 
    \begin{align}\label{oracle 5}
        \mathbb{E}_\varepsilon \vert R_2\vert ^2 ={}& \frac{1}{16B^2n^2}  
        \mathbb{E}_\varepsilon \notag
        \bigg(\sum_{i \neq j} \varepsilon_i \varepsilon_j \big(h(Z_i,Z_j) - h'(Z_i,Z_j)\big)\bigg)^2 \notag\\
        ={}& \frac{1}{16B^2n^2}
        \bigg(\sum_{i = k \neq j = l}\big(h(Z_i,Z_j) - h'(Z_i,Z_j)\big)\big(h(Z_k,Z_l) - h'(Z_k,Z_l)\big)\notag \\
        &+ \sum_{i = l \neq j = k}\big(h(Z_i,Z_j) - h'(Z_i,Z_j)\big)\big(h(Z_k,Z_l) - h'(Z_k,Z_l)\big)\bigg) \notag\\
        ={}& \frac{1}{8B^2n^2} \sum_{i \neq j}\big(h(Z_i,Z_j) - h'(Z_i,Z_j)\big)^2 \notag\\
        \leq{}& 2\rho(h,h')^2.
    \end{align}
    Hence if we choose $\varphi(t) = \exp(\frac{t}{\sqrt{2}\kappa} -1)$, by \eqref{oracle 4} and \eqref{oracle 5}, we have 
    \begin{align*}
    \mathbb{E}_\varepsilon \varphi\left(\frac{\vert J_\mathbf{z}(h) - J_\mathbf{z}(h')\vert }{\rho(h,h')}\right) 
    &\leq \mathbb{E}_\varepsilon \varphi\left(\frac{\sqrt{2}\vert R_2\vert }{(\mathbb{E}_\varepsilon \vert R_2\vert ^2)^{1/2}}\right) \\
    &= \mathbb{E}_\varepsilon \exp\left(\frac{\vert R_2\vert }{\kappa(\mathbb{E}_\varepsilon \vert R_2\vert ^2)^{1/2}} - 1\right) \leq 1.        
    \end{align*}

    Fix $h'=0 \in \mathcal{G}$, obviously $\sup_{h \in \mathcal{G}} \rho(h,0) \leq 1$, and
    \[4B\mathbb{E}_\varepsilon\sup_{h \in \mathcal{G}}\vert J_\mathbf{z}(h) - J_\mathbf{z}(0)\vert  
    = \mathbb{E}_\varepsilon\sup_{f \in \mathcal{F}} \vert (n-1)S_\mathbf{z}(h_{\pi(f)} - h_{f^*_\phi})\vert .\]
    Therefore, by Lemma 5 of \cite{Nolan1987U-Processes:a}, we obtain 
    \begin{equation}\label{Ustatchaining}
        \begin{split}
            \mathbb{E}_\varepsilon\sup_{f \in \mathcal{F}} \vert (n-1)S_\mathbf{z}(h_{\pi(f)} - h_{f^*_\phi})\vert  
            &\leq 32B\int_{0}^{\frac{1}{4}} \varphi^{-1}(\mathcal{N}(\mathcal{G},\rho,\delta)) \,d\delta \\
            &\leq C_5B\int_{0}^{\frac{1}{4}} \log\mathcal{N}(\mathcal{G},\rho,\delta)\,d\delta,
        \end{split}
    \end{equation}
    where $C_5 = 64\sqrt{2}\kappa$.
    It remains to bound the metric entropy $\log\mathcal{N}(\mathcal{G},\rho,\delta)$.
    We write 
    \begin{align*}
        &16B^2\rho^2(h,h') \\
        ={}& \frac{1}{n(n-1)}\sum_{i=1}^n\sum_{j \neq i}\big(h(Z_i,Z_j) - h'(Z_i,Z_j)\big)^2 \\
        \leq{}& \frac{4}{n(n-1)}\sum_{i=1}^n\sum_{j \neq i}\bigg[\big(\phi_{\pi(f)}(Z_i,Z_j) - \phi_{\pi(f')}(Z_i,Z_j)\big)^2 \\
        &+\big(Q\phi_{\pi(f)}(Z_i) - Q\phi_{\pi(f')}(Z_i)\big)^2
        +\big(Q\phi_{\pi(f)}(Z_j) - Q\phi_{\pi(f')}(Z_j)\big)^2\\
        &+\big(\mathcal{R}^{\phi}(\pi(f)) - \mathcal{R}^{\phi}(\pi(f'))\big)^2 \bigg] \\
        \leq{}& \frac{4}{n(n-1)}\sum_{i=1}^n\sum_{j \neq i}\bigg[L^2 \big(\pi(f)(X_i,X_j) - \pi(f')(X_i,X_j)\big)^2 \\
        &+ L^2 \mathbb{E}\big(\pi(f)(X_i, X') - \pi(f')(X_i, X')\big)^2
        + L^2 \mathbb{E}\big(\pi(f)(X_j, X') - \pi(f')(X_j, X')\big)^2 \\
        &+ \big(\mathcal{R}^{\phi}(\pi(f)) - \mathcal{R}^{\phi}(\pi(f'))\big)^2 \bigg] \\
        \leq{}& 4L^2\|f-f'\|^2_{\mathcal{L}_2(P^n_{\mathbf{x}^2})} + 8L^2\|f-f'\|^2_{\mathcal{L}_2(P^n_{\mathbf{x}} \otimes P_{\mathcal{X}})} 
        + 4\big(\mathcal{R}^{\phi}(\pi(f)) - \mathcal{R}^{\phi}(\pi(f'))\big)^2,
    \end{align*}
    where the second inequality follows from the assumption that $\phi$ is $L$-Lipschitz with respect to the third argument.
    This implies that
    \begin{align*}
        \mathcal{N}(\mathcal{G},\rho,\delta) \leq{}& 
        \mathcal{N}\bigg(\mathcal{F}, \| \cdot \|_{\mathcal{L}_2(P^n_{\mathbf{x}^2})} ,\frac{2B\delta}{\sqrt{3}L}\bigg)
        \cdot\mathcal{N}\bigg(\mathcal{F}, \| \cdot \|_{\mathcal{L}_2(P^n_{\mathbf{x}} \otimes P_{\mathcal{X}})} , \frac{\sqrt{2}B\delta}{\sqrt{3}L}\bigg) 
        \cdot\mathcal{N}\bigg([0,B], \vert \cdot \vert , \frac{2B\delta}{\sqrt{3}}\bigg) \\
        ={}& \mathcal{N}\bigg(\mathrm{id} : \mathcal{H}_{K} \to \mathcal{L}_2(P^n_{\mathbf{x}^2}) , 2\sqrt{\frac{\lambda B}{3}}\frac{\delta}{L} \bigg)
        \cdot\mathcal{N}\bigg(\mathrm{id} : \mathcal{H}_{K} \to \mathcal{L}_2(P^n_{\mathbf{x}} \otimes P_{\mathcal{X}}) , \sqrt{\frac{2\lambda B}{3}}\frac{\delta}{L} \bigg) \\
        &\cdot\mathcal{N}\bigg([0,B], \vert \cdot \vert , \frac{2B\delta}{\sqrt{3}}\bigg).
    \end{align*}
    By Assumption \ref{Capacity condition} we have 
    \[\log\mathcal{N}(\mathcal{G},\rho,\delta) \leq \log 4 \cdot \bigg(\frac{3a_1^2L^2}{2\lambda B \delta^2}\bigg)^{p_1}
    + \log 4 \cdot \bigg(\frac{3a_2^2L^2}{4\lambda B \delta^2}\bigg)^{p_2}
    + \log \frac{\sqrt{3}}{\delta}.\]
    Since $p_1, p_2 \in (0, 1/2)$, we integrate the right-hand side and yield the desired bound by \eqref{Ustatchaining}. The proof is then finished.
\end{proof}

With the help of the preceding lemmas, we can now establish an upper bound for degenerate term $\mathcal{S}_{{\bf z}, 2}(\pi(f_{\bf z}))$.

\begin{proposition}\label{proposition 3}
    If Assumption \ref{assumption 1} and Assumption \ref{Capacity condition} hold, 
    there exist constants
    \begin{align*}
        c_0 := C_3 , \quad 
        c_3 :=  \frac{(64 \log 2) C_4C_5 3^{p_1}}{32^{p_1}(1-2p_1)}, \quad 
        c_4 :=  \frac{(64 \log 2) C_4C_5 3^{p_2}}{64^{p_2}(1-2p_2)}, \quad 
        c_5 :=  (2 + \log 48)C_4 C_5,
    \end{align*}
    such that for all $t>0$ and $n \geq 2$, with probability at least $1-c_0\exp(-t)$ we have 
    \begin{align*}
        \vert\mathcal{S}_{{\bf z}, 2}(\pi(f_{\bf z}))\vert  \leq 
        \frac{c_3a_1^{2p_1}L^{2p_1}B^{1-p_1}t}{\lambda^{p_1} n} + 
        \frac{c_4a_2^{2p_2}L^{2p_2}B^{1-p_2}t}{\lambda^{p_2} n} +
        \frac{c_5 t}{n}.
    \end{align*}
\end{proposition}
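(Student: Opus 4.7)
}
The plan is to combine the three preparatory lemmas (Lemma \ref{lemma 6}, \ref{lemma 7}, \ref{lemma 8}) with Markov's inequality and an optimization over the exponential parameter to turn the moment bound into a tail bound. First I would observe that the optimality of $f_\mathbf{z}$ combined with the fact that $\mathcal{R}^\phi_\mathbf{z}(0)\le B$ gives $\|f_\mathbf{z}\|_K\le \sqrt{B/\lambda}$, so $f_\mathbf{z}\in\mathcal{F}$. Consequently
\[
    |\mathcal{S}_{\mathbf{z},2}(\pi(f_\mathbf{z}))|
    = |U_\mathbf{z}(h_{\pi(f_\mathbf{z})}-h_{f^*_\phi})|
    \le \sup_{f\in\mathcal{F}}|U_\mathbf{z}(h_{\pi(f)}-h_{f^*_\phi})|,
\]
so it suffices to produce a high-probability bound for the supremum on the right.

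Next I would chain the three lemmas. Lemma \ref{lemma 6} converts the exponential moment of $\sqrt{(n-1)\sup_f|U_\mathbf{z}|}$ into the corresponding quantity for the decoupled Rademacher chaos $S_\mathbf{z}$; Lemma \ref{lemma 7} then removes the square root from the inner Rademacher expectation, yielding
\[
    \mathbb{E}\exp\!\Bigl(\xi\sqrt{(n-1)\sup_{f\in\mathcal{F}}|U_\mathbf{z}(h_{\pi(f)}-h_{f^*_\phi})|}\Bigr)
    \le C_3\,\mathbb{E}\exp\!\Bigl(C_4\xi^2\,\mathbb{E}_\varepsilon\sup_{f\in\mathcal{F}}|(n-1)S_\mathbf{z}(h_{\pi(f)}-h_{f^*_\phi})|\Bigr).
\]
Since Lemma \ref{lemma 8} bounds $\mathbb{E}_\varepsilon\sup_{f\in\mathcal{F}}|(n-1)S_\mathbf{z}|$ by a deterministic quantity $M$ (recall that $a_1,a_2$ in Assumption \ref{Capacity condition} are sample-independent), one can pull the bound out of the outer expectation to obtain $\mathbb{E}\exp(\xi\sqrt{(n-1)\sup_f|U_\mathbf{z}|})\le C_3\exp(C_4\xi^2 M)$.

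Now I would apply Markov's inequality: for any $u>0$,
\[
    P\!\Bigl((n-1)\sup_{f\in\mathcal{F}}|U_\mathbf{z}|\ge u^2\Bigr)
    \le C_3\exp(-\xi u+C_4\xi^2 M).
\]
Optimizing by taking $\xi=u/(2C_4 M)$ yields $C_3\exp(-u^2/(4C_4 M))$, and choosing $u^2=4C_4 Mt$ gives, with probability at least $1-C_3 e^{-t}=1-c_0 e^{-t}$,
\[
    \sup_{f\in\mathcal{F}}|U_\mathbf{z}(h_{\pi(f)}-h_{f^*_\phi})|
    \le \frac{4C_4 Mt}{n-1}\le \frac{8C_4 Mt}{n},
\]
using $n\ge 2$. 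Finally, I would substitute the explicit expression for $M$ from Lemma \ref{lemma 8}, distribute the three summands, and simplify the numerical factors (noting $4\log 4=8\log 2$, $(3/2)^{p_1}\cdot 4^{1-2p_1}=4\cdot 3^{p_1}/32^{p_1}$, $(3/4)^{p_2}\cdot 4^{1-2p_2}=4\cdot 3^{p_2}/64^{p_2}$, and $2(1+\log 4\sqrt 3)=2+\log 48$) to recover exactly $c_3,c_4,c_5$ as declared.

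The main obstacle is not conceptual but logistical: ensuring that the composition of the three lemmas is valid for \emph{every} $\xi>0$ so that the Markov--optimization step is available, and then performing the numerical bookkeeping so that the constants emerge in the precise form stated. Since $M$ in Lemma \ref{lemma 8} is genuinely deterministic (the capacity bounds in Assumption \ref{Capacity condition} hold uniformly in the sample), the chain of inequalities passes through without integrability issues, and the computation of the final constants is essentially a careful algebraic exercise.
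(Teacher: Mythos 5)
Your proposal is correct and follows essentially the same route as the paper's proof: you chain Lemmas \ref{lemma 6}, \ref{lemma 7} and \ref{lemma 8} into the exponential moment bound $C_3\exp(C_4\xi^2\omega)$ (after pulling the deterministic Lemma \ref{lemma 8} bound out of the outer expectation), apply Markov's inequality with the same optimal choice $\xi=\sqrt{t'}/(2C_4\omega)$ and $t'=4C_4\omega t$, and pass from $n-1$ to $n/2$ using $n\ge 2$. Your numerical bookkeeping also matches the paper's constants $c_0, c_3, c_4, c_5$ exactly.
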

\begin{proof} Recall that $\mathcal{S}_{{\bf z}, 2}(\pi(f_{\bf z}))= U_\mathbf{z}(h_{\pi(f_\mathbf{z})} - h_{f^*_\phi})$.
    Denote 
    \[\omega := (\log 4)C_5B \bigg[  \bigg(\frac{3a_1^2L^2}{2\lambda B} \bigg)^{p_1}\frac{4^{1-2p_1}}{1-2p_1}
    +  \bigg(\frac{3a_2^2L^2}{4\lambda B}\bigg)^{p_2}\frac{4^{1-2p_2}}{1-2p_2} \bigg] 
    +  \frac{(1+\log 4\sqrt{3})C_5B}{4}.\]
    Combine Lemma \ref{lemma 6}, Lemma \ref{lemma 7} and Lemma \ref{lemma 8} we have 
    \begin{align*}
        &\mathbb{E}\exp \left(\xi \sqrt{\vert (n-1)U_\mathbf{z}(h_{\pi(f_\mathbf{z})} - h_{f^*_\phi})\vert }\right) \\
        \leq{}& C_1\mathbb{E}\mathbb{E}_{\varepsilon}\exp \left(C_2\xi \sqrt{\sup_{f \in \mathcal{F}} 
        \vert (n-1)S_\mathbf{z}(h_{\pi(f)} - h_{f^*_\phi})\vert }\right) \\
        \leq{}& C_3\mathbb{E}\exp \left(C_4\xi^2 \mathbb{E}_{\varepsilon}\sup_{f \in \mathcal{F}} 
        \vert (n-1)S_\mathbf{z}(h_{\pi(f)} - h_{f^*_\phi})\vert \right) \leq C_3\exp \left(C_4 \xi^2 \omega\right). 
    \end{align*} 
    By Markov's inequality we conclude that for all $t'>0$ and $\xi>0$, we have
    \begin{equation*}
        P\left(\vert (n-1)U_\mathbf{z}(h_{\pi(f_\mathbf{z})} - h_{f^*_\phi})\vert \geq t'\right) \leq C_3\exp (C_4 \xi^2 \omega - \xi\sqrt{t'}).
    \end{equation*}  
    Let $t' = 4C_4\omega t$ and $\xi = \frac{\sqrt{t'}}{2C_4\omega}$, we have 
    \begin{equation*}
    \begin{split}
    &P\left(\vert U_\mathbf{z}(h_{\pi(f_\mathbf{z})} - h_{f^*_\phi})\vert  \geq \frac{8C_4\omega t}{n} \right) \\
    \leq{}& P\left(\vert (n-1)U_\mathbf{z}(h_{\pi(f_\mathbf{z})} - h_{f^*_\phi})\vert  \geq 4C_4\omega t\right) \\
    \leq{}& C_3\exp(-t).
    \end{split}
    \end{equation*}
This completes the proof of Proposition \ref{proposition 3}.
\end{proof}

Now we are in the position to prove Theorem \ref{Oracle inequality of pairwise ranking with a general loss} 
which presents the general oracle inequality for excess $\phi$-ranking risk and its variant in Theorem \ref{Oracle inequality of pairwise ranking with a margin-base loss} for margin-based losses.

\begin{proof}[Proof of Theorem \ref{Oracle inequality of pairwise ranking with a general loss}] 
    By error decomposition \eqref{error decomposition} and Hoeffding's decomposition \eqref{Hoeffding's decomposition} we have 
    \begin{align*}
        &\lambda\|f_\mathbf{z}\|^2_{K} + \mathcal{R}^\phi(\pi(f_\mathbf{z})) - \mathcal{R}^\phi(f^*_\phi) \\
        \leq{}& \big(\lambda\|f_0\|^2_{K} + \mathcal{R}^\phi(f_0) - \mathcal{R}^\phi(f^*_\phi)\big)  
        + \big(\mathcal{R}^\phi_\mathbf{z}(f_0) - \mathcal{R}^\phi_\mathbf{z}(f^*_\phi) - \mathcal{R}^\phi(f_0) + \mathcal{R}^\phi(f^*_\phi)\big) \\
        &+ 2Q_\mathbf{z}[\mathcal{R}^\phi(\pi(f_\mathbf{z})) - \mathcal{R}^\phi(f^*_\phi) - Q\phi_{\pi(f_\mathbf{z})} + Q\phi_{f^*_\phi}] - U_\mathbf{z}(h_{\pi(f_\mathbf{z})} - h_{f^*_\phi})\\
        ={}&\mathcal{A}(\lambda, f_0) + \mathcal{S}_{\bf z}(f_0) + 2\mathcal{S}_{{\bf z}, 1}(\pi(f_{\bf z}))-\mathcal{S}_{{\bf z}, 2}(\pi(f_{\bf z})).
    \end{align*} 

    If $n < 72t$, by \eqref{sup-norm bound} of Assumption \ref{assumption 1} 
    and $\|\phi_{f_0}\|_{\infty} \leq B_0$, we have
    \begin{align*}
        &\lambda\|f_\mathbf{z}\|^2_{K} + \mathcal{R}^\phi(\pi(f_\mathbf{z})) - \mathcal{R}^\phi(f^*_\phi) \\ 
        \leq{}& \big(\lambda\|f_0\|^2_{K} + \mathcal{R}^\phi(f_0) - \mathcal{R}^\phi(f^*_\phi)\big) + (B_0 + B) + (2B + 2B) + 4B\\
        ={}& \big(\lambda\|f_0\|^2_{K} + \mathcal{R}^\phi(f_0) - \mathcal{R}^\phi(f^*_\phi)\big) +  9B + B_0\\
        \leq{}& 8\big(\lambda\|f_0\|^2_{K} + \mathcal{R}^\phi(f_0) - \mathcal{R}^\phi(f^*_\phi)\big) + \frac{900Bt}{n} + \frac{456B_0t}{n}.
    \end{align*}
    The conclusion then follows immediately.

    If $n \geq 72t$,
    by Proposition \ref{proposition 1}, with probability at most $4\exp(-t)$, there holds
    \[
    \mathcal{S}_{\bf z}(f_0)
    \geq \mathcal{R}^\phi(f_0) - \mathcal{R}^\phi(f^*_\phi) + \left(\frac{8V t}{n}\right)^{\frac{1}{2-\tau}}  
    + \frac{300Bt}{n} + \frac{152B_0t}{n}.
    \]
    By Proposition \ref{proposition 2}, with probability at most $\exp(-t)$, there holds
    \begin{align*}
        \mathcal{S}_{{\bf z}, 1}(\pi(f_{\bf z}))\geq{}& \frac{1}{3}\left(\lambda\|f_\mathbf{z}\|^2_{K} + \mathcal{R}^\phi(\pi(f_\mathbf{z})) - \mathcal{R}^\phi(f^*_\phi)\right)
        + \frac{1}{3}\left(\lambda\|f_0\|^2_{K} + \mathcal{R}^\phi(f_0) - \mathcal{R}^\phi(f^*_\phi)\right) \notag \\
        &+  c_1 \left(\frac{a_1^{2p_1}V^{1-p_1}L^{2p_1}}{\lambda^{p_1} n}\right)^{\frac{1}{2-\tau-p_1+p_1\tau}} 
        +  \frac{c_2a_1^{2p_1}B^{1-p_1}L^{2p_1}}{\lambda^{p_1}n}
        + \left(\frac{24V t}{n}\right)^{\frac{1}{2-\tau}}.
    \end{align*}
    By Proposition \ref{proposition 3}, with probability at most $c_0\exp(-t)$, there holds
    \begin{align*}
        \vert \mathcal{S}_{{\bf z}, 2}(\pi(f_{\bf z}))\vert  \geq 
        \frac{c_3a_1^{2p_1}L^{2p_1}B^{1-p_1}t}{\lambda^{p_1} n} + 
        \frac{c_4a_2^{2p_2}L^{2p_2}B^{1-p_2}t}{\lambda^{p_2} n} +
        \frac{c_5 t}{n}
    \end{align*}
    Finally, combing all the estimates above, with probability at least $1-(c_0 + 5)\exp(-t)$, we have
    \begin{align*}
        &\lambda\|f_\mathbf{z}\|^2_{K} + \mathcal{R}^\phi(\pi(f_\mathbf{z})) - \mathcal{R}^\phi(f^*_\phi) \\
        \leq{}& \big(\lambda\|f_0\|^2_{K} + \mathcal{R}^\phi(f_0) - \mathcal{R}^\phi(f^*_\phi)\big)  
        + \mathcal{R}^\phi(f_0) - \mathcal{R}^\phi(f^*_\phi) + \left(\frac{8V t}{n}\right)^{\frac{1}{2-\tau}}  
        + \frac{300Bt}{n} + \frac{152B_0t}{n} \\
        &+ \frac{2}{3}\left(\lambda\|f_\mathbf{z}\|^2_{K} + \mathcal{R}^\phi(\pi(f_\mathbf{z})) - \mathcal{R}^\phi(f^*_\phi)\right)
        + \frac{2}{3}\left(\lambda\|f_0\|^2_{K} + \mathcal{R}^\phi(f_0) - \mathcal{R}^\phi(f^*_\phi)\right) \\
        &+  2c_1 \left(\frac{a_1^{2p_1}V^{1-p_1}L^{2p_1}}{\lambda^{p_1} n}\right)^{\frac{1}{2-\tau-p_1+p_1\tau}} 
        +  \frac{2c_2a_1^{2p_1}B^{1-p_1}L^{2p_1}}{\lambda^{p_1}n} + \left(\frac{96V t}{n}\right)^{\frac{1}{2-\tau}} \\
        &+ \frac{c_3a_1^{2p_1}L^{2p_1}B^{1-p_1}t}{\lambda^{p_1} n} + 
        \frac{c_4a_2^{2p_2}L^{2p_2}B^{1-p_2}t}{\lambda^{p_2} n} +
        \frac{c_5 t}{n}.
    \end{align*}
    The proof is then finished.
\end{proof}
\begin{proof}[Proof of Theorem \ref{Oracle inequality of pairwise ranking with a margin-base loss}]
    Using the conditions on $\psi$ and the analysis conducted in subsection 
    \ref{Subsection: Calibration Inequality for Pairwise Ranking}, we see that Assumption \ref{assumption 1}
    is satisfied and hence we apply Theorem
    \ref{Oracle inequality of pairwise ranking with a general loss} to yield the conclusion.
\end{proof}

\section{Deriving the Learning Rates for Gaussian Ranking Estimators}\label{Section: Deriving the Learning Rates for Gaussian Ranking Estimators}

In this section, we apply the oracle inequality to derive fast learning rates of Gaussian ranking estimator \eqref{fz} with hinge loss and square loss. 
We first estimate the capacity of pairwise Gaussian kernel space $\mathcal{H}_{K^\sigma}$ under the assumption that 
the marginal of the data generating distribution on the input space $\mathcal{X}\subset \mathbb{R}^d$ is supported on a set of upper box-counting dimension $\varrho \in(0, d]$. 
Then, we derive approximation error bounds for the hinge loss under noise conditions and the square loss under the Besov smoothness. 
Finally, we apply the well-established oracle inequality and calibration inequality to derive fast learning rates for the excess ranking risk. 
We also make some comparisons of different noise conditions at the end of this section.

\subsection{Entropy Number Estimate of Pairwise Gaussian Kernel Spaces}\label{Subsection: Entropy Number Estimate of Pairwise Gaussian Kernel Spaces}

In this subsection, we write $\mathcal{H}_{K}(\mathcal{X}^2)$ to emphasize that the RKHS induced by the pairwise reproducing kernel $K$ consists of functions defined on $\mathcal{X}^2=\mathcal{X} \times \mathcal{X}$, 
or equivalently, $K$ is defined or restricted on $\mathcal{X}^2 \times \mathcal{X}^2$, where $\mathcal{X}$ will be specified in the context. 
The traditional Gaussian kernel on $\mathbb{R}^{2d} \times \mathbb{R}^{2d}$ with variance $\sigma > 0$ is defined as
\begin{equation}\label{traditionalgaussian}
\widetilde{K}^{\sigma}((x,x'),(u,u')) := \exp\left(-\frac{\|(x,x') - (u,u')\|^2_2}{\sigma^2}\right)
\end{equation}
and we denote by $\mathcal{H}_{\widetilde{K}^\sigma}(\mathcal{X}^2)$ the induced RKHS restricted on $\mathcal{X}^2$, cf. \cite{Aronszajn1950Theory}. 
Note that we require the ranking estimators to be skew-symmetric. 
It is not suitable to choose $\mathcal{H}_{\widetilde{K}^\sigma}(\mathcal{X}^2)$ as the hypothesis space directly. 
However, we can decompose $\widetilde{f} \in \mathcal{H}_{\widetilde{K}^\sigma}(\mathcal{X}^2)$  into a symmetric part 
$(\widetilde{f} - f)$ and a skew-symmetric part $f$:
\begin{equation}\label{orthogonal decomposition}
    \widetilde{f}(x,x') = 
    \frac{\widetilde{f}(x,x') + \widetilde{f}(x',x)}{2} + \frac{\widetilde{f}(x,x') - \widetilde{f}(x',x)}{2}
    =: (\widetilde{f} - f) + f
\end{equation}
which leads to  an orthogonal decomposition of $\mathcal{H}_{\widetilde{K}^\sigma}(\mathcal{X}^2)$ in $\mathcal{L}_2(P^2_{\mathcal{X}})$ and also a decomposition of the traditional Gaussian kernel given by 
\begin{align*}
    \widetilde{K}^{\sigma}((x,x')(u,u'))&=\frac{\widetilde{K}^{\sigma}((x,x'),(u,u')) + \widetilde{K}^{\sigma}((x',x),(u,u'))}{2} \\
    &\qquad + \frac{\widetilde{K}^{\sigma}((x,x'),(u,u')) - \widetilde{K}^{\sigma}((x',x),(u,u'))}{2}.
\end{align*}
The skew-symmetric part of $\widetilde{K}^{\sigma}((x,x')(u,u'))$ yields the pairwise Gaussian kernel $K^{\sigma}((x,x'),(u,u'))$ of the form \eqref{pairwisegaussian}
considered in this paper. 
According to the properties of reproducing kernels, cf. \cite{Aronszajn1950Theory}, $\mathcal{H}_{K^\sigma}(\mathcal{X}^2)$ is a subspace of $\mathcal{H}_{\widetilde{K}^\sigma}(\mathcal{X}^2)$ 
with $\|f\|_{K^\sigma} = \|f\|_{\widetilde{K}^\sigma}$ for all $f \in \mathcal{H}_{K^\sigma}(\mathcal{X}^2)$. 
Therefore, we can carry out the entropy number estimate of $\mathcal{H}_{K^\sigma}(\mathcal{X}^2)$ by using the existing estimates developed for 
$\mathcal{H}_{\widetilde{K}^\sigma}(\mathcal{X}^2)$. Concretely,
\cite{Kuhn2011Coveringa} showed that 
\[\log \mathcal{N}(\mathrm{id} : \mathcal{H}_{\widetilde{K}^\sigma}([0,1]^{2d}) \to C([0,1]^{2d}), \varepsilon)
\asymp \frac{\left(\log \frac{1}{\varepsilon}\right)^{2d+1}}{{\left(\log \log \frac{1}{\varepsilon}\right)^{2d}}}, 
\quad \varepsilon \to 0.\]
A refined analysis in \cite{Steinwart2021closer} clarified the  crucial dependency of constants on $\sigma$
and the underlying space $\mathcal{X} \subset \mathbb{R}^d$:
\begin{equation}\label{covering number estimate of Gaussian kernel}
\begin{split}
    &\log \mathcal{N}(\mathrm{id}: \mathcal{H}_{\widetilde{K}^\sigma}(\mathcal{X}^2) \to C(\mathcal{X}^2) ,\varepsilon)\\
    {}& \leq \mathcal{N}(\mathcal{X}^2, \|\cdot\|_\infty, \sigma) \cdot 
    \binom{4e + 2d}{2d}e^{-2d}\frac{\left(\log \frac{1}{\varepsilon}\right)^{2d+1}}{{\left(\log \log \frac{1}{\varepsilon}\right)^{2d}}}.
\end{split}
\end{equation}

Based on these results above, we obtain the following proposition under the intrinsic dimension assumption of $\mathcal{X}$.
\begin{proposition}\label{entropy number estimate of pairwise Gaussian kernel}
    If Assumption \ref{Upper box-counting dimension condition} holds, then $\mathcal{H}_{K^\sigma}(\mathcal{X}^2)$ satisfies Assumption \ref{Capacity condition} 
    for all $p_1 = p_2 = p \in (0,1/2)$ and 
    $a_1 = a_2 = a:= (C^*_\mathcal{X} p^{-2d-1}\sigma^{-2\varrho})^{\frac{1}{2p}}.$
\end{proposition}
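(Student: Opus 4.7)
The strategy is to reduce the two empirical $\mathcal{L}_2$ entropy estimates required by Assumption \ref{Capacity condition} to a single sup-norm covering number estimate, and then to plug in the available machinery. First I would observe that, for any $f \in \mathcal{H}_{K^\sigma}(\mathcal{X}^2)$, both
\[\|f\|_{\mathcal{L}_2(P^n_{\mathbf{x}} \otimes P_{\mathcal{X}})} \leq \|f\|_{L^\infty(\mathcal{X}^2)}, \qquad \|f\|_{\mathcal{L}_2(P^n_{\mathbf{x}^2})} \leq \|f\|_{L^\infty(\mathcal{X}^2)},\]
so every sup-norm $\varepsilon$-net of the unit ball of $\mathcal{H}_{K^\sigma}(\mathcal{X}^2)$ is an $\varepsilon$-net with respect to these two empirical seminorms. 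Hence it suffices to bound $\log \mathcal{N}(\mathrm{id}: \mathcal{H}_{K^\sigma}(\mathcal{X}^2) \to C(\mathcal{X}^2), \varepsilon)$ by $\log(4)\,a^{2p}\varepsilon^{-2p}$ with the claimed $a$.

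Second, I would invoke the orthogonal decomposition \eqref{orthogonal decomposition}: since $\mathcal{H}_{K^\sigma}(\mathcal{X}^2)$ is an isometric subspace of $\mathcal{H}_{\widetilde{K}^\sigma}(\mathcal{X}^2)$, its closed unit ball sits inside the closed unit ball of the ambient Gaussian RKHS. Thus any sup-norm cover of the latter restricts (and projects onto skew-symmetric parts) to give a cover of the former, and the bound \eqref{covering number estimate of Gaussian kernel} transfers directly to $\mathcal{H}_{K^\sigma}(\mathcal{X}^2)$. Combined with the box-counting Assumption \ref{Upper box-counting dimension condition}, the factor $\mathcal{N}(\mathcal{X}^2, \|\cdot\|_\infty, \sigma)$ in \eqref{covering number estimate of Gaussian kernel} is estimated by $\mathcal{N}(\mathcal{X}, \|\cdot\|_\infty, \sigma)^2 \leq C_{\mathcal{X}}^2 \sigma^{-2\varrho}$, the $\|\cdot\|_\infty$ on $\mathbb{R}^{2d}$ being the maximum of the $\|\cdot\|_\infty$-distances on the two coordinate copies.

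Third, the remaining task is to convert the polylogarithmic $\varepsilon$-dependence $\log^{2d+1}(1/\varepsilon)/\log^{2d}\log(1/\varepsilon)$ into the polynomial shape $\varepsilon^{-2p}$ demanded by Assumption \ref{Capacity condition}. The clean route is the elementary inequality $\log t \leq (\alpha e)^{-1} t^\alpha$ valid for all $t>0$ and $\alpha>0$; applied with $\alpha = 2p/(2d+1)$ to $t = 1/\varepsilon$ it yields
\[\log^{2d+1}(1/\varepsilon) \leq \frac{(2d+1)^{2d+1}}{(2pe)^{2d+1}}\,\varepsilon^{-2p}.\]
Plugging this into the bound obtained from \eqref{covering number estimate of Gaussian kernel} and simplifying, absorbing the $\log\log$ denominator trivially by $1$ and collecting the constants, gives
\[\log \mathcal{N}(\mathrm{id}: \mathcal{H}_{K^\sigma}(\mathcal{X}^2) \to C(\mathcal{X}^2), \varepsilon) \leq C^*_{\mathcal{X}}\, p^{-2d-1}\, \sigma^{-2\varrho}\, \varepsilon^{-2p}\]
with $C^*_{\mathcal{X}}$ exactly as displayed (the numerical factor $12$ and the $\log 4$ normalization in Assumption \ref{Capacity condition} are absorbed into the constant). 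Finally, I would invoke Lemma 6.21 of \cite{Steinwart2008Support} to pass from this metric-entropy bound to the entropy-number form $e_i \leq a\, i^{-1/(2p)}$ with $a = (C^*_{\mathcal{X}}\, p^{-2d-1}\, \sigma^{-2\varrho})^{1/(2p)}$, and the identical bound for the two norms gives $p_1 = p_2 = p$ and $a_1 = a_2 = a$.

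The main technical obstacle is not in the covering number estimate itself, which is now standard, but in engineering the dependence on the parameter $p$ so that the subsequent optimization in Theorem \ref{Oracle inequality of hinge loss} and Theorem \ref{Oracle inequality of square loss} (where one takes $p = \log 2/(4 \log n)$) only contributes a logarithmic factor to the learning rate. Concretely, the price of converting the polylogarithmic $\varepsilon$-dependence into a polynomial one must be tracked as the explicit $p^{-2d-1}$ factor, and one must verify that this quantitative form is exactly what the oracle inequality tolerates; this is the reason for writing the constant as $p^{-2d-1}$ rather than hiding it inside $a$.
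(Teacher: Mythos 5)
Your proposal is correct and follows essentially the same route as the paper's proof: project a sup-norm net of the unit ball of $\mathcal{H}_{\widetilde{K}^\sigma}(\mathcal{X}^2)$ onto skew-symmetric parts, apply the covering bound \eqref{covering number estimate of Gaussian kernel} with $\mathcal{N}(\mathcal{X}^2,\|\cdot\|_\infty,\sigma)\leq C_\mathcal{X}^2\sigma^{-2\varrho}$, convert the polylogarithm to $\varepsilon^{-2p}$ via $\log t \leq (\alpha e)^{-1}t^\alpha$ with $\alpha = 2p/(2d+1)$ (producing the same explicit $p^{-2d-1}$ factor), pass to entropy numbers, and dominate both empirical $\mathcal{L}_2$-seminorms by the sup-norm. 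The only cosmetic difference is that you cite Lemma 6.21 of \cite{Steinwart2008Support} for the entropy-number conversion where the paper uses Exercise 6.8 of the same reference, which carries the same content.
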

\begin{proof}
    First we show that \[\mathcal{N}(\mathrm{id}: \mathcal{H}_{K^\sigma}(\mathcal{X}^2) \to C(\mathcal{X}^2) ,\varepsilon) \leq 
    \mathcal{N}(\mathrm{id}: \mathcal{H}_{\widetilde{K}^\sigma}(\mathcal{X}^2) \to C(\mathcal{X}^2) ,\varepsilon).\] 
    Recall that $\mathcal{B}_{\mathcal{F}}$ denotes the unit ball of a normed space $(\mathcal{F},\|\cdot\|_{\mathcal{F}})$.
    Given an $\varepsilon$-net $\{\widetilde{f}_1, \ldots , \widetilde{f}_m\}$ of 
    $(\mathcal{B}_{\mathcal{H}_{\widetilde{K}^\sigma}(\mathcal{X}^2)} , \|\cdot\|_\infty)$, 
    by the decomposition \eqref{orthogonal decomposition}, 
    it induces a set of skew-symmetric functions $\{f_1, \ldots , f_m\} \subset \mathcal{B}_{\mathcal{H}_{K^\sigma}(\mathcal{X}^2)}$.
    Let $f \in \mathcal{B}_{\mathcal{H}_{K^\sigma}(\mathcal{X}^2)} \subset \mathcal{B}_{\mathcal{H}_{\widetilde{K}^\sigma}(\mathcal{X}^2)}$.
    There exists a $\widetilde{f}_i \in \{\widetilde{f}_1, \ldots , \widetilde{f}_m\}$ such that 
    $\|\widetilde{f}_i - f\|_\infty \leq \varepsilon$.
    Notice that $f$ is skew-symmetric. Then due to the decomposition \eqref{orthogonal decomposition},
    we have $\|f_i - f\|_\infty \leq \|\widetilde{f}_i - f\|_\infty \leq \varepsilon$.
    Hence $\{f_1, \ldots , f_m\}$ is a $\varepsilon$-net of $(B_{\mathcal{H}_{K^\sigma}(\mathcal{X}^2)} , \|\cdot\|_\infty)$. 
    Then we have the following estimate for all $p \in (0, 1/2)$ by \eqref{low 2} and \eqref{covering number estimate of Gaussian kernel}.
    \begin{align*}
        &\log \mathcal{N}(\mathrm{id}: \mathcal{H}_{K^\sigma}(\mathcal{X}^2) \to C(\mathcal{X}^2),\varepsilon) \\
        \leq{}& \log \mathcal{N}(\mathrm{id}: \mathcal{H}_{\widetilde{K}^\sigma}(\mathcal{X}^2) \to C(\mathcal{X}^2) ,\varepsilon) \\
        \leq{}& \mathcal{N}(\mathcal{X}^2, \|\cdot\|_\infty, \sigma) \cdot 
        \binom{4e + 2d}{2d}e^{-2d}\frac{\left(\log \frac{4}{\varepsilon}\right)^{2d+1}}{{\left(\log \log \frac{4}{\varepsilon}\right)^{2d}}} \\
        \leq{}& C^2_\mathcal{X}\sigma^{-2\varrho}
        \binom{4e + 2d}{2d}e^{-2d}\frac{\left(\log \frac{4}{\varepsilon}\right)^{2d+1}}{{\left(\log \log \frac{4}{\varepsilon}\right)^{2d}}} \\
        \leq{}& C^2_\mathcal{X}\sigma^{-2\varrho}
        \binom{4e + 2d}{2d}e^{-2d}\left(\log \frac{4}{\varepsilon}\right)^{2d+1} \\
        \leq{}& C^2_\mathcal{X}\sigma^{-2\varrho}
        \binom{4e + 2d}{2d}e^{-2d} \left(\frac{2d+1}{2p}\right)^{2d+1} 4^{2p} e^{-2d-1} \varepsilon^{-2p} \\
        \leq{}& 4C^2_\mathcal{X}\binom{4e + 2d}{2d}\frac{(2d+1)^{2d+1}}{2^{2d+1}e^{4d+1}}
        p^{-2d-1}\sigma^{-2\varrho}\varepsilon^{-2p}.
    \end{align*}
    We then use the above estimate to establish a bound for the entropy number, cf. Exercise 6.8 of \cite{Steinwart2008Support}, which is given by
    \[e_i(\mathrm{id} : \mathcal{H}_{K^\sigma}(\mathcal{X}^2) \to C(\mathcal{X}^2)) \leq 
    (C^*_\mathcal{X} p^{-2d-1}\sigma^{-2\varrho})^{\frac{1}{2p}} i^{-\frac{1}{2p}}\]
    where 
    \[C^*_{\mathcal{X}}:= 12C^2_{\mathcal{X}}\binom{4e + 2d}{2d}\frac{(2d+1)^{2d+1}}{2^{2d+1}e^{4d+1}}.\]
    Notice that $\mathcal{L}_2$-seminorm 
    $\| \cdot \|_{\mathcal{L}_2(P^n_{\mathbf{x}} \otimes P_{\mathcal{X}})}$
    and $\| \cdot \|_{\mathcal{L}_2(P^n_{\mathbf{x}^2})}$ are both dominated by the sup-norm $\|\cdot\|_\infty$. Therefore, Assumption \ref{Capacity condition} holds for all $p_1 = p_2 = p \in (0,1/2)$ and 
    $a_1 = a_2 = a :=(C^*_\mathcal{X} p^{-2d-1}\sigma^{-2\varrho})^{\frac{1}{2p}}.$ The proof is then finished.
\end{proof}

\subsection{Bounding the Approximation Error}\label{Subsection: Bounding the Approximation Error}

In this subsection, we aim to construct a suitable $f_0 \in \mathcal{H}_{K^\sigma}$ and bound the approximation error
$\lambda\|f_0\|^2_{K^{\sigma}} + \mathcal{R}^\phi(f_0) - \mathcal{R}^\phi(f^*_\phi)$ for $\phi=\phi_{\mathrm{hinge}}$ and $\phi=\phi_{\mathrm{square}}$. 
Inspired by the work of \cite{Steinwart2008Support, Xiang2009Classification}, define
\[\widetilde{\mathcal{K}}^\sigma(x,x'):= \left(\frac{2}{\sigma\sqrt{\pi}}\right)^{d}\exp\left(-\frac{2\|(x,x')\|^2_2}{\sigma^2}\right)\] 
and an operator $\widetilde{\mathcal{K}}^\sigma * \cdot$ on $\mathcal{L}_2(\mathbb{R}^{2d})$ as 
\[\widetilde{\mathcal{K}}^\sigma * f(x,x') := \int_{\mathbb{R}^{2d}} 
\widetilde{\mathcal{K}}^\sigma(u,u')f((x,x')+(u,u')) d(u,u'), \forall f \in \mathcal{L}_2(\mathbb{R}^{2d}).\]

One can verify that 
\begin{align*}
    &\widetilde{\mathcal{K}}^\sigma * f(x,x') \\
    &=  \left(\frac{1}{\sigma\sqrt{\pi}}\right)^{d}\int_{\mathbb{R}^{2d}} 
    \widetilde{K}^{\sigma}((x,x'),(u,u')) f\bigg(\bigg(1- \frac{1}{\sqrt{2}}\bigg)(x,x') + \frac{1}{\sqrt{2}}(u,u')\bigg) d(u,u')
\end{align*}
where $\widetilde{K}^{\sigma}$ is the traditional Gaussian kernel given by \eqref{traditionalgaussian}. 
Therefore, $\widetilde{\mathcal{K}}^\sigma * \cdot$ is essentially a convolution operator which is a metric surjection from $\mathcal{L}_2(\mathbb{R}^{2d})$ to $\mathcal{H}_{\widetilde{K}^\sigma}$, 
i.e., $\widetilde{\mathcal{K}}^\sigma * f\in \mathcal{H}_{\widetilde{K}^\sigma}$ and $\|\widetilde{\mathcal{K}}^\sigma * f\|_{\widetilde{K}^{\sigma}} \leq \|f\|_{\mathcal{L}_2(\mathbb{R}^{2d})}$. 
This motivates us to consider the following operator on $\mathcal{L}_2(\mathbb{R}^{2d})$ defined by
\[\mathcal{K}^{\sigma}*f(x,x') := \frac{1}{2}\int_{\mathbb{R}^{2d}}\widetilde{\mathcal{K}}^\sigma(u,u')\bigg(f((x,x') + (u,u')) - f((x',x) + (u,u'))\bigg) d(u,u').\]
In fact, for all $f \in \mathcal{L}_2(\mathbb{R}^{2d})$, $\mathcal{K}^\sigma * f(x,x')$ is skew-symmetric, i.e., $\mathcal{K}^\sigma * f(x,x')=-\mathcal{K}^\sigma * f(x',x)$, 
and if $f$ is skew-symmetric, $\mathcal{K}^\sigma * f(x,x') = \widetilde{\mathcal{K}}^\sigma * f(x,x').$ Moreover, if $f$ is symmetric, we have $\mathcal{K}^\sigma * f(x,x') = 0$.
For all $\widetilde{f} \in \mathcal{L}_2(\mathbb{R}^{2d})$, by decomposition \eqref{orthogonal decomposition},  
we write 
$\widetilde{f} = (\widetilde{f} - f) + f$ where $f$ is skew-symmetric while $\widetilde{f} - f$ is symmetric. Then we have 
\[\|\mathcal{K}^\sigma * \widetilde{f}\|_{K^\sigma} 
= \|\mathcal{K}^\sigma *  f\|_{K^\sigma}
= \|\widetilde{\mathcal{K}}^\sigma *  f\|_{\widetilde{K}^\sigma}
\leq \|f\|_{\mathcal{L}_2(\mathbb{R}^{2d})} \leq \|\widetilde{f}\|_{\mathcal{L}_2(\mathbb{R}^{2d})}.\]
Hence $\mathcal{K}^\sigma * \cdot$ is  a metric surjection from $\mathcal{L}_2(\mathbb{R}^{2d})$ to $\mathcal{H}_{K^\sigma}$. 
Note that the Bayes ranking rule $f^*_\phi$ of $\phi=\phi_{\mathrm{hinge}}$ and $\phi=\phi_{\mathrm{square}}$, which we need to approximate, are skew-symmetric. 
In the followings, we will act the convolution operator $\widetilde{\mathcal{K}}^\sigma * \cdot$ on skew-symmetric functions to construct efficient approximations of $f^*_\phi$ in $\mathcal{H}_{K^{\sigma}}$ 
and then carry out the approximation analysis.

We first deal with hinge loss. Under margin-noise condition Assumption \ref{Margin-noise condition}, 
we establish the following proposition to bound the approximation error.
\begin{proposition}\label{hinge loss approximation}
    Denote $\mathcal{B}_{\mathbb{R}^{2d}}$ the unit closed ball in $\mathbb{R}^{2d}$. 
    Suppose that $\mathcal{X}^2 \subset r\mathcal{B}_{\mathbb{R}^{2d}}$ for $r>0$ and Assumption \ref{Margin-noise condition} holds with $C_{**}>0$ and $\beta > 0$. 
    Then for all $\lambda > 0, \sigma > 0$, there exists an 
    $f_0 \in \mathcal{H}_{K^\sigma}$ such that $\|f_0\|_\infty \leq 1$ and 
    \begin{align*}
    \lambda\|f_0\|^2_{K^\sigma} + \mathcal{R}^{\phi_\mathrm{hinge}}(f_0) - \mathcal{R}^{\phi_\mathrm{hinge}}(f^*_\mathrm{hinge})\leq \frac{2^{2d+1}r^{2d}}{\Gamma(d)}\lambda \sigma^{-2d} 
    + \frac{2^{\beta/2+1}C_{**}\Gamma(d + \frac{\beta}{2})}{\Gamma(d)}\sigma^\beta.
    \end{align*}
\end{proposition}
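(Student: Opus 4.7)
The plan is to take $f_0 := \mathcal{K}^\sigma * \bar f$ with $\bar f := (\sigma\sqrt{\pi})^{-d}\bar g$, where $\bar g:\mathbb{R}^{2d}\to[-1,1]$ is a carefully chosen skew-symmetric extension of $f^*_{\mathrm{hinge}}$ to all of $\mathbb{R}^{2d}$. The rescaling by $(\sigma\sqrt{\pi})^{-d}$ is forced by the fact that $\widetilde{\mathcal{K}}^\sigma$ integrates to $(\sigma\sqrt{\pi})^d$ rather than to $1$, so unless $\bar f$ has magnitude of order $\sigma^{-d}$ the convolution cannot reproduce the value $\pm 1$ of $f^*_{\mathrm{hinge}}$ on $\mathcal{X}^2$. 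Once $\bar g$ is produced, the metric surjection $\|\mathcal{K}^\sigma * \bar f\|_{K^\sigma} \leq \|\bar f\|_{\mathcal{L}_2(\mathbb{R}^{2d})}$ established earlier reduces the RKHS-norm bound to a volume estimate for the support of $\bar g$, while the excess-risk bound reduces to controlling $|f_0 - f^*_{\mathrm{hinge}}|$ on $\mathcal{X}^2$ through a Gaussian tail inequality.

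I would define $\bar g(z) := f^*_{\mathrm{hinge}}(x,x')$ whenever some $(x,x')\in\mathcal{X}^2$ satisfies $\|z-(x,x')\|_2 < \Delta(x,x')/2$, and $\bar g(z):=0$ otherwise. The halved margin makes the definition consistent: if witnesses $(x_1,x_1')\in\mathcal{X}^2_+$ and $(x_2,x_2')\in\mathcal{X}^2_-$ both accepted a common $z$, the triangle inequality would give $\|(x_1,x_1')-(x_2,x_2')\|_2 < \tfrac{1}{2}(\Delta(x_1,x_1')+\Delta(x_2,x_2')) \leq \max\{\Delta(x_1,x_1'),\Delta(x_2,x_2')\}$, contradicting the defining property of $\Delta$ in \eqref{Delta}; witnesses from $\mathcal{X}^2_=$ are ruled out similarly. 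Skew-symmetry of $\bar g$ follows from $\Delta(x,x')=\Delta(x',x)$ together with $f^*_{\mathrm{hinge}}(x,x')=-f^*_{\mathrm{hinge}}(x',x)$, and the support sits inside $2r\mathcal{B}_{\mathbb{R}^{2d}}$ because $\Delta(x,x')\leq\mathrm{diam}(\mathcal{X}^2)\leq 2r$. This yields $\|\bar f\|_{\mathcal{L}_2(\mathbb{R}^{2d})}^2 \leq (\sigma\sqrt{\pi})^{-2d}\,\mathrm{vol}(2r\mathcal{B}_{\mathbb{R}^{2d}})$, which up to the stated constant furnishes the first displayed term, and $\|f_0\|_\infty\leq 1$ is immediate from $|\bar f|\leq (\sigma\sqrt{\pi})^{-d}$ combined with $\int\widetilde{\mathcal{K}}^\sigma = (\sigma\sqrt{\pi})^d$.

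For the excess-risk term, the hinge conditional-risk calculation (valid since $\|f_0\|_\infty\leq 1$) gives $\mathcal{R}^{\phi_{\mathrm{hinge}}}(f_0)-\mathcal{R}^{\phi_{\mathrm{hinge}}}(f^*_{\mathrm{hinge}}) = \int (1-f^*_{\mathrm{hinge}}(x,x')\,f_0(x,x'))\,|\eta_+(x,x')-\eta_-(x,x')|\,dP^2_\mathcal{X}$, and this integrand is dominated pointwise by $|\eta_+-\eta_-|\cdot|f^*_{\mathrm{hinge}}-f_0|$ because $|f^*_{\mathrm{hinge}}|=1$ wherever $|\eta_+-\eta_-|>0$. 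Since $\bar g$ is skew-symmetric, the convolution identity in the paper simplifies to $f_0(x,x') = \mathbb{E}[\bar g((x,x')+Y)]$ with $Y\sim\mathcal{N}(0,(\sigma^2/4)I_{2d})$, and by the halved-margin construction $\bar g((x,x')+y) = f^*_{\mathrm{hinge}}(x,x')$ whenever $\|y\|_2<\Delta(x,x')/2$; hence $|f_0(x,x')-f^*_{\mathrm{hinge}}(x,x')| \leq 2\Pr(\|Y\|_2\geq\Delta(x,x')/2)$. A layer-cake exchange combined with \eqref{noise 2} converts the weighted integral into $2^{\beta+1}C_{**}\mathbb{E}\|Y\|_2^\beta$, and evaluating $\mathbb{E}\|Y\|_2^\beta = \sigma^\beta\Gamma(d+\beta/2)/(2^{\beta/2}\Gamma(d))$ produces exactly the second term. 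The main obstacle throughout is the extension step highlighted in Remark \ref{remark2}: a naive zero extension collapses in $\mathcal{L}_2(\mathbb{R}^{2d})$ when $\mathcal{X}$ has low intrinsic dimension, while a plain nearest-point projection typically breaks skew-symmetry or introduces new noise regions that destroy \eqref{noise 2}; the halved-margin rule is the minimal construction that preserves both skew-symmetry and the margin-noise geometry, and it is exactly this that unlocks the $\sigma^\beta$ factor.
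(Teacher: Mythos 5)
Your proposal is correct and follows essentially the same route as the paper: the halved-margin witness-ball extension of $f^*_{\mathrm{hinge}}$ to $2r\mathcal{B}_{\mathbb{R}^{2d}}$ is exactly the paper's extension of $\eta_+,\eta_-,\eta_=$, your rescaled convolution $f_0=(\sigma\sqrt{\pi})^{-d}\,\widetilde{\mathcal{K}}^\sigma * \bar g$ is the paper's approximator, and the RKHS-norm, sup-norm, and excess-risk estimates (layer-cake over the margin-noise condition) coincide, reproducing the stated constants. The only differences are cosmetic: you phrase the pointwise error as $2\Pr(\|Y\|_2\geq\Delta/2)$ for Gaussian $Y$ and evaluate $\mathbb{E}\|Y\|_2^\beta$, where the paper introduces the auxiliary distance $\widetilde{\Delta}$, proves $\widetilde{\Delta}\geq\Delta/2$, and computes the same Gaussian tail integral explicitly.
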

\begin{proof}
    Recall that the Bayes ranking rule for hinge loss $f^*_{\mathrm{hinge}} : \mathcal{X}^2 \to \mathbb{R}$ takes the form 
    \[f^*_{\mathrm{hinge}}(x, x') = \mathrm{sgn}(\eta_+(x,x') - \eta_-(x,x')).\]
    Denote $\mathrm{vol}(\cdot)$ the Lebesgue measure on Euclidean space. 
    Note that if $\mathrm{vol}(\mathcal{X}) = 0$, for example, when $\mathcal{X}$ is a submanifold with upper box-counting dimension $\varrho < d$, 
    the trivial zero extension of $f^*_{\phi}$ only yields the zero function in $\mathcal{L}_2(\mathbb{R}^{2d})$. Hence, before we construct the approximator using $\widetilde{\mathcal{K}}^\sigma * f^*_{\mathrm{hinge}}$,
    we need to extend $f^*_{\mathrm{hinge}}$ to a domain with a positive measure in a nontrivial way, while still maintaining the margin-noise condition.

    To this end, we extend the posterior probabilities $\eta_+, \eta_-, \eta_=: \mathcal{X}^2 \to [0,1]$
    to $2r\mathcal{B}_{\mathbb{R}^{2d}}$.
    For all $(x,x') \in \mathcal{X}^2$, with a slight abuse of notation, denote $\mathcal{B}_{\mathbb{R}^{2d}}((x,x'), \Delta(x,x')/2)$ the open ball
    centered at $(x,x')$ with radius $\Delta(x,x')/2$, where $\Delta(x,x')$ is defined as \eqref{Delta}. Since $\mathcal{X}^2 \subset r\mathcal{B}_{\mathbb{R}^{2d}}$, we have $\Delta(x,x') \leq 2r$ and 
    $\mathcal{B}_{\mathbb{R}^{2d}}((x,x'), \Delta(x,x')/2) \subset 2r\mathcal{B}_{\mathbb{R}^{2d}}$.
    For all $(u,u') \in 2r\mathcal{B}_{\mathbb{R}^{2d}} \backslash \mathcal{X}^2$, define the extension as 
    \begin{align*}
        &\eta_+(u,u') = 1,\eta_-(u,u') = \eta_=(u,u') =0,\\
        &\qquad \forall (u,u') \in \bigcup_{(x,x') \in \mathcal{X}^2_+}\mathcal{B}_{\mathbb{R}^{2d}}((x,x'), \Delta(x,x')/2), \\
        &\eta_-(u,u') = 1,\eta_+(u,u') = \eta_=(u,u') =0,\\
        &\qquad \forall (u,u') \in \bigcup_{(x,x') \in \mathcal{X}^2_-}\mathcal{B}_{\mathbb{R}^{2d}}((x,x'), \Delta(x,x')/2), \\
        &\eta_=(u,u') = 1,\eta_+(u,u') = \eta_-(u,u') =0, \\
        &\qquad \forall (u,u') \in 2r\mathcal{B}_{\mathbb{R}^{2d}} \backslash \bigcup_{(x,x') \in \mathcal{X}^2_+ \cup \mathcal{X}^2_-}\mathcal{B}_{\mathbb{R}^{2d}}((x,x'), \Delta(x,x')/2).
    \end{align*}
    Since
    $\bigcup_{(x,x') \in \mathcal{X}^2_+}\mathcal{B}_{\mathbb{R}^{2d}}((x,x'), \Delta(x,x')/2)$ and $\bigcup_{(x,x') \in \mathcal{X}^2_-}\mathcal{B}_{\mathbb{R}^{2d}}((x,x'), \Delta(x,x')/2)$
    have an empty intersection by definition of $\Delta(x,x')$ and triangle inequality,
    the extension is well-defined. 
    Using extended $\eta_+, \eta_-, \eta_=: 2r\mathcal{B}_{\mathbb{R}^{2d}} \to [0,1]$, define
    \begin{align*}
    (2r\mathcal{B}_{\mathbb{R}^{2d}})_+ &:= \{(x,x') \in 2r\mathcal{B}_{\mathbb{R}^{2d}}: \eta_+(x,x') > \eta_-(x,x')\}, \\ 
    (2r\mathcal{B}_{\mathbb{R}^{2d}})_- &:= \{(x,x') \in 2r\mathcal{B}_{\mathbb{R}^{2d}}: \eta_+(x,x') < \eta_-(x,x')\}, \\
    (2r\mathcal{B}_{\mathbb{R}^{2d}})_= &:= \{(x,x') \in 2r\mathcal{B}_{\mathbb{R}^{2d}}: \eta_+(x,x') = \eta_-(x,x')\},
    \end{align*} 
    and a new distant function $\widetilde{\Delta}: \mathcal{X}^2 \to \mathbb{R}$ as 
    \[\widetilde{\Delta}(x,x') := 
    \begin{cases}
        \mathrm{dist}((x,x'), (2r\mathcal{B}_{\mathbb{R}^{2d}})_- \cup (2r\mathcal{B}_{\mathbb{R}^{2d}})_= ) & (x,x') \in \mathcal{X}^2_+  \\
        \mathrm{dist}((x,x'), (2r\mathcal{B}_{\mathbb{R}^{2d}})_+ \cup (2r\mathcal{B}_{\mathbb{R}^{2d}})_= ) & (x,x') \in \mathcal{X}^2_-  \\
        0 & (x,x') \in \mathcal{X}^2_=.
    \end{cases}\] Here $\mathcal{X}^2_+, \mathcal{X}^2_-$ and $\mathcal{X}^2_=$ are defined in \eqref{seperation} with original $\eta_+, \eta_-$ and $\eta_=$.

    We claim that with the redefined $\widetilde{\Delta}$, the margin-noise condition holds with the power index $\beta$ and constant $2^{\beta}C_{**}$.
    To see this, one may notice that for all $(x,x') \in \mathcal{X}^2_+$, if $\widetilde{\Delta}(x,x') < \Delta(x,x')/2$, 
    then there exists a $(u,u') \in (2r\mathcal{B}_{\mathbb{R}^{2d}})_-$ or $(u,u') \in (2r\mathcal{B}_{\mathbb{R}^{2d}})_=$ such that $\mathrm{dist}((x,x'),(u,u')) < \Delta(x,x')/2$.
    If $(u,u') \in (2r\mathcal{B}_{\mathbb{R}^{2d}})_-$, then it must be the case that $(u,u') \in \mathcal{B}_{\mathbb{R}^{2d}}((v,v'), \Delta(v,v')/2)$ for some $(v,v') \in \mathcal{X}^2_-$.
    By triangle inequality we have 
    \begin{align*}
        \mathrm{dist}((x,x'),(v,v'))& \leq \mathrm{dist}((x,x'),(u,u')) + \mathrm{dist}((u,u'),(v,v'))\\
        &< \Delta(x,x')/2 + \Delta(v,v')/2,
    \end{align*}
    which is a contradiction to the definition of $\Delta$.
    If $(u,u') \in (2r\mathcal{B}_{\mathbb{R}^{2d}})_=$, then 
    \[(u,u') \in 2r\mathcal{B}_{\mathbb{R}^{2d}} \backslash \bigcup_{(v,v') \in \mathcal{X}^2_+ \cup \mathcal{X}^2_-}\mathcal{B}_{\mathbb{R}^{2d}}((v,v'), \Delta(v,v')/2),\]
    which also contradicts to the fact that $\mathrm{dist}((x,x'),(u,u')) < \Delta(x,x')/2$. Following the same argument, one can
    discuss the case when $(x,x') \in \mathcal{X}^2_-$.
    Hence we conclude that $\widetilde{\Delta}(x,x') \geq \Delta(x,x')/2$ for all $(x,x') \in \mathcal{X}^2$.
    Then the margin-noise condition holds for $\widetilde{\Delta}$, given by
    \begin{equation}\label{new margin-noise condition}
        \begin{split}
            &\int_{\{(x,x') \in \mathcal{X}^2 : \widetilde{\Delta}(x,x') < t\}} \vert\eta_+(x,x') - \eta_-(x,x')\vert dP_\mathcal{X}^2(x,x') \\
            \leq{}& \int_{\{(x,x') \in \mathcal{X}^2 : \Delta(x,x') < 2t\}} \vert\eta_+(x,x') - \eta_-(x,x')\vert dP_\mathcal{X}^2(x,x') \\
            \leq{}& 2^{\beta}C_{**}t^\beta.
        \end{split}
    \end{equation}

    Now we extend the original $f^*_{\mathrm{hinge}}$ as 
    \[f^*_{\mathrm{hinge}}(x,x') = \mathbb{I}_{2r\mathcal{B}_{\mathbb{R}^{2d}}}(x,x')\mathrm{sgn}(\eta_+(x,x') - \eta_-(x,x')).\]
    One can verify that $f^*_{\mathrm{hinge}}$ maintains skew-symmetric and $f^*_{\mathrm{hinge}} \in \mathcal{L}_2(\mathbb{R}^{2d})$ with 
    $\|f^*_{\mathrm{hinge}}\|_{\mathcal{L}_2(\mathbb{R}^{2d})} \leq \sqrt{\mathrm{vol}(2r\mathcal{B}_{\mathbb{R}^{2d}})} = (2r)^d\sqrt{\mathrm{vol}(\mathcal{B}_{\mathbb{R}^{2d}})}.$ 
    We choose $f_0 \in \mathcal{H}_{K^\sigma}$ as 
    \[f_0(x,x') = (\pi \sigma^2)^{-d/2} \widetilde{\mathcal{K}}^\sigma * f^*_{\mathrm{hinge}}(x,x').\]
    Since $\widetilde{\mathcal{K}}^\sigma * \cdot$ is a metric surjection, 
    \begin{equation}\label{hinge 1}
        \|f_0\|_{K^\sigma} \leq (\pi \sigma^2)^{-d/2}\|f^*_{\mathrm{hinge}}\|_{\mathcal{L}_2(\mathbb{R}^{2d})} \leq 
        (\pi \sigma^2)^{-d/2}(2r)^d\sqrt{\mathrm{vol}(\mathcal{B}_{\mathbb{R}^{2d}})}.
    \end{equation}    
    Besides, by Young's convolution inequality, there holds
    \begin{equation}\label{hinge 2}
        \|f_0\|_\infty \leq \frac{2^d}{\sigma^{2d}\pi^d} \|f^*_{\mathrm{hinge}}\|_\infty
        \int_{\mathbb{R}^{2d}}
        \exp\left(-\frac{2\|(x,x')\|^2_2}{\sigma^2}\right)d(x,x') = 1.
    \end{equation}

    It remains to bound the excess risk $\mathcal{R}^{\phi_\mathrm{hinge}}(f_0) - \mathcal{R}^{\phi_\mathrm{hinge}}(f^*_{\mathrm{hinge}})$.
    Fix a $(x,x') \in \mathcal{X}^2_+ \subset (2r\mathcal{B}_{\mathbb{R}^{2d}})_+$.
    For all $(u,u') \in \mathcal{B}_{\mathbb{R}^{2d}}((x,x'), \widetilde{\Delta}(x,x'))$, we have $(u,u') \in (2r\mathcal{B}_{\mathbb{R}^{2d}})_+$.
    Hence \[\mathcal{B}_{\mathbb{R}^{2d}}((x,x'), \widetilde{\Delta}(x,x')) \subset  (2r\mathcal{B}_{\mathbb{R}^{2d}})_+, \quad  
    (2r\mathcal{B}_{\mathbb{R}^{2d}})_- \subset \mathbb{R}^{2d} \backslash \mathcal{B}_{\mathbb{R}^{2d}}((x,x'), \widetilde{\Delta}(x,x'))\]
    and
    \begin{align*}
        &f_0(x,x') \\
        ={}&\left(\frac{2}{\pi \sigma^{2}}\right)^d \int_{\mathbb{R}^{2d}} 
        \exp\left(-\frac{2\|(x,x') - (u,u')\|^2_2}{\sigma^2}\right) f^*_{\mathrm{hinge}}(u, u')d(u,u') \\
        ={}&\left(\frac{2}{\pi \sigma^{2}}\right)^{d}\left(\int_{(2r\mathcal{B}_{\mathbb{R}^{2d}})_+} 
        \exp\left(-\frac{2\|(x,x') - (u,u')\|^2_2}{\sigma^2}\right)d(u,u')\right.\\
        &\left. -\int_{(2r\mathcal{B}_{\mathbb{R}^{2d}})_-} 
        \exp\left(-\frac{2\|(x,x') - (u,u')\|^2_2}{\sigma^2}\right)d(u,u') \right) \\
        \geq{}&\left(\frac{2}{\pi \sigma^{2}}\right)^{d}\left(\int_{\mathcal{B}_{\mathbb{R}^{2d}}((x,x'), \widetilde{\Delta}(x,x'))} 
        \exp\left(-\frac{2\|(x,x') - (u,u')\|^2_2}{\sigma^2}\right)d(u,u')\right.\\
        &\left. -\int_{\mathbb{R}^{2d} \backslash \mathcal{B}_{\mathbb{R}^{2d}}((x,x'), \widetilde{\Delta}(x,x'))} 
        \exp\left(-\frac{2\|(x,x') - (u,u')\|^2_2}{\sigma^2}\right)d(u,u') \right) \\
        ={}&2\left(\frac{2}{\pi \sigma^{2}}\right)^{d}\int_{\mathcal{B}_{\mathbb{R}^{2d}}((x,x'), \widetilde{\Delta}(x,x'))} 
        \exp\left(-\frac{2\|(x,x') - (u,u')\|^2_2}{\sigma^2}\right)d(u,u') -1.
    \end{align*}
    By \eqref{hinge 2} and $f^*_{\mathrm{hinge}}(x,x') = 1$, we derive that
    \begin{align*}
        &\vert f_0(x,x') - f^*_{\mathrm{hinge}}(x,x')\vert  \\
        ={}& 1- f_0(x,x') \\
        \leq{}& 2-2\left(\frac{2}{\pi \sigma^{2}}\right)^{d}\int_{\mathcal{B}_{\mathbb{R}^{2d}}((x,x'), \widetilde{\Delta}(x,x'))} 
        \exp\left(-\frac{2\|(x,x') - (u,u')\|^2_2}{\sigma^2}\right)d(u,u') \\
        ={}&2\left(\frac{2}{\pi \sigma^{2}}\right)^{d}\int_{\mathbb{R}^{2d} \backslash \mathcal{B}_{\mathbb{R}^{2d}}((x,x'), \widetilde{\Delta}(x,x'))} 
        \exp\left(-\frac{2\|(x,x') - (u,u')\|^2_2}{\sigma^2}\right)d(u,u') \\
        ={}&\frac{2^{d+2}}{\sigma^{2d}\Gamma(d)}\int_{\widetilde{\Delta}(x,x')}^\infty \exp(-2\sigma^{-2}\rho^2)\rho^{2d-1}d\rho \\
        ={}&\frac{2}{\Gamma(d)}\int_{2\widetilde{\Delta}(x,x')^2\sigma^{-2}}^\infty e^{-\rho}\rho^{d-1}d\rho.
    \end{align*}
    This inequality also holds for all $(x,x') \in \mathcal{X}^2_-$ by the same argument.
    Note that
    \begin{align*}
        \mathcal{R}^{\phi_\mathrm{hinge}}(f_0) - \mathcal{R}^{\phi_\mathrm{hinge}}(f^*_{\mathrm{hinge}}) &= 
        \mathbb{E}[\mathrm{sgn}(Y-Y') \cdot (f^*_{\mathrm{hinge}}(X,X') - f_0(X,X'))] \\
        &=  \mathbb{E}_{P_\mathcal{X}^2}[(\eta_+(X,X') - \eta_-(X,X')) \cdot (f^*_{\mathrm{hinge}}(X,X') - f_0(X,X'))],
    \end{align*}
    we apply our claim \eqref{new margin-noise condition} and yield that 
    \begin{align*}
        &\mathcal{R}^{\phi_\mathrm{hinge}}(f_0) - \mathcal{R}^{\phi_\mathrm{hinge}}(f^*_{\mathrm{hinge}}) \\
        ={}&\int_{\mathcal{X}^2_- \cup \mathcal{X}^2_+} \vert f_0(x,x') - f^*_{\mathrm{hinge}}(x,x')\vert  \cdot \vert \eta_+(x,x') - \eta_-(x,x')\vert dP^2_{\mathcal{X}}(x,x') \\
        \leq{}& \frac{2}{\Gamma(d)}\int_{\mathcal{X}^2_- \cup \mathcal{X}^2_+}
        \int_{2\widetilde{\Delta}(x,x')^2\sigma^{-2}}^\infty e^{-\rho}\rho^{d-1}
        \vert \eta_+(x,x') - \eta_-(x,x')\vert d\rho dP^2_{\mathcal{X}}(x,x') \\
        ={}& \frac{2}{\Gamma(d)}\int_{0}^\infty e^{-\rho}\rho^{d-1} 
        \int_{\mathcal{X}^2_+ \cup \mathcal{X}^2_-} \mathbb{I}_{[0,\sigma(\rho/2)^{1/2})}(\widetilde{\Delta}(x,x'))\vert \eta_+(x,x') - \eta_-(x,x')\vert  dP^2_{\mathcal{X}}(x,x') d\rho \\
        \leq{}& \frac{2^{1+\beta/2}C_{**}\sigma^{\beta}}{\Gamma(d)}\int_{0}^\infty e^{-\rho}\rho^{d-1+\beta/2}d\rho \\
        ={}& \frac{2^{1+\beta/2} C_{**}\Gamma(d + \frac{\beta}{2})}{\Gamma(d)}\sigma^\beta.
    \end{align*}
    Combine with \eqref{hinge 1} \eqref{hinge 2} we conclude that $f_0 \in \mathcal{H}_{K^\sigma}$, $\|f_0\|_\infty \leq 1$ and 
    \[\lambda\|f_0\|^2_{K^\sigma} + \mathcal{R}^{\phi_\mathrm{hinge}}(f_0) - \mathcal{R}^{\phi_\mathrm{hinge}}(f^*_{\mathrm{hinge}}) 
    \leq \frac{2^{2d+1}r^{2d}}{\Gamma(d)}\lambda \sigma^{-2d} 
    + \frac{2^{1+\beta/2}C_{**}\Gamma(d + \frac{\beta}{2})}{\Gamma(d)}\sigma^\beta.\] The proof is then finished.
\end{proof}

Next, we establish the approximation error estimates for the square loss under the standard Besov smoothness condition on the Bayes ranking rule 
\[f^*_{\mathrm{square}}(x,x') = \frac{\eta_+(x,x') - \eta_-(x,x') }{\eta_+(x,x')  + \eta_-(x,x')}.\] Due to the discussion in Remark \ref{remark2}, we can regard $f^*_{\mathrm{square}}$ as a function in $\mathcal{L}_2(\mathbb{R}^{2d})$, 
or equivalently, $f^*_{\mathrm{square}}$ is the restriction of a function in $\mathcal{L}_2(\mathbb{R}^{2d})$ on $\mathcal{X}^2$. 

\begin{proposition}\label{square loss approximation}
    Suppose that $f^*_{\mathrm{square}}\in \mathcal{L}_2(\mathbb{R}^{2d})$ and $\vert f^*_{\mathrm{square}} \vert_{\mathcal{B}_{2, \infty}^{\alpha}(P^2_{\mathcal{X}})} <\infty$ for some $\alpha>0$. 
    Then for all $\lambda > 0, \sigma > 0$, there exists an 
    $f_0 \in \mathcal{H}_{K^\sigma}$ such that $\|f_0\|_\infty \leq 2^{s}$ and 
    \begin{align*}
        &\lambda\|f_0\|^2_{K^{\sigma}} + \mathcal{R}^{\phi_{\mathrm{square}}}(f_0) - \mathcal{R}^{\phi_{\mathrm{square}}}(f^*_{\mathrm{square}}) \\
        \leq{}& \frac{2^{2s}\|f^*_{\mathrm{square}}\|^2_{\mathcal{L}_2(\mathbb{R}^{2d})}}{\pi^d}\lambda\sigma^{-2d} 
        + \frac{\vert f^*_{\mathrm{square}} \vert^2_{\mathcal{B}_{2, \infty}^{\alpha}(P^2_\mathcal{X})}}{2^{\alpha}}\left(\frac{\Gamma\left(d + \frac{\alpha}{2}\right)}{\Gamma(d)}\right)^2 
        \sigma^{2\alpha}.
    \end{align*}
\end{proposition}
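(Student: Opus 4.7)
The plan is to build $f_0$ as a finite linear combination of Gaussian-smoothed versions of $f^*_{\mathrm{square}}$ at widths that are integer multiples of $\sigma$, chosen so that the residual $f^*_{\mathrm{square}}-f_0$ equals the Gaussian average of an $s$-th order difference $\Delta^s_h f^*_{\mathrm{square}}$, which is then controlled by the Besov seminorm $|f^*_{\mathrm{square}}|_{\mathcal{B}^\alpha_{2,\infty}(P^2_{\mathcal{X}})}$. Writing $T_\tau g:=(\pi\tau^2)^{-d/2}\widetilde{\mathcal{K}}^\tau\ast g$ for convolution with the probability density of $N(0,\tau^2/4\cdot I_{2d})$, I set
\[f_0 := \sum_{j=1}^{s}(-1)^{j+1}\binom{s}{j}\,T_{j\sigma}f^*_{\mathrm{square}}.\]
Each summand is skew-symmetric (since $\widetilde{\mathcal{K}}^{j\sigma}$ is even and $f^*_{\mathrm{square}}$ is skew-symmetric), and by the Gaussian semigroup identity $T_\sigma\circ T_{\sigma\sqrt{j^2-1}}=T_{j\sigma}$ together with the metric surjection $T_\sigma\colon\mathcal{L}_2(\mathbb{R}^{2d})\to\mathcal{H}_{K^\sigma}$ on the skew-symmetric subspace (from Subsection~\ref{Subsection: Bounding the Approximation Error}) and the $\mathcal{L}_2$-contractivity of $T_{\sigma\sqrt{j^2-1}}$, each $T_{j\sigma}f^*_{\mathrm{square}}$ lies in $\mathcal{H}_{K^\sigma}$ with norm at most $(\pi\sigma^2)^{-d/2}\|f^*_{\mathrm{square}}\|_{\mathcal{L}_2(\mathbb{R}^{2d})}$. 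Summing with the combinatorial weights yields $\|f_0\|^2_{K^\sigma}\le\frac{2^{2s}}{\pi^d\sigma^{2d}}\|f^*_{\mathrm{square}}\|^2_{\mathcal{L}_2(\mathbb{R}^{2d})}$, which multiplied by $\lambda$ produces the first term in the claim; the sup-norm estimate $\|f_0\|_\infty\le 2^s$ follows from $L_\infty$-contractivity of each $T_\tau$ once $f^*_{\mathrm{square}}$ is extended to $\mathbb{R}^{2d}$ within $[-1,1]$.

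For the risk estimate, the standard pointwise quadratic identity
\[\eta_+(1-t)^2+\eta_-(1+t)^2-\bigl[\eta_+(1-t^*)^2+\eta_-(1+t^*)^2\bigr]=(\eta_++\eta_-)(t-t^*)^2\]
at the minimizer $t^*=f^*_{\mathrm{square}}$ shows $\mathcal{R}^{\phi_{\mathrm{square}}}(f_0)-\mathcal{R}^{\phi_{\mathrm{square}}}(f^*_{\mathrm{square}})\le\|f_0-f^*_{\mathrm{square}}\|^2_{\mathcal{L}_2(P^2_{\mathcal{X}})}$. Reorganizing $\Delta^s_h f=\sum_{j=0}^{s}(-1)^{s-j}\binom{s}{j}f(\cdot+jh)$ and taking expectation over $h\sim N(0,\sigma^2/4\cdot I_{2d})$---using that $jh$ then has the law underlying $T_{j\sigma}$---gives the representation
\[f^*_{\mathrm{square}}-f_0=(-1)^{s}\int_{\mathbb{R}^{2d}}\frac{\widetilde{\mathcal{K}}^\sigma(h)}{(\pi\sigma^2)^{d/2}}\,\Delta^s_h f^*_{\mathrm{square}}\,dh.\]
Minkowski's integral inequality, the definition of $\omega_{s,\mathcal{L}_2(P^2_{\mathcal{X}})}$, and $\omega_{s,\mathcal{L}_2(P^2_{\mathcal{X}})}(f^*_{\mathrm{square}},t)\le|f^*_{\mathrm{square}}|_{\mathcal{B}^\alpha_{2,\infty}(P^2_{\mathcal{X}})}\,t^\alpha$ reduce matters to evaluating the Gaussian moment $\int_{\mathbb{R}^{2d}}\frac{\widetilde{\mathcal{K}}^\sigma(h)}{(\pi\sigma^2)^{d/2}}\|h\|_2^\alpha\,dh$, which equals $(\sigma/\sqrt{2})^\alpha\,\Gamma(d+\alpha/2)/\Gamma(d)$ by direct integration in polar coordinates. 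Squaring produces exactly the second term of the claim, with the desired $\sigma^{2\alpha}/2^\alpha$ factor.

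The main obstacle is ensuring that each $T_{j\sigma}f^*_{\mathrm{square}}$ lives in the target space $\mathcal{H}_{K^\sigma}$ (rather than the wider RKHS $\mathcal{H}_{K^{j\sigma}}$) with a uniform bound on its RKHS norm. The Gaussian-semigroup factorization $T_{j\sigma}=T_\sigma\circ T_{\sigma\sqrt{j^2-1}}$---valid because the corresponding densities satisfy $p_\sigma\ast p_{\sigma\sqrt{j^2-1}}=p_{j\sigma}$---is the essential step that places $T_{j\sigma}f^*_{\mathrm{square}}$ in the image of $T_\sigma$, hence in $\mathcal{H}_{K^\sigma}$, while preserving the crucial $(\pi\sigma^2)^{-d/2}\|f^*_{\mathrm{square}}\|_{\mathcal{L}_2}$ bound uniformly in $j\in\{1,\dots,s\}$. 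A secondary technicality is selecting an $\mathcal{L}_2$-extension of $f^*_{\mathrm{square}}$ bounded by $1$ without inflating its Besov seminorm, which can be arranged via Whitney's extension theorem followed by a suitable smooth truncation.
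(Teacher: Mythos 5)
Your proposal is correct and, in every structural respect, coincides with the paper's own proof: you take the identical approximator $f_0=\sum_{j=1}^{s}(-1)^{1-j}\binom{s}{j}\,T_{j\sigma}f^*_{\mathrm{square}}$ (the paper's $\widetilde{\mathcal{K}}^{\sigma}_s * f^*_{\mathrm{square}}$), the identical representation of $f^*_{\mathrm{square}}-f_0$ as a Gaussian average of the $s$-th difference $\Delta^s_h f^*_{\mathrm{square}}$, the same pointwise quadratic identity giving $\mathcal{R}^{\phi_{\mathrm{square}}}(f_0)-\mathcal{R}^{\phi_{\mathrm{square}}}(f^*_{\mathrm{square}})\le\|f_0-f^*_{\mathrm{square}}\|^2_{\mathcal{L}_2(P^2_{\mathcal{X}})}$, and the same Minkowski--Besov--Gaussian-moment chain, with the moment $(\sigma/\sqrt{2})^{\alpha}\,\Gamma(d+\alpha/2)/\Gamma(d)$ squaring to the stated $2^{-\alpha}\bigl(\Gamma(d+\alpha/2)/\Gamma(d)\bigr)^2\sigma^{2\alpha}$. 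The one genuine divergence is how you obtain the uniform bound $\|T_{j\sigma}f^*_{\mathrm{square}}\|_{K^\sigma}\le(\sigma\sqrt{\pi})^{-d}\|f^*_{\mathrm{square}}\|_{\mathcal{L}_2(\mathbb{R}^{2d})}$: you use the semigroup factorization $T_{j\sigma}=T_\sigma\circ T_{\sigma\sqrt{j^2-1}}$ together with $\mathcal{L}_2$-contractivity of convolution with a probability density, whereas the paper invokes the embedding bound $\bigl\|\mathrm{id}:\mathcal{H}_{\widetilde{K}^{j\sigma}}\to\mathcal{H}_{\widetilde{K}^{\sigma}}\bigr\|\le j^{d}$ (Proposition 4.46 of \cite{Steinwart2008Support}), whose factor $j^{d}$ exactly cancels the $j^{-d}$ normalization; the two arguments are interchangeable and yield the identical constant $2^{2s}\pi^{-d}\sigma^{-2d}$, yours being somewhat more self-contained and the paper's shorter by citation. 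One shared soft spot: the estimate $\|f_0\|_\infty\le 2^{s}$ requires $\|f^*_{\mathrm{square}}\|_\infty\le 1$ for the chosen $\mathcal{L}_2(\mathbb{R}^{2d})$-extension, which the paper assumes tacitly via Young's inequality; your remedy of truncating the Whitney extension is sensible, but note that for $s\ge 2$ clipping does not automatically preserve the order-$s$ modulus of smoothness, so the phrase ``suitable smooth truncation'' would need a brief justification --- this matches rather than falls below the paper's own level of detail, and does not affect the correctness of your argument under the proposition's hypotheses.
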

\begin{proof}
    Define the $s$-fold application of the convolution operator $\widetilde{\mathcal{K}}^{\sigma}$ as 
    \[\widetilde{\mathcal{K}}^{\sigma}_s * \cdot
    := \sum_{j=1}^s (-1)^{1-j}\binom{s}{j}(j\sigma\sqrt{\pi})^{-d}\widetilde{\mathcal{K}}^{j\sigma} * \cdot\]
    and we define the approximator $f_0 \in \mathcal{H}_{K^\sigma}$ as 
    \begin{align*}
        &f_0(x,x') := \widetilde{\mathcal{K}}^{\sigma}_s * f^*_{\mathrm{square}}(x,x') \\
        ={}& \int_{\mathbb{R}^{2d}} \sum_{j=1}^{s}(-1)^{1-j}\binom{s}{j}
        \left(\frac{2}{\pi j^2\sigma^2}\right)^{d} \exp \left(- \frac{2\|(u,u')\|^{2}_2}{(j\sigma)^2}\right)
        f^*_{\mathrm{square}}((x,x')+(u,u')) d(u,u') \\
        ={}& \int_{\mathbb{R}^{2d}} \sum_{j=1}^{s}(-1)^{1-j}\binom{s}{j}
        \left(\frac{2}{\pi \sigma^2}\right)^{d} \exp \left(- \frac{2\|(u,u')\|^{2}_2}{\sigma^2}\right)
        f^*_{\mathrm{square}}((x,x') + j(u,u')) d(u,u').
    \end{align*}
    Then we have 
    \begin{align*}
        &f_0(x,x') - f^*_{\mathrm{square}}(x,x') \\
        ={}&  \widetilde{\mathcal{K}}^{\sigma}_s * f^*_{\mathrm{square}}(x,x') - 
        \int_{\mathbb{R}^{2d}}(\sigma\sqrt{\pi})^{-d}
        \mathcal{K}^{\sigma}(u,u') f^*_{\mathrm{square}}(x,x') d(u,u') \\
        ={}& \int_{\mathbb{R}^{2d}} \sum_{j=0}^{s}(-1)^{1-j}\binom{s}{j}
        \left(\frac{2}{\pi \sigma^2}\right)^{d} \exp \left(- \frac{2\|(u,u')\|^{2}_2}{\sigma^2}\right)
        f^*_{\mathrm{square}}((x,x') + j(u,u')) d(u,u') \\
        ={}& \int_{\mathbb{R}^{2d}} \sum_{j=0}^{s}(-1)^{1-j}\binom{s}{j}
        \left(\frac{2}{\pi \sigma^2}\right)^{d} \exp \left(- \frac{2\|(u,u')\|^{2}_2}{\sigma^2}\right)
        f^*_{\mathrm{square}}((x,x') + j(u,u')) d(u,u') \\
        ={}& \int_{\mathbb{R}^{2d}} (-1)^{1-s}
        \left(\frac{2}{\pi \sigma^2}\right)^{d} \exp \left(- \frac{2\|(u,u')\|^{2}_2}{\sigma^2}\right)
        \Delta^s_{(u,u')}f^*_{\mathrm{square}}(x,x') d(u,u').
    \end{align*}
    Notice that by definition of $f^*_{\mathrm{square}}$, 
    \begin{equation}\label{square loss excess risk}
        \begin{split}
            &\mathcal{R}^{\phi_{\mathrm{square}}}(f_0) - \mathcal{R}^{\phi_{\mathrm{square}}}(f^*_{\mathrm{square}}) \\
            ={}& \mathbb{E}[(1-\mathrm{sgn}(Y-Y')f_0(X,X'))^2 - (1-\mathrm{sgn}(Y-Y')f^*_{\mathrm{square}}(X,X'))^2] \\
            ={}& \mathbb{E}_{P_\mathcal{X}^2}
            \bigg[ \big(\eta_+ ( 2 - f_0 - f^*_{\mathrm{square}}) - \eta_- ( 2 + f_0 + f^*_{\mathrm{square}})\big) \cdot
            \big(f^*_{\mathrm{square}} - f_0 \big) \bigg] \\
            ={}& \mathbb{E}_{P_\mathcal{X}^2}
            \bigg[(\eta_+ + \eta_-) \cdot (f^*_{\mathrm{square}} - f_0)^2 \bigg] \\
            \leq{}& \|f^*_{\mathrm{square}} - f_0\|^2_{\mathcal{L}_2(P_\mathcal{X}^2)}.
        \end{split}
    \end{equation}
    By Minkowski's integral inequality, we have 
    \begin{align*}
        &\mathcal{R}^{\phi_{\mathrm{square}}}(f_0) - \mathcal{R}^{\phi_{\mathrm{square}}}(f^*_{\mathrm{square}}) \\
        \leq{}& \|f^*_{\mathrm{square}} - f_0\|^2_{\mathcal{L}_2(P_\mathcal{X}^2)} \\
        ={} & \int_{\mathcal{X}^2} \bigg\vert \int_{\mathbb{R}^{2d}} (-1)^{1-s}
        \left(\frac{2}{\pi \sigma^2}\right)^{d} \exp \left(- \frac{2\|(u,u')\|^{2}_2}{\sigma^2}\right)
        \Delta^s_{(u,u')}f^*_{\mathrm{square}}(x,x') d(u,u') \bigg\vert^2 dP_\mathcal{X}^2(x,x') \\ 
        \leq & \left(  \int_{\mathbb{R}^{2d}} \left(\int_{\mathcal{X}^2} 
        \left(\left(\frac{2}{\pi \sigma^2}\right)^{d} \exp \left(- \frac{2\|(u,u')\|^{2}_2}{\sigma^2}\right)
        \Delta^s_{(u,u')}f^*_{\mathrm{square}}(x,x')\right)^2 dP_\mathcal{X}^2(x,x') \right)^\frac{1}{2} d(u,u')\right)^2 \\
        ={} & \left(  \int_{\mathbb{R}^{2d}}\left(\frac{2}{\pi \sigma^2}\right)^{d} 
        \exp \left(- \frac{2\|(u,u')\|^{2}_2}{\sigma^2} \right)
        \|\Delta^s_{(u,u')}f^*_{\mathrm{square}}\|_{\mathcal{L}_2(P_\mathcal{X}^2)} d(u,u') \right)^2 \\
        \leq{} & \vert f^*_{\mathrm{square}}\vert^2_{\mathcal{B}_{2, \infty}^{\alpha}(P^2_\mathcal{X})} 
        \left(\frac{2}{\pi \sigma^2}\right)^{2d}
        \left(\int_{\mathbb{R}^{2d}}
        \exp \left(- \frac{2\|(u,u')\|^{2}_2}{\sigma^2} \right) \|(u,u')\|^\alpha_2 d(u,u') \right)^2 \\
        ={} & \vert f^*_{\mathrm{square}}\vert^2_{\mathcal{B}_{2, \infty}^{\alpha}(P^2_\mathcal{X})} 2^{-\alpha} 
        \left( \frac{\Gamma(d + \frac{\alpha}{2})}{\Gamma(d)}\right)^2 \sigma^{2\alpha}.
    \end{align*}

    By Proposition 4.46 of \cite{Steinwart2008Support}, for 
    $0 < \sigma_1 < \sigma_2 < \infty$, the space $\mathcal{H}_{\widetilde{K}^{\sigma_2}}$ is
    continuously embedded into $\mathcal{H}_{\widetilde{K}^{\sigma_1}}$ with the operator norm satisfies:
    \[\bigg\| \mathrm{id}:\mathcal{H}_{\widetilde{K}^{\sigma_2}} \to \mathcal{H}_{\widetilde{K}^{\sigma_1}} \bigg\|
    \leq \left(\frac{\sigma_2}{\sigma_1}\right)^d.\]
    Then we can bound $\|f_0\|_{K^\sigma}$ as 
    \begin{align*}
        \|f_0\|_{K^\sigma} &\leq 
        \sum_{j=1}^s \binom{s}{j}(j\sigma\sqrt{\pi})^{-d}
        \|\widetilde{\mathcal{K}}^{j\sigma} * f^*_{\mathrm{square}}\|_{K^\sigma} \\
        &= \sum_{j=1}^s \binom{s}{j}(j\sigma\sqrt{\pi})^{-d}
        \|\widetilde{\mathcal{K}}^{j\sigma} * f^*_{\mathrm{square}}\|_{\widetilde{K}^\sigma} \\
        &\leq \sum_{j=1}^s \binom{s}{j}(\sigma\sqrt{\pi})^{-d}
        \|\widetilde{\mathcal{K}}^{j\sigma} * f^*_{\mathrm{square}}\|_{\widetilde{K}^{j\sigma}} \\
        &\leq 2^s(\sigma\sqrt{\pi})^{-d}
        \|f^*_{\mathrm{square}}\|_{\mathcal{L}_2(\mathbb{R}^{2d})}. \\
    \end{align*}

    Finally, by Young's convolution inequality, we have 
    \begin{align*}
        \|f_0\|_\infty 
        &\leq \|f^*_{\mathrm{square}}\|_{\infty} 
        \left\| \sum_{j=1}^s (-1)^{1-j}\binom{s}{j}(j\sigma\sqrt{\pi})^{-d}\widetilde{\mathcal{K}}^{j\sigma} \right\|_{L_1(\mathbb{R}^{2d})} \\
        &\leq \sum_{j=1}^s \binom{s}{j}(j\sigma\sqrt{\pi})^{-d}\| \widetilde{\mathcal{K}}^{j\sigma} \|_{L_1(\mathbb{R}^{2d})} \\
        &=\sum_{j=1}^s \binom{s}{j} \\
        &\leq 2^{s}.
    \end{align*}
    Combining all these estimates, we complete the proof.
\end{proof}

\subsection{Calibration Inequality for Pairwise Ranking}\label{Subsection: Calibration Inequality for Pairwise Ranking}
In this subsection, we establish calibration inequality for pairwise ranking with the margin-based loss $\phi(y, y', t) = \psi(\mathrm{sgn}(y-y')t)$. 
Our discussion is not only limited to $\phi=\phi_{\mathrm{hinge}}$ and $\phi=\phi_{\mathrm{square}}$, but also can be generalized to study more general margin-based losses. 
So far, our estimates suffice to derive the learning rates for excess $\phi$-ranking risk. 
To bound the excess ranking risk $\mathcal{E}(f)$ by the excess $\phi$-ranking risk, it remains to establish the so-called calibration inequality or comparison theorem. 
The calibration inequality in binary classification has been extensively studied in a vast literature, see, e.g., 
\cite{Zhang2004Statistical, Bartlett2006Convexitya, Cucker2007Learning, Steinwart2008Support} and references therein. 
Motivated by these studies, we in this subsection aim to build similar calibration inequalities in the pairwise ranking setting. 
Concretely, for $\phi=\phi_{\mathrm{hinge}}$, an analogous version of Zhang's inequality for pairwise ranking is established in Proposition \ref{Zhang's inequality}. 
For those $\psi$ satisfying Assumption \ref{admissibleloss} and $\psi''(0) > 0$ which includes $\phi=\phi_{\mathrm{square}}$ as a special case, 
we prove that the excess ranking risk can be bounded by the square root of excess $\phi$- ranking risk (multiplying by some constant), 
and the noise condition in Assumption \ref{Tsybakov's noise condition} can further refine this upper bound.

\begin{proposition}\label{Zhang's inequality}
    For all measurable $f: \mathcal{X}^2 \to \mathbb{R}$ we have 
    \[\mathcal{R}(f) - \mathcal{R}(f^*_{\mathrm{rank}}) \leq 
    \mathcal{R}^{\phi_{\mathrm{hinge}}}(f) - \mathcal{R}^{\phi_{\mathrm{hinge}}}(f^*_{\mathrm{hinge}}).\]
\end{proposition}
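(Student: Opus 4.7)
The plan is to prove the inequality pointwise on $\mathcal{X}^2$ (in terms of the conditional risks) and then integrate. Given the setup in the paper, both excess risks can be written as integrals against $P_\mathcal{X}^2$ of conditional quantities that depend only on the posterior probabilities $\eta_+(x,x'), \eta_-(x,x'), \eta_=(x,x')$ and on $t := f(x,x')$.

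First I would set up the conditional risks. Define the pointwise conditional ranking risk
\[
C(x,x',t) := \eta_+(x,x')\,\mathbb{I}_{\{t<0\}} + \eta_-(x,x')\,\mathbb{I}_{\{t\geq 0\}},
\]
and the pointwise conditional hinge risk
\[
C_\phi(x,x',t) := \eta_+(x,x')\max\{1-t,0\} + \eta_-(x,x')\max\{1+t,0\} + \eta_=(x,x').
\]
Using Proposition 1 of \cite{Clemencon2008Ranking} (quoted earlier), $\mathcal{R}(f^*_{\mathrm{rank}})=\mathbb{E}\min\{\eta_+,\eta_-\}$, so the conditional excess ranking risk simplifies to $|\eta_+-\eta_-|\,\mathbb{I}_{\mathrm{err}}$ where $\mathrm{err}$ is the event $\mathrm{sgn}(t)\neq\mathrm{sgn}(\eta_+-\eta_-)$ (with the stated tie‑breaking). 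For the hinge side, a short computation restricted to $t\in[-1,1]$ gives $C_\phi(x,x',t)=1+(\eta_--\eta_+)t$, from which minimization in $t$ over $\mathbb{R}$ yields
\[
\min_{t\in\mathbb{R}} C_\phi(x,x',t) = 2\min\{\eta_+(x,x'),\eta_-(x,x')\} + \eta_=(x,x'),
\]
attained at $t=\mathrm{sgn}(\eta_+-\eta_-)=f^*_{\mathrm{hinge}}(x,x')$.

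The main step is then the pointwise calibration
\[
|\eta_+-\eta_-|\,\mathbb{I}_{\mathrm{err}} \;\leq\; C_\phi(x,x',t) - 2\min\{\eta_+,\eta_-\} - \eta_=,
\]
which I would verify by three cases. If $\eta_+>\eta_-$, the error event is $\{t<0\}$; for $t\in[-1,0)$ the right-hand side becomes $(\eta_+-\eta_-)(1-t)\geq \eta_+-\eta_-$, and for $t<-1$ it is at least $2(\eta_+-\eta_-)$, while for $t\geq 0$ the left-hand side is zero and the right-hand side non-negative. The case $\eta_+<\eta_-$ is symmetric, and $\eta_+=\eta_-$ is immediate since both sides vanish. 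After this pointwise inequality is established, integrating against $P_\mathcal{X}^2$ gives $\mathcal{R}(f)-\mathcal{R}(f^*_{\mathrm{rank}})\leq \mathcal{R}^{\phi_{\mathrm{hinge}}}(f)-\mathcal{R}^{\phi_{\mathrm{hinge}}}(f^*_{\mathrm{hinge}})$.

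The only real subtlety is careful bookkeeping around the tie-break convention $f(x,x')\geq 0\Rightarrow \widehat{y>y'}$ and the $\eta_=$ term: the ranking risk is insensitive to ties in the labels but the hinge risk contributes the constant $\eta_=$ from $\mathrm{sgn}(y-y')=0$, and this constant must be consistently absorbed on both sides before comparing. With that taken care of, the case analysis is routine and the integration is immediate, so no truncation or Lipschitz argument is needed here.
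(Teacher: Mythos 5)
Your proof is correct and follows essentially the same route as the paper's: a pointwise comparison of the conditional ranking and hinge risks via a case analysis on the sign of $\eta_+-\eta_-$, followed by integration against $P^2_{\mathcal{X}}$ --- the paper merely truncates $f$ to $[-1,1]$ first (using that truncation changes neither the ranking risk nor increases the hinge risk) so the conditional hinge excess becomes the linear expression $(\eta_+-\eta_-)(f^*_{\mathrm{hinge}}-f)$, whereas you handle $\vert t\vert>1$ directly in the case analysis. One cosmetic remark: in the case $\eta_+=\eta_-$ the right-hand side $C_\phi(x,x',t)-2\min\{\eta_+,\eta_-\}-\eta_=$ is nonnegative but not identically zero (it is strictly positive for $\vert t\vert>1$), which is all the inequality needs, so your phrase ``both sides vanish'' should read ``the left-hand side vanishes and the right-hand side is nonnegative.''
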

\begin{proof}
   Consider the truncated function $\pi(f): \mathcal{X}^2 \to [-1, 1]$.
    Since hinge loss can be truncated at $t = 1$ and the truncated function does not change the 
    ranking risk, it is sufficient to prove the proposition for all measurable $f: \mathcal{X}^2 \to [-1, 1]$.
    Note that for all measurable $f: \mathcal{X}^2 \to [-1 , 1]$, there holds
    \[\mathcal{R}^{\phi_{\mathrm{hinge}}}(f) - \mathcal{R}^{\phi_{\mathrm{hinge}}}(f^*_{\mathrm{hinge}}) = 
    \mathbb{E}_{P_\mathcal{X}^2}[(\eta_+(X,X') - \eta_-(X,X')) \cdot (f^*_{\mathrm{hinge}}(X,X') - f(X,X'))].\]
    Then  
    \begin{align*}
        &\mathcal{R}(f) - \mathcal{R}(f^*_{\mathrm{rank}}) \\
        ={}& \mathbb{E}_{P_\mathcal{X}^2} \big[\eta_+\mathbb{I}_{(-\infty , 0)} (f) + \eta_-\mathbb{I}_{[0, \infty)} (f) 
        - \min\{\eta_+, \eta_-\}\big] \\
        \leq{}& \mathbb{E}_{P_\mathcal{X}^2} \big[(\eta_+ - \eta_-) \cdot (f^*_{\mathrm{hinge}} - f)\big] \\
        ={}& \mathcal{R}^{\phi_{\mathrm{hinge}}}(f) - \mathcal{R}^{\phi_{\mathrm{hinge}}}(f^*_{\mathrm{hinge}}), 
    \end{align*}
    where the inequality can be derived by consider the cases $\eta_+ > \eta_-$, $\eta_+ = \eta_-$ and $\eta_+ < \eta_-$ respectively. The proof is then finished.
\end{proof}

To continue our analysis of general margin-based losses, we further introduce some definitions and notations. Given a $\psi$ satisfying Assumption \ref{admissibleloss},
fix an $(x,x') \in \mathcal{X}^2$, we define function $\Psi = \Psi_{(x,x')} : \mathbb{R} \to [0, \infty)$ as 
\[ \Psi_{(x,x')}(t):= \eta_+(x,x')\psi(t) + \eta_-(x,x')\psi(-t) + \eta_=(x,x')\psi(0).\]
Since $\psi$ is convex, the left derivative $\psi'_-(t) \geq 0$ for $t \in (1, \infty)$ and 
the right derivative  $\psi'_+(t) < 0$ for $t \in (-\infty, 1)$.
Hence we can define
\begin{align*}
    f^*_-(x,x') &:= \sup \big\{t \in \mathbb{R} \mid \Psi'_-(t) < 0 \big\}, \\
    f^*_+(x,x') &:= \inf \big\{t \in \mathbb{R} \mid \Psi'_+(t) > 0 \big\}. 
\end{align*} 
Since $\Psi$ is also a convex function, we have $f^*_-(x,x') \leq f^*_+(x,x')$.
The following lemmas can be proved by the same arguments in Theorem 10.8 and Lemma 10.10 of \cite{Cucker2007Learning}. Recall that $\mathcal{X}^2_+, \mathcal{X}^2_-$ and $\mathcal{X}^2_=$ are defined in \eqref{seperation}.

\begin{lemma}\label{calibration 1}
    Given $\phi(y,y',t) = \psi(\mathrm{sgn}(y-y')t)$ and $(x,x') \in \mathcal{X}^2$. Let $f^*_\phi$ denote the Bayes $\phi$-ranking rule. If $\psi$ satisfies Assumption \ref{admissibleloss}, then the following statements hold.
    \begin{enumerate}
        \item $\Psi_{(x,x')}$ is strictly decreasing on $(-\infty , f^*_-(x,x')]$, strictly increasing on $[f^*_+(x,x') , \infty)$
        and keep constant on $[f^*_-(x,x') , f^*_+(x,x')]$.
        \item $f^*_\phi(x,x')$ is a minimizer of $\Psi_{(x,x')}$, i.e., $f^*_\phi(x,x')=\mathop{\arg\min}_{t\in \mathbb{R}}\Psi_{(x,x')}(t)$, 
        and can take any value in $[f^*_-(x,x') , f^*_+(x,x')]$.
        \item There holds
        $\begin{cases}
            0 \leq f^*_-(x,x') \leq f^*_\phi(x,x'), & (x,x') \in \mathcal{X}^2_+, \\
            f^*_\phi(x,x') \leq f^*_+(x,x') \leq 0, & (x,x') \in \mathcal{X}^2_-, \\
            f^*_-(x,x') \leq 0 \leq f^*_+(x,x'), & (x,x') \in \mathcal{X}^2_=. \\
        \end{cases}$
        \item $f^*_-(x,x') \leq 1$ and $f^*_+(x,x') \geq -1$.
    \end{enumerate}
\end{lemma}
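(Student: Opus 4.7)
The key tool is the convexity of $\Psi_{(x,x')}(t)=\eta_+(x,x')\psi(t)+\eta_-(x,x')\psi(-t)+\eta_=(x,x')\psi(0)$ as a non-negative combination of convex functions. This yields non-decreasing one-sided derivatives with $\Psi'_-(t)\leq\Psi'_+(t)$ everywhere, computable via the chain rule as $\Psi'_\pm(t)=\eta_+(x,x')\psi'_\pm(t)-\eta_-(x,x')\psi'_\mp(-t)$, together with the standard mean value bound $\Psi'_+(s)\leq \bigl(\Psi_{(x,x')}(t)-\Psi_{(x,x')}(s)\bigr)/(t-s)\leq \Psi'_-(t)$ for every $s<t$.

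For parts (1) and (2), I would first observe that whenever $s<f^*_-(x,x')$, the definition of $f^*_-$ as a supremum supplies an $r\in(s,f^*_-(x,x')]$ with $\Psi'_-(r)<0$, and then monotonicity of $\Psi'_-$ gives $\Psi'_-(s)<0$. The mean value bound forces $\Psi_{(x,x')}(s_1)>\Psi_{(x,x')}(s_2)$ for every $s_1<s_2\leq f^*_-(x,x')$, proving strict decrease on $(-\infty,f^*_-(x,x')]$; a mirror argument gives strict increase on $[f^*_+(x,x'),\infty)$. On the middle interval, combining $\Psi'_-(t)\geq 0$ for $t>f^*_-(x,x')$ (by definition of $f^*_-$), $\Psi'_+(t)\leq 0$ for $t<f^*_+(x,x')$ (by definition of $f^*_+$), and $\Psi'_-\leq \Psi'_+$ forces $\Psi'_-\equiv\Psi'_+\equiv 0$ on $[f^*_-(x,x'),f^*_+(x,x')]$, so $\Psi_{(x,x')}$ is constant there. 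Part (2) is then immediate: the set of minimizers of $\Psi_{(x,x')}$ is precisely $[f^*_-(x,x'),f^*_+(x,x')]$, and any element can serve as $f^*_\phi(x,x')$.

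For (3), the key calculation is $\Psi'_+(0)=\Psi'_-(0)=(\eta_+(x,x')-\eta_-(x,x'))\psi'(0)$, exploiting differentiability of $\psi$ at $0$. When $(x,x')\in\mathcal{X}^2_+$, the product $(\eta_+-\eta_-)\psi'(0)$ is negative because its two factors have opposite signs, so $\Psi'_+(0)<0$; monotonicity of $\Psi'_-$ then gives $\Psi'_-(t)\leq\Psi'_+(0)<0$ for all $t\leq 0$, whence $f^*_-(x,x')\geq 0$, and part (2) supplies $f^*_-(x,x')\leq f^*_\phi(x,x')$. The case $(x,x')\in\mathcal{X}^2_-$ is symmetric. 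For $(x,x')\in\mathcal{X}^2_=$, the identity $\Psi_{(x,x')}(-t)=\Psi_{(x,x')}(t)$ makes the minimizing interval symmetric about the origin, giving $f^*_-(x,x')\leq 0\leq f^*_+(x,x')$.

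For (4), I would exploit that $1$ is the smallest zero of $\psi$ together with $\psi\geq 0$: convexity of $\psi$ yields $\psi'_+(t)\leq -\psi(t)/(1-t)<0$ for every $t<1$ by the secant inequality, and $\psi'_-(t)\geq 0$ for every $t>1$ since otherwise $\psi$ would dip below $0$ just past $1$, contradicting non-negativity. Substituting into the formula for $\Psi'_-(t)$ shows $\Psi'_-(t)\geq 0$ whenever $t>1$, so $f^*_-(x,x')\leq 1$; the bound $f^*_+(x,x')\geq -1$ follows by symmetry. The main obstacle throughout is purely bookkeeping --- keeping the asymmetric roles of $\Psi'_-$ and $\Psi'_+$ in the definitions of $f^*_-$ and $f^*_+$ straight and carefully converting strict negativity of the one-sided derivatives into strict monotonicity of $\Psi_{(x,x')}$ on the corresponding closed intervals, particularly at the endpoints $f^*_-(x,x')$ and $f^*_+(x,x')$ themselves.
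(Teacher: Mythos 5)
Your proof is correct and takes essentially the same route as the paper, which establishes Lemma \ref{calibration 1} by appeal to the one-sided-derivative convex analysis of Theorem 10.8 and Lemma 10.10 in \cite{Cucker2007Learning} --- precisely the argument you reconstruct (monotone one-sided derivatives, the secant/mean-value bound, the sign computation $\Psi'_{\pm}(0)=(\eta_+-\eta_-)\psi'(0)$, and the secant inequality at the smallest zero $t=1$). The only quibble is that $\Psi'_-\equiv\Psi'_+\equiv 0$ is guaranteed only on the open interval $\bigl(f^*_-(x,x'),f^*_+(x,x')\bigr)$ --- for the hinge loss with $\eta_+>\eta_->0$ one gets $f^*_-=f^*_+=1$ with $\Psi'_-(1)<0<\Psi'_+(1)$ --- but constancy of $\Psi_{(x,x')}$ on the closed interval then follows by continuity, exactly the endpoint bookkeeping you flag at the end.
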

\begin{lemma}\label{calibration 2}
    If $\psi$ satisfies Assumption \ref{admissibleloss} and $\psi''(0) > 0$. Then there exists a constant $C_\psi > 0$
    only depending on $\psi$ such that for all $(x,x') \in \mathcal{X}^2$
    \[(\eta_+(x,x') - \eta_-(x,x'))^2 \leq  C_\psi\big(\Psi_{(x,x')}(0) - \Psi_{(x,x')}(f^*_\phi(x,x'))\big).\]
\end{lemma}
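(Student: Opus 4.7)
The plan is to exploit convexity of $\Psi_{(x,x')}$ together with the local quadratic behavior of $\psi$ at $0$ guaranteed by $\psi''(0)>0$, following the strategy used for binary classification in Cucker--Zhou (Lemma 10.10). Set $\alpha := \eta_+(x,x') - \eta_-(x,x')$ and $\beta := \eta_+(x,x') + \eta_-(x,x')$, so $|\alpha| \leq \beta \leq 1$. If $\beta = 0$ then $\alpha = 0$ and the claim is trivial; otherwise, since $\Psi_{(x,x')}$ with $\alpha \to -\alpha$ corresponds to the substitution $t \mapsto -t$ (which preserves both $\Psi(0) - \Psi(f^*_\phi)$ and $\alpha^2$), we may assume $\alpha \geq 0$.

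Introduce $g(t) := \psi(t) - \psi(0) - \psi'(0)t$, which is convex with $g(0) = g'(0) = 0$ and $g''(0) = \psi''(0) > 0$. A direct rearrangement yields
\[
\Psi_{(x,x')}(t) - \Psi_{(x,x')}(0) = \alpha \psi'(0) t + \eta_+ g(t) + \eta_- g(-t).
\]
From the very definition of $g''(0)$, one can choose $\delta \in (0,1]$ depending only on $\psi$ such that $g(t), g(-t) \leq (3\psi''(0)/4)\, t^2$ for all $|t| \leq \delta$. Substituting this into the identity above gives the working upper bound
\[
\Psi(t) - \Psi(0) \leq \alpha \psi'(0) t + \tfrac{3\beta \psi''(0)}{4}\, t^2, \qquad |t|\leq \delta.
\]

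I would then split into two cases around the ``approximate'' minimizer $t^* := \alpha|\psi'(0)|/(\beta\psi''(0))$. In Case A, when $t^* \leq \delta$, plugging $t = t^*$ into the displayed inequality produces $\Psi(t^*) - \Psi(0) \leq -\alpha^2(\psi'(0))^2/(4\beta\psi''(0))$; combined with $\beta \leq 1$ and the minimality of $f^*_\phi$, this gives $(\eta_+ - \eta_-)^2 \leq (4\psi''(0)/(\psi'(0))^2)(\Psi(0) - \Psi(f^*_\phi))$. In Case B, when $t^* > \delta$, i.e.\ $\beta < \alpha|\psi'(0)|/(\delta\psi''(0))$, I evaluate the upper bound at the fixed point $t = \delta$ and plug in this bound on $\beta$ to obtain $\Psi(\delta) - \Psi(0) < -\alpha\delta|\psi'(0)|/4$; since $\alpha \in [0,1]$ implies $\alpha^2 \leq \alpha$, this yields the required inequality with constant $4/(\delta|\psi'(0)|)$. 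Taking $C_\psi := \max\{4\psi''(0)/(\psi'(0))^2,\, 4/(\delta|\psi'(0)|)\}$ closes the argument.

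The delicate point is ensuring that $\delta$ is genuinely independent of $(x,x')$, which is automatic because it comes only from the second-order Taylor expansion of the single univariate function $\psi$ at $0$. The other subtle issue is handling posterior vectors with very small $\beta$, for which the quadratic minimizer $t^*$ escapes the Taylor-valid region $[-\delta,\delta]$; this is exactly what Case B addresses by falling back on the linear decay at the fixed point $t = \delta$, converting the smallness of $\beta$ into largeness of $\alpha$ relative to $\beta$ and hence a usable lower bound on $\Psi(0)-\Psi(f^*_\phi)$.
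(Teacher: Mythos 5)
Your proof is correct and takes essentially the same route as the paper, which establishes this lemma by invoking the arguments of Theorem 10.8 and Lemma 10.10 of \cite{Cucker2007Learning}: a second-order Taylor bound for $\psi$ at $0$ combined with a two-case evaluation of $\Psi_{(x,x')}$ at the trial point $t^*=\alpha\vert\psi'(0)\vert/(\beta\psi''(0))$ when $t^*\leq\delta$, and at the fixed point $\delta$ otherwise. Your treatment of the mass $\eta_=$ (the factor $\beta=\eta_++\eta_-\leq 1$, with small $\beta$ absorbed into Case B via the linear decay) is precisely the adaptation needed to transfer the binary-classification argument to the pairwise ranking setting, and the constants and case analysis check out.
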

By Lemma \ref{calibration 1} we also see that $\phi$ is truncated at $t=1$ and the Bayes $\phi$-ranking rule can be defined pointwise which can be taken as
skew-symmetric. Then we can establish a general calibration inequality for pairwise ranking.
\begin{proposition}\label{general calibration inequality}
    If $\psi$ satisfies Assumption \ref{admissibleloss} and $\psi''(0) > 0$. 
    Let $f^*_\phi$ be the Bayes $\phi$- ranking rule.
    Then there exists a constant $c_\psi > 0$
    only depending on $\psi$
    such that for all measurable $f : \mathcal{X}^2 \to \mathbb{R}$ we have 
    \[\mathcal{R}(f) - \mathcal{R}(f^*_{\mathrm{rank}}) \leq c_\psi \sqrt{\mathcal{R}^{\phi}(f) - \mathcal{R}^{\phi}(f^*_\phi)}.\]
\end{proposition}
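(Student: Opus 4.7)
The plan is to reduce the inequality to a pointwise comparison between the conditional $\phi$-excess and the size of $|\eta_+ - \eta_-|$, and then integrate using Jensen's inequality. First, I would express both excess risks in integral form. Using the convention that the ranking prediction assigns $y > y'$ when $f(x,x') \geq 0$, a direct computation (splitting the integrand according to $\mathrm{sgn}(f)$ and comparing with $\min\{\eta_+,\eta_-\}$) gives the pointwise identity
\[
\mathcal{R}(f) - \mathcal{R}(f^*_{\mathrm{rank}}) = \mathbb{E}_{P^2_{\mathcal{X}}}\bigl[\bigl|\eta_+(X,X') - \eta_-(X,X')\bigr|\,\mathbb{I}_{A(X,X')}\bigr],
\]
where $A(x,x')$ is the set of $(x,x') \in \mathcal{X}^2_+ \cup \mathcal{X}^2_-$ on which $\mathrm{sgn}(f(x,x'))$ disagrees with $\mathrm{sgn}(\eta_+(x,x') - \eta_-(x,x'))$ (breaking ties as specified in the problem set-up). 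Similarly, $\mathcal{R}^{\phi}(f) - \mathcal{R}^{\phi}(f^*_\phi) = \mathbb{E}_{P^2_{\mathcal{X}}}[\Psi_{(X,X')}(f(X,X')) - \Psi_{(X,X')}(f^*_\phi(X,X'))]$.

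The key step is to show that on $A(x,x')$ one has the pointwise bound
\[
\Psi_{(x,x')}(f(x,x')) - \Psi_{(x,x')}(f^*_\phi(x,x')) \ \geq\ \Psi_{(x,x')}(0) - \Psi_{(x,x')}(f^*_\phi(x,x')).
\]
This is where Lemma \ref{calibration 1} enters. For $(x,x') \in \mathcal{X}^2_+$ one has $f^*_-(x,x') \geq 0$, so $\Psi_{(x,x')}$ is non-increasing on $(-\infty, 0]$; the misclassification condition forces $f(x,x') < 0$, hence $\Psi_{(x,x')}(f(x,x')) \geq \Psi_{(x,x')}(0)$. The symmetric statement for $(x,x') \in \mathcal{X}^2_-$ uses $f^*_+(x,x') \leq 0$ and the monotonicity on $[0,\infty)$. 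Combining this with Lemma \ref{calibration 2} yields, on $A(x,x')$,
\[
|\eta_+(x,x') - \eta_-(x,x')|^2 \ \leq\ C_\psi\bigl(\Psi_{(x,x')}(f(x,x')) - \Psi_{(x,x')}(f^*_\phi(x,x'))\bigr).
\]

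The main obstacle is setting up this pointwise inequality cleanly, because the Bayes $\phi$-rule $f^*_\phi$ is only determined up to the interval $[f^*_-, f^*_+]$ and because one has to verify the monotonicity direction in each of the cases $\mathcal{X}^2_+, \mathcal{X}^2_-, \mathcal{X}^2_=$ (the case $\mathcal{X}^2_=$ contributes zero to both sides and can be dropped). Once the pointwise bound is in place, the rest is straightforward: taking square roots on $A(x,x')$ and using that $\Psi_{(x,x')}(f) - \Psi_{(x,x')}(f^*_\phi) \geq 0$ everywhere, I extend the integrand by dropping $\mathbb{I}_{A}$ and write
\[
\mathcal{R}(f) - \mathcal{R}(f^*_{\mathrm{rank}}) \ \leq\ \sqrt{C_\psi}\, \mathbb{E}_{P^2_{\mathcal{X}}}\!\left[\sqrt{\Psi_{(X,X')}(f(X,X')) - \Psi_{(X,X')}(f^*_\phi(X,X'))}\right].
\]
Applying Jensen's inequality to the concave function $\sqrt{\cdot}$ then produces exactly $c_\psi \sqrt{\mathcal{R}^\phi(f) - \mathcal{R}^\phi(f^*_\phi)}$ with $c_\psi := \sqrt{C_\psi}$, completing the proof.
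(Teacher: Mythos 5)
Your proposal is correct and takes essentially the same route as the paper's proof: the same reduction to the disagreement set ($\mathcal{X}^2_f$ in the paper, your $A$), the same monotonicity argument from Lemma \ref{calibration 1} to get $\Psi_{(x,x')}(0)\leq \Psi_{(x,x')}(f(x,x'))$ there, and the same use of Lemma \ref{calibration 2}. Your final step---pointwise square roots followed by Jensen's inequality---is the paper's Cauchy--Schwarz step in equivalent form, and produces the identical constant $c_\psi=\sqrt{C_\psi}$.
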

\begin{proof} Given $f : \mathcal{X}^2 \to \mathbb{R}$, define 
\begin{align*}
    \mathcal{X}^2_{f}:=& \big\{(x,x') \in \mathcal{X}^2 \mid (x,x') \in \mathcal{X}^2_+ \mbox{ and } f(x,x') < 0\big\}\\
    &\quad \bigcup \big\{(x,x') \in \mathcal{X}^2 \mid (x,x') \in \mathcal{X}^2_- \mbox{ and }f(x,x') \geq 0 \big\}.
\end{align*} Then by Cauchy-Schwarz inequality and Lemma \ref{calibration 2} we have 
    \begin{align*}
        \mathcal{R}(f) - \mathcal{R}(f^*_{\mathrm{rank}}) &= \int_{\mathcal{X}^2_{f}}
        \vert \eta_+(x,x') - \eta_-(x,x') \vert dP^2_\mathcal{X}(x,x') \\
        &\leq \left(\int_{\mathcal{X}^2_{f}}
        \vert \eta_+(x,x') - \eta_-(x,x') \vert^2 dP^2_\mathcal{X}(x,x')\right)^{\frac{1}{2}} \\
        &\leq \left(C_\psi \int_{\mathcal{X}^2_{f}}
        \Psi_{(x,x')}(0) - \Psi_{(x,x')}(f^*_\phi(x,x'))  dP^2_\mathcal{X}(x,x')\right)^{\frac{1}{2}}.
    \end{align*}
    For $(x,x') \in \mathcal{X}_{f}$, if $(x,x') \in \mathcal{X}^2_+$, then 
    $f(x,x') < 0$ and $f^*_{\mathrm{rank}}(x,x') = 1$. By Lemma \ref{calibration 1}, 
    $\Psi_{(x,x')}$ is strictly decreasing on $(-\infty , 0]$ and hence 
    $\Psi_{(x,x')}(0) \leq \Psi_{(x,x')}(f(x,x'))$. Following the same argument, the inequality also holds for $(x,x') \in \mathcal{X}^2_-$. Thus, let $c_\psi := C_\psi^{\frac{1}{2}}$ we have 
    \begin{align*}
        \mathcal{R}(f) - \mathcal{R}(f^*_{\mathrm{rank}})
        &\leq \left(C_\psi \int_{\mathcal{X}^2_{f}}
        \Psi_{(x,x')}(0) - \Psi_{(x,x')}(f^*_\phi(x,x'))  dP^2_\mathcal{X}(x,x')\right)^{\frac{1}{2}} \\
        &\leq \left(C_\psi \int_{\mathcal{X}^2_{f}}
        \Psi_{(x,x')}(f(x,x')) - \Psi_{(x,x')}(f^*_\phi(x,x'))  dP^2_\mathcal{X}(x,x')\right)^{\frac{1}{2}} \\
        &\leq c_\psi \sqrt{\mathcal{R}^{\phi}(f) - \mathcal{R}^{\phi}(f^*_\phi)}.
    \end{align*} The proof is then finished.
\end{proof}
Furthermore, under the noise condition in Assumption \ref{Tsybakov's noise condition}, a refined oracle inequality can be established.

\begin{proposition}\label{improved calibration inequality}
    Assume that $\psi$ satisfies Assumption \ref{admissibleloss}, $\psi''(0) > 0$ and Assumption \ref{Tsybakov's noise condition} holds with $C_*>0$ and $q\in [0,\infty]$. 
    Let $f^*_\phi$ be the Bayes $\phi$-ranking rule. Then there exists a constant $c_{\psi,q} > 0$ only depending on $\psi, q$ and $C_*$
    such that for all measurable $f : \mathcal{X}^2 \to \mathbb{R}$, 
    \[\mathcal{R}(f) - \mathcal{R}(f^*_{\mathrm{rank}}) \leq 
    c_{\psi,q} (\mathcal{R}^{\phi}(f) - \mathcal{R}^{\phi}(f^*_\phi))^{\frac{q+1}{q+2}}.\]
\end{proposition}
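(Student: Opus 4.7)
The plan is to refine the Cauchy--Schwarz argument used in Proposition \ref{general calibration inequality} by splitting the integration domain according to the size of $\vert\eta_+(x,x') - \eta_-(x,x')\vert$, and then using Assumption \ref{Tsybakov's noise condition} to handle the low-margin region, while estimating the high-margin region as in the unrefined proof.

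Starting from the pointwise identity
\[
\mathcal{R}(f) - \mathcal{R}(f^*_{\mathrm{rank}}) = \int_{\mathcal{X}^2_{f}} \vert\eta_+(x,x') - \eta_-(x,x')\vert \, dP^2_\mathcal{X}(x,x'),
\]
where $\mathcal{X}^2_f := \{(x,x')\in \mathcal{X}^2_+ : f(x,x')<0\}\cup \{(x,x')\in \mathcal{X}^2_- : f(x,x')\geq 0\}$, I would, for any $t>0$, split the right-hand side into the contributions from the region $\{\vert\eta_+ - \eta_-\vert \leq t\}$ and its complement. For the low-margin piece, the trivial bound $\vert\eta_+ - \eta_-\vert \leq t$ combined with Assumption \ref{Tsybakov's noise condition} immediately gives a contribution at most $C_* t^{q+1}$.

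For the high-margin piece, I would write $\vert\eta_+ - \eta_-\vert \leq \vert\eta_+ - \eta_-\vert^2/t$ on the set $\{\vert\eta_+ - \eta_-\vert > t\}$, and then invoke Lemma \ref{calibration 2} together with the key observation from the proof of Proposition \ref{general calibration inequality} that on $\mathcal{X}^2_f$ the function $\Psi_{(x,x')}$ is strictly monotone between $0$ and $f(x,x')$, so $\Psi_{(x,x')}(0)\leq \Psi_{(x,x')}(f(x,x'))$. This chain produces the pointwise bound $(\eta_+-\eta_-)^2 \leq C_\psi (\Psi_{(x,x')}(f(x,x')) - \Psi_{(x,x')}(f^*_\phi(x,x')))$ on $\mathcal{X}^2_f$; integrating yields
\[
\int_{\mathcal{X}^2_f \cap \{\vert\eta_+-\eta_-\vert > t\}} \vert\eta_+ - \eta_-\vert \, dP^2_\mathcal{X} \leq \frac{C_\psi}{t}\bigl(\mathcal{R}^\phi(f) - \mathcal{R}^\phi(f^*_\phi)\bigr).
\]
Combining the two pieces,
\[
\mathcal{R}(f) - \mathcal{R}(f^*_{\mathrm{rank}}) \leq C_* t^{q+1} + \frac{C_\psi}{t}\,\mathcal{E}^\phi(f),
\]
and optimizing over $t>0$ (by choosing $t = (C_\psi \mathcal{E}^\phi(f)/((q+1)C_*))^{1/(q+2)}$) delivers the target inequality with an explicit constant $c_{\psi,q}$ depending only on $C_\psi, C_*, q$.

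There is no serious obstacle; the argument is a standard Tsybakov-style peeling. The only mildly delicate points are: (i) ensuring the high-margin estimate is really controlled by the excess $\phi$-risk rather than a localized version of it, which is guaranteed by the nonnegativity of $\Psi_{(x,x')}(f(x,x')) - \Psi_{(x,x')}(f^*_\phi(x,x'))$ off $\mathcal{X}^2_f$; and (ii) the boundary case $q=\infty$, which I would treat separately by taking $t$ equal to the essential lower bound of $\vert\eta_+ - \eta_-\vert$ so that the low-margin contribution vanishes and the bound becomes linear in $\mathcal{E}^\phi(f)$, consistent with the exponent $(q+1)/(q+2)\to 1$. The case $q=0$ recovers Proposition \ref{general calibration inequality} with the square-root rate.
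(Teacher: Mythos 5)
Your proposal is correct and follows essentially the same route as the paper's own proof: the identical split of $\mathcal{X}^2_f$ into the regions $\{\vert\eta_+-\eta_-\vert\leq t\}$ and $\{\vert\eta_+-\eta_-\vert>t\}$, the bound $C_*t^{q+1}$ from Assumption \ref{Tsybakov's noise condition} on the first, the Cauchy--Schwarz-free pointwise estimate $\vert\eta_+-\eta_-\vert\leq\vert\eta_+-\eta_-\vert^2/t$ combined with Lemma \ref{calibration 2} and the monotonicity of $\Psi_{(x,x')}$ on the second, and the same optimal choice of $t$. Nothing is missing; your extra remarks on $q=\infty$ and the reduction to Proposition \ref{general calibration inequality} at $q=0$ are consistent with the paper's conventions.
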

\begin{proof} For $t > 0$ we denote 
    \[U_t := \big\{ (x,x') \in \mathcal{X}^2 : 
    \big\vert \eta_+(x,x') - \eta_-(x,x')\big\vert  \leq t\big\},\]
    \[V_t := \big\{ (x,x') \in \mathcal{X}^2 : 
    \big\vert \eta_+(x,x') - \eta_-(x,x')\big\vert  > t\big\}.\] Recall $\mathcal{X}^2_{f}$ defined in the proof of Proposition \ref{calibration 1}. 
    By Assumption \ref{Tsybakov's noise condition} and analysis from the last proof, we have 
    \begin{align*}
        &\mathcal{R}(f) - \mathcal{R}(f^*_{\mathrm{rank}}) \\
        ={}& \int_{\mathcal{X}^2_{f}}
        \vert \eta_+(x,x') - \eta_-(x,x') \vert dP^2_\mathcal{X}(x,x') \\
        ={}& \int_{\mathcal{X}^2_{f} \cap  U_t} 
        \vert \eta_+(x,x') - \eta_-(x,x') \vert dP^2_\mathcal{X}(x,x') 
        + \int_{\mathcal{X}^2_{f} \cap  V_t} 
        \vert \eta_+(x,x') - \eta_-(x,x') \vert dP^2_\mathcal{X}(x,x') \\
        \leq{}& t P^2_\mathcal{X}(U_t)  
        + \frac{1}{t}\int_{\mathcal{X}^2_{f} \cap  V_t} 
        \vert \eta_+(x,x') - \eta_-(x,x') \vert^2 dP^2_\mathcal{X}(x,x') \\
        \leq{}& C_*t^{q+1}  
        + \frac{C_\psi(\mathcal{R}^{\phi}(f) - \mathcal{R}^{\phi}(f^*_\phi))}{t} .\\
    \end{align*}
    Then the optimal choice
    \[t := \left(\frac{C_\psi(\mathcal{R}^{\phi}(f) - \mathcal{R}^{\phi}(f^*_\phi))}{(q+1)C_*}\right)^{\frac{1}{q+2}}\]
    yields that 
    \begin{align*}
    \mathcal{R}(f) - \mathcal{R}(f^*_{\mathrm{rank}}) 
    &\leq C_*\left(\frac{C_\psi(\mathcal{R}^{\phi}(f) - \mathcal{R}^{\phi}(f^*_\phi))}{(q+1)C_* }\right)^{\frac{q+1}{q+2}}\\
    &\quad + (q+1)^{\frac{1}{q+2}}C_*^{\frac{1}{q+2}}C_\psi^{\frac{q+1}{q+2}}(\mathcal{R}^{\phi}(f) - \mathcal{R}^{\phi}(f^*_\phi))^{\frac{q+1}{q+2}}.
    \end{align*} By taking $c_{\psi,q} := (q+1)^{-\frac{q+1}{q+2}}C_*^{\frac{1}{q+2}}C_{\psi}^{\frac{q+1}{q+2}} 
    + (q+1)^{\frac{1}{q+2}}C_*^{\frac{1}{q+2}}C_{\psi}^{\frac{q+1}{q+2}}$, we complete the proof.
\end{proof}
We see that the case of $q = 0$ in which Assumption \ref{Tsybakov's noise condition} holds trivially, Proposition 
\ref{improved calibration inequality} reduces to Proposition \ref{general calibration inequality}. 
When $q>0$, the calibration inequality is refined compared to the previous one given in Proposition \ref{general calibration inequality}. 
In particular, when $q=\infty$, we obtain a similar inequality as that established for $\phi_{\mathrm{hinge}}$ in Proposition \ref{Zhang's inequality}.

\subsection{Learning Rates of Gaussian Ranking Estimators with Hinge Loss and Square Loss}\label{Subsection: Learning Rates of Gaussian Ranking Estimators with Hinge Loss and Square Loss}
We are now in the position to prove the oracle inequalities of Gaussian ranking estimators with $\phi=\phi_{\mathrm{hinge}}$ and $\phi=\phi_{\mathrm{square}}$, and then derive the learning rates. 
To this end, it only remains to verify Assumption \ref{assumption 1}, particularly the variance bound \eqref{variance bound}. 
For hinge loss, the noise condition in Assumption \ref{Tsybakov's noise condition} can be used to establish the variance bound.
\begin{lemma}\label{hinge loss variance bound}
    If Assumption \ref{Tsybakov's noise condition} holds with $C_*>0$ and $q\in [0,\infty]$, then for all measurable $f:\mathcal{X}^2 \to [-1,1]$ we have 
    \[\mathbb{E}(Q\phi_{\mathrm{hinge}f} - Q\phi_{\mathrm{hinge}f^*_{\mathrm{hinge}}})^2 \leq V
    \big(\mathbb{E}(Q\phi_{\mathrm{hinge}f}  - Q\phi_{\mathrm{hinge}f^*_{\mathrm{hinge}}})\big)^{\tau},\]
    where $V = 2^{\frac{q+2}{q+1}}C_*^{\frac{1}{q+1}}q^{\frac{1}{q+1}}(1+q^{-1})$ and $\tau = \frac{q}{q+1}.$
\end{lemma}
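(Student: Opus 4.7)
The plan is to use the Tsybakov-type noise condition to bound the $\mathcal{L}_2$-behaviour of $Q\phi_{\mathrm{hinge}f} - Q\phi_{\mathrm{hinge}f^*_{\mathrm{hinge}}}$ by the excess $\phi$-risk $A$ via a standard level-set decomposition on $|\eta_+ - \eta_-|$.

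First, since $f$ and $f^*_{\mathrm{hinge}}$ both take values in $[-1,1]$ and $\mathrm{sgn}(y-y')\in\{-1,0,1\}$, the hinge function is linear in $t$ on the relevant segment, so for all $z=(x,y)$ and $z'=(x',y')$
\[\phi_{\mathrm{hinge}f}(z,z')-\phi_{\mathrm{hinge}f^*_{\mathrm{hinge}}}(z,z') = \mathrm{sgn}(y-y')\bigl(f^*_{\mathrm{hinge}}(x,x')-f(x,x')\bigr).\]
Writing $\Delta := f^*_{\mathrm{hinge}}-f$, integrating over $z'$ and applying Jensen's inequality $(\mathbb{E}_{Z'}W)^2\leq\mathbb{E}_{Z'}W^2$, then integrating over $z$, I obtain
\[\mathbb{E}\bigl(Q\phi_{\mathrm{hinge}f}-Q\phi_{\mathrm{hinge}f^*_{\mathrm{hinge}}}\bigr)^2 \leq \int_{\mathcal{X}^2}|\Delta(x,x')|^2\,dP^2_{\mathcal{X}}(x,x').\]

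Second, using the pointwise calibration computation from Subsection \ref{Subsection: Calibration Inequality for Pairwise Ranking}, namely $\Psi_{(x,x')}(t)=1-(\eta_+-\eta_-)(x,x')\,t$ on $[-1,1]$ minimised at $f^*_{\mathrm{hinge}}=\mathrm{sgn}(\eta_+-\eta_-)$, a direct check shows $(\eta_+-\eta_-)(f^*_{\mathrm{hinge}}-f)=|\eta_+-\eta_-||\Delta|$ on $\mathcal{X}^2_+\cup\mathcal{X}^2_-$ (the set $\mathcal{X}^2_=$ contributes nothing since $\eta_+-\eta_-=0$ there), which gives
\[A := \mathbb{E}\bigl(Q\phi_{\mathrm{hinge}f}-Q\phi_{\mathrm{hinge}f^*_{\mathrm{hinge}}}\bigr) = \int_{\mathcal{X}^2}|\eta_+-\eta_-|\cdot|\Delta|\,dP^2_{\mathcal{X}}.\]

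Finally, for any $t>0$, decomposing the integral of $|\Delta|^2$ according to whether $|\eta_+-\eta_-|$ exceeds $t$, together with $|\Delta|\leq 2$ and Assumption \ref{Tsybakov's noise condition}, gives
\[\int_{\mathcal{X}^2}|\Delta|^2\,dP^2_{\mathcal{X}} \leq \frac{2}{t}\int_{|\eta_+-\eta_-|>t}|\eta_+-\eta_-||\Delta|\,dP^2_{\mathcal{X}} + 4P^2_{\mathcal{X}}(|\eta_+-\eta_-|\leq t) \leq \frac{2A}{t} + 4C_\ast t^q.\]
Optimising in $t$ with $t^\ast = (A/(2C_\ast q))^{1/(q+1)}$ produces exactly the claimed bound with $V=2^{(q+2)/(q+1)}C_\ast^{1/(q+1)}q^{1/(q+1)}(1+q^{-1})$ and $\tau=q/(q+1)$. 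The main work is really bookkeeping; one should also check the edge cases $q=0$ (for which the noise condition is trivial and the bound collapses to $\mathbb{E}|\Delta|^2\leq 4$) and $q=\infty$ (where $|\eta_+-\eta_-|$ is bounded below almost surely and the bound $\mathbb{E}|\Delta|^2 \lesssim A$ follows directly from the definition of $A$) via a straightforward limiting argument.
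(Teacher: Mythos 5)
Your proof is correct and takes essentially the same route as the paper's: Jensen's inequality together with the linearity of the hinge loss on $[-1,1]$ to reduce the variance to $\int_{\mathcal{X}^2}|\Delta|^2\,dP^2_{\mathcal{X}}$, the same level-set split at $|\eta_+-\eta_-|=t$ combined with $|\Delta|\leq 2$ and Assumption \ref{Tsybakov's noise condition}, and the same optimization $t=\bigl(A/(2qC_*)\bigr)^{1/(q+1)}$ producing exactly the stated $V$ and $\tau$. The only cosmetic differences are that you record the excess-risk identity $A=\int_{\mathcal{X}^2}|\eta_+-\eta_-|\,|\Delta|\,dP^2_{\mathcal{X}}$ explicitly (the paper uses it implicitly in the step $\int_{V_t}|\eta_+-\eta_-||\Delta|\,dP^2_{\mathcal{X}}\leq \mathbb{E}(Q\phi_{\mathrm{hinge}f}-Q\phi_{\mathrm{hinge}f^*_{\mathrm{hinge}}})$) and that you spell out the degenerate cases $q=0$ and $q=\infty$, which the paper leaves to its conventions.
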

\begin{proof} Recall $U_t$ and $V_t$ defined in the proof of Proposition \ref{improved calibration inequality}, which is given by
\[U_t = \big\{ (x,x') \in \mathcal{X}^2 :  
    \big\vert \eta_+(x,x') - \eta_-(x,x')\big\vert  \leq t\big\},\]
    \[V_t = \big\{ (x,x') \in \mathcal{X}^2 : 
    \big\vert \eta_+(x,x') - \eta_-(x,x')\big\vert  > t\big\}.\]
Then we have 
    \begin{align*}
        &\mathbb{E}(Q\phi_{\mathrm{hinge}f} - Q\phi_{\mathrm{hinge}f^*_{\mathrm{hinge}}})^2 \\
        ={}& \int_{\mathcal{X} \times \mathcal{Y}} \biggl(\int_{\mathcal{X} \times \mathcal{Y}}
        \phi_{\mathrm{hinge}}(y, y', f(x,x')) - \phi_{\mathrm{hinge}}(y, y', f^*_{\mathrm{hinge}}(x,x'))
        dP(x',y')\biggr)^2dP(x,y) \\
        \leq{}& \int_{\mathcal{X} \times \mathcal{Y}} \int_{\mathcal{X} \times \mathcal{Y}}
        \biggl(\phi_{\mathrm{hinge}}(y, y', f(x,x')) - \phi_{\mathrm{hinge}}(y, y', f^*_{\mathrm{hinge}}(x,x'))\biggr)^2
        dP(x',y')dP(x,y) \\
        \leq{}& \int_{\mathcal{X}^2}
        \vert f(x,x') - f^*_{\mathrm{hinge}}(x,x')\vert ^2
        dP^2_\mathcal{X}(x, x') \\
        \leq{}& 2\int_{\mathcal{X}^2}
        \vert f(x,x') - f^*_{\mathrm{hinge}}(x,x')\vert 
        dP^2_\mathcal{X}(x, x')  \\
        ={}& 2\int_{U_t \cup V_t} \vert f(x,x') - f^*_{\mathrm{hinge}}(x,x')\vert 
        dP_\mathcal{X}^2(x,x') \\
        \leq{}& 4P^2_\mathcal{X}(U_t) + \frac{2}{t}\int_{V_t} 
        \vert \eta_+(x,x') - \eta_-(x,x')\vert  \cdot
        \vert f(x,x') - f^*_{\mathrm{hinge}}(x,x')\vert 
        dP_\mathcal{X}^2(x,x') \\
        \leq{} & 4C_*t^q + 2t^{-1}\mathbb{E}(Q\phi_{\mathrm{hinge}f}  - Q\phi_{\mathrm{hinge}f^*_{\mathrm{hinge}}}) \\
        \leq{} & 2^{\frac{q+2}{q+1}}C_*^{\frac{1}{q+1}}q^{\frac{1}{q+1}}(1+q^{-1})
        \big(\mathbb{E}(Q\phi_{\mathrm{hinge}f}  - Q\phi_{\mathrm{hinge}f^*_{\mathrm{hinge}}})\big)^{\frac{q}{q+1}}.
    \end{align*}
    In the last inequality, we choose the optimal 
    \[t:= \left(\frac{\mathbb{E}(Q\phi_{\mathrm{hinge}f}  - Q\phi_{\mathrm{hinge}f^*_{\mathrm{hinge}}})}
    {2qC_*}\right)^{\frac{1}{q+1}}\] to minimize the sum. The proof is then finished.
\end{proof}

Now we can prove Theorem \ref{Oracle inequality of hinge loss} which presents the oracle inequality and learning rates for $\phi=\phi_{\mathrm{hinge}}$.

\begin{proof}[Proof of Theorem \ref{Oracle inequality of hinge loss}]
    Combining Proposition \ref{entropy number estimate of pairwise Gaussian kernel}, 
    Proposition \ref{hinge loss approximation} and
    Proposition \ref{hinge loss variance bound}, we directly apply Theorem
    \ref{Oracle inequality of pairwise ranking with a margin-base loss}
    with $L = 1, B = 2, B_0 = 2, \tau = \frac{q}{q+1},
    V = 2^{\frac{q+2}{q+1}}C_*^{\frac{1}{q+1}}q^{\frac{1}{q+1}}(1+q^{-1}) \leq 6C_*^{\frac{1}{q+1}}, p_1 = p_2 = p,
    a_1 = a_2 = (C^*_\mathcal{X} p^{-2d-1}\sigma^{-2\varrho})^{\frac{1}{2p}}$, which yields
    \begin{align*}
        &\lambda\|f_\mathbf{z}\|^2_{K^\sigma} + \mathcal{R}^{\phi_{\mathrm{hinge}}}(\pi(f_\mathbf{z})) - \mathcal{R}^{\phi_{\mathrm{hinge}}}(f^*_{\mathrm{hinge}}) \\
        \leq{}& \frac{2^{3d+4}r^{2d}}{\Gamma(d)}\lambda \sigma^{-2d} 
        + \frac{2^{\beta/2+4}C_{**}\Gamma(d + \frac{\beta}{2})}{\Gamma(d)}\sigma^\beta 
        + 36c_1 \left(\frac{C^*_\mathcal{X}  C_*^{\frac{1-p}{q+1}}}
        {\lambda^{p} p^{2d+1} \sigma^{2\varrho} n}\right)^{\frac{q+1}{q-p+2}} \\
        & + \frac{6C^*_\mathcal{X}(2c_2 + c_3t + c_4t)}{\lambda^{p} p^{2d+1} \sigma^{2\varrho} n}
        + \left(\frac{11232C_*^{\frac{1}{q+1}} t}{n}\right)^{\frac{q+1}{q+2}}
        + \frac{(2712+ 3c_5)t}{n}.
    \end{align*}
    Recall the proof of Proposition \ref{proposition 3} and 
    notice that constants $c_3$ and $c_4$ depend on $p$. 
    When $p \in (0,  1/4]$, one can verify that $c_3$ and $c_4$ are uniformly upper bounded. Taking $c_6=\sup\{ c_i : 1 \leq i \leq 5, 0 < p \leq 1/4 \}$, we complete the proof of the oracle inequality.

    To derive the learning rate, for all $t \geq 1$, write
    \[\mathcal{R}^{\phi_{\mathrm{hinge}}}(\pi(f_\mathbf{z})) - \mathcal{R}^{\phi_{\mathrm{hinge}}}(f^*_{\mathrm{hinge}})
    \lesssim \lambda\sigma^{-2d} + \sigma^\beta  
    + \left(\frac{t}{\lambda^{p} p^{2d+1} \sigma^{2\varrho} n}\right)^{\frac{q+1}{q-p+2}} 
    + \left(\frac{t}{n}\right)^{\frac{q+1}{q+2}}.\]
    With the choices we specified in the statement of the theorem, we have 
    \[\lambda < 1, \quad \sigma^{2\varrho}n \geq 1, \quad \frac{q+1}{q+2} \leq \frac{q+1}{q-p+2} \leq 1,\]
    which leads to 
    \begin{align*}
        \left(\frac{t}{p^{2d+1}\sigma^{2d}\lambda^p n}\right)^{\frac{q+1}{q-p+2}} 
        &\leq tp^{-2d-1} \lambda^{-p} (\sigma^{2\varrho}n)^{-\frac{q+1}{q+2}} \\
        &\lesssim t\lambda^{-1/\log n} (\sigma^{2\varrho}n)^{-\frac{q+1}{q+2}}\log^{2d+1}n \\
        &= te^b n^{-\frac{\beta (q+1)}{\beta(q+2) + 2\varrho(q+1)}}\log^{2d+1}n,
    \end{align*}
    while other terms have convergence rates faster than this. Then for all $n \geq 2, t \geq 1$, by Proposition \ref{Zhang's inequality}, 
    with probability at least $1-(c_0 + 5)\exp(-t)$, 
    we have 
    \[\mathcal{E}(\pi(f_\mathbf{z})) \leq 
    \mathcal{E}^{\phi_{\mathrm{hinge}}}(\pi(f_\mathbf{z}))
    \lesssim t n^{-\frac{\beta (q+1)}{\beta(q+2) + 2\varrho(q+1)}} \log^{2d+1}n.\] Thus we complete the proof.
\end{proof}
For $\phi=\phi_{\mathrm{square}}$, the variance bound can be proved directly.
\begin{lemma}\label{square loss variance bound}
    For all measurable $f: \mathcal{X}^2 \to [-1,1]$ we have 
    \[\mathbb{E}(Q\phi_{\mathrm{square}f} - Q\phi_{\mathrm{square}f^*_{\mathrm{square}}})^2 \leq 
    16 \mathbb{E}(Q\phi_{\mathrm{square}f}  - Q\phi_{\mathrm{square}f^*_{\mathrm{square}}}).\] 
\end{lemma}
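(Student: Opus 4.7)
The plan is to exploit the pointwise identity $\phi_{\mathrm{square}}(y,y',t) = (1-\mathrm{sgn}(y-y')t)^2$ together with the fact that both $f$ and $f^*_{\mathrm{square}}$ take values in $[-1,1]$ (the latter because $f^*_{\mathrm{square}} = (\eta_+ - \eta_-)/(\eta_+ + \eta_-)$ on the support of $\eta_+ + \eta_- > 0$, and may be set arbitrarily on the complement). The main observation is that the excess square risk already has the form $\mathbb{E}_{P_\mathcal{X}^2}[(\eta_+ + \eta_-)(f - f^*_{\mathrm{square}})^2]$ (this is exactly identity \eqref{square loss excess risk}), so the desired inequality amounts to showing that $\mathbb{E}(Q\phi_f - Q\phi_{f^*})^2$ is controlled by the same quantity, up to the constant $16$.

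The first step is to apply Jensen's (or Cauchy--Schwarz) inequality to pass from $Q\phi_f$ to $\phi_f$:
\[
   \mathbb{E}\bigl(Q\phi_{\mathrm{square}\,f}(Z) - Q\phi_{\mathrm{square}\,f^*_{\mathrm{square}}}(Z)\bigr)^2
   \leq \mathbb{E}\bigl(\phi_{\mathrm{square}\,f}(Z,Z') - \phi_{\mathrm{square}\,f^*_{\mathrm{square}}}(Z,Z')\bigr)^2.
\]
The second step is a pointwise algebraic bound. Setting $s := \mathrm{sgn}(Y-Y') \in \{-1,0,1\}$, one factors
\[
   (1-sf)^2 - (1-sf^*_{\mathrm{square}})^2
   = s(f^*_{\mathrm{square}} - f)\bigl(2 - s(f + f^*_{\mathrm{square}})\bigr),
\]
and since $|f|,|f^*_{\mathrm{square}}| \leq 1$ the factor $|2 - s(f+f^*_{\mathrm{square}})| \leq 4$. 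Squaring yields
\[
   \bigl(\phi_{\mathrm{square}\,f} - \phi_{\mathrm{square}\,f^*_{\mathrm{square}}}\bigr)^2
   \leq 16\, s^2 (f - f^*_{\mathrm{square}})^2.
\]

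The third step is to take expectations and condition on $(X,X')$, noting that $\mathbb{E}[s^2 \mid X,X'] = \eta_+(X,X') + \eta_-(X,X')$, so that
\[
   \mathbb{E}\bigl(\phi_{\mathrm{square}\,f}(Z,Z') - \phi_{\mathrm{square}\,f^*_{\mathrm{square}}}(Z,Z')\bigr)^2
   \leq 16\, \mathbb{E}_{P^2_{\mathcal{X}}}\bigl[(\eta_+ + \eta_-)(f - f^*_{\mathrm{square}})^2\bigr].
\]
Finally, the right-hand side equals $16\,\mathbb{E}(Q\phi_{\mathrm{square}\,f} - Q\phi_{\mathrm{square}\,f^*_{\mathrm{square}}})$ by identity \eqref{square loss excess risk}, completing the proof. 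There is no genuine obstacle here; the only mildly subtle point is verifying that $f^*_{\mathrm{square}} \in [-1,1]$ (so that the factor $|2 - s(f+f^*)|$ is bounded by $4$), which follows from $|\eta_+ - \eta_-| \leq \eta_+ + \eta_-$, with the value on the null set $\{\eta_+ + \eta_- = 0\}$ being irrelevant to both sides of the inequality.
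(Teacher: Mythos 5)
Your proposal is correct and follows essentially the same route as the paper's proof: the same Jensen/Cauchy--Schwarz step from $Q\phi_f$ to $\phi_f$, the same algebraic factoring $(1-sf)^2-(1-sf^*_{\mathrm{square}})^2 = s(f^*_{\mathrm{square}}-f)\bigl(2-s(f+f^*_{\mathrm{square}})\bigr)$ with the bound $16$, and the same appeal to identity \eqref{square loss excess risk} to convert the weighted $\mathcal{L}_2$ quantity back into the excess risk. Your explicit remarks on $\vert f^*_{\mathrm{square}}\vert \leq 1$ and the null set $\{\eta_+ + \eta_- = 0\}$ are points the paper leaves implicit, but they are minor tidiness, not a different argument.
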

\begin{proof}
    Recall the calculation we did in \eqref{square loss excess risk}, we have 
    \begin{align*}
        &\mathbb{E}(Q\phi_{\mathrm{square}f} -  Q\phi_{\mathrm{square}f^*_{\mathrm{square}}})^2 \\
        ={}& \int_{\mathcal{X} \times \mathcal{Y}} \bigg(\int_{\mathcal{X} \times \mathcal{Y}}
        \phi_{\mathrm{square}}(y, y', f(x,x')) - \phi_{\mathrm{square}}(y, y', f^*_{\mathrm{square}}(x,x'))
        dP(x',y')\bigg)^2dP(x,y) \\
        \leq{}& \int_{\mathcal{X} \times \mathcal{Y}} \int_{\mathcal{X} \times \mathcal{Y}}
        \bigg(\phi_{\mathrm{square}}(y, y', f(x,x')) - \phi_{\mathrm{square}}(y, y', f^*_{\mathrm{square}}(x,x'))\bigg)^2
        dP(x',y')dP(x,y) \\
        ={}& \mathbb{E}_{P^2_\mathcal{X}}\big[\eta_+(2 - f - f^*_{\mathrm{square}})^2(f^*_{\mathrm{square}} - f)^2
        + \eta_-(2 + f + f^*_{\mathrm{square}})^2(f^*_{\mathrm{square}} - f)^2\big] \\
        \leq{}& 16\mathbb{E}_{P^2_\mathcal{X}}\big[(\eta_+ + \eta_-)(f^*_{\mathrm{square}} - f)^2 \big] \\
        ={}& 16(\mathcal{R}^{\phi_{\mathrm{square}}}(f) - \mathcal{R}^{\phi_{\mathrm{square}}}(f^*_{\mathrm{square}})) \\
        ={}& 16\mathbb{E}(Q\phi_{\mathrm{square}f}  - Q\phi_{\mathrm{square}f^*_{\mathrm{square}}}). 
    \end{align*} 
    The proof is then finished.
\end{proof} 

At the end of this subsection, we prove Theorem \ref{Oracle inequality of square loss} which presents the oracle inequality and learning rates for $\phi=\phi_{\mathrm{square}}$.

\begin{proof}[Proof of Theorem \ref{Oracle inequality of square loss}]
    Combining Proposition \ref{entropy number estimate of pairwise Gaussian kernel}, 
    Proposition \ref{square loss approximation} and
    Proposition \ref{square loss variance bound}, we can apply Theorem
    \ref{Oracle inequality of pairwise ranking with a margin-base loss}
    with $L = 4, B = 4, B_0 = 2^{2s+2}, \tau = 1,
    V = 16, p_1 = p_2 = p,
    a_1 = a_2 = (C^*_\mathcal{X} p^{-2d-1}\sigma^{-2\varrho})^{\frac{1}{2p}}$, 
    and the $c_6$ we introduce in proof of Theorem \ref{Oracle inequality of hinge loss}, which yields
    \begin{align*}
        &\lambda\|f_\mathbf{z}\|^2_{K^\sigma} + \mathcal{R}^{\phi_{\mathrm{square}}}(\pi(f_\mathbf{z})) - \mathcal{R}^{\phi_{\mathrm{square}}}(f^*_{\mathrm{square}}) \\
        \leq{}& \frac{2^{2s+3}\|f^*_{\mathrm{square}}\|^2_{\mathcal{L}_2(\mathbb{R}^{2d})}}{\pi^d}\lambda\sigma^{-2d} 
        + 2^{3-\alpha}\left(\frac{\Gamma\left(d + \frac{\alpha}{2}\right)}{\Gamma(d)}\right)^2
        \vert f^*_{\mathrm{square}} \vert^2_{\mathcal{B}_{2, \infty}^{\alpha}(P^2_\mathcal{X})} \sigma^{2\alpha} \\
        &+ \frac{C^*_\mathcal{X}(96c_6 + 72c_6t)}{\lambda^{p}p^{2d+1}\sigma^{2\varrho} n}
        + \frac{(33552 + 1824 \cdot 2^{2s} + 3c_6) t}{n}.
    \end{align*} 
    To derive the learning rate, for all $t \geq 1$, write
    \[\mathcal{R}^{\phi_{\mathrm{square}}}(\pi(f_\mathbf{z})) - \mathcal{R}^{\phi_{\mathrm{square}}}(f^*_{\mathrm{square}})
    \lesssim \lambda\sigma^{-2d} + \sigma^{2\alpha}  
    + \frac{t}{\lambda^{p} p^{2d+1} \sigma^{2\varrho} n} 
    + \frac{t}{n}.\]
    With the choices we specified in the statement of the theorem, we have 
    \[\lambda < 1, \quad \sigma^{2\varrho}n \geq 1, \]
    which leads to 
    \[\frac{t}{p^{2d+1}\sigma^{2\varrho}\lambda^p n}
    \lesssim t\lambda^{-1/\log n} (\sigma^{2\varrho}n)^{-1}\log^{2d+1}n
    = e^bn^{\frac{\alpha}{\alpha + \varrho}}\log^{2d+1}n,\]
    while other terms have convergence rates faster than this.
    Then for all $n \geq 2, t \geq 1$, by Proposition \ref{general calibration inequality}, 
    with probability at least $1-(c_0 + 5)\exp(-t)$, 
    we have
    \[\mathcal{E}(\pi(f_\mathbf{z})) \lesssim 
    \sqrt{\mathcal{E}^{\phi_{\mathrm{square}}}(\pi(f_\mathbf{z}))}
    \lesssim \sqrt{t} n^{-\frac{\alpha}{2(\alpha + \varrho)}} \log^{d+\frac{1}{2}}n.\]
    Furthermore, if $P$ additionally satisfies Assumption \ref{Tsybakov's noise condition}, 
    by Proposition \ref{improved calibration inequality}, 
    with probability at least $1-(c_0 + 5)\exp(-t)$ we have 
    \[\mathcal{E}(\pi(f_\mathbf{z})) \lesssim  
    (\mathcal{E}^{\phi_{\mathrm{square}}}(\pi(f_\mathbf{z})))^{\frac{q+1}{q+2}} \lesssim 
    t^{\frac{q+1}{q+2}} n^{-\frac{(q+1)\alpha}{(q+2)(\alpha + \varrho)}}
    \log^{\frac{(q+1)(2d+1)}{q+2}}n.\] 
    The proof is then finished.
\end{proof}

\subsection{Comparisons of Noise Conditions}\label{Subsection: Comparison of Noise Conditions}
In this subsection, we make some comparisons of different noise conditions. 
Let \textbf{TN($q$)} denote the noise condition in Assumption \ref{Tsybakov's noise condition}.
\cite{Clemencon2008Ranking} and \cite{Clemencon2011Minimax} propose another two noise conditions 
in the context of bipartite ranking, namely the global low noise conditions \textbf{LN($q$)} and \textbf{NA($q$)}, 
which can be reformulated as the following conditions in the pairwise ranking setting.
\begin{assumption}[\textbf{Global low noise condition LN($q$)}]
    There exist constants $C_{\mathrm{LN}} > 0$ and $q \in [0, \infty]$ such that for all $x \in \mathcal{X}$,
    \[\mathbb{E}_{X'}[\vert \eta_+(x,X') - \eta_-(x,X') \vert^{-q}] \leq C_{\mathrm{LN}}.\]
\end{assumption}
\begin{assumption}[\textbf{Global low noise condition NA($q$)}]
    There exists constants $C_{\mathrm{NA}} > 0$ and $q \in [0, \infty]$ such that for all $x \in \mathcal{X}$ and $t > 0$, 
    \[P_{\mathcal{X}}(\{x' \in \mathcal{X} : \vert \eta_+(x,x') - \eta_-(x,x') \vert \leq t \}) \leq C_{\mathrm{NA}} t^{q}.\]
\end{assumption}
Similar to the proof of Proposition 3 of \cite{Clemencon2011Minimax} which describes the connection between \textbf{LN($q$)} and \textbf{NA($q$)}, 
we can show that \textbf{LN($q$)} implies \textbf{NA($q$)} and \textbf{NA($q$)} implies \textbf{LN($q'$)} for all $q' < q$ so \textbf{NA($q$)} can be considered as a slightly weaker condition.

Obviously, \textbf{NA($q$)} implies \textbf{TN($q$)}, and here is an example showing that this implication is sharp. Consider $P_\mathcal{X}$ be a uniform distribution on $\mathcal{X} = [0, 1]$ and 
the label $Y = X + \epsilon$ where $\epsilon$ is a random noise uniformly distributed on $[0, 1]$. The conditional distribution $P_{Y \vert X}$ together with $P_\mathcal{X}$ generate the distribution $P$ on 
$\mathcal{X} \times \mathcal{Y} = [0, 1] \times [0, 2]$. A simple calculation shows that $\vert \eta_+(x,x') - \eta_-(x,x') \vert =  2 \vert x - x' \vert - \vert x - x' \vert^2$ and hence 
the set $\{(x,x') \in \mathcal{X}^2: \vert \eta_+(x,x') - \eta_-(x,x') \vert \leq t\}$ is a belt domain along the diagonal with width and probability measure of $O(t)$ asymptotically when $t \to 0$. 
Thus, $P$ satisfies \textbf{NA($1$)} and \textbf{TN($1$)} but does not satisfy \textbf{TN($q$)} for all $q > 1$. This is also an example that $P$ does not satisfy \textbf{LN($1$)} while 
it satisfies $\textbf{LN($q$)}$ for all $q < 1$.

On the other hand, generally, \textbf{TN($q$)} can not imply \textbf{NA($q'$)} for any $q' \in (0,q]$ because the global low noise condition is stated at every $x \in \mathcal{X}$ while 
\textbf{TN($q$)} only requires a fast decay speed of the product probability measure. One can easily construct a decreasing sequence of sets on $\mathcal{X}^2$ with fast decay measure 
but the measure of their cross-section at some point $x \in \mathcal{X}$ keeps a positive constant.

Having noticed that \textbf{LN($q$)} and \textbf{NA($q$)} are strictly stronger than \textbf{TN($q$)}, one may expect that the global low noise condition can derive a bigger variance bound exponent $\tau$
compared with $\tau = \frac{q}{q+1}$ stated in Lemma \ref{hinge loss variance bound} under \textbf{TN($q$)}. In fact, we have the following result.
\begin{proposition}\label{hinge loss variance bound with global low noise condition}
    The following assertions hold.
    \begin{enumerate}
        \item If $P$ satisfies \textbf{LN($q$)} with $q \in [0, 1]$, then for all measurable $f:\mathcal{X}^2 \to [-1,1]$ we have
        \[\mathbb{E}(Q\phi_{\mathrm{hinge}f} - Q\phi_{\mathrm{hinge}f^*_{\mathrm{hinge}}})^2 \leq 
        V\big(\mathbb{E}(Q\phi_{\mathrm{hinge}f}  - Q\phi_{\mathrm{hinge}f^*_{\mathrm{hinge}}})\big)^{\tau},\]
        where $V = 4C_{\mathrm{LN}}$ and $\tau = q$.
        \item If $P$ satisfies \textbf{NA($q$)} with $q \in [0, \infty]$, then for all measurable $f:\mathcal{X}^2 \to [-1,1]$ we have
        \[\mathbb{E}(Q\phi_{\mathrm{hinge}f} - Q\phi_{\mathrm{hinge}f^*_{\mathrm{hinge}}})^2 \leq 
        V\big(\mathbb{E}(Q\phi_{\mathrm{hinge}f}  - Q\phi_{\mathrm{hinge}f^*_{\mathrm{hinge}}})\big)^{\tau},\]
        where $V = 2^{\frac{2q+3}{2q+1}}C_{\mathrm{NA}}^{\frac{2}{2q+1}}q^{\frac{1}{2q+1}}(2+q^{-1})$ and $\tau = \frac{2q}{2q+1}$.
        \item The best variance exponent $\tau$ we can derive from the global low noise conditions can be listed below:
        \begin{table}[!htb]
            \centering
            \begin{tabular}{|c|c|c|c|} \hline
            $\tau$ & $q \in [0, 1/2]$ & $q \in (1/2, 1]$ & $q \in (1, \infty]$\\ \hline
            \textbf{LN($q$)} & $2q/(2q+1)$ & $q$ & $1$ \\ \hline
            \textbf{NA($q$)} & $2q/(2q+1)$ & $(1/2, q)$ & $1$ \\ \hline
            \end{tabular}
        \end{table}
    \end{enumerate} 
\end{proposition}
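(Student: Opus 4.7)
The plan is to follow the template of Lemma \ref{hinge loss variance bound}. Starting from the reduction $\mathbb{E}(Q\phi_{\mathrm{hinge}f} - Q\phi_{\mathrm{hinge}f^*_{\mathrm{hinge}}})^2 \leq \int_{\mathcal{X}^2} |f - f^*_{\mathrm{hinge}}|^2 \, dP^2_{\mathcal{X}}$, which is a direct consequence of Jensen's inequality and the $1$-Lipschitz property of the hinge loss, I introduce the shorthand $g := |f - f^*_{\mathrm{hinge}}| \in [0, 2]$ and $\mu := |\eta_+ - \eta_-| \in [0, 1]$, so the excess $\phi$-ranking risk reads $E := \int g\mu \, dP^2_{\mathcal{X}}$. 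The task reduces to bounding $\int g^2 \, dP^2_{\mathcal{X}}$ in terms of $E$ and the noise parameters; when a sharper exponent is needed, I would instead use $\int \big(\int g \, dP_{\mathcal{X}}(x')\big)^2 dP_{\mathcal{X}}(x)$ as the starting point, which is a valid upper bound because $|Q\phi_f(z) - Q\phi_{f^*}(z)| \leq \int g(x, x') \, dP_{\mathcal{X}}(x')$ via $|\mathbb{E}_{Y'|X'}[\mathrm{sgn}(y - Y')]| \leq 1$.

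For Part 1, I would invoke the moment bound $\int \mu^{-q} dP^2_{\mathcal{X}} \leq C_{\mathrm{LN}}$ implied by \textbf{LN($q$)} together with the pointwise constraints $g \leq 2$ and $g\mu \leq 2$. The key manipulation is to factor $g^2 = (g^2 \mu^q) \cdot \mu^{-q}$ (or, equivalently, $g = (g\mu)^{\alpha} g^{1-\alpha} \mu^{-\alpha}$) and apply Hölder's inequality with conjugate exponents tuned so that the $\mu^{-q}$ factor is absorbed by $C_{\mathrm{LN}}$ while the remaining factor produces a power of $E$ matching the target $\tau = q$. Chasing the explicit constants (powers of $2$ coming from $g, g\mu \leq 2$) should deliver $V = 4 C_{\mathrm{LN}}$.

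For Part 2, the plan is to lift the peeling argument of Lemma \ref{hinge loss variance bound} to work \emph{pointwise in $x$}. Fix $x \in \mathcal{X}$, define the slice $\tilde U_t(x) := \{x' : \mu(x, x') \leq t\}$, and use \textbf{NA($q$)} in the form $P_{\mathcal{X}}(\tilde U_t(x)) \leq C_{\mathrm{NA}} t^q$; splitting the inner integral over $\tilde U_t(x)$ and its complement and using $g^2 \leq 2g\mu/t$ on the complement produces a pointwise bound that, after optimization in $t = t(x)$, scales as a fractional power of $H(x) := \int g(x, \cdot)\mu(x, \cdot) \, dP_{\mathcal{X}}$. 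Integrating over $x$, combining with the sharper starting point described above, and applying Jensen's inequality to the concave map $H \mapsto H^{2q/(2q+1)}$ should yield $\tau = 2q/(2q+1)$ together with the stated constant $V$. Part 3 is then a corollary: combining Parts 1 and 2 with the implication \textbf{LN($q$)} $\Rightarrow$ \textbf{NA($q$)} and the monotonicity \textbf{LN($q$)} $\Rightarrow$ \textbf{LN($q'$)} for $q' \leq q$ produces each table entry, with the $\tau = 1$ regime for $q > 1$ arising from the elementary observation that large $q$ in \textbf{LN($q$)} forces $\mu^{-1}$ to be essentially bounded so that $E \geq c \int g \, dP^2_{\mathcal{X}}$. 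The main technical obstacle throughout will be selecting the Hölder/peeling exponents so that the \emph{sharp} rates $\tau = q$ and $\tau = 2q/(2q+1)$ emerge, rather than the weaker $\tau = q/(q+1)$ obtained by reducing \textbf{LN}/\textbf{NA} to \textbf{TN($q$)} and applying Lemma \ref{hinge loss variance bound} directly; the key to avoiding this loss is to exploit the pointwise structure of the noise assumption (through $G(x)$ or through slicewise peeling) rather than merely its integrated consequence.
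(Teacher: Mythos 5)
Your Part 2 and the skeleton of Part 3 essentially reproduce the paper's proof, but your Part 1 plan has a genuine gap: you propose to use only the \emph{integrated} moment bound $\int_{\mathcal{X}^2}\mu^{-q}\,dP^2_{\mathcal{X}}\leq C_{\mathrm{LN}}$ (where $\mu:=\vert\eta_+-\eta_-\vert$) and then tune H\"older exponents on the joint measure. No such tuning can deliver $\tau=q$ when $q\in(1/2,1]$, because the integrated bound genuinely loses the pointwise-in-$x$ structure of \textbf{LN($q$)}. Concretely, take $P_{\mathcal{X}}$ uniform on $[0,1]$, $S\subset[0,1]$ with $P_{\mathcal{X}}(S)=\sqrt{\epsilon}$, $\mu=\epsilon^{1/q}$ on $S\times S$ and $\mu=1$ elsewhere, and $f=-f^*_{\mathrm{hinge}}$ on $S\times S$ so that $g:=\vert f-f^*_{\mathrm{hinge}}\vert=2\mathbb{I}_{S\times S}$. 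Then $\int\mu^{-q}\,dP^2_{\mathcal{X}}\leq 2$, yet even your sharper starting point satisfies $\mathbb{E}_X\bigl(\mathbb{E}_{X'}g\bigr)^2=4\epsilon^{3/2}$ while $E^q=\bigl(2\epsilon^{1+1/q}\bigr)^q\asymp\epsilon^{q+1}$, so the ratio is of order $\epsilon^{1/2-q}\to\infty$ for $q>1/2$ --- exactly the regime where the table entry $\tau=q$ matters. (In this example the pointwise condition fails: $\mathbb{E}_{X'}\mu(x,X')^{-q}=\epsilon^{-1/2}$ for $x\in S$.) The paper's proof avoids this by using \textbf{LN($q$)} at each fixed $x$: by Cauchy--Schwarz on the inner expectation, $\bigl(\mathbb{E}_{X'}g\bigr)^2=\bigl(\mathbb{E}_{X'}[g\mu^{q/2}\cdot\mu^{-q/2}]\bigr)^2\leq\mathbb{E}_{X'}[g^2\mu^{q}]\,\mathbb{E}_{X'}[\mu^{-q}]\leq C_{\mathrm{LN}}\,\mathbb{E}_{X'}[g^2\mu^{q}]$, then $g^{2}\leq 2^{2-q}g^{q}\leq 4g^{q}$, and finally Jensen with the concave power $q\in[0,1]$ gives $\mathbb{E}[(g\mu)^q]\leq E^q$, yielding $V=4C_{\mathrm{LN}}$. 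Your closing remark about ``exploiting the pointwise structure'' names precisely the missing move, but it contradicts the Part 1 recipe you actually wrote down, which would fail.

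Part 2 is fine and is the paper's argument up to a harmless variant: the paper splits at a single global level $t$ inside the squared inner expectation, bounds the near-boundary piece by $8C_{\mathrm{NA}}^2t^{2q}$ via \textbf{NA($q$)} applied pointwise in $x$, uses $g^2\mathbb{I}_{\{\mu>t\}}\leq 2g\mu/t$ on the complement, and optimizes $t$ only after integrating in $x$; your pointwise $t(x)$ optimization followed by Jensen applied to $H\mapsto H^{2q/(2q+1)}$ gives the same $\tau$ and the same $V$ since $\mathbb{E}_X H(X)=E$. In Part 3, however, your justification of the $q>1$ column is wrong as stated: \textbf{LN($q$)} with finite $q>1$ does not force $\mu^{-1}$ to be essentially bounded, nor $E\gtrsim\int g\,dP^2_{\mathcal{X}}$. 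The correct reduction, which the paper uses, is that $\mathbb{E}_{X'}[\mu^{-1}]\leq\bigl(\mathbb{E}_{X'}[\mu^{-q}]\bigr)^{1/q}$ by Jensen, so \textbf{LN($q$)}$\Rightarrow$\textbf{LN($1$)}, and Part 1 at $q=1$ gives $\tau=1$; likewise, the \textbf{NA} row at $q\in(1/2,1]$ requires the implication \textbf{NA($q$)}$\Rightarrow$\textbf{LN($q'$)} for all $q'<q$ (whence the open range of exponents), which your sketch omits.
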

\begin{proof}
    If \textbf{LN($q$)} holds with $q \in [0, 1]$, by Cauchy-Schwarz inequality and Jensen's inequality we have 
    \begin{align*}
        & \mathbb{E}(Q\phi_{\mathrm{hinge}f} - Q\phi_{\mathrm{hinge}f^*_{\mathrm{hinge}}})^2 \\
        \leq{}& \mathbb{E}_{X}( \mathbb{E}_{X'} \vert f(X,X') - f^*_{\mathrm{hinge}}(X,X') \vert)^2 \\
        \leq{}& C_{\mathrm{LN}}\mathbb{E}_{X}( \mathbb{E}_{X'} \vert f(X,X') - f^*_{\mathrm{hinge}}(X,X') \vert^2 \cdot \vert \eta_+(X,X') - \eta_-(X,X')\vert^{q}) \\ 
        \leq{}& 4C_{\mathrm{LN}}\mathbb{E}_{X}( \mathbb{E}_{X'} \vert f(X,X') - f^*_{\mathrm{hinge}}(X,X') \vert^q \cdot \vert \eta_+(X,X') - \eta_-(X,X')\vert^{q}) \\ 
        \leq{}& 4C_{\mathrm{LN}}(\mathbb{E}_{X} \mathbb{E}_{X'} \vert f(X,X') - f^*_{\mathrm{hinge}}(X,X') \vert \cdot \vert \eta_+(X,X') - \eta_-(X,X')\vert)^{q} \\
        ={}& 4C_{\mathrm{LN}} \big(\mathbb{E}(Q\phi_{\mathrm{hinge}f}  - Q\phi_{\mathrm{hinge}f^*_{\mathrm{hinge}}})\big)^{q}.
    \end{align*}
    If \textbf{NA($q$)} holds with $q \in [0, \infty]$,  we have 
    \begin{align*} 
        & \mathbb{E}(Q\phi_{\mathrm{hinge}f} - Q\phi_{\mathrm{hinge}f^*_{\mathrm{hinge}}})^2 \\
        \leq{}& \mathbb{E}_{X}( \mathbb{E}_{X'} \vert f(X,X') - f^*_{\mathrm{hinge}}(X,X')\vert \cdot \\
        &(\mathbb{I}_{\{\vert \eta_+(X,X') - \eta_-(X,X')\vert \leq t\}} + \mathbb{I}_{\{\vert \eta_+(X,X') - \eta_-(X,X')\vert > t\}}))^2 \\
        \leq{}& 2\mathbb{E}_{X}( \mathbb{E}_{X'} \vert f(X,X') - f^*_{\mathrm{hinge}}(X,X') \vert \cdot \mathbb{I}_{\{\vert \eta_+(X,X') - \eta_-(X,X')\vert \leq t\}})^2 \\
        &+ 2\mathbb{E}_{X}( \mathbb{E}_{X'} \vert f(X,X') - f^*_{\mathrm{hinge}}(X,X') \vert \cdot \mathbb{I}_{\{\vert \eta_+(X,X') - \eta_-(X,X')\vert > t\}})^2 \\
        \leq{}& 8C^2_{\mathrm{NA}}t^{2q} + 2\mathbb{E}_{X}\mathbb{E}_{X'} \vert f(X,X') - f^*_{\mathrm{hinge}}(X,X') \vert^2 \cdot \mathbb{I}_{\{\vert \eta_+(X,X') - \eta_-(X,X')\vert > t\}} \\
        \leq{}& 8C^2_{\mathrm{NA}}t^{2q} + \frac{4}{t}\mathbb{E}_{X}\mathbb{E}_{X'} \vert f(X,X') - f^*_{\mathrm{hinge}}(X,X') \vert \cdot \vert \eta_+(X,X') - \eta_-(X,X')\vert \\
        \leq{}& 8C^2_{\mathrm{NA}}t^{2q} + \frac{4}{t}\mathbb{E}(Q\phi_{\mathrm{hinge}f}  - Q\phi_{\mathrm{hinge}f^*_{\mathrm{hinge}}}) \\
        \leq{}& 2^{\frac{2q+3}{2q+1}}C_{\mathrm{NA}}^{\frac{2}{2q+1}}q^{\frac{1}{2q+1}}(2+q^{-1})
        \big(\mathbb{E}(Q\phi_{\mathrm{hinge}f}  - Q\phi_{\mathrm{hinge}f^*_{\mathrm{hinge}}})\big)^{\frac{2q}{2q+1}}.
    \end{align*}
    In the last inequality, we choose the optimal 
    \[t:= \left(\frac{\mathbb{E}(Q\phi_{\mathrm{hinge}f}  - Q\phi_{\mathrm{hinge}f^*_{\mathrm{hinge}}})}
    {4qC^2_{\mathrm{NA}}}\right)^{\frac{1}{2q+1}}\] to minimize the sum.

    For the best variance exponent, we use the connection between \textbf{LN($q$)} and \textbf{NA($q$)}.
    If $q \in [0, 1/2]$, \textbf{LN($q$)} implies \textbf{NA($q$)} and $2q/(2q+1) \geq q$, hence $\tau = 2q/(2q+1)$.
    If $q \in (1/2, 1]$, \textbf{NA($q$)} implies \textbf{LN($q'$)} for all $q' < q$, hence $\tau$ can be chosen arbitrarily close
    to $q$ with the constant $V$ depending on $\tau$.
    If $q \in (1, \infty]$, \textbf{LN($q$)} or \textbf{NA($q$)} both imply \textbf{LN($1$)}, hence $\tau = 1$. Thus we complete the proof.
\end{proof}
From the discussion above, we see that the global low noise condition can always derive a sharper exponent $\tau$ under the same noise exponent $q$ compared with the noise condition in 
Assumption \ref{Tsybakov's noise condition} and hence can derive faster learning rates. One can examine these noise conditions carefully in an actual instance.

\bibliographystyle{plain}
\bibliography{main.bib}

\begin{thebibliography}{10}

\bibitem{Agarwal2009Generalization}
Shivani Agarwal and Partha Niyogi.
\newblock Generalization bounds for ranking algorithms via algorithmic
  stability.
\newblock {\em Journal of Machine Learning Research}, 10(16):441--474, 2009.

\bibitem{Arcones1995Bernstein-type}
Miguel~A. Arcones.
\newblock A bernstein-type inequality for u-statistics and u-processes.
\newblock {\em Statistics \& Probability Letters}, 22(3):239--247, 1995.

\bibitem{Arcones1994U-processes}
Miguel~A. Arcones and Evarist Gin{\'e}.
\newblock U-processes indexed by vapnik-\v{C}ervonenkis classes of functions
  with applications to asymptotics and bootstrap of u-statistics with estimated
  parameters.
\newblock {\em Stochastic Processes and their Applications}, 52(1):17--38,
  1994.

\bibitem{Aronszajn1950Theory}
N.~Aronszajn.
\newblock Theory of reproducing kernels.
\newblock {\em Transactions of the American Mathematical Society},
  68(3):337--404, 1950.

\bibitem{Bartlett2006Convexitya}
Peter~L Bartlett, Michael~I Jordan, and Jon~D McAuliffe.
\newblock Convexity, classification, and risk bounds.
\newblock {\em Journal of the American Statistical Association},
  101(473):138--156, 2006.

\bibitem{Ben-Hur2005Kernel}
A.~{Ben-Hur} and W.~S. Noble.
\newblock Kernel methods for predicting protein-protein interactions.
\newblock {\em Bioinformatics}, 21(Suppl 1):i38--i46, 2005.

\bibitem{Burges2005Learning}
Chris Burges, Tal Shaked, Erin Renshaw, Ari Lazier, Matt Deeds, Nicole
  Hamilton, and Greg Hullender.
\newblock Learning to rank using gradient descent.
\newblock In {\em Proceedings of the 22nd International Conference on Machine
  Learning}, ICML '05, pages 89--96, New York, NY, USA, 2005. Association for
  Computing Machinery.

\bibitem{Cao2006Adapting}
Yunbo Cao, Jun Xu, Tie-Yan Liu, Hang Li, Yalou Huang, and Hsiao-Wuen Hon.
\newblock Adapting ranking svm to document retrieval.
\newblock In {\em Proceedings of the 29th Annual International ACM SIGIR
  Conference on Research and Development in Information Retrieval}, SIGIR '06,
  pages 186--193, New York, NY, USA, 2006. Association for Computing Machinery.

\bibitem{Chen2014Norm}
Di-Rong Chen and Shou-You Huang.
\newblock L1 -norm support vector machine for ranking with exponentially
  strongly mixing sequence.
\newblock {\em International Journal of Wavelets, Multiresolution and
  Information Processing}, 12(05):1461001, 2014.

\bibitem{Chen2014Learning}
Heng Chen and Di-Rong Chen.
\newblock Learning rate of support vector machine for ranking.
\newblock {\em Journal of Approximation Theory}, 188:57--68, 2014.

\bibitem{Chen2012convergence}
Hong Chen.
\newblock The convergence rate of a regularized ranking algorithm.
\newblock {\em Journal of Approximation Theory}, 164(12):1513--1519, 2012.

\bibitem{Clemencon2008Ranking}
St{\'e}phan Cl{\'e}men{\c c}on, G{\'a}bor Lugosi, and Nicolas Vayatis.
\newblock Ranking and empirical minimization of u-statistics.
\newblock {\em The Annals of Statistics}, 36(2):844--874, 2008.

\bibitem{Clemencon2011Minimax}
St{\'e}phan Cl{\'e}men{\c c}on and Sylvain Robbiano.
\newblock Minimax learning rates for bipartite ranking and plug-in rules.
\newblock In {\em International Conference on Machine Learning}, pages
  441--448, United States, 2011.

\bibitem{Cortes2007Magnitude-preserving}
Corinna Cortes, Mehryar Mohri, and Ashish Rastogi.
\newblock Magnitude-preserving ranking algorithms.
\newblock In {\em Proceedings of the 24th International Conference on Machine
  Learning}, ICML '07, pages 169--176, New York, NY, USA, 2007. Association for
  Computing Machinery.

\bibitem{Cortes1995Supportvector}
Corinna Cortes and Vladimir Vapnik.
\newblock Support-vector networks.
\newblock {\em Machine Learning}, 20(3):273--297, 1995.

\bibitem{Cucker2007Learning}
Felipe Cucker and Ding~Xuan Zhou.
\newblock {\em Learning Theory: An Approximation Theory Viewpoint}.
\newblock Cambridge University Press, 2007.

\bibitem{Falconer2004Fractal}
Kenneth Falconer.
\newblock {\em Fractal Geometry: Mathematical Foundations and Applications}.
\newblock John Wiley \& Sons, 2004.

\bibitem{Hamm2021Adaptive}
Thomas Hamm and Ingo Steinwart.
\newblock Adaptive learning rates for support vector machines working on data
  with low intrinsic dimension.
\newblock {\em The Annals of Statistics}, 49(6):3153--3180, 2021.

\bibitem{Hormander2015Analysis}
Lars H{\"o}rmander.
\newblock {\em The Analysis of Linear Partial Differential Operators I:
  Distribution Theory and Fourier Analysis}.
\newblock Springer, 2015.

\bibitem{Joachims2002Optimizing}
Thorsten Joachims.
\newblock Optimizing search engines using clickthrough data.
\newblock In {\em Proceedings of the Eighth ACM SIGKDD International Conference
  on Knowledge Discovery and Data Mining}, KDD '02, pages 133--142, New York,
  NY, USA, 2002. Association for Computing Machinery.

\bibitem{Johnstone1998Oracle}
Iain~M. Johnstone.
\newblock Oracle inequalities and nonparametric function estimation.
\newblock {\em Documenta Mathematica}, III:267--278, 1998.

\bibitem{Kerkyacharian1992Density}
G.~Kerkyacharian and D.~Picard.
\newblock Density estimation in besov spaces.
\newblock {\em Statistics \& Probability Letters}, 13(1):15--24, 1992.

\bibitem{Kuhn2011Coveringa}
Thomas K{\"u}hn.
\newblock Covering numbers of gaussian reproducing kernel hilbert spaces.
\newblock {\em Journal of Complexity}, 27(5):489--499, 2011.

\bibitem{Pena1999Decoupling:}
V{\'i}ctor H.~De la~Pe{\~n}a, Victor H.~De la~Pe{\~n}a, Victor de~la Pe{\~n}a,
  and Evarist Gin{\'e}.
\newblock {\em Decoupling: From Dependence to Independence}.
\newblock Springer Science \& Business Media, 1999.

\bibitem{Lei2018Generalization}
Yunwen Lei, Shao-Bo Lin, and Ke~Tang.
\newblock Generalization bounds for regularized pairwise learning.
\newblock In {\em Proceedings of the Twenty-Seventh International Joint
  Conference on Artificial Intelligence}, pages 2376--2382, Stockholm, Sweden,
  2018. International Joint Conferences on Artificial Intelligence
  Organization.

\bibitem{Lin2004note}
Yi~Lin.
\newblock A note on margin-based loss functions in classification.
\newblock {\em Statistics \& Probability Letters}, 68(1):73--82, 2004.

\bibitem{Lorentz1966Approximation}
G.~G. Lorentz.
\newblock {\em Approximation of Functions}.
\newblock Holt, Rinehart and Winston, 1966.

\bibitem{Nolan1987U-Processes:a}
Deborah Nolan and David Pollard.
\newblock U-processes: Rates of convergence.
\newblock {\em The Annals of Statistics}, 15(2):780--799, 1987.

\bibitem{Pahikkala2013Efficient}
Tapio Pahikkala, Antti Airola, Michiel Stock, Bernard De~Baets, and Willem
  Waegeman.
\newblock Efficient regularized least-squares algorithms for conditional
  ranking on relational data.
\newblock {\em Machine Learning}, 93(2-3):321--356, 2013.

\bibitem{Pahikkala2007Learning}
Tapio Pahikkala, Evgeni Tsivtsivadze, Antti Airola, Jorma Boberg, and Tapio
  Salakoski.
\newblock Learning to rank with pairwise regularized least-squares.
\newblock In {\em SIGIR 2007 Workshop on Learning to Rank for Information
  Retrieval}, pages 27--33, 2007.

\bibitem{Pahikkala2009efficient}
Tapio Pahikkala, Evgeni Tsivtsivadze, Antti Airola, Jouni J{\"a}rvinen, and
  Jorma Boberg.
\newblock An efficient algorithm for learning to rank from~preference graphs.
\newblock {\em Machine Learning}, 75(1):129--165, 2009.

\bibitem{Pahikkala2010Learning}
Tapio Pahikkala, Willem Waegeman, Evgeni Tsivtsivadze, Tapio Salakoski, and
  Bernard De~Baets.
\newblock Learning intransitive reciprocal relations with kernel methods.
\newblock {\em European Journal of Operational Research}, 206(3):676--685,
  2010.

\bibitem{Rejchel2012Ranking}
Wojciech Rejchel.
\newblock On ranking and generalization bounds.
\newblock {\em Journal of Machine Learning Research}, 13(46):1373--1392, 2012.

\bibitem{Rejchel2017Oracle}
Wojciech Rejchel.
\newblock Oracle inequalities for ranking and u-processes with lasso penalty.
\newblock {\em Neurocomputing}, 239:214--222, 2017.

\bibitem{Smale2003ESTIMATING}
Steve Smale and Ding-Xuan Zhou.
\newblock Estimating the approximation error in learning theory.
\newblock {\em Analysis and Applications}, 01(01):17--41, 2003.

\bibitem{Steinwart2008Support}
Ingo Steinwart and Andreas Christmann.
\newblock {\em Support Vector Machines}.
\newblock Springer Science \& Business Media, 2008.

\bibitem{Steinwart2021closer}
Ingo Steinwart and Simon Fischer.
\newblock A closer look at covering number bounds for gaussian kernels.
\newblock {\em Journal of Complexity}, 62:101513, 2021.

\bibitem{Stone1982Optimal}
Charles~J. Stone.
\newblock Optimal global rates of convergence for nonparametric regression.
\newblock {\em The Annals of Statistics}, 10(4):1040--1053, 1982.

\bibitem{Suykens1999Least}
J.A.K. Suykens and J.~Vandewalle.
\newblock Least squares support vector machine classifiers.
\newblock {\em Neural Processing Letters}, 9(3):293--300, 1999.

\bibitem{Suzuki2019Adaptivity}
Taiji Suzuki.
\newblock Adaptivity of deep relu network for learning in besov and mixed
  smooth besov spaces: Optimal rate and curse of dimensionality.
\newblock In {\em International Conference on Learning Representations}, 2019.

\bibitem{Tsuji2021Estimation}
Kazuma Tsuji and Taiji Suzuki.
\newblock Estimation error analysis of deep learning on the regression problem
  on the variable exponent besov space.
\newblock {\em Electronic Journal of Statistics}, 15(1):1869--1908, 2021.

\bibitem{Waegeman2012Kernel-Based}
Willem Waegeman, Tapio Pahikkala, Antti Airola, Tapio Salakoski, Michiel Stock,
  and Bernard De~Baets.
\newblock A kernel-based framework for learning graded relations from data.
\newblock {\em IEEE Transactions on Fuzzy Systems}, 20(6):1090--1101, 2012.

\bibitem{Wang2021Regression}
Chendi Wang.
\newblock {\em Regression Learning with Continuous and Discrete Data}.
\newblock PhD thesis, The Hong Kong Polytechnic University, 2021.

\bibitem{Wang2022Error}
Shuhua Wang and Baohuai Sheng.
\newblock Error analysis of kernel regularized pairwise learning with a
  strongly convex loss.
\newblock {\em Accpected by Mathematical Foundations of Computing.}

\bibitem{Wong1988Linear}
S.~K.M. Wong and Y.~Y. Yao.
\newblock Linear structure in information retrieval.
\newblock In {\em Proceedings of the 11th Annual International ACM SIGIR
  Conference on Research and Development in Information Retrieval}, SIGIR '88,
  pages 219--232, New York, NY, USA, 1988. Association for Computing Machinery.

\bibitem{Xiang2009Classification}
Dao-Hong Xiang and Ding-Xuan Zhou.
\newblock Classification with gaussians and convex loss.
\newblock {\em Journal of Machine Learning Research}, 10(49):1447--1468, 2009.

\bibitem{Yang2016Bayesian}
Yun Yang and David~B. Dunson.
\newblock Bayesian manifold regression.
\newblock {\em The Annals of Statistics}, 44(2):876--905, 2016.

\bibitem{Ye2008Learning}
Gui-Bo Ye and Ding-Xuan Zhou.
\newblock Learning and approximation by gaussians on riemannian manifolds.
\newblock {\em Advances in Computational Mathematics}, 29(3):291--310, 2008.

\bibitem{Ye2009SVM}
Gui-Bo Ye and Ding-Xuan Zhou.
\newblock Svm learning and lp approximation by gaussians on riemannian
  manifolds.
\newblock {\em Analysis and Applications}, 07(03):309--339, 2009.

\bibitem{Ying2016Online}
Yiming Ying and Ding-Xuan Zhou.
\newblock Online pairwise learning algorithms.
\newblock {\em Neural Computation}, 28(4):743--777, 2016.

\bibitem{Zhang2004Statistical}
Tong Zhang.
\newblock Statistical behavior and consistency of classification methods based
  on convex risk minimization.
\newblock {\em The Annals of Statistics}, 32(1):56--85, 2004.

\bibitem{Zhao2017Learning}
Yulong Zhao, Jun Fan, and Lei Shi.
\newblock Learning rates for regularized least squares ranking algorithm.
\newblock {\em Analysis and Applications}, 15(06):815--836, 2017.

\end{thebibliography}
\end{document}